\DeclareMathOperator*{\argmin}{arg\,min}
\definecolor{DarkBlue}{RGB}{22,54,93}
\newtheorem{theorem}{Theorem}
\newtheorem{coro}{Corollary}
\newtheorem{lemma}{Lemma}
\newtheorem{definition}{Definition}
\newtheorem{proposition}{Proposition}
\newtheorem{Assumption}{Assumption}
\newtheorem{property}{Property}
\title{Provably Faster Algorithms for Bilevel Optimization}
\author{Junjie Yang \\
	Department of ECE\\
	The Ohio State University \\
	\texttt{yang.4972@osu.edu}\\ 
\And
Kaiyi Ji\\
Department of EECS\\
University of Michigan\\
\texttt{kaiyiji@umich.edu}\\
\And
Yingbin Liang\\
Department of ECE\\
The Ohio State University \\
\texttt{liang.889@osu.edu}\\ 
}
\begin{document}

\maketitle
\begin{abstract}
    Bilevel optimization has been widely applied in many important machine learning applications such as hyperparameter optimization and meta-learning. Recently, several momentum-based algorithms have been proposed to solve bilevel optimization problems faster. However, those momentum-based algorithms do not achieve provably better computational complexity than $\mathcal{\widetilde O}(\epsilon^{-2})$ of the SGD-based algorithm. In this paper, we propose two new algorithms for bilevel optimization, where the first algorithm adopts momentum-based recursive iterations, and the second algorithm adopts recursive gradient estimations in nested loops to decrease the variance. We show that both algorithms achieve the complexity of $\mathcal{\widetilde O}(\epsilon^{-1.5})$, which outperforms all existing algorithms by the order of magnitude. Our experiments validate our theoretical results and demonstrate the superior empirical performance of our algorithms in hyperparameter applications.

\end{abstract}

\section{Introduction}
Bilevel optimization has become a timely and important topic recently due to its great effectiveness in a wide range of applications including hyperparameter optimization \cite{franceschi2018bilevel,feurer2019hyperparameter}, meta-learning \cite{rajeswaran2019meta,ji2020convergence,bertinetto2018meta}, reinforcement learning \cite{hong2020two,konda2000actor}. Bilevel optimization can be generally formulated as the following minimization problem:    
\begin{align}
\label{eq:main}
\underset{x\in \mathbb{R}^p}{\min} \Phi(x) := f(x,y^{\ast}(x))\quad \text{ s.t. }\; y^{\ast}(x) = \underset{y \in \mathbb{R}^q}{\argmin} \; g(x,y).
 \end{align}
Since the outer function $\Phi(x):=f(x,y^*(x))$ depends on the variable $x$ also via the optimizer $y^*(x)$ of the inner-loop function $g(x,y)$, the algorithm design for bilevel optimization is much more complicated and challenging than minimization and minimax optimization. For example, if the gradient-based approach is applied, then the gradient of the outer-loop function (also called {\em hypergradient}) will necessarily involve Jacobian and Hessian matrices of the inner-loop function $g(x,y)$, which require more careful design to avoid high computational complexity.

This paper focuses on the nonconvex-strongly-convex setting, where the outer function $f(x,y^*(x))$ is nonconvex with respect to (w.r.t.)~$x$ and the inner function $g(x,y)$ is strongly convex w.r.t.~$y$ for any $x$. Such a case often occurs in practical applications. For example, in hyperparameter optimization~\cite{franceschi2018bilevel}, $f(x,y^*(x))$ is often nonconvex with $x$ representing neural network hyperparameters, but the inner function $g(x,\cdot)$ can be strongly convex w.r.t.~$y$ by including a strongly-convex regularizer on $y$. In few-shot  meta-learning~\cite{bertinetto2018meta}, the inner function $g(x,\cdot)$ often takes a quadratic form together with a strongly-convex regularizer. 
To efficiently solve the deterministic problem in \cref{eq:main}, various bilevel optimization algorithms have been proposed, which include two popular classes of deterministic gradient-based methods respectively based on  
approximate implicit differentiation (AID)~\cite{pedregosa2016hyperparameter, gould2016differentiating,ghadimi2018approximation} and iterative differentiation (ITD)~\cite{maclaurin2015gradient, franceschi2017forward, franceschi2018bilevel}. 

Recently, stochastic bilevel opitimizers~\cite{ghadimi2018approximation,ji2021bilevel} have been proposed, in order to achieve better efficiency than deterministic methods for large-scale scenarios where the data size is large or vast fresh data needs to be sampled as the algorithm runs. 

 
 In particular, such a class of problems adopt functions by:
 \begin{align}
 \label{eq:expobj}
      \Phi(x):=f(x,y^{\ast}(x)):=\mathbb{E}_{\xi}[F(x,y^{\ast}(x);\xi)],  \nonumber
      \quad g(x,y) := \mathbb{E}_{\zeta}[G(x,y;\zeta)]
 \end{align}
where the outer and inner functions take the expected values w.r.t.~samples $\xi$ and $\zeta$, respectively. 

Along this direction, \cite{ji2021bilevel} proposed a stochastic gradient descent (SGD) type optimizer~(stocBiO), and showed that stocBiO attains a computational complexity of $\mathcal{\widetilde O}(\epsilon^{-2})$ in order to reach an $\epsilon$-accurate stationary point. More recently, several studies~\cite{chen2021single,guo2021stochastic,khanduri2021momentum} have tried to accelerate SGD-type bilevel optimizers via momentum-based techniques, e.g., by introducing a momentum (historical information) term into the gradient estimation. All of these optimizers follow a {\bf single-loop} design, i.e., updating $x$ and $y$ simultaneously. Specifically, \cite{khanduri2021momentum} proposed an algorithm MSTSA by updating $x$ via a momentum-based recursive technique introduced by~\cite{cutkosky2019momentum,tran2019hybrid}. \cite{guo2021stochastic} proposed an optimizer SEMA similarly to MSTSA but using the momentum recursive technique for updating both $x$ and $y$. \cite{chen2021single} proposed an algorithm STABLE, which applies the momentum strategy for updating the Hessian matrix, but the algorithm involves expensive Hessian inverse computation rather than hypergradient approximation loop. However, as shown in \Cref{tb:order}, SEMA, MSTSA and STABLE achieve the same complexity order of $\mathcal{\widetilde O}(\epsilon^{-2})$ as the SGD-type stocBiO algorithm, where the momentum technique in these algorithms does not exhibit the theoretical advantage. Such a comparison is not consistent with those in minimization~\cite{cutkosky2019momentum} and minimax optimization~\cite{huang2020accelerated}, where the single-loop momentum-based recursive technique achieves provable performance improvements over SGD-type methods. 
This motivates the following natural but important question:
\begin{list}{$\bullet$}{\topsep=0.01in \leftmargin=0.2in \rightmargin=0.1in \itemsep =0.01in}
\item Can we design a faster single-loop momentum-based recursive bilevel optimizer, which achieves order-wisely lower computational complexity than SGD-type stocBiO (and all other momentum-based algorithms), and is also easy to implement with efficient matrix-vector products?    
\end{list}

Although the existing theoretical efforts on accelerating bilevel optimization algorithms have been exclusively focused on single-loop design\footnote{In the literature of bilevel optimization, although many hypergradient-based algorithms include an iteration loop of Hessian inverse estimation, such a loop is typically not counted when these algorithms are classified by the number of loops. This paper follows such a convention to be consistent with the existing literature. Namely, the single- and double-loop algorithms mentioned here can include an additional loop of Hessian inverse estimation in the hypergradient approximation.}, empirical studies in~\cite{ji2021bilevel} suggested that {\bf double-loop} bilevel algorithms such as  BSA~\cite{ghadimi2018approximation} and stocBiO~\cite{ji2021bilevel} achieve much better performances than {\bf single-loop} algorithms such as TTSA~\cite{hong2020two}. A good candidate suitable for accelerating double-loop algorithms can be the popular variance reduction method, such as SVRG~\cite{johnson2013accelerating}, SARAH~\cite{nguyen2017sarah} and SPIDER~\cite{fang2018spider}, which typically yield provably lower complexity. The basic idea is to construct low-variance gradient estimators using periodic high-accurate large-batch gradient evaluations. So far, there has not been any study on using variance reduction to accelerate double-loop bilevel optimization algorithms. This motivates the second question that we address in this paper:

\begin{list}{$\bullet$}{\topsep=0.01in \leftmargin=0.2in \rightmargin=0.1in \itemsep =0.01in}
\item Can we develop a double-loop variance-reduced  bilevel optimizer with improved computational complexity over SGD-type stocBiO (and all other existing algorithms)? If so, when such a {\bf double-loop} algorithm holds advantage over the {\bf single-loop} algorithms in bilevel optimization?
\end{list}

 \subsection{Main Contributions}
 This paper proposes two algorithms for bilevel optimization, both outperforming all existing algorithms in terms of complexity order.
 
 We first propose a single-loop momentum-based recursive bilevel optimizer (MRBO). MRBO updates variables $x$ and $y$ simultaneously, and uses the momentum recursive technique for constructing low-variance {\bf mini-batch} estimators for both the gradient $\nabla g(x,\cdot)$ and the hypergradient $\nabla\Phi(\cdot)$; in contrast to previous momentum-based  algorithms that accelerate only one gradient or neither. Further, MRBO is easy to implement, and allows efficient computations of Jacobian- and Hessian-vector products via automatic differentiation. Theoretically, we show that MRBO achieves a computational complexity (w.r.t.~computations of gradient, Jacobian- and Hessian-vector product) of $\mathcal{\widetilde O}(\epsilon^{-1.5})$, which outperforms all existing algorithms by an order of $\epsilon^{-0.5}$. Technically, our analysis needs to first characterize the estimation property for the momentum-based recursive estimator for the {\bf Hessian-vector type} hypergradient and then uses such a property to further bound the per-iteration error due to momentum updates for both inner and outer loops.
 
We then propose a double-loop variance-reduced bilevel optimizer (VRBO), which is the first algorithm that adopts the recursive variance reduction for bilevel optimization. In VRBO, each inner loop constructs a variance-reduced gradient (w.r.t.~$y$) and {\bf hypergradient} (w.r.t.~$x$) estimators through the use of large-batch gradient estimations computed periodically at each outer loop. Similarly to MRBO, VRBO involves the computations of Jacobian- and Hessian-vector products rather than Hessians or Hessian inverse. Theoretically, we show that VRBO achieves the same near-optimal complexity of $\mathcal{\widetilde O}(\epsilon^{-1.5})$ as MRBO and outperforms all existing algorithms. Technically, differently from the use of variance reduction in minimization and minimax optimization, our analysis for VRBO needs to characterize the variance reduction property for the {\bf Hessian-vector type} of hypergradient estimators, which only involves Hessian vector computation rather than Hessian. Such estimator introduces additional errors to handle in the telescoping and convergence analysis.

Our experiments\footnote{Our codes are available online at https://github.com/JunjieYang97/MRVRBO} show that VRBO achieves the highest accuracy among all comparison algorithms, and MRBO converges fastest among its same type of single-loop momentum-based algorithms. 
In particular, we find that our double-loop VRBO algorithm converges much faster than other singlr-loop algorithms including our MRBO, which is in contrast to the existing efforts exclusively on accelerating the single-loop algorithms \cite{chen2021single, guo2021stochastic, khanduri2021momentum}. Such a result also differs from those phenomenons observed in minimization and minimax optimization, where single-loop algorithms often outperform double-loop algorithms. 

\begin{table*}[!t]
 \centering
 \caption{Comparison of stochastic algorithms for bilevel  optimization.}
 \small
 \begin{threeparttable}
  \begin{tabular}{|c|c|c|c|c|c|}
   \hline
Algorithm & Gc($F,\epsilon$) & Gc($G,\epsilon$) & JV($G,\epsilon$) &  HV($G,\epsilon$) & $\text{Hyy}^{\text{inv}}$($G,\epsilon$)  
\\ \hline \hline
MSTSA \cite{khanduri2021momentum}  &$ \mathcal{O}(\epsilon^{-2})$ & $\mathcal{O}(\epsilon^{-2})$& $\mathcal{O}(\epsilon^{-2})$& $\mathcal{\widetilde O}(\epsilon^{-2})$ &  {/} 
\\ \hline
SEMA \cite{guo2021stochastic}  &$ \mathcal{\widetilde O}(\epsilon^{-2})$ & $\mathcal{\widetilde O}(\epsilon^{-2})$& $\mathcal{\widetilde O}(\epsilon^{-2})$& $\mathcal{\widetilde O}(\epsilon^{-2})$ &  {/} 
\\ \hline
STABLE \cite{chen2021single}  & $\mathcal{O}(\epsilon^{-2})$ & $\mathcal{O}(\epsilon^{-2})$ &  {/}& {/} & $\mathcal{O}(\epsilon^{-2})$ 
\\ \hline
stocBiO \cite{ji2021bilevel}  & $\mathcal{O}(\epsilon^{-2})$ & $\mathcal{O}(\epsilon^{-2})$ &  $\mathcal{O}\left(\epsilon^{-2}\right)$& $\mathcal{\widetilde O}\left(\epsilon^{-2}\right)$ &{/}  
\\ \hline
RSVRB \cite{guo2021randomized} (Concurrent)  & $\mathcal{O}(\epsilon^{-1.5})$ & $\mathcal{O}(\epsilon^{-1.5})$ &  $\mathcal{O}\left(\epsilon^{-1.5}\right)$& {/} &  $\mathcal{O}\left(\epsilon^{-1.5}\right)$
\\ \hline
SUSTAIN \cite{khanduri2021near} (Concurrent) & $\mathcal{O}(\epsilon^{-1.5})$ & $\mathcal{O}(\epsilon^{-1.5})$ &  $\mathcal{O}\left(\epsilon^{-1.5}\right)$& $\mathcal{\widetilde O}\left(\epsilon^{-1.5}\right)$ &{/}  
\\ \hline
   \cellcolor{blue!15}{MRBO (ours)} & \cellcolor{blue!15}{$\mathcal{O}(\epsilon^{-1.5})$} & \cellcolor{blue!15}{$\mathcal{O}(\epsilon^{-1.5})$ } & \cellcolor{blue!15}{$\mathcal{O}\left(\epsilon^{-1.5}\right)$} & \cellcolor{blue!15}{$\mathcal{\widetilde O}\left(\epsilon^{-1.5}\right)$}&
   \cellcolor{blue!15}{/} 
\\ \hline
   \cellcolor{blue!15}{VRBO (ours)} & \cellcolor{blue!15}{$\mathcal{\widetilde O}(\epsilon^{-1.5})$} & \cellcolor{blue!15}{$\mathcal{\widetilde O}(\epsilon^{-1.5})$ } & \cellcolor{blue!15}{$\mathcal{\widetilde O}\left(\epsilon^{-1.5}\right)$} & \cellcolor{blue!15}{$\mathcal{\widetilde O}\left(\epsilon^{-1.5}\right)$} &
   \cellcolor{blue!15}{/}
   \\ \hline 
  \end{tabular}\label{tb:order}
   \begin{tablenotes}
  \item $\mbox{\normalfont Gc}(F,\epsilon)$ and $\mbox{\normalfont Gc}(G,\epsilon)$: 
 number of gradient evaluations w.r.t. $F$ and $G$.
 \item $\mbox{\normalfont Jv}(G,\epsilon)$: number of Jacobian-vector products  $\nabla_x\nabla_y G (\cdot)v$. $\mathcal{\widetilde O}(\cdot)$: omit $\log\frac{1}{\epsilon}$ terms.
  \item  $\mbox{\normalfont Hv}(G,\epsilon)$: number of Hessian-vector products $\nabla_y^2G(\cdot)v$.
  \item $\mbox{\normalfont Hyy}^{\text{inv}}(G,\epsilon)$: number of  evaluations of Hessian inverse $[\nabla_y^2G]^{-1}$.  
 \end{tablenotes}
 \end{threeparttable}
\end{table*}

\subsection{Related Works}
\noindent {\bf Bilevel optimization approaches:}
At the early stage of bilevel optimization studies, a class of constraint-based algorithms~\cite{hansen1992new, shi2005extended, moore2010bilevel} were proposed, which tried to penalize the outer function with the optimality conditions of the inner problem. To further simplify the implementation of constraint-based bilevel methods,  
gradient-based bilevel algorithms were then proposed, which include but not limited to AID-based~\cite{rajeswaran2019meta,franceschi2018bilevel,shaban2019truncated,ji2021lower}, ITD-based~\cite{gould2016differentiating,pedregosa2016hyperparameter,ghadimi2018approximation,ji2020convergence,ji2020multi} methods, and stochastic bilevel optimizers such as BSA~\cite{ghadimi2018approximation}, stocBiO~\cite{ji2021bilevel}, and TTSA~\cite{hong2020two}. The finite-time (i.e., non-asymptotic) convergence analysis for bilevel optimization has been recently studied in several works~\cite{ghadimi2018approximation,ji2021bilevel,hong2020two}. In this paper, we propose two novel stochastic bilevel algorithms using momentum recursive and variance reduction techniques, and show that they order-wise improve the computational complexity over existing stochastic bilevel optimizers.


\noindent{\bf Momentum-based recursive approaches:}
The momentum recursive technique was first introduced by~\cite{cutkosky2019momentum,tran2019hybrid} for minimization problems, and has been shown to achieve improved computational complexity over SGD-based updates in theory and in practice. Several works~\cite{khanduri2021momentum,chen2021single,guo2021stochastic} applied the similar single-loop momentum-based strategy to bilevel optimization to accelerate the SGD-based bilevel algorithms such as BSA~\cite{ghadimi2018approximation} and stocBiO~\cite{ji2021bilevel}. However, the computational complexities of these momentum-based algorithms are not shown to outperform that of stocBiO. In this paper, we propose a new single-loop momentum-based recursive bilevel optimizer (MRBO), which we show achieves order-wisely lower complexity than existing stochastic bilevel optimizers.  


\noindent{\bf Variance reduction approaches:}
Variance reduction has been studied extensively for conventional minimization problems, and many algorithms have been designed along this line, including but not limited to SVRG~\cite{johnson2013accelerating,li2018simple}, SARAH~\cite{nguyen2017sarah}, SPIDER~\cite{fang2018spider}, SpiderBoost~\cite{wang2018spiderboost,wang2019spiderboost,ji2019improved} and SNVRG~\cite{zhou2018stochastic}. Several works~\cite{luo2020stochastic,xu2020gradient,yang2020global,rafique2021weakly} recently employed such techniques for minimax optimization to achieve better complexities. In this paper, we propose the first-known variance reduction-based bilevel optimizer (VRBO), which achieves a near-optimal computational complexity and outperforms existing stochastic bilevel algorithms.  

\noindent{\bf Two concurrent works:}
As we were finalizing this submission, two concurrent studies were posted on arXiv recently (\cite{khanduri2021near} was posted on May 8 and \cite{guo2021randomized} was posted on May 5). Both studies overlap only with our MRBO algorithm, nothing similar to our VRBO. Specifically,
\cite{khanduri2021near} and \cite{guo2021randomized} respectively proposed the SUSTAIN and RSVRB algorithms for bilevel optimization, both using momentum-based design as our MRBO. Although SUSTAIN and RSVRB have been shown to achieve the same theoretical complexity of $\mathcal{O}(\epsilon^{-1.5})$ as our MRBO (and VRBO), both algorithms have major drawbacks in their design, so that their empirical performance (as we demonstrate in our experiments) is much worse that our MRBO (and even worse than our VRBO). SUSTAIN adopts only single-sample for each update (whereas MRBO uses minibatch for stability); and RSVRB requires to compute Hessian inverse at each iteration (whereas MRBO uses Hessian-vector products for fast computation). As an additional note, our experiments demonstrate that our VRBO significantly outperforms all these single-loop momentum-based algorithms SUSTAIN and RSVRB as well as our MRBO. 

\section{Two New Algorithms}

In this section, we propose two new algorithms for bilevel optimization.
Firstly, we introduce the hypergradient of the objective function $\Phi(x_k)$, which is useful for designing stochastic algorithms.
\begin{property}
\label{prop:hypergradori}
The (hyper)gradient of $\Phi(x) = f(x,y^{\ast}(x))$ in \cref{eq:main}
takes a form of 
	\begin{equation}\label{eq:hypergradient}
	\nabla \Phi(x) = \nabla_x f(x, y^{\ast}(x)) - \nabla_x\nabla_y g(x, y^{\ast}(x)) [\nabla^2_y g(x, y^{\ast}(x))]^{-1} \nabla_y f(x, y^{\ast}(x)).
	\end{equation}
\end{property}
However, it is not necessary to compute $y^{\ast}$ for updating $x$ at every iteration, and it is not time and memory efficient to compute Hessian inverse matrix in \cref{eq:hypergradient} explicitly. Here,  we estimate the hypergradient similarly to~\cite{ji2021bilevel,ghadimi2018approximation}, which takes a form of
\begin{align}\label{eq:hypergradientest}
 \overline{\nabla} \Phi(x) = \nabla_xf(x, y) - \nabla_x \nabla_y g(x, y) \eta \sum_{q=-1}^{Q-1} (I-\eta \nabla_y^2g(x, y))^{q+1}\nabla_yf(x,y),
\end{align}
where the Neumann series $\eta\sum_{i=0}^{\infty}(I-\eta G)^i=G^{-1}$ is applied to approximate the Hessian inverse.
\subsection{Momentum-based Recursive Bilevel Optimizer (MRBO)}
As shown in \Cref{alg:mrbio}, we propose a {\bf M}omentum-based {\bf R}ecursive {\bf B}ilevel {\bf O}ptimizer (MRBO) for solving the bilevel problem in \cref{eq:main}.


\begin{algorithm}
\caption{Momentum-based Recursive Bilevel Optimizer (MRBO)}
\label{alg:mrbio}
\begin{algorithmic}[1]
\STATE \textbf{Input:} Stepsize $ \lambda, \gamma >0$, Coefficients $\alpha_0, \beta_0$, Initializers $x_0,  y_0$, Hessian Estimation Number $Q$, Batch Size $S$, Constant $c_1, c_2, m, d>0$
\FOR {$k=0,1,\ldots,K$}
\STATE Draw Samples $\mathcal{B}_y, \mathcal{B}_{x}=\{\mathcal{B}_j(j=1,\ldots,Q), \mathcal{B}_F,\mathcal{B}_G\}$ with batch size $S$ for each component
\IF {$k=0$:}
\STATE $v_k=\widehat{\nabla}\Phi(x_k; \mathcal{B}_x), u_k = \nabla_yG(x_k, y_k; \mathcal{B}_y)$
\ELSE 
\STATE  $v_k= \widehat{\nabla}\Phi(x_k;\mathcal{B}_x) + (1-\alpha_k)(v_{k-1}-\widehat{\nabla} \Phi(x_{k-1};\mathcal{B}_x))$ 
\STATE  $u_k = \nabla_yG(x_k, y_k; \mathcal{B}_y) + (1-\beta_k)(u_{k-1} - \nabla_yG(x_{k-1}, y_{k-1}; \mathcal{B}_y))$
\ENDIF
\STATE \textbf{update:} $\eta_k=\frac{d}{\sqrt[3]{m+k}},\quad \alpha_{k+1} = c_1\eta_k^2, \quad \beta_{k+1}=c_2\eta_k^2$
\STATE $x_{k+1} = x_k - \gamma\eta_k v_k, \quad y_{k+1} = y_k - \lambda\eta_k u_k$

\ENDFOR
\end{algorithmic}
\end{algorithm}
	
MRBO updates in a single-loop manner, where the momentum recursive technique STORM \cite{cutkosky2019momentum} is employed for updating both $x$ and $y$ at each iteration simultaneously. To update $y$, at step $k$, MRBO first constructs the momentum-based gradient estimator $u_k$ based on the current $\nabla_y G(x_k,y_k;\mathcal{B}_y)$ and the previous $\nabla_y G(x_{k-1},y_{k-1};\mathcal{B}_y)$ using a minibatch $\mathcal{B}_y$ of samples (see line 8 in \Cref{alg:mrbio}). Note that the hyperparameter $\beta_k$ decreases at each iteration, so that the gradient estimator $u_k$ is more determined by the previous $u_{k-1}$, which improves the stability of gradient estimation, especially when $y_k$ is close to the optimal point. Then MRBO uses the gradient estimator for updating $y_k$ (see line 11). The stepsize $\eta_k$ decreases at each iteration to reduce the convergence error. 

To update $x$, at step $k$, MRBO first constructs the momentum-based recursive hypergradient estimator $v_k$ based on the current $\widehat{\nabla}\Phi(x_k;\mathcal{B}_x)$ and the previous $\widehat{\nabla}\Phi(x_{k-1};\mathcal{B}_x)$ computed using several independent minibatches of samples $\mathcal{B}_x=\{\mathcal{B}_j(j=1,\ldots,Q), \mathcal{B}_F,\mathcal{B}_G\}$ (see line 7 in \Cref{alg:mrbio}). The hyperparameter $\alpha_k$ decreases at each iteration, so that the new gradient estimation $v_k$ is more determined by the previous $v_{k-1}$, which improves the stability of gradient estimation, especially when $x_k$ is around the optimal point.  Specifically, the hypergradient estimator $\widehat{\nabla}\Phi(x_k;\mathcal{B}_x)$ is designed based on the expected form in \cref{eq:hypergradientest}, and takes a form of:
\begin{align}
\label{eq:xgest_mrbio}
	 & \widehat{\nabla}\Phi(x_k;\mathcal{B}_x) = \nabla_xF(x_k, y_k; \mathcal{B}_{F}) \nonumber \\ 
	 & \quad - \nabla_x\nabla_yG(x_k, y_k; \mathcal{B}_{G}) \eta \sum_{q=-1}^{Q-1} \prod_{j=Q-q}^{Q} (I-\eta \nabla_y^2G(x_k, y_k; \mathcal{B}_j))\nabla_yF(x_k,y_k;\mathcal{B}_{F}),
\end{align}
Note that MRBO computes the above estimator recursively using only {\bf Hessian vectors} rather than {\bf Hessians} (see \Cref{sec:hessianvector}) in order to reduce the memory and computational cost. Then MRBO uses the estimated gradient $v_k$ for updating $x_k$ (see line 11). The stepsize $\eta_k$ decreases at each iteration to facilitate the convergence. 

\subsection{Variance Reduction Bilevel Optimizer (VRBO)}
Although all of the existing momentum algorithms ~\cite{chen2021single,khanduri2021momentum,guo2021stochastic} (and two current studies \cite{khanduri2021near,guo2021randomized}) for bilevel optimization follow the single-loop design, empirical results in~\cite{ji2021bilevel} suggest that {\bf double-loop} bilevel algorithms can achieve much better performances than {\bf single-loop} algorithms. Thus, 
as shown in \Cref{alg:vrbio}, we propose a double-loop algorithm called {\bf V}ariance {\bf R}eduction {\bf B}ilevel {\bf O}ptimizer (VRBO). VRBO adopts the variance reduction technique in SARAH~\cite{nguyen2017sarah}/SPIDER~\cite{fang2018spider} for bilevel optimization, which is suitable for designing double-loop algorithms. Specifically, VRBO constructs the recursive variance-reduced gradient estimators for updating both $x$ and $y$, where each update of $x$ in the outer-loop is followed by $(m+1)$ inner-loop updates of $y$. VRBO divides the outer-loop iterations into epochs, and at the beginning of each epoch computes the hypergradient estimator $\widehat{\nabla}\Phi(x_k, y_k;\mathcal{S}_1)$ and the gradient $\nabla_yG(x_{k}, y_{k};\mathcal{S}_1)$ based on a relatively large batch $\mathcal{S}_1$ of samples for variance reduction, 
where $\widehat{\nabla}\Phi(x_k, y_k;\mathcal{S}_1)$ takes a form of
\begin{align}
\label{eq:xgest_vrbio}
	\widehat{\nabla}& \Phi(x_k ,y_k;\mathcal{S}_1) = \frac{1}{S_1}\sum_{i=1}^{S_1}\Big( \nabla_xF(x_k, y_k; \xi_i) \nonumber \\ 
	&  \quad - \nabla_x\nabla_yG(x_k, y_k; \zeta_i) \eta \sum_{q=-1}^{Q-1} \prod_{j=Q-q}^{Q} (I-\eta \nabla_y^2G(x_k, y_k; \zeta_{i}^j))\nabla_yF(x_k,y_k;\xi_i)\Big),
\end{align}
where all samples in $\mathcal{S}_1=\{\zeta_i^j(j=1,\ldots,Q), \xi_i, \zeta_i,i=1,\ldots,S_1\}$ are independent.
Note that \cref{eq:xgest_vrbio} takes a different form from MRBO in \cref{eq:xgest_mrbio}, but the Hessian-vector computation method for MRBO is still applicable here.
Then, VRBO recursively updates the gradient estimators for $\nabla_yG(\widetilde{x}_{k,t}, \widetilde{y}_{k,t};\mathcal{S}_2)$ and $\widehat{\nabla}\Phi(\widetilde{x}_{k,t}, \widetilde{y}_{k,t};\mathcal{S}_2)$ (which takes the same form as \cref{eq:xgest_vrbio}) with a small sample batch $\mathcal{S}_2$ (see lines 11 to 16) during  inner-loop iterations. 


We remark that VRBO is the first algorithm that adopts the recursive variance reduction method for bilevel optimization. As we will shown in \Cref{sec:mainresults}, VRBO achieves the same nearly-optimal computational complexity as MRBO (and  outperforms all other existing algorithms). More interestingly, as a double-loop algorithm, VRBO empirically significantly outperforms all existing single-loop momentum algorithms including MRBO. More details and explanation are provided in \Cref{sec:experiment}.



\begin{algorithm}
	\caption{Variance Reduction Bilevel Optimizer (VRBO)}
	\label{alg:vrbio}
	\begin{algorithmic}[1]
		\STATE \textbf{Input:} Stepsize $\beta, \alpha >0$, Initializer $x_0,  y_0$, Hessian $Q$, Sample Size $S_1, S_2$, Periods $q$
		\FOR {$k=0,1,\ldots,K$}
		\IF {mod$(k,q)=0$:}
		\STATE Draw a batch $\mathcal{S}_1$ of i.i.d.~samples
		\STATE $u_k = \nabla_yG(x_k, y_k;\mathcal{S}_1)$, $v_k = \widehat{\nabla}\Phi(x_k,y_k;\mathcal{S}_1)$
		\ELSE
		\STATE $u_k =\widetilde{u}_{k-1,m+1}$, $v_k = \widetilde{v}_{k-1,m+1}$
		 \ENDIF
		 \STATE $x_{k+1} = x_k-\alpha v_k$
		\STATE Set \,$\widetilde{x}_{k,-1}=x_k, \widetilde{y}_{k,-1}=y_k, \widetilde{x}_{k,0}=x_{k+1}, \widetilde{y}_{k,0}=y_k, \widetilde{v}_{k,-1}=v_k, \widetilde{u}_{k,-1}=u_k$
	    \FOR {$t=0,1,\ldots,m+1$}
	    \STATE Draw a batch $\mathcal{S}_2$ of i.i.d~samples
	    \STATE $\widetilde{v}_{k,t}=\widetilde{v}_{k,t-1}+\widehat{\nabla}\Phi(\widetilde{x}_{k,t}, \widetilde{y}_{k,t};\mathcal{S}_2)-\widehat{\nabla}\Phi(\widetilde{x}_{k,t-1}, \widetilde{y}_{k,t-1};\mathcal{S}_2)$
	    \STATE $\widetilde{u}_{k,t} = \widetilde{u}_{k,t-1}+\nabla_yG(\widetilde{x}_{k,t},\widetilde{y}_{k,t};\mathcal{S}_2) - \nabla_yG(\widetilde{x}_{k,t-1},\widetilde{y}_{k,t-1};\mathcal{S}_2)$\
	    \STATE $\widetilde{x}_{k,t+1} = \widetilde{x}_{k,t}, \widetilde{y}_{k,t+1}=\widetilde{y}_{k,t}-\beta \widetilde{u}_{k,t}$
	    \ENDFOR
	    \STATE $y_{k+1} = \widetilde{y}_{k,m+1}$
		\ENDFOR
	\end{algorithmic}
\end{algorithm}


\section{Main Results}\label{sec:mainresults}
In this section, we first introduce several standard assumptions for the analysis, and then present the convergence results for the proposed MRBO and VRBO algorithms. 

\subsection{Technical Assumptions and Definitions} \label{subsec:assumptions}
\begin{Assumption}
\label{ass:conv}
Assume that the inner function $G(x,y;\zeta)$ is $\mu$-strongly-convex w.r.t.\ $y$ for any $\zeta$.
\end{Assumption}

We then make the following assumptions on the Lipschitzness and bounded variance, as adopted by the existing studies~\cite{ghadimi2018approximation,ji2021bilevel,hong2020two} on stochastic bilevel optimization.
\begin{Assumption}\label{ass:lip} 
Let $z:=(x,y)$. Assume the functions $F(z;\xi)$ and $G(z;\zeta)$ satisfy, for any $\xi$ and $\zeta$,
\begin{list}{}{\topsep=0.ex \leftmargin=0.15in \rightmargin=0.in \itemsep =0.02in}
\item[a)] $F(z;\xi)$ is $M$-Lipschitz, i.e., for any $z, z^\prime$, $|F(z;\xi)-F(z^\prime;\xi)|\leq M\|z-z^\prime\|.$
\item[b)] $\nabla F(z;\xi)$ and $\nabla G(z;\zeta)$ are $L$-Lipschitz, i.e., for any $z, z^\prime$, 
\begin{align*}
\|\nabla F(z;\xi)-\nabla F(z^\prime;\xi)\| \leq L\|z-z^{\prime}\|,\quad \|\nabla G(z;\zeta)-\nabla G(z^{\prime};\zeta)\|\leq L\|z-z^{\prime}\|.
\end{align*}
\item[c)] $\nabla_x\nabla_yG(z;\zeta)$ is $\tau$-Lipschitz, i.e., for any $z, z^\prime$, $\|\nabla_x \nabla_y G(z;\zeta)-\nabla_x \nabla_y G(z^\prime;\zeta)\|\leq \tau \|z-z^\prime\|.$
\item[d)] $\nabla_y^2 G(z;\zeta)$ is $\rho$-Lipschitz, i.e., for any $z, z^\prime$, $\|\nabla_y^2 G(z;\zeta) - \nabla_y^2 G(z^\prime;\zeta)\| \leq \rho \|z-z^\prime\|$.
\end{list}
\end{Assumption}
Note that~\Cref{ass:lip} also implies that $\mathbb{E}_\xi \|\nabla F(z;\xi)-\nabla f(z)\|^2 \leq M^2$, $\mathbb{E}_\zeta \|\nabla_x \nabla_y G(z;\zeta)-\nabla_x\nabla_y g(z)\|^2 \leq L^2$ and 
$\mathbb{E}_\zeta \|\nabla_y^2 G(z; \zeta)-\nabla_y^2g(z) \|^2 \leq L^2$.
\begin{Assumption}
\label{ass:var}
Assume that $\nabla G(z;\xi)$ has 
bounded variance, i.e., $\mathbb{E}_\xi \|\nabla G(z;\xi)-\nabla g(z)\|^2 \leq \sigma^2$. 
\end{Assumption}

Assumptions \ref{ass:lip} and \ref{ass:var} require the Lipschitzness conditions to hold for the gradients and second-order derivatives of the inner and outer objective functions, which further imply the gradient of the outer objective function is bounded. Such assumptions have also been adopted by the existing studies~\cite{ji2021bilevel,hong2020two,khanduri2021near, khanduri2021momentum, guo2021randomized} for stochastic bilevel optimization. Furthermore, these assumptions are mild in practice as long as the iterates along practical training paths are bounded. All our experiments indicate that these iterates are well located in a bounded regime. It is also possible to consider a bilevel problem over a convex compact set, which relaxes the boundedness assumption. By introducing a projection of iterative updates into such a set, our analysis for the unconstrained setting can be extended easily to such a constrained problem.

We next define the $\epsilon$-stationary point for a nonconvex function as the convergence criterion. 

\begin{definition}
We call $\bar{x}$ an $\epsilon$-stationary point for a function $\Phi(x)$ if $\|\nabla\Phi(\bar{x})\|^2\leq \epsilon.$
\end{definition}


\subsection{Convergence Analysis of MRBO Algorithm}\label{sec:convergence_mrbo}

To analyze the convergence of MRBO, bilevel optimization presents two major challenges due to the momentum recursive method in MRBO, beyond the previous studies of momentum in conventional minimization and minimax optimization. (a) Outer-loop updates of bilevel optimization use hypergradients, which involve both the first-order gradient and the Hessian-vector product. Thus, the analysis of the momentum recursive estimator for such a hypergradient is much more complicated than that for the vanilla gradient. (b) Since MRBO applies the momentum-based recursive method to both inner- and outer-loop iterations, the analysis needs to capture the interaction between the inner-loop gradient estimator and the outer-loop hypergradient estimator. 
Below, we will provide two major properties for MRBO, which develop new analysis for handling the above two challenges.

In the following proposition, we characterize the variance bound for the hypergradient estimator in bilevel optimization, and further use such a bound to characterize the variance of the momentum recursive estimator of the hypergradient. 
\begin{proposition}
\label{prop:mrbio_epi}
Suppose Assumptions \ref{ass:conv}, \ref{ass:lip} and \ref{ass:var} hold and $\eta < \frac{1}{L}$, the hypergradient estimator $\widehat{\nabla}\Phi(x_k;\mathcal{B}_x)$ w.r.t.\ x based on a minibatch $\mathcal{B}_x$ has bounded variance
    \begin{align}
    \label{eq:mrbio_estim}
        \mathbb{E} \|\widehat{\nabla}\Phi(x_k;\mathcal{B}_x)-\overline{\nabla}\Phi(x_k)\|^2 \leq G^2,
    \end{align}
    where $G^2 = \frac{2M^2}{S}+\frac{12M^2L^2\eta^2(Q+1)^2}{S}+\frac{4M^2L^2(Q+2)(Q+1)^2\eta^4\sigma^2}{S}$.
    Further, let $\bar{\epsilon}_k=v_k-\overline{\nabla}\Phi(x_k)$, where $v_k$ denotes the momentum recursive estimator for the hypergradient. Then the per-iteratioon variance bound of $v_k$ satisfies
    \begin{align}
    \label{eq:mrbio_update}
	\mathbb{E} \| \bar{\epsilon}_k \|^2 \leq \mathbb{E} [2&\alpha_k^2 G^2 + 2(1-\alpha_k)^2L_Q^2 \| x_k-x_{k-1} \|^2 \nonumber
	\\&+ 2(1-\alpha_k)^2L_Q^2 \| y_k-y_{k-1} \|^2 + (1-\alpha_k)^2 \|\bar{\epsilon}_{k-1} \|^2],
	\end{align}
	where $L_Q^2=2L^2+4\tau^2\eta^2M^2(Q+1)^2+8L^4\eta^2(Q+1)^2+2L^2\eta^4M^2\rho^2Q^2(Q+1)^2.$
\end{proposition}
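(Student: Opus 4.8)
\textbf{Proof proposal for Proposition~\ref{prop:mrbio_epi}.}

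The plan is to prove the two bounds in sequence, since the second relies structurally on the first. For the variance bound~\eqref{eq:mrbio_estim}, I would start from the explicit forms of $\widehat{\nabla}\Phi(x_k;\mathcal{B}_x)$ in~\eqref{eq:xgest_mrbio} and $\overline{\nabla}\Phi(x_k)$ in~\eqref{eq:hypergradientest} and decompose the difference term-by-term according to the three sources of randomness: the minibatch $\mathcal{B}_F$ for $\nabla_x F$ and $\nabla_y F$, the minibatch $\mathcal{B}_G$ for $\nabla_x\nabla_y G$, and the independent batches $\mathcal{B}_j$ entering the truncated Neumann product. The key algebraic observation is that the product $\prod_{j=Q-q}^{Q}(I-\eta\nabla_y^2 G(\cdot;\mathcal{B}_j))$ is an unbiased estimator of $(I-\eta\nabla_y^2 g)^{q+1}$ because the $\mathcal{B}_j$ are drawn independently; this is precisely why the estimator must use independent batches rather than reusing one batch. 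I would first split the total error into a sum of three mean-zero contributions, apply $\|a+b+c\|^2\le 3(\|a\|^2+\|b\|^2+\|c\|^2)$ (or a Young's-inequality variant to match constants), and then bound each piece by its variance divided by the batch size $S$.

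The three terms give the three summands of $G^2$. The first, $\tfrac{2M^2}{S}$, comes from the outer-gradient-difference term $\nabla_x F(x_k,y_k;\mathcal{B}_F)-\nabla_x f$, using $\mathbb{E}\|\nabla F-\nabla f\|^2\le M^2$ from the remark after Assumption~\ref{ass:lip} together with $\|\nabla_y f\|\le M$ (Lipschitzness of $F$). The second term involves the Jacobian-sampling error in $\nabla_x\nabla_y G$; bounding it uses $\|\eta\sum_{q=-1}^{Q-1}(I-\eta\nabla_y^2 g)^{q+1}\|\le \eta(Q+1)$ (each factor has norm at most one since $\eta<1/L$ and $g$ is $L$-smooth and $\mu$-strongly convex) and $\mathbb{E}\|\nabla_x\nabla_y G-\nabla_x\nabla_y g\|^2\le L^2$, yielding the $\eta^2(Q+1)^2$ scaling. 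The third term captures the variance of the random Neumann product; here the main obstacle appears, since one must control $\mathbb{E}\big\|\prod_{j}(I-\eta\nabla_y^2 G(\cdot;\mathcal{B}_j))-(I-\eta\nabla_y^2 g)^{q+1}\big\|^2$. I would handle this by a telescoping/hybrid argument over the $q+1$ factors: swap one random factor for its mean at a time, bound each swap by $\eta\cdot\tfrac{L}{\sqrt S}$ (one power of $\eta$, the per-factor variance $L^2/S$), and use that the remaining factors have operator norm at most one. Summing over the at most $(Q+1)$ positions and then over the $Q+1$ values of $q$ produces the combinatorial factor $(Q+2)(Q+1)^2$ and the $\eta^4$ from two surviving $\eta$'s (one from the outer $\eta$, one from the swapped factor), together with the variance $\sigma^2$.

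For the recursive bound~\eqref{eq:mrbio_update}, I would substitute the momentum update $v_k=\widehat{\nabla}\Phi(x_k;\mathcal{B}_x)+(1-\alpha_k)(v_{k-1}-\widehat{\nabla}\Phi(x_{k-1};\mathcal{B}_x))$ into $\bar\epsilon_k=v_k-\overline{\nabla}\Phi(x_k)$. The standard STORM manipulation rewrites $\bar\epsilon_k$ as $(1-\alpha_k)\bar\epsilon_{k-1}$ plus a martingale-difference term $\alpha_k\big(\widehat{\nabla}\Phi(x_k;\mathcal{B}_x)-\overline{\nabla}\Phi(x_k)\big)+(1-\alpha_k)\big[(\widehat{\nabla}\Phi(x_k;\mathcal{B}_x)-\widehat{\nabla}\Phi(x_{k-1};\mathcal{B}_x))-(\overline{\nabla}\Phi(x_k)-\overline{\nabla}\Phi(x_{k-1}))\big]$. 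Taking conditional expectation, the cross term with $\bar\epsilon_{k-1}$ vanishes since the fresh batch $\mathcal{B}_x$ at step $k$ is independent of the past, giving the $(1-\alpha_k)^2\|\bar\epsilon_{k-1}\|^2$ recursion. The martingale term then splits: its $\alpha_k$ part is bounded by $2\alpha_k^2 G^2$ via~\eqref{eq:mrbio_estim}, and its difference part is controlled by the variance of $\widehat{\nabla}\Phi(x_k;\cdot)-\widehat{\nabla}\Phi(x_{k-1};\cdot)$, which is at most $L_Q^2(\|x_k-x_{k-1}\|^2+\|y_k-y_{k-1}\|^2)$ once one establishes that $\widehat{\nabla}\Phi(\cdot;\xi)$ is $L_Q$-Lipschitz in $(x,y)$. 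The hardest remaining piece is deriving this per-sample Lipschitz constant $L_Q$: I would differentiate through the truncated-Neumann hypergradient form and bound the Lipschitz contribution of each factor, using Assumptions~\ref{ass:lip}(b)--(d) for the Lipschitzness of $\nabla_x F$, $\nabla_x\nabla_y G$, and $\nabla_y^2 G$, with $M$ and $\eta(Q+1)$ as the relevant magnitude bounds; the four additive pieces of $L_Q^2$ correspond precisely to perturbing, respectively, the $\nabla_x F$ term, the $\nabla_x\nabla_y G$ factor, the product of $(I-\eta\nabla_y^2 G)$ factors, and the $\nabla_y F$ vector.
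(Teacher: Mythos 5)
Your proposal is correct and follows essentially the same route as the paper: the paper likewise isolates the $\nabla_x F$ sampling error, the Jacobian sampling error weighted by $\|\mathbb{E}[V_{Qk}]\|\le\eta M(Q+1)$, and the variance of the random Neumann product (bounded by exactly the one-factor-at-a-time swap argument you describe, using $\|I-\eta\nabla_y^2 G\|\le 1$), and then runs the standard STORM decomposition with the per-sample Lipschitz constant $L_Q$ of $\widehat{\nabla}\Phi$ established as a separate lemma. The only cosmetic difference is that the paper nests two two-way splits (giving the exact constants $2,12,4$ in $G^2$) rather than a single three-way split, which you already flag as a constant-matching issue.
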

    
The variance bound $G$ of the hypergradient in \cref{eq:mrbio_estim} scales with the number $Q$ of Neumann series terms (i.e., the number of Hessian vectors) and can be reduced by that minibatch size $S$.
    
Then the bound \cref{eq:mrbio_update} further captures how the variance $\| \bar{\epsilon}_k\|$ of momentum recursive hypergradient estimator changes after one iteration. Clearly, the term $(1-\alpha_k)^2 \|\bar{\epsilon}_{k-1} \|^2$ indicates a variance reduction per iteration, and the remain three terms capture the impact of the randomness due to the update in step $k$, including the variance of the stochastic hypergradient estimator $G^2$ (as captured in \cref{eq:mrbio_estim}) and the stochastic update of both variables $x$ and $y$. In particular, the variance reduction term plays a key role in the performance improvement for MRBO over other existing algorithms.


\begin{proposition}
\label{prop:mrbio_phi}
Suppose Assumptions \ref{ass:conv}, \ref{ass:lip}, \ref{ass:var} hold. Let $\eta < \frac{1}{L}$ and $\gamma \leq \frac{1}{4L_{\Phi}\eta_k}$, where $L_{\Phi}=L+\frac{2L^2+\tau M^2}{\mu}+\frac{\rho LM+L^3+\tau ML}{\mu^2}+\frac{\rho L^2 M}{\mu^3}$. Then, we have 
\begin{align*}
 \mathbb{E}[\Phi(x_{k+1})] 
 \leq \mathbb{E}[\Phi(x_k)]+2\eta_k \gamma( {L^{\prime}}^2 \|y_k-y^{\ast}(x_k)\|^2 + \|\bar{\epsilon}_k\|^2 + C_Q^2) - \frac{1}{2\gamma\eta_k} \|x_{k+1}-x_k\|^2,
	\end{align*}
where $C_Q=\frac{(1-\eta\mu)^{Q+1}ML}{\mu},  {L^{\prime}}^2=\max \{(L+\frac{L^2}{\mu}+\frac{M\tau}{\mu}+\frac{LM\rho}{\mu^2})^2,L_Q^2\}$.
\end{proposition}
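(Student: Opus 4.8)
The plan is to establish a one-step descent inequality for the nonconvex outer objective $\Phi$, treating $v_k$ as an inexact hypergradient and paying for its inexactness through the three error terms on the right-hand side. The starting point is the $L_{\Phi}$-smoothness of $\Phi$, a standard consequence of Assumptions~\ref{ass:conv}--\ref{ass:lip} (the constant $L_{\Phi}$ is exactly the bilevel smoothness constant derived in prior work and follows from the bounded/Lipschitz assumptions applied to the hypergradient formula in \cref{eq:hypergradient}); this gives
\begin{equation*}
\Phi(x_{k+1}) \leq \Phi(x_k) + \langle \nabla\Phi(x_k), x_{k+1}-x_k\rangle + \frac{L_{\Phi}}{2}\|x_{k+1}-x_k\|^2 .
\end{equation*}
Since the update is $x_{k+1}-x_k = -\gamma\eta_k v_k$, the inner product is taken against the \emph{true} gradient $\nabla\Phi(x_k)$ rather than against $v_k$, so the core of the argument is to control the mismatch $\nabla\Phi(x_k)-v_k$.

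Next I would split $\langle\nabla\Phi(x_k), x_{k+1}-x_k\rangle = \langle v_k, x_{k+1}-x_k\rangle + \langle \nabla\Phi(x_k)-v_k, x_{k+1}-x_k\rangle$. The first term equals $-\frac{1}{\gamma\eta_k}\|x_{k+1}-x_k\|^2$ by the update rule, and the second is bounded by Young's inequality with parameter $2\gamma\eta_k$, giving $2\gamma\eta_k\|\nabla\Phi(x_k)-v_k\|^2 + \frac{1}{8\gamma\eta_k}\|x_{k+1}-x_k\|^2$. Collecting the $\|x_{k+1}-x_k\|^2$ coefficients yields $\big(-\frac{1}{\gamma\eta_k}+\frac{1}{8\gamma\eta_k}+\frac{L_{\Phi}}{2}\big)\|x_{k+1}-x_k\|^2$; the stepsize restriction $\gamma\leq \frac{1}{4L_{\Phi}\eta_k}$ forces $\frac{L_{\Phi}}{2}\leq \frac{1}{8\gamma\eta_k}$, so this coefficient is at most $-\frac{1}{2\gamma\eta_k}$, matching the last term in the claim. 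Note that the choice of Young parameter $2\gamma\eta_k$ is precisely what produces the factor $2\eta_k\gamma$ in front of the error terms.

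It then remains to show (after taking expectations and absorbing numerical constants into $L^{\prime}$) that $\mathbb{E}\|\nabla\Phi(x_k)-v_k\|^2 \lesssim {L^{\prime}}^2\|y_k-y^{\ast}(x_k)\|^2 + \|\bar{\epsilon}_k\|^2 + C_Q^2$, which is where the bilevel structure enters. I would decompose the mismatch into three sources through the intermediate quantity $\overline{\nabla}\Phi(x_k)$ of \cref{eq:hypergradientest}: (i) $v_k - \overline{\nabla}\Phi(x_k) = \bar{\epsilon}_k$, the momentum estimator error already controlled in \cref{prop:mrbio_epi}; (ii) the gap between the exact (non-truncated) implicit-hypergradient formula evaluated at $y_k$ and the true hypergradient $\nabla\Phi(x_k)$ (which uses $y^{\ast}(x_k)$ and the exact inverse), bounded via the Lipschitzness of the hypergradient map in $y$ under Assumption~\ref{ass:lip} by $(L+\frac{L^2}{\mu}+\frac{M\tau}{\mu}+\frac{LM\rho}{\mu^2})\|y_k-y^{\ast}(x_k)\|$; and (iii) the Neumann-series truncation error between $\overline{\nabla}\Phi(x_k)$ and the exact-inverse formula, which using $\|I-\eta\nabla_y^2 g\|\leq 1-\eta\mu$ (valid since $\eta<\frac{1}{L}$ and $g$ is $\mu$-strongly convex) sums to the geometric tail $\frac{(1-\eta\mu)^{Q+1}ML}{\mu}=C_Q$. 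The constant $L^{\prime}$ is taken as the maximum of the hypergradient-Lipschitz constant in (ii) and the estimator constant $L_Q$ from \cref{prop:mrbio_epi}, so that a single coefficient dominates the $\|y_k-y^{\ast}(x_k)\|^2$ contributions arising in both the bias term and (through $\bar{\epsilon}_k$) the estimator term.

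The main obstacle is item (iii) together with the fact that $\overline{\nabla}\Phi$ is evaluated at the running iterate $y_k$ rather than at $y^{\ast}(x_k)$: unlike the single-level case, the hypergradient is itself a composite Hessian-vector object built from a truncated Neumann series, so separating the deterministic truncation bias $C_Q$ from the optimization error $\|y_k-y^{\ast}(x_k)\|$ and the stochastic estimator error $\bar{\epsilon}_k$ must be done carefully to keep the three sources additive and avoid coupling them. Once the three-way bound is in place, substituting it into the descent inequality and taking expectations gives the stated result.
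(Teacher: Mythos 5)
Your proposal follows essentially the same route as the paper's proof: $L_\Phi$-smoothness, splitting the inner product into $\langle v_k,\cdot\rangle$ plus the mismatch term, Young's inequality with the stepsize condition $\gamma\le\frac{1}{4L_\Phi\eta_k}$, and the same three-way decomposition of $\nabla\Phi(x_k)-v_k$ through $\widetilde{\nabla}\Phi(x_k)$ and $\overline{\nabla}\Phi(x_k)$ into the tracking error, the Neumann truncation bias $C_Q$, and the estimator error $\bar{\epsilon}_k$. The only caveat is that bounding $2\gamma\eta_k\|\nabla\Phi(x_k)-v_k\|^2$ by a sum of three squares introduces an extra factor of $3$ on the $\|\bar{\epsilon}_k\|^2$ and $C_Q^2$ terms; to recover the stated coefficient $2\eta_k\gamma$ exactly, apply Young's inequality separately to each of the three cross terms (each contributing $\frac{1}{8\gamma\eta_k}\|x_{k+1}-x_k\|^2$), which is what the paper does and which your remaining slack in the $\|x_{k+1}-x_k\|^2$ coefficient easily accommodates.
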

\Cref{prop:mrbio_phi} characterizes how the objective function value decreases (i.e., captured by $\mathbb{E}[\Phi(x_{k+1})]-\mathbb{E}[\Phi(x_k)]$) due to one-iteration update $\|x_{k+1}-x_k\|^2$ of variable $x$ (last term in the bound). Such a value reduction is also affected by the tracking error $\|y_k-y^{\ast}(x_k)\|^2$ of the variable $y$ (i.e., $y_k$ does not equal the desirable $y^{\ast}(x_k)$), the variance $\|\bar{\epsilon}_k\|^2$ of momentum recursive hypergradient estimator, and the Hessian inverse approximation error $C_Q$ w.r.t.\ hypergradient.

Based on Propositions \ref{prop:mrbio_epi} and \ref{prop:mrbio_phi}, we next characterize the convergence of MRBO.

\begin{theorem}
	\label{thm:mrbio}
	Apply MRBO to solve the problem \cref{eq:main}. Suppose Assumptions \ref{ass:conv}, \ref{ass:lip}, and \ref{ass:var} hold. Let hyperparameters $c_1\geq \frac{2}{3d^3}+\frac{9\lambda \mu}{4}, c_2\geq \frac{2}{3d^3}+\frac{75{L'}^2\lambda}{2\mu}, m\geq \max\{2, d^3, (c_1d)^3, (c_2d)^3\}, y_1=y^{\ast}(x_1),\eta < \frac{1}{L}, 0\leq \lambda \leq \frac{1}{6L}, 0\leq \gamma \leq \min \{\frac{1}{4L_\Phi \eta_K},\frac{\lambda \mu}{\sqrt{150{L^{\prime}}^2L^2/\mu^2+8\lambda\mu(L_Q^2+L^2)}}  \}$. Then, we have 
	\begin{equation}
	\label{eq:mrbio_conv}
	 \frac{1}{K} \sum_{k=1}^{K}\left(\frac{{L^{\prime}}^2}{4} \|y^{\ast}(x_k)-y_k\|^2+\frac{1}{4}\|\bar{\epsilon}_k\|^2+\frac{1}{4\gamma^2 \eta_k^2}\|x_{k+1}-x_k\|^2\right) \leq \frac{M'}{K}(m+K)^{1/3},
	\end{equation}
	where ${L^{\prime}}^2$ is defined in \Cref{prop:mrbio_phi}, and $M' =\frac{\Phi(x_1)-\Phi^{\ast}}{\gamma d} +\left(\frac{2G^2(c_1^1+c_2^2)d^2}{\lambda \mu}+\frac{2C_Q^2d^2}{\eta_K^2}\right)\log(m+K)+ \frac{2G^2}{S \lambda \mu d \eta_0}$.
\end{theorem}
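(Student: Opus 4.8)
The plan is to build a Lyapunov (potential) function that combines the outer objective $\Phi(x_k)$, the inner tracking error $\|y_k - y^*(x_k)\|^2$, the outer momentum variance $\|\bar\epsilon_k\|^2$, and the inner momentum variance $\|u_k - \nabla_y g(x_k,y_k)\|^2$, each variance and tracking term carrying a time-varying weight proportional to $1/\eta_k$, and then to telescope this potential over $k=1,\dots,K$. The three quantities on the left-hand side of \cref{eq:mrbio_conv} are exactly the per-step decrements that telescoping will expose, so the bound $\frac{M'}{K}(m+K)^{1/3}$ should emerge from dividing the total potential drop plus accumulated noise by $K$.

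First I would assemble four building blocks. The objective descent is already supplied by \Cref{prop:mrbio_phi}, which furnishes the key negative term $-\frac{1}{2\gamma\eta_k}\|x_{k+1}-x_k\|^2$ together with positive contributions from the tracking error, the momentum variance $\|\bar\epsilon_k\|^2$, and the bias $C_Q^2$. For the inner variable I would prove a contraction lemma: using the $\mu$-strong convexity of $g$ (\Cref{ass:conv}), the stepsize bound $\lambda \le \frac{1}{6L}$, and the Lipschitzness of $y^*(\cdot)$, one bounds $\|y_{k+1}-y^*(x_{k+1})\|^2$ by a contracted $(1-\lambda\mu\eta_k)$-multiple of $\|y_k - y^*(x_k)\|^2$ plus the inner variance $\|u_k - \nabla_y g(x_k,y_k)\|^2$ and a drift term $\|x_{k+1}-x_k\|^2$ arising because $y^*$ moves with $x$. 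The outer momentum variance recursion is \cref{eq:mrbio_update} of \Cref{prop:mrbio_epi}, and I would establish an entirely analogous recursion for the inner momentum variance with contraction factor $(1-\beta_k)^2$ and its own $\|x_k-x_{k-1}\|^2$, $\|y_k-y_{k-1}\|^2$ source terms.

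Next I would weight the two variance recursions and the tracking recursion by factors $\propto 1/\eta_k$ and add them to the descent inequality. The decisive algebra is comparing the weight growth $\frac{1}{\eta_k}-\frac{1}{\eta_{k-1}}$ against the contraction factors $\alpha_k = c_1\eta_{k-1}^2$, $\beta_k = c_2\eta_{k-1}^2$ and $\lambda\mu\eta_k$. Since $\eta_k = d/(m+k)^{1/3}$ gives $\frac{1}{\eta_k}-\frac{1}{\eta_{k-1}} \le \frac{\eta_k}{3d^3}$ once $m\ge d^3$, the hypotheses $c_1 \ge \frac{2}{3d^3}+\frac{9\lambda\mu}{4}$ and $c_2 \ge \frac{2}{3d^3}+\frac{75{L'}^2\lambda}{2\mu}$ are exactly what keep each weighted momentum variance net-decreasing while leaving a surplus to dominate the coupling terms. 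The positive $\|x_{k+1}-x_k\|^2$ and $\|y_{k+1}-y_k\|^2 = \lambda^2\eta_k^2\|u_k\|^2$ terms produced by the variance and tracking recursions are then absorbed by the negative $-\frac{1}{2\gamma\eta_k}\|x_{k+1}-x_k\|^2$ of \Cref{prop:mrbio_phi} and by the contraction slack, which is precisely where the ceiling $\gamma \le \frac{\lambda\mu}{\sqrt{150{L'}^2L^2/\mu^2+8\lambda\mu(L_Q^2+L^2)}}$ and $\gamma \le \frac{1}{4L_\Phi\eta_K}$ enter. Telescoping over $k=1,\dots,K$ and using $y_1 = y^*(x_1)$ to kill the initial tracking error leaves the averaged left-hand side of \cref{eq:mrbio_conv} bounded by $\frac{\Phi(x_1)-\Phi^*}{\gamma d}\frac{(m+K)^{1/3}}{K}$ plus accumulated noise $\sum_k \alpha_k^2 G^2/\eta_k \sim G^2\sum_k \eta_k^3 \sim \log(m+K)$ and $\sum_k \eta_k\gamma C_Q^2$, reproducing the $\log(m+K)$ and $(m+K)^{1/3}$ factors in $M'$.

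The main obstacle is the two-way coupling between the inner and outer analyses: the outer hypergradient variance bound \cref{eq:mrbio_update} depends on $\|y_k-y_{k-1}\|^2$ while the inner tracking and variance bounds depend on $\|x_{k+1}-x_k\|^2$, so none of the four recursions closes by itself. The delicate step is to fix the four Lyapunov weights and verify \emph{simultaneously} that every cross term carries the right sign for every index $k$ under the decreasing schedule; this is what forces the specific numerical constants ($\frac{9}{4}$, $\frac{75}{2}$, $150$, $8$) in the hypotheses, and making these constants mutually consistent across all four inequalities—rather than the final telescoping, which is routine once the potential is net-decreasing—is the technically demanding part.
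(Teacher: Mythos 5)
Your proposal is correct and follows essentially the same route as the paper's proof: the paper defines the Lyapunov function $\delta_k = \Phi(x_k) + \frac{\gamma}{\lambda\mu}\bigl(9{L'}^2\|y_k-y^{\ast}(x_k)\|^2 + \frac{1}{\eta_{k-1}}\|\bar\epsilon_k\|^2 + \frac{1}{\eta_{k-1}}\|\nabla_y g(x_k,y_k)-u_k\|^2\bigr)$, combines Propositions \ref{prop:mrbio_epi} and \ref{prop:mrbio_phi} with an inner-loop contraction lemma and the $1/\eta_k$-weighted variance recursions exactly as you describe, and telescopes using $y_1=y^{\ast}(x_1)$ and $\sum_k\eta_k^3\le d^3\log(m+K)$. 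The only cosmetic difference is that the paper attaches a constant weight $9{L'}^2\gamma/(\lambda\mu)$ to the tracking-error term rather than a $1/\eta_k$-weighted one, which does not change the structure of the argument.
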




\Cref{thm:mrbio} captures the simultaneous convergence of the variables $x_k$, $y_k$ and $\|\bar{\epsilon}_k\|$: the tracking error $\|y^{\ast}(x_k)-y_k\|$ converges to zero, and the variance $\|\bar{\epsilon}_k\|$ of the momentum recursive hypergradient estimator reduces to zero, both of which further facilitate the convergence of $x_k$ and the algorithm.

By properly choosing the hyperparameters in \Cref{alg:mrbio} to satisfy the conditions in \Cref{thm:mrbio}, we obtain the following computational complexity for MRBO. 
\begin{coro} 
    \label{coro:mrbio}
Under the same conditions of Theorem \ref{thm:mrbio} and choosing  $K=\mathcal{O}(\epsilon^{-1.5}), Q=\mathcal{O}(\log(\frac{1}{\epsilon}))$, 
MRBO in \Cref{alg:mrbio} finds an $\epsilon$-stationary point with the gradient complexity of $\mathcal{O}(\epsilon^{-1.5})$ and the (Jacobian-) Hessian-vector complexity of $\mathcal{\widetilde O}(\epsilon^{-1.5})$.
\end{coro}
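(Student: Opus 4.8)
The plan is to convert the convergence guarantee of \Cref{thm:mrbio}, which controls the running average of the tracking error $\|y^{\ast}(x_k)-y_k\|^2$, the momentum variance $\|\bar{\epsilon}_k\|^2$, and the scaled update $\frac{1}{\gamma^2\eta_k^2}\|x_{k+1}-x_k\|^2$, into a bound on $\frac{1}{K}\sum_{k=1}^K\|\nabla\Phi(x_k)\|^2$, and then to count the per-iteration cost of \Cref{alg:mrbio}. First I would relate the true hypergradient to the quantities appearing in \Cref{thm:mrbio}. Using $x_{k+1}-x_k=-\gamma\eta_k v_k$ gives $\|v_k\|^2=\frac{1}{\gamma^2\eta_k^2}\|x_{k+1}-x_k\|^2$, and writing $\nabla\Phi(x_k)=v_k-\bar{\epsilon}_k-(\overline{\nabla}\Phi(x_k)-\nabla\Phi(x_k))$ with $\bar{\epsilon}_k=v_k-\overline{\nabla}\Phi(x_k)$, a triangle inequality yields
\begin{align*}
\|\nabla\Phi(x_k)\|^2\leq 3\|v_k\|^2+3\|\bar{\epsilon}_k\|^2+3\|\overline{\nabla}\Phi(x_k)-\nabla\Phi(x_k)\|^2.
\end{align*}
The last term is controlled by the standard hypergradient-approximation estimate $\|\overline{\nabla}\Phi(x_k)-\nabla\Phi(x_k)\|\leq L'\|y_k-y^{\ast}(x_k)\|+C_Q$, where $L'$ and the Neumann-truncation error $C_Q=\frac{(1-\eta\mu)^{Q+1}ML}{\mu}$ are exactly those of \Cref{prop:mrbio_phi}. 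Averaging over $k$ and matching each of the three pieces to the corresponding term on the left-hand side of \cref{eq:mrbio_conv}, I obtain
\begin{align*}
\frac{1}{K}\sum_{k=1}^K\|\nabla\Phi(x_k)\|^2\leq \mathcal{O}\Big(\tfrac{M'}{K}(m+K)^{1/3}\Big)+\mathcal{O}(C_Q^2).
\end{align*}

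Next I would make both terms on the right $\mathcal{O}(\epsilon)$. Since $m=\mathcal{O}(1)$ under the choice $m\geq\max\{2,d^3,(c_1d)^3,(c_2d)^3\}$, and $\eta_K=d/(m+K)^{1/3}$ makes the $\frac{2C_Q^2 d^2}{\eta_K^2}\log(m+K)$ contribution to $M'$ scale like $C_Q^2(m+K)^{2/3}\log(m+K)$, the quantity $M'(m+K)^{1/3}/K$ splits into a genuinely decaying part of order $\widetilde{\mathcal{O}}(K^{-2/3})$ (coming from $\Phi(x_1)-\Phi^{\ast}$ and from $G^2=\mathcal{O}(\mathrm{polylog}(1/\epsilon)/S)$ of \Cref{prop:mrbio_epi}) plus a part of order $\mathcal{O}(C_Q^2\log K)$. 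Because $C_Q^2$ decays exponentially in $Q$ while $\log K=\mathcal{O}(\log(1/\epsilon))$ grows only logarithmically, choosing $Q=\mathcal{O}(\log(1/\epsilon))$ forces $C_Q^2\log K\leq\epsilon/2$, and choosing $K=\mathcal{O}(\epsilon^{-1.5})$ drives the $\widetilde{\mathcal{O}}(K^{-2/3})$ term below $\epsilon/2$; together these return an $\epsilon$-stationary point.

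Finally I would count complexity. Each iteration of \Cref{alg:mrbio} evaluates $\mathcal{O}(S)$ first-order gradients of $F$ and $G$ (for $\nabla_x F,\nabla_y F,\nabla_y G$) and one Jacobian-vector product per sample, while the Neumann product in \cref{eq:xgest_mrbio} requires $\mathcal{O}(Q)$ Hessian-vector products $\nabla_y^2 G\cdot v$ per sample. Hence over $K=\mathcal{O}(\epsilon^{-1.5})$ iterations the gradient complexity is $\mathcal{O}(KS)=\mathcal{O}(\epsilon^{-1.5})$, whereas the additional factor $Q=\mathcal{O}(\log(1/\epsilon))$ raises the Hessian-vector (and Jacobian-vector) complexity to $\widetilde{\mathcal{O}}(\epsilon^{-1.5})$, matching the claim.

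I expect the main obstacle to be the $C_Q^2/\eta_K^2$ term inside $M'$: because $1/\eta_K^2$ grows polynomially in $K\sim\epsilon^{-1.5}$, one must verify that the logarithmic choice $Q=\mathcal{O}(\log(1/\epsilon))$ is simultaneously large enough to suppress both this growing factor and the $\log K$ factor, yet small enough that it contributes only a $\log(1/\epsilon)$ overhead to the Hessian-vector count and none to the first-order gradient count. This is precisely where the exponential-in-$Q$ decay of the Neumann truncation error $C_Q$ is essential, and it is the step that distinguishes the bilevel complexity argument from the standard STORM analysis for minimization.
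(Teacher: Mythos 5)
Your proposal is correct and follows essentially the same route as the paper's own (very terse) proof: plug $K=\mathcal{O}(\epsilon^{-1.5})$, $Q=\mathcal{O}(\log(1/\epsilon))$, $S=\mathcal{O}(1)$ into \Cref{thm:mrbio} so that $M'(m+K)^{1/3}/K=\widetilde{\mathcal{O}}(K^{-2/3})=\widetilde{\mathcal{O}}(\epsilon)$, then count $KS$ gradient/Jacobian-vector evaluations and $KSQ$ Hessian-vector products. In fact your write-up is more complete than the paper's, since you make explicit the step the paper leaves implicit --- converting the three-term average in \cref{eq:mrbio_conv} into a bound on $\frac{1}{K}\sum_k\|\nabla\Phi(x_k)\|^2$ via the decomposition $\nabla\Phi(x_k)=v_k-\bar{\epsilon}_k-(\overline{\nabla}\Phi(x_k)-\nabla\Phi(x_k))$ together with $\|\overline{\nabla}\Phi(x_k)-\nabla\Phi(x_k)\|\leq L'\|y_k-y^{\ast}(x_k)\|+C_Q$ --- and you correctly identify that the $C_Q^2/\eta_K^2$ term inside $M'$ is the place where the exponential decay of $C_Q$ in $Q$ must absorb a $K^{2/3}\log K$ growth.
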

As shown in~\Cref{coro:mrbio}, MRBO achieves the computational complexity of $\mathcal{\widetilde O}(\epsilon^{-1.5})$, which outperforms all existing stochastic bilevel algorithms by a factor of $\mathcal{\widetilde O}(\epsilon^{-0.5})$ (see \Cref{tb:order}). Further, this also achieves the best known complexity of $\mathcal{\widetilde  O}(\epsilon^{-1.5})$ for vanilla nonconvex optimization via first-order stochastic algorithms. As far as we know, this is the first result to demonstrate the improved performance of single-loop recursive momentum over SGD-type updates for bilevel optimization. 




\subsection{Convergence Analysis of VRBO Algorithm}

To analyze the convergence of VRBO, we need to first characterize the statistical properties of the hypergradient estimator, in which all the gradient, Jacobian-vector, and Hessian-vector have recursive variance reduction forms. We then need to characterize how the inner-loop tracking error affects the outer-loop hypergradient estimation error in order to establish the overall convergence. The complication in the analysis is mainly due to the hypergradient in bilevel optimization, which does not exist in the previous studies of variance reduction in conventional minimization and minimax optimization.
Below, we provide two properties of VRBO for handling the aforementioned challenges.


In the following proposition, we characterize the variance of the hypergradient estimator, and further use such a bound to characterize the cumulative variances of both the hypergradient and inner-loop gradient estimators based on the recursive variance reduction technique over all iterations.
\begin{proposition}
\label{prop:vrbo_est}
Suppose Assumptions $\ref{ass:conv}$, $\ref{ass:lip}$, $\ref{ass:var}$ hold. Let $\eta < \frac{1}{L}$. Then the hypergradient estimator $\widehat{\nabla}\Phi(x_k,y_k;\mathcal{S}_1)$ defined in \cref{eq:xgest_vrbio} w.r.t.\ $x$ has bounded variance as
\begin{align}
\label{eq:vrbio_est}
 \mathbb{E}\|\widehat{\nabla}\Phi(x_k,y_k;\mathcal{S}_1)-\overline{\nabla}\Phi({x_k})\|^2 \leq \frac{\sigma'^2}{S_1},
\end{align}
where $\sigma'^2 = 2M^2+28L^2M^2\eta^2(Q+1)^2$. 
Let $\Delta_k =\mathbb{E}(\|v_k-\overline{\nabla}\Phi(x_k)\|^2+\|u_k-\nabla_yg(x_k,y_k)\|^2)$, where $v_k$ and $u_k$ denote the recursive variance reduction estimators for hypergradient and inner-loop gradient respectively. Then, the cumulative variance of $v_k$ and $u_k$ is bounded by
\begin{align}
\label{eq:vrbio_update}
	\sum_{k=0}^{K-1}\Delta_k \leq \frac{4\sigma'^2K}{S_1} + 22\alpha^2L_Q^{2}\sum_{k=0}^{K-2}\mathbb{E}\|v_k\|^2 + \frac{4}{3} \mathbb{E}\|\nabla_yg(x_0,y_0)\|^2.
	\end{align}
\end{proposition}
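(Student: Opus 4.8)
The plan is to prove the two claims in sequence, treating \cref{eq:vrbio_est} as a per-batch second-moment bound and \cref{eq:vrbio_update} as a SARAH/SPIDER-style telescoping estimate built on top of it. For \cref{eq:vrbio_est}, I would first observe that $\widehat{\nabla}\Phi(x_k,y_k;\mathcal{S}_1)$ in \cref{eq:xgest_vrbio} is an average of $S_1$ i.i.d.\ single-sample estimators, so it suffices to show each single-sample term is unbiased for $\overline{\nabla}\Phi(x_k)$ and to bound its second moment; the factor $1/S_1$ then falls out of i.i.d.\ averaging. Unbiasedness is the first key step: since the $Q$ Hessian samples $\zeta_i^j$ inside each product $\prod_{j=Q-q}^{Q}(I-\eta\nabla_y^2 G(x_k,y_k;\zeta_i^j))$ are mutually independent and independent of $\xi_i,\zeta_i$, the expectation of the product factorizes into $(I-\eta\nabla_y^2 g(x_k,y_k))^{q+1}$, recovering exactly the Neumann-series form of \cref{eq:hypergradientest}. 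I would then split the single-sample error into the first-order part $\nabla_x F-\nabla_x f$ and the Hessian-vector part, apply $\|a-b\|^2\le 2\|a\|^2+2\|b\|^2$, bound the first-order part by $M^2$ using the $M$-Lipschitzness of $F$ in \Cref{ass:lip}, and control the Hessian-vector part by telescoping over the summation index and using $\|\nabla_x\nabla_y G\|\le L$, $\|I-\eta\nabla_y^2 G\|\le 1$ (from $\eta<1/L$ and \Cref{ass:conv}), and $\|\nabla_y F\|\le M$; the geometric sum over $q=-1,\dots,Q-1$ produces the $(Q+1)^2$ factor and hence the constant $\sigma'^2=2M^2+28L^2M^2\eta^2(Q+1)^2$.

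For \cref{eq:vrbio_update}, the central tool is the recursive (martingale) identity for the SARAH estimators $\widetilde v_{k,t},\widetilde u_{k,t}$ in lines 13--14 of \cref{alg:vrbio}. Writing $e_{k,t}=\widetilde v_{k,t}-\overline{\nabla}\Phi(\widetilde x_{k,t},\widetilde y_{k,t})$ and using that $\mathcal{S}_2$ is drawn fresh and independently at each inner step, the cross term vanishes in conditional expectation, giving $\mathbb{E}\|e_{k,t}\|^2=\mathbb{E}\|e_{k,t-1}\|^2+\mathbb{E}\|(\text{sampled hypergradient difference})-(\text{mean difference})\|^2$, and similarly for the inner-gradient error. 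The new variance term is bounded by $\tfrac{1}{S_2}$ times the single-sample Lipschitz estimate $\mathbb{E}\|\widehat{\nabla}\Phi(z;\zeta)-\widehat{\nabla}\Phi(z';\zeta)\|^2\le L_Q^2\|z-z'\|^2$, with $L_Q$ exactly the hypergradient-estimator Lipschitz constant already isolated in \Cref{prop:mrbio_epi}. Unrolling this recursion within each epoch back to its restart point---where $\Delta_k\le 2\sigma'^2/S_1$ by \cref{eq:vrbio_est} together with \Cref{ass:var} for the inner-gradient part---and summing over all $K$ iterations yields the $4\sigma'^2K/S_1$ term. The per-step increments split into an $x$-jump at $t=0$, contributing $\|x_{k+1}-x_k\|^2=\alpha^2\|v_k\|^2$ and hence the $22\alpha^2 L_Q^2\sum_k\mathbb{E}\|v_k\|^2$ term after collecting constants, and the inner-loop $y$-steps $\|\widetilde y_{k,t}-\widetilde y_{k,t-1}\|^2=\beta^2\|\widetilde u_{k,t-1}\|^2$, whose accumulated contribution I would bound via the $\mu$-strong convexity of $g$ (\Cref{ass:conv}): the inner loop performs descent on a strongly convex objective, so the gradient norms contract geometrically and their sum collapses into the single initial term $\tfrac{4}{3}\mathbb{E}\|\nabla_y g(x_0,y_0)\|^2$.

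The main obstacle is the bookkeeping in \cref{eq:vrbio_update}. Unlike conventional SPIDER, the recursion couples two estimators across a genuine double loop in which $x$ is frozen during the inner iterations while $y$ moves, so one must correctly attribute the $x$-drift to the single outer step (yielding the clean $\|v_k\|^2$ dependence) and separately absorb the $y$-drift through the strong-convexity contraction rather than leaving a free $\sum\|\widetilde u\|^2$ term. Establishing the single-sample Lipschitz estimate with the precise constant $L_Q$ for the Hessian-vector-structured hypergradient---propagating the Jacobian-Lipschitz $\tau$ and Hessian-Lipschitz $\rho$ bounds through the truncated Neumann product---is the technically delicate piece that distinguishes this analysis from the minimization and minimax cases; the remainder is telescoping and constant tracking.
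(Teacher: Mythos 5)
Your treatment of \cref{eq:vrbio_est} matches the paper's (Lemma \ref{lem:deltadef}, first part): unbiasedness of the single-sample estimator via independence of the $\zeta_i^j$'s, i.i.d.\ averaging for the $1/S_1$ factor, the $2M^2$ first-order piece, and the $(Q+1)^2$ factor from bounding each truncated Neumann product by $1$. Your architecture for \cref{eq:vrbio_update} --- the SARAH martingale recursion with per-step increment $\tfrac{L_Q^2}{S_2}\|z-z'\|^2$, the $x$-jump attributed to the single outer step, and epoch-wise telescoping from the large-batch restart --- is also the paper's (Lemmas \ref{lem:spider}--\ref{lem:deltadef}).

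The one place where your plan as stated would not go through is the claim that the accumulated $y$-drift ``collapses into the single initial term $\tfrac{4}{3}\mathbb{E}\|\nabla_y g(x_0,y_0)\|^2$'' because ``the inner loop performs descent on a strongly convex objective, so the gradient norms contract geometrically.'' Two separate mechanisms are needed and you have conflated them. First, what must contract within an inner loop is the \emph{estimator} norm $\|\widetilde u_{k,t}\|$, not the true gradient norm; the paper's Lemma \ref{lem:u_kt} obtains $\mathbb{E}[\|\widetilde u_{k,t}\|^2\mid\mathcal{F}_{k,t}]\le (1-\tfrac{\beta\mu L}{\mu+L})\|\widetilde u_{k,t-1}\|^2$ from co-coercivity, but only after the fresh sampling variance injected at each inner step is absorbed using the condition $S_2\ge 2(\tfrac{L}{\mu}+1)L\beta$, which your plan never invokes. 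Second, this within-loop contraction only collapses $\sum_t\|\widetilde u_{k,t}\|^2$ down to $\tfrac{1}{1-a}\|\widetilde u_{k,0}\|^2$, and $\|\widetilde u_{k,0}\|^2$ is controlled by $\delta_k=\mathbb{E}\|\nabla_y g(x_k,y_k)\|^2$ plus $\widetilde\Delta_{k,0}$ plus an $x$-drift term --- not by $\delta_0$. Getting from $\delta_k$ back to $\delta_0$ requires a second, coupled recursion across outer iterations (Lemmas \ref{lem:smalldelta_k} and \ref{lem:delta_tele}), which contracts only because the inner-loop length is tuned to $m=\tfrac{16}{\mu\beta}-1$ so that $\tfrac{4}{\mu\beta(m+1)}+\tfrac{3L\beta}{2-L\beta}\le\tfrac12$; this recursion also feeds additional $\Delta_{k-1}$ and $\|v_{k-1}\|^2$ forcing terms back into the $\Delta$-telescope, which is partly where the constant $22$ in \cref{eq:vrbio_update} comes from. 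Without resolving this $(\Delta_k,\delta_k)$ coupling and the role of $m$, the argument leaves an uncontrolled $\sum_k\delta_k$ rather than the single $\tfrac43\delta_0$ term.
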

As shown in~\cref{eq:vrbio_est}, the variance bound of the hypergradient estimator increases with the number $Q$ of Hessian-vector products for approximating the Hessian inverse and can be reduced by the batch size $S_1$.  
Then \cref{eq:vrbio_update} further provides an upper bound on the cumulative variance $\sum_{k=0}^{K-1} \Delta_k$ of the recursive hypergradient estimator and inner-loop gradient estimator. 


\begin{proposition}
\label{prop:vrbo_phi}
Suppose Assumptions $\ref{ass:conv}$, $\ref{ass:lip}$, $\ref{ass:var}$ hold. Let $\eta < \frac{1}{L}$. Then, we have
\begin{align*}
\textstyle \mathbb{E}[\Phi(x_{k+1})]\leq \mathbb{E}[\Phi(x_k)] + \frac{\alpha{L'}^2}{\mu^2}\mathbb{E}\|\nabla_yg(x_k,y_k)\|^2 + \alpha \mathbb{E} \|\widetilde{\nabla} \Phi(x_k)-v_k\|^2 - (\frac{\alpha}{2}-\frac{\alpha^2}{2}L_\Phi)\mathbb{E}\|v_k\|^2,
\end{align*}
where ${L^{\prime}}^2=(L+\frac{L^2}{\mu}+\frac{M\tau}{\mu}+\frac{LM\rho}{\mu^2})^2$ and $\widetilde{\nabla} \Phi(x_k)$ takes a form of
\begin{align}\label{eq:tildephi}
    \widetilde{\nabla}\Phi(x_k) = \nabla_xf(x_k,y_k)-\nabla_x\nabla_yg(x_k,y_k)[\nabla_y^2g(x_k,y_k)]^{-1}\nabla_yf(x_k,y_k).
\end{align}
\end{proposition}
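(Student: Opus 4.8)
The plan is to read this off as a one-step descent inequality for the outer objective along the VRBO update $x_{k+1}=x_k-\alpha v_k$ (line 9 of \Cref{alg:vrbio}), driven by the $L_\Phi$-smoothness of $\Phi$ that is already in force (the constant $L_\Phi$ is the one introduced in \Cref{prop:mrbio_phi}). First I would invoke smoothness to write
\[
\Phi(x_{k+1}) \leq \Phi(x_k) - \alpha\langle \nabla\Phi(x_k), v_k\rangle + \tfrac{\alpha^2 L_\Phi}{2}\|v_k\|^2 ,
\]
and then handle the inner product with the polarization identity $-\langle a,b\rangle = -\tfrac12\|a\|^2 - \tfrac12\|b\|^2 + \tfrac12\|a-b\|^2$ taken at $a=\nabla\Phi(x_k)$, $b=v_k$. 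Dropping the nonpositive term $-\tfrac{\alpha}{2}\|\nabla\Phi(x_k)\|^2$ then gives
\[
\Phi(x_{k+1}) \leq \Phi(x_k) + \tfrac{\alpha}{2}\|\nabla\Phi(x_k)-v_k\|^2 - \Big(\tfrac{\alpha}{2} - \tfrac{\alpha^2 L_\Phi}{2}\Big)\|v_k\|^2 ,
\]
which already reproduces the desired coefficient on $\|v_k\|^2$.

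The remaining task is to turn $\|\nabla\Phi(x_k)-v_k\|^2$ into the two stated error terms by routing it through the surrogate hypergradient $\widetilde{\nabla}\Phi(x_k)$ of \cref{eq:tildephi}. Writing $\nabla\Phi(x_k)-v_k = (\nabla\Phi(x_k)-\widetilde{\nabla}\Phi(x_k)) + (\widetilde{\nabla}\Phi(x_k)-v_k)$ and using $\|a+b\|^2 \leq 2\|a\|^2 + 2\|b\|^2$ yields $\tfrac{\alpha}{2}\|\nabla\Phi(x_k)-v_k\|^2 \leq \alpha\|\nabla\Phi(x_k)-\widetilde{\nabla}\Phi(x_k)\|^2 + \alpha\|\widetilde{\nabla}\Phi(x_k)-v_k\|^2$. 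The second summand is exactly the middle term of the claim. For the first, I would combine two facts: (i) the surrogate map $y\mapsto \nabla_x f(x_k,y) - \nabla_x\nabla_y g(x_k,y)[\nabla_y^2 g(x_k,y)]^{-1}\nabla_y f(x_k,y)$ is $L'$-Lipschitz, so that $\|\nabla\Phi(x_k)-\widetilde{\nabla}\Phi(x_k)\| \leq L'\|y^{\ast}(x_k)-y_k\|$, since by \Cref{prop:hypergradori} the true gradient $\nabla\Phi(x_k)$ is precisely this map evaluated at $y^{\ast}(x_k)$ while $\widetilde{\nabla}\Phi(x_k)$ is it evaluated at $y_k$; and (ii) $\mu$-strong convexity of $g(x_k,\cdot)$, which gives $\|y_k-y^{\ast}(x_k)\| \leq \tfrac1\mu\|\nabla_y g(x_k,y_k)\|$ because $\nabla_y g(x_k,y^{\ast}(x_k))=0$. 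Chaining these bounds produces $\alpha\|\nabla\Phi(x_k)-\widetilde{\nabla}\Phi(x_k)\|^2 \leq \tfrac{\alpha {L'}^2}{\mu^2}\|\nabla_y g(x_k,y_k)\|^2$; since every inequality above is pathwise (deterministic given the iterates), taking total expectation then yields the proposition exactly.

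The main obstacle is establishing the Lipschitz constant $L'$ of the surrogate hypergradient map in step (i), which is the one genuinely bilevel ingredient; the rest is the polarization/Young bookkeeping above. I would treat $\nabla_x f$ and the product $\nabla_x\nabla_y g\,[\nabla_y^2 g]^{-1}\,\nabla_y f$ separately: the former contributes $L$ (Lipschitzness of $\nabla F$), and the product is bounded term by term via $\|\nabla_x\nabla_y g\|\leq L$, $\|[\nabla_y^2 g]^{-1}\|\leq 1/\mu$, $\|\nabla_y f\|\leq M$ together with the Lipschitz constants $\tau$ of $\nabla_x\nabla_y G$, $L$ of $\nabla F$, and $\rho$ of $\nabla_y^2 G$ (the last yielding Lipschitzness $\rho/\mu^2$ of the inverse through the resolvent identity $A^{-1}-B^{-1}=A^{-1}(B-A)B^{-1}$). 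Collecting the contributions gives $L + \tfrac{L^2}{\mu} + \tfrac{M\tau}{\mu} + \tfrac{LM\rho}{\mu^2}$, matching the definition of $L'$. Since this is the same Lipschitz estimate underlying the $\overline{\nabla}\Phi$ approximations used elsewhere in the paper, I would state it as a preliminary lemma under Assumptions \ref{ass:conv} and \ref{ass:lip} and invoke it here rather than re-derive it inside the proof.
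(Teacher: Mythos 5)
Your proposal is correct and follows essentially the same route as the paper's proof: smoothness of $\Phi$ with $x_{k+1}-x_k=-\alpha v_k$, Young/polarization to extract $\frac{\alpha}{2}\|\nabla\Phi(x_k)-v_k\|^2-(\frac{\alpha}{2}-\frac{\alpha^2}{2}L_\Phi)\|v_k\|^2$, the split through $\widetilde{\nabla}\Phi(x_k)$, and the chain $\|\nabla\Phi(x_k)-\widetilde{\nabla}\Phi(x_k)\|\le L'\|y_k-y^\ast(x_k)\|\le \frac{L'}{\mu}\|\nabla_y g(x_k,y_k)\|$ via strong convexity and $\nabla_y g(x_k,y^\ast(x_k))=0$. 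The only cosmetic difference is that the paper cites the $L'$-Lipschitz property of the surrogate hypergradient map from Lemma 7 of \cite{ji2021bilevel} rather than rederiving it, whereas you sketch its (correct) derivation as a preliminary lemma.
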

\Cref{prop:vrbo_phi} characterizes how the objective function value decreases (i.e., captured by $\mathbb{E}[\Phi(x_{k+1})] - \mathbb{E}[\Phi(x_k)]$) due to one iteration update $\|v_k\|^2$ of variable $x$ (last term in the bound). Such a value reduction is also affected by the moments of gradient w.r.t.\ $y$ and the variance of recursive hypergradient estimator.



Based on Propositions \ref{prop:vrbo_est} and \ref{prop:vrbo_phi}, we next characterize the convergence of VRBO.
\begin{theorem}
\label{thm:vrbio}

Apply VRBO to solve the problem \cref{eq:main}. Suppose Assumptions \ref{ass:conv}, \ref{ass:lip}, \ref{ass:var} hold. Let $\alpha=\frac{1}{20L_m^3}, \beta=\frac{2}{13L_Q}, \eta < \frac{1}{L}, S_2\geq 2(\frac{L}{\mu}+1)L\beta, m=\frac{16}{\mu \beta}-1, q=\frac{\mu L \beta S_2}{\mu+L}$ where $L_m = \max \{L_Q, L_\Phi\}$. Then, we have
\begin{align}
\label{eq:vrbio_conv}
  \frac{1}{K}\sum_{k=0}^{K-1}\mathbb{E}\|\nabla\Phi(x_k)\|^2 
    \leq \mathcal{O}(\frac{Q^4}{K}+\frac{Q^6}{S_1}+Q^4(1-\eta\mu)^{2Q}).
\end{align}
\end{theorem}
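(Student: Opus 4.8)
The plan is to combine the one-step descent inequality of Proposition~\ref{prop:vrbo_phi} with the cumulative variance bound of Proposition~\ref{prop:vrbo_est}, after first establishing a contraction estimate for the inner-loop tracking error. First I would telescope the descent bound of Proposition~\ref{prop:vrbo_phi} over $k=0,\dots,K-1$ to obtain
\begin{align*}
\Big(\tfrac{\alpha}{2}-\tfrac{\alpha^2 L_\Phi}{2}\Big)\sum_{k=0}^{K-1}\mathbb{E}\|v_k\|^2
\leq \Phi(x_0)-\Phi^\ast
+\tfrac{\alpha {L'}^2}{\mu^2}\sum_{k=0}^{K-1}\mathbb{E}\|\nabla_yg(x_k,y_k)\|^2
+\alpha\sum_{k=0}^{K-1}\mathbb{E}\|\widetilde{\nabla}\Phi(x_k)-v_k\|^2,
\end{align*}
where $\Phi^\ast=\inf_x\Phi(x)$. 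The choice $\alpha=\tfrac{1}{20L_m^3}$ forces $\tfrac{\alpha}{2}-\tfrac{\alpha^2 L_\Phi}{2}\ge\tfrac{\alpha}{4}$, so the left side keeps a positive multiple of $\sum_k\mathbb{E}\|v_k\|^2$ that I will later use to absorb stray $\|v_k\|^2$ terms.

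Second, I would split the hypergradient error via $\|\widetilde{\nabla}\Phi(x_k)-v_k\|^2\le 2\|\widetilde{\nabla}\Phi(x_k)-\overline{\nabla}\Phi(x_k)\|^2+2\|\overline{\nabla}\Phi(x_k)-v_k\|^2$. The first piece is the Neumann-series truncation error, bounded by $C_Q^2=O((1-\eta\mu)^{2Q})$ exactly as in Proposition~\ref{prop:mrbio_phi}; the second piece is part of $\Delta_k$, so I substitute Proposition~\ref{prop:vrbo_est} to replace $\sum_k\mathbb{E}\|\overline{\nabla}\Phi(x_k)-v_k\|^2$ by $\tfrac{4\sigma'^2K}{S_1}+22\alpha^2L_Q^2\sum_k\mathbb{E}\|v_k\|^2+\tfrac43\mathbb{E}\|\nabla_yg(x_0,y_0)\|^2$. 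Since $L_Q\le L_m$, the stepsize choice makes $22\alpha^2L_Q^2$ a small fraction of $\alpha$, so the extra $\sum_k\mathbb{E}\|v_k\|^2$ it produces can be absorbed back into the left-hand $\tfrac{\alpha}{4}\sum_k\mathbb{E}\|v_k\|^2$.

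Third --- and this is the main obstacle --- I would control $\sum_{k}\mathbb{E}\|\nabla_yg(x_k,y_k)\|^2$. Because $x$ is held fixed at $x_{k+1}$ throughout the $(m+1)$-step inner loop on $y$, each inner loop is a SARAH-type variance-reduced gradient descent on the $\mu$-strongly-convex map $g(x_{k+1},\cdot)$, which contracts $\|y-y^\ast(x_{k+1})\|^2$ by roughly $(1-\mu\beta/2)^{m+1}$; the parameter choices $m=\tfrac{16}{\mu\beta}-1$, $S_2\ge 2(\tfrac L\mu+1)L\beta$, and $q=\tfrac{\mu L\beta S_2}{\mu+L}$ are tuned precisely so that this contraction dominates both the shift of the target caused by the outer displacement $\|x_{k+1}-x_k\|=\alpha\|v_k\|$ and the variance injected by the recursive estimator over each length-$q$ epoch. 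The delicate point is that $u_k,v_k$ are reset with the large batch $\mathcal S_1$ only every $q$ outer iterations, while $y$ is updated $m+1$ times per outer iteration, so the recursive variance accumulates across the \emph{nested} loops and must be telescoped jointly with the strong-convexity contraction. Producing a clean per-epoch estimate that simultaneously couples the inner tracking error, the displacement $\alpha\|v_k\|$, and the accumulated SARAH variance is where the real work lies; it yields a bound of the form $\sum_k\mathbb{E}\|\nabla_yg(x_k,y_k)\|^2\le O\!\big(\alpha^2 L_Q^2\sum_k\mathbb{E}\|v_k\|^2+\tfrac{K}{S_1}\sigma'^2+\text{init}\big)$, whose $\|v_k\|^2$ part again absorbs into the left side.

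Finally, after cancelling all $\sum_k\mathbb{E}\|v_k\|^2$ terms and dividing by $\alpha K/4$, I would convert back to the true hypergradient through $\|\nabla\Phi(x_k)\|^2\le 3\|\nabla\Phi(x_k)-\widetilde{\nabla}\Phi(x_k)\|^2+3\|\widetilde{\nabla}\Phi(x_k)-v_k\|^2+3\|v_k\|^2$, using the hypergradient Lipschitz bound $\|\nabla\Phi(x_k)-\widetilde{\nabla}\Phi(x_k)\|^2\le {L'}^2\|y_k-y^\ast(x_k)\|^2$ and the strong-convexity estimate $\|y_k-y^\ast(x_k)\|^2\le\mu^{-2}\|\nabla_yg(x_k,y_k)\|^2$ to recycle the already-controlled tracking error. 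Collecting the three residual sources --- the telescoped gap $\tfrac{\Phi(x_0)-\Phi^\ast}{\alpha K}$, the batch term $\tfrac{\sigma'^2}{S_1}$, and the truncation error $C_Q^2$ --- and inserting the explicit $Q$-dependence $L_Q^2=O(Q^4)$, $\sigma'^2=O(Q^2)$, $C_Q^2=O((1-\eta\mu)^{2Q})$ (so that e.g.\ the $L_Q^2$-weighted terms promote $\sigma'^2/S_1$ to $O(Q^6/S_1)$ and the truncation error to $O(Q^4(1-\eta\mu)^{2Q})$) gives the stated bound $\mathcal{O}\!\big(\tfrac{Q^4}{K}+\tfrac{Q^6}{S_1}+Q^4(1-\eta\mu)^{2Q}\big)$.
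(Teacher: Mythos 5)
Your plan follows essentially the same route as the paper's proof: telescope the descent inequality of Proposition~\ref{prop:vrbo_phi}, split $\|\widetilde{\nabla}\Phi(x_k)-v_k\|^2$ into the $C_Q^2$ truncation error plus $\Delta_k$, substitute the cumulative bounds on $\sum_k\Delta_k$ and $\sum_k\mathbb{E}\|\nabla_yg(x_k,y_k)\|^2$, absorb the resulting $\alpha^2L_Q^2\sum_k\mathbb{E}\|v_k\|^2$ terms using the smallness of $\alpha$, and convert back to $\|\nabla\Phi(x_k)\|^2$ via the four-term decomposition --- which is exactly the paper's chain through Lemmas~\ref{lem:vk}, \ref{lem:deltadef}, \ref{lem:doubledelta}, and \ref{lem:vktele}, with the same final $Q$-accounting. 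The inner-loop bound you defer in your third step (of the form $\sum_k\delta_k\le O(\sigma'^2K/S_1+\alpha^2L_Q^2\sum_k\mathbb{E}\|v_k\|^2+\delta_0)$) is precisely what the paper establishes, with the structure you describe, via the epoch-wise SARAH contraction argument in Lemmas~\ref{lem:u_kt}--\ref{lem:doubledelta}.
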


\Cref{thm:vrbio} shows that VRBO converges sublinearly w.r.t.~the number $K$ of iterations with the convergence error consisting of two terms. The first error term $\frac{Q^6}{S_1}$ is caused by the minibatch gradient and hypergradient estimation at outer loops and can be reduced by increasing the batch size $S_1$ (in fact, $Q$ scales only logarithmically with $S_1$). The second error term $Q^4(1-\eta\mu)^{2Q}$ is due to the approximation error of the Hessian-vector type of hypergradient estimation, which decreases exponentially fast w.r.t.~$Q$.  
By properly choosing the hyperparameters in \Cref{alg:vrbio}, we obtain the following complexity result for VRBO.

\begin{coro}
\label{coro:vrbio}
    Under the same conditions of \Cref{thm:vrbio}, choose $S_1 =\mathcal{O}( \epsilon^{-1}), S_2=\mathcal{O}(\epsilon^{-0.5}), Q=\mathcal{O}(\log (\frac{1}{\epsilon^{0.5}})), K = \mathcal{O}(\epsilon^{-1})$. Then, VRBO finds an $\epsilon$-stationary point with the gradient complexity of $\mathcal{\widetilde O}(\epsilon^{-1.5})$ and Hessian-vector complexity of $\mathcal{\widetilde O}(\epsilon^{-1.5})$.
\end{coro}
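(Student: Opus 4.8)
The plan is to obtain the stated complexity directly from the convergence guarantee of \Cref{thm:vrbio} by plugging in the prescribed hyperparameters and then carefully counting the per-iteration sample cost. The first step is to substitute $K=\mathcal{O}(\epsilon^{-1})$, $S_1=\mathcal{O}(\epsilon^{-1})$, and $Q=\Theta(\log(1/\epsilon))$ into the bound \cref{eq:vrbio_conv} and verify that all three error terms are at most $\mathcal{\widetilde O}(\epsilon)$. The terms $\frac{Q^4}{K}$ and $\frac{Q^6}{S_1}$ are immediately $\mathcal{\widetilde O}(\epsilon)$, since the polynomial factors $Q^4,Q^6$ are polylogarithmic in $1/\epsilon$. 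For the third term $Q^4(1-\eta\mu)^{2Q}$ I would use $(1-\eta\mu)^{2Q}\le e^{-2\eta\mu Q}$, so that taking $Q=\Theta(\log(1/\epsilon))$ with a large enough constant (depending on $\eta\mu$) forces the exponential to dominate the polynomial $Q^4$ factor and makes this term $\mathcal{\widetilde O}(\epsilon)$ as well. Hence $\frac{1}{K}\sum_{k=0}^{K-1}\mathbb{E}\|\nabla\Phi(x_k)\|^2\le\mathcal{\widetilde O}(\epsilon)$, and outputting a uniformly random iterate (or the one with smallest gradient norm) yields a point satisfying the $\epsilon$-stationarity criterion, after rescaling the constants hidden in $K,S_1,Q$ so that the right-hand side is exactly $\epsilon$.

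The second step is to translate the auxiliary hyperparameters $L_Q,\beta,m,q$ of \Cref{thm:vrbio} into their $\epsilon$-dependence. From the definition in \Cref{prop:mrbio_epi}, the dominant term of $L_Q^2$ scales like $Q^2(Q+1)^2$, so $L_Q=\mathcal{O}(Q^2)=\mathcal{\widetilde O}(1)$; consequently $\beta=\frac{2}{13L_Q}=\mathcal{\widetilde O}(1)$, $m=\frac{16}{\mu\beta}-1=\Theta(L_Q)=\mathcal{\widetilde O}(1)$, and $q=\frac{\mu L\beta S_2}{\mu+L}=\Theta(\beta S_2)=\mathcal{\widetilde O}(\epsilon^{-0.5})$ after inserting $S_2=\mathcal{O}(\epsilon^{-0.5})$. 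Note that $m$ and $q$ are both $\mathcal{\widetilde O}$-order quantities, which is what keeps the inner loop short and the epochs long.

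The third step is the cost count, which splits into the periodic large-batch evaluations and the per-inner-step small-batch evaluations. Every $q$ outer iterations VRBO refreshes the full estimators on a batch of size $S_1$, giving $K/q$ refreshes at a cost of $\frac{K}{q}S_1$ gradient and Jacobian-vector evaluations; substituting the values yields $\frac{\mathcal{O}(\epsilon^{-1})}{\mathcal{\widetilde O}(\epsilon^{-0.5})}\cdot\mathcal{O}(\epsilon^{-1})=\mathcal{\widetilde O}(\epsilon^{-1.5})$. Independently, each of the $K$ outer iterations runs $m+1$ inner updates on batches of size $S_2$, contributing $K(m+1)S_2=\mathcal{\widetilde O}(\epsilon^{-1})\cdot\mathcal{O}(\epsilon^{-0.5})=\mathcal{\widetilde O}(\epsilon^{-1.5})$; summing the two gives a total gradient (and Jacobian-vector) complexity of $\mathcal{\widetilde O}(\epsilon^{-1.5})$. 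Finally, since each hypergradient evaluation \cref{eq:xgest_vrbio} consumes $\mathcal{O}(Q)$ Hessian-vector products per sample (computed recursively as in \Cref{sec:hessianvector}), the Hessian-vector complexity is the gradient complexity times $Q=\mathcal{O}(\log(1/\epsilon))$, which remains $\mathcal{\widetilde O}(\epsilon^{-1.5})$.

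The main obstacle is ensuring the two cost contributions balance at the same order rather than one dominating. This hinges on the specific choice $q=\Theta(\beta S_2)$ fixed in \Cref{thm:vrbio}, which makes the amortized per-iteration large-batch cost $S_1/q=\Theta(S_1/(\beta S_2))$ match the per-iteration inner cost $(m+1)S_2=\Theta(S_2/\beta)$ exactly when $S_1=\Theta(S_2^2)$ — precisely the relation satisfied by $S_1=\mathcal{O}(\epsilon^{-1})$ and $S_2=\mathcal{O}(\epsilon^{-0.5})$. Confirming this balance, and checking that the theorem's constraints (such as $S_2\ge 2(\tfrac{L}{\mu}+1)L\beta$ and the integrality of $m,q$) are consistent with these choices, is the one delicate point; the remainder of the argument is routine substitution into \cref{eq:vrbio_conv} and bookkeeping of sample counts.
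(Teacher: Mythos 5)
Your proposal is correct and follows essentially the same route as the paper's own proof: verify that the bound in \cref{eq:vrbio_conv} evaluates to $\mathcal{\widetilde O}(\epsilon)$ under the stated choices, note that $q=(1-a)S_2=\mathcal{\widetilde O}(\epsilon^{-0.5})$ and $m=\mathcal{\widetilde O}(1)$, and then count the cost as $\mathcal{O}(KS_1/q+KS_2m)$ for gradients/Jacobian-vectors and an extra factor of $Q$ for Hessian-vectors. Your treatment is somewhat more explicit than the paper's (e.g., the exponential-versus-polynomial argument for $Q^4(1-\eta\mu)^{2Q}$ and the balance condition $S_1=\Theta(S_2^2)$), but the decomposition and conclusions are identical.
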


Similarly to MRBO, \Cref{coro:vrbio} indicates that VRBO also outperforms all existing stochastic algorithms for bilevel optimization by a factor of $\mathcal{\widetilde O}(\epsilon^{-0.5})$ (see \Cref{tb:order}). 
Further, although MRBO and VRBO achieve the same theoretical computational complexity, VRBO empirically performs much better than MRBO (as well as other single-loop momentum-based algorithms MSTSA~\cite{khanduri2021momentum}, STABLE~\cite{chen2021single}, and SEMA \cite{guo2021stochastic}), as will be shown in \Cref{sec:experiment}. 



We note that although our theory requires $Q$ to scale as $\mathcal{O}(\log (\frac{1}{\epsilon^{0.5}}))$, a very small $Q$ is sufficient to attain a fast convergence speed in experiments. For example, we choose $Q=3$ in our hyper-cleaning experiments as other benchmark algorithms such as AID-FP, reverse, and stocBiO. 

\section{Experiments}\label{sec:experiment}
In this section, we compare the performances of our proposed VRBO and MRBO algorithms with the following bilevel optimization algorithms: AID-FP~\cite{grazzi2020iteration}, reverse~\cite{franceschi2017forward} (both are double-loop deterministic algorithms), BSA~\cite{ghadimi2018approximation} (double-loop stochastic algorithm),  MSTSA~\cite{khanduri2021momentum} and SUSTAIN~\cite{khanduri2021near} (single-loop stochastic algorithms), 
STABLE~\cite{chen2021single} (single-loop stochastic algorithm with Hessian inverse computations), and stocBiO~\cite{ji2021bilevel} (double-loop stochastic algorithm). SEMA~\cite{guo2021stochastic} is not included in the list because it performs similarly to SUSTAIN. RSVRB~\cite{guo2021randomized} is not included since it performs similarly to STABLE. Our experiments are run over a hyper-cleaning application on MNIST\footnote{The experiments on CIFAR10 are still ongoing.}. We provide the detailed experiment specifications in \Cref{sec:extraexperiments}. 

As shown in~\Cref{fig:mainresults} (a) and (b), the convergence rate (w.r.t.~running time) of our VRBO and the SGD-type stocBiO converge much faster than other algorithms in comparison. Between VRBO and stocBiO, they have comparable performance, but our VRBO achieves a lower training loss as well as a more stable convergence. Further, our VRBO converges significantly faster than all single-loop momentum-based methods. This provides some evidence on the advantage of double-loop algorithms over single-loop algorithms for bilevel optimization. Moreover, our MRBO achieves the fastest convergence rate among all single-loop momentum-based algorithms, which is in consistent with our theoretical results. In \Cref{fig:mainresults} (c), we compare our algorithms MRBO and VRBO with three momentum-based algorithms, i.e., MSTSA, STABLE, and SUSTAIN, where SUSTAIN (proposed in the concurrent work~\cite{khanduri2021near}) achieves the same theoretical complexity as our MRBO and VRBO. However, it can be seen that MRBO and VRBO are significantly faster than the other three algorithms.



All three plots suggest an interesting observation that {\bf double-loop} algorithms tend to converge faster than {\bf single-loop} algorithms as demonstrated by (i) double-loop VRBO performs the best among all algorithms; and (ii) double-loop SGD-type stocBiO, GD-type reverse and AID-FP perform even better than single-loop momentum-accelerated stochastic algorithm MRBO; and (iii) double-loop SGD-type BSA (with single-sample updates) converges faster than single-loop momentum-accelerated stochastic MSTSA, STABLE and SUSTAIN (with single-sample updates). Such a phenomenon has been observed only in bilevel optimization (to our best knowledge), and occurs oppositely in minimization and minimax problems, where single-loop algorithms substantially outperform double-loop algorithms. 
The reason for this can be that the hypergradient estimation at the outer-loop in bilevel optimization is very sensitive to the inner-loop output. Thus, for each outer-loop iteration, sufficient inner-loop iterations in the double-loop structure provides a much more accurate output close to $y^*(x)$ than a single inner-loop iteration, and thus helps to estimate a more accurate hypergradient in the outer loop. This further facilitates better outer-loop iterations and yields faster overall convergence.




\begin{figure}[h]
	\centering    
	\subfigure[Noise rate $p=0.1$]{\includegraphics[width=45.0mm]{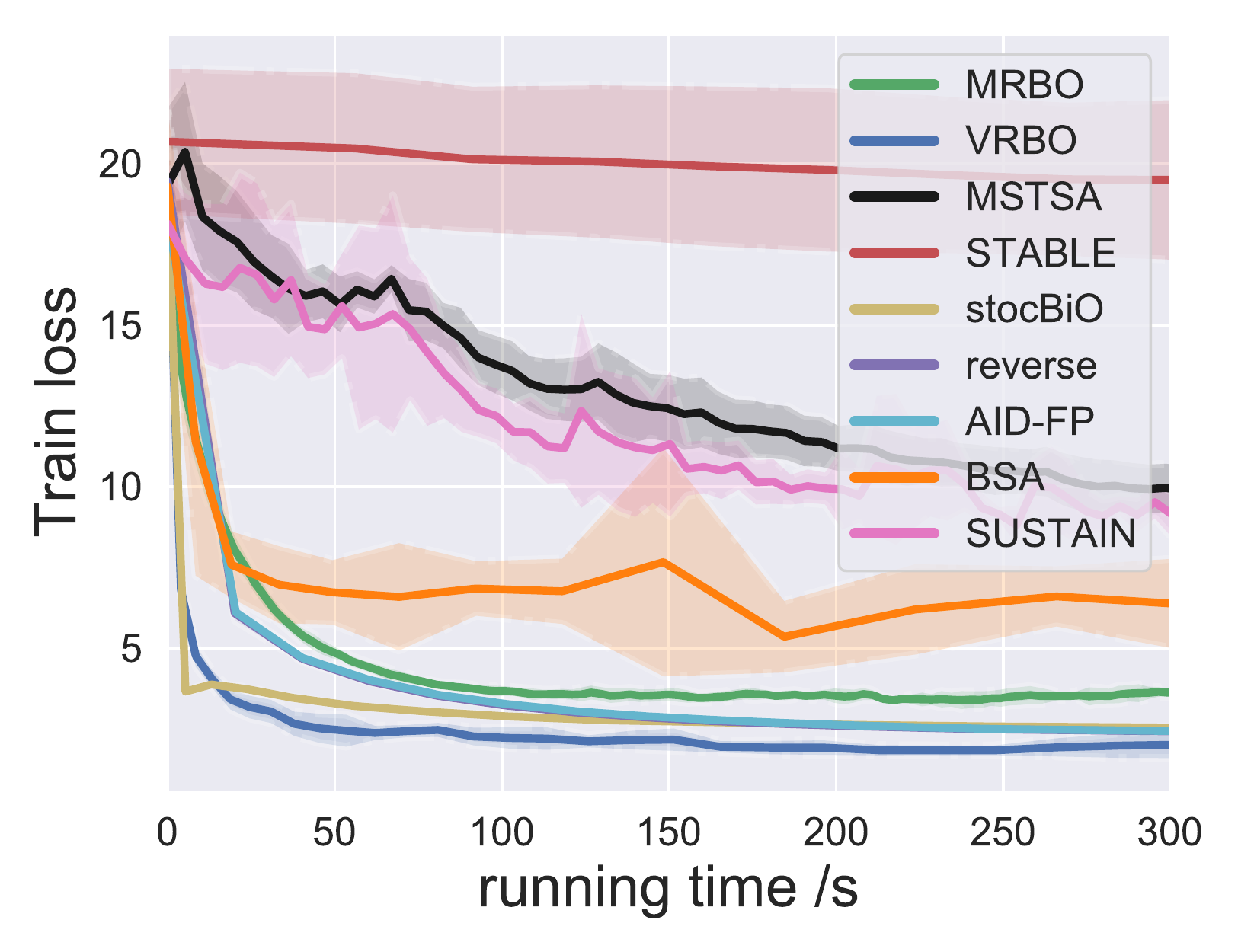}}
    \subfigure[Noise rate $p=0.15$]{\includegraphics[width=45.0mm]{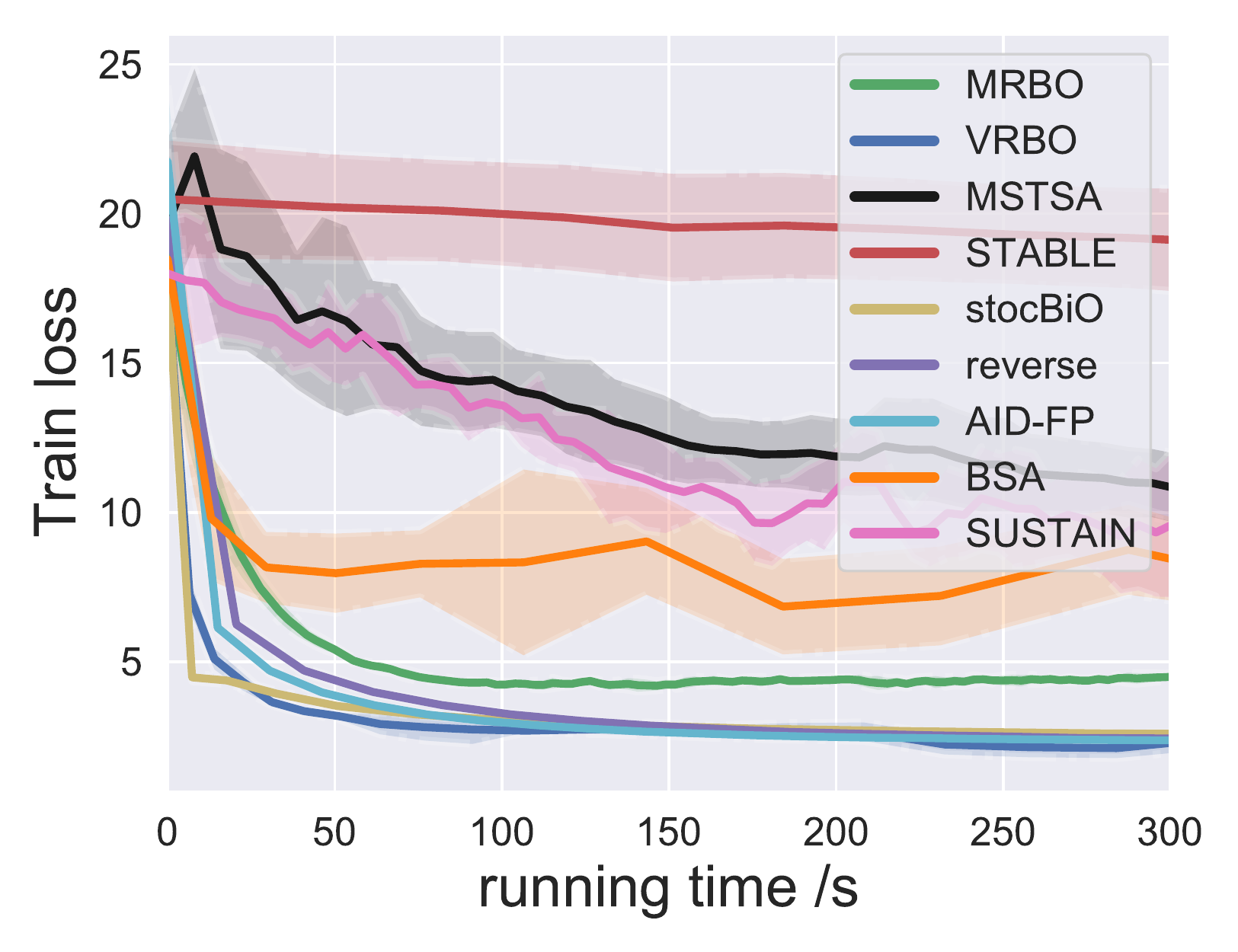}}
    \subfigure[Noise rate $p=0.1$]{ \includegraphics[width=45.0mm]{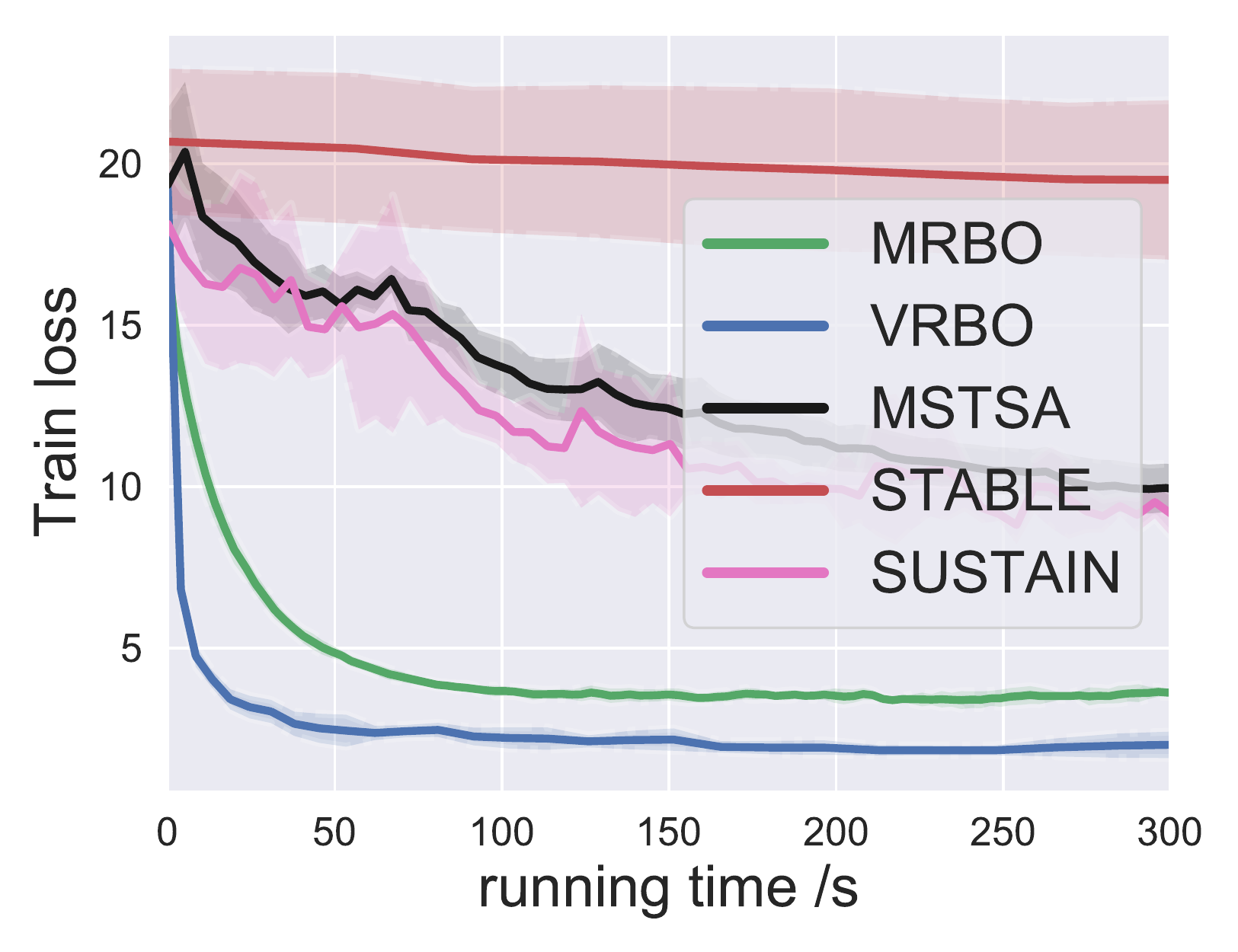}}
	\caption{training loss v.s. running time.}\label{fig:mainresults}
\end{figure}

\section{Conclusion}
In this paper, we proposed two novel algorithms MRBO and VRBO for the nonconvex-strongly-convex bilevel stochastic optimization problem, and showed that their computational complexities outperform all existing algorithms order-wise. In particular, MRBO is the first momentum algorithm that exhibits the order-wise improvement over SGD-type algorithms for bilevel optimization, and VRBO is the first that adopts the recursive variance reduction technique to accelerate bilevel optimization. 
Our experiments demonstrate the superior performance of these algorithms, and further suggest that the double-loop design may be more suitable for bilevel optimization than the single-loop structure. We anticipate that our analysis can be applied to studying bilevel problems under various other loss geometries. 
We also hope that our study can motivate further comparison between double-loop and single-loop algorithms in bilevel optimization.

\section*{Acknowledgements}
The work was supported in part by the U.S. National Science Foundation under the grants ECCS-2113860, DMS-2134145 and CNS-2112471.

\bibliographystyle{abbrv}
\bibliography{neurips2021}

\newpage
\appendix

{\Large{\bf Supplementary Materials}}

\section{Hessian Vector Implementation} \label{sec:hessianvector}

In this section, we provide an algorithm (see \Cref{alg:hessianvector}) for computing the hypergradient estimator in \cref{eq:xgest_mrbio} in MRBO by {\bf Hessian vectors} rather than {\bf Hessians}, in order to reduce the memory and computational cost. 

\begin{algorithm}
\caption{Hessian Vector Implementation for Computing Hypergradient Estimator in \cref{eq:xgest_mrbio}}
\label{alg:hessianvector}
\begin{algorithmic}[1]
\STATE \textbf{Input:} Hessian Estimation Number $Q$, Samples $\mathcal{B}_x$, Hyperparameter $\eta$,
\STATE Compute $\nabla_xF(x,y;\mathcal{B}_F)$, $r_0=\nabla_yF(x,y;\mathcal{B}_F)$, $\nabla_yG(x,y;\mathcal{B}_G)$
\FOR {$q=0,1,\ldots,Q-1$}
\STATE $G_{q+1} = (y-\eta \nabla_yG(x,y;\mathcal{B}_{Q-q}))r_{q}$
\STATE $r_{q+1} = \partial(G_{q+1})/\partial y$ \qquad \text{note: } $\partial(G_{q+1})/\partial y=r_q-\eta \nabla_y^2G(x,y;\mathcal{B}_{Q-q})r_{q}$
\ENDFOR
\STATE $M_Q=\eta \sum_{q=0}^{Q}r_q$
\STATE Return $\nabla_xF(x,y;\mathcal{B}_F)-\partial(\nabla_yG(x,y;\mathcal{B}_G)M_Q)/\partial x$

\end{algorithmic}
\end{algorithm}

As shown in line 5 of \Cref{alg:hessianvector}, instead of updating $r_{q+1}=r_q-\eta \nabla_y^2G(x,y;\mathcal{B}_{Q-q})r_{q}$ by directly computing Hessian $\nabla_y^2G(x,y;\mathcal{B}_{Q-q})$, we choose to compute the Hessian-vector product via $r_{q+1} = \partial(G_{q+1})/\partial y$. A similar implementation is applied to compute the Jacobian vector $\partial(\nabla_yG(x,y;\mathcal{B}_G)M_Q)/\partial x$ in line 8. Note that both lines 5 and 8 can apply automatic differentiation function {\em torch.grad()} for easy implementation. In this way, we compute the hypergradient estimator in \cref{eq:xgest_mrbio} recursively (see lines 3-6 in \Cref{alg:hessianvector}) via Hessian-vector products without computing Hessian explicitly.

\section{Specifications of Experiments}\label{sec:extraexperiments}
We compare our proposed algorithms MRBO and VRBO with other benchmarks including stocBiO~\cite{ji2021bilevel}, reverse~\cite{franceschi2017forward}, AID-FP~\cite{grazzi2020iteration}, BSA~\cite{ghadimi2018approximation}, MSTSA~\cite{khanduri2021momentum}, STABLE~\cite{chen2021single} and SUSTAIN~\cite{khanduri2021near} on the hyper-cleaning problem~\cite{shaban2019truncated} with MNIST dataset~\cite{lecun1998gradient}. The formulation of data hyper-cleaning is given below:
\begin{align*}
   &\min_\lambda \mathbb{E} [\mathcal{L}_{\mathcal{V}}(\lambda, w^{\ast})]=\frac{1}{|S_{\mathcal{V}}|} \sum_{(x_i,y_i)\in S_{\mathcal{V}}} L_{CE}((w^{\ast})^Tx_i,y_i) \\
   &\text{s.t.\quad } w^{\ast} =\argmin_w \mathcal{L}(\lambda, w):=\frac{1}{|S_{\mathcal{T}}|}\sum_{(x_i,y_i)\in S_{\mathcal{T}}}\sigma(\lambda_i)L_{CE}(w^Tx_i,y_i)+C\|w\|^2,
\end{align*}
where $L_{CE}$ denotes the cross-entropy loss, $S_{\mathcal{T}}$ and $S_{\mathcal{V}}$ denote the training data and the validation data, respectively, $\lambda=\{\lambda_i\}_{i\in \mathcal{S}_\mathcal{T}}$ and $C$ are the regularization parameters, and $\sigma(\cdot)$ denotes the sigmoid function. In experiment, we set $C=0.001$ and fix the size of the training data $\mathcal{S}_{\mathcal{V}}$ and validation data $S_{\mathcal{T}}$ as 20000 and 5000, respectively. Furthermore, we use 10000 images for testing, which follows the setting in \cite{ji2021bilevel}. We use the Hessian-vector based algorithm (\Cref{alg:hessianvector}) for computing the hypergradient estimator, where we set $Q=3$ and $\eta=0.5$. For stochastic algorithms including MRBO, VRBO, stocBiO, we set the batchsize to be 1000 for both training and validation procedures. For VRBO, we set the inner batchsize to be 500 and the period $q$ to be 3. For the double-loop algorithms, we fine tune the number of inner-loop steps and set it to be 200 for the stocBiO, AID-FP, BSA and reverse algorithms for the best performance, and set it to be 20 for VRBO for the best performance. To set the outer-loop and inner-loop stepsizes, we use the training loss as the metric and apply the standard grid search with the stepsizes $\lambda$, $\gamma$, $\alpha$ and $\beta$ all chosen from the interval [1e-3,1]. We then select those that yield the best convergence performance. Thus, we set 0.1 as the stepsize for all algorithms except SUSTAIN and STABLE. For SUSTAIN, the inner-loop stepsize is set to be 0.03 and outer-loop stepsize is set to be 0.1, and for STABLE, inner-loop and outer-loop stepsizes are set to be 0.01 and 1e-10, respectively, because these algorithms are not stable with larger stepsizes. Our experimental implementations are based on the implementation of stocBiO in \cite{ji2021bilevel}, which is under MIT License. Futhermore, all results are repeated with 5 random seeds and we use iMac with 3.8GHz Quad-Core Intel Core i5 CPU and 32 GB 2400 MHz DDR4 for training without the requirement of GPU. However, our code supports GPU cluster training.

\subsection{Additional Experiments of Hyper-cleaning}
In this subsection, we include extra experiments to further validate our theoretical results and understand the VRBO algorithm.

\begin{figure}[h]
	\centering    
	\subfigure[Noise rate $p=0.15$]{\includegraphics[width=60.0mm]{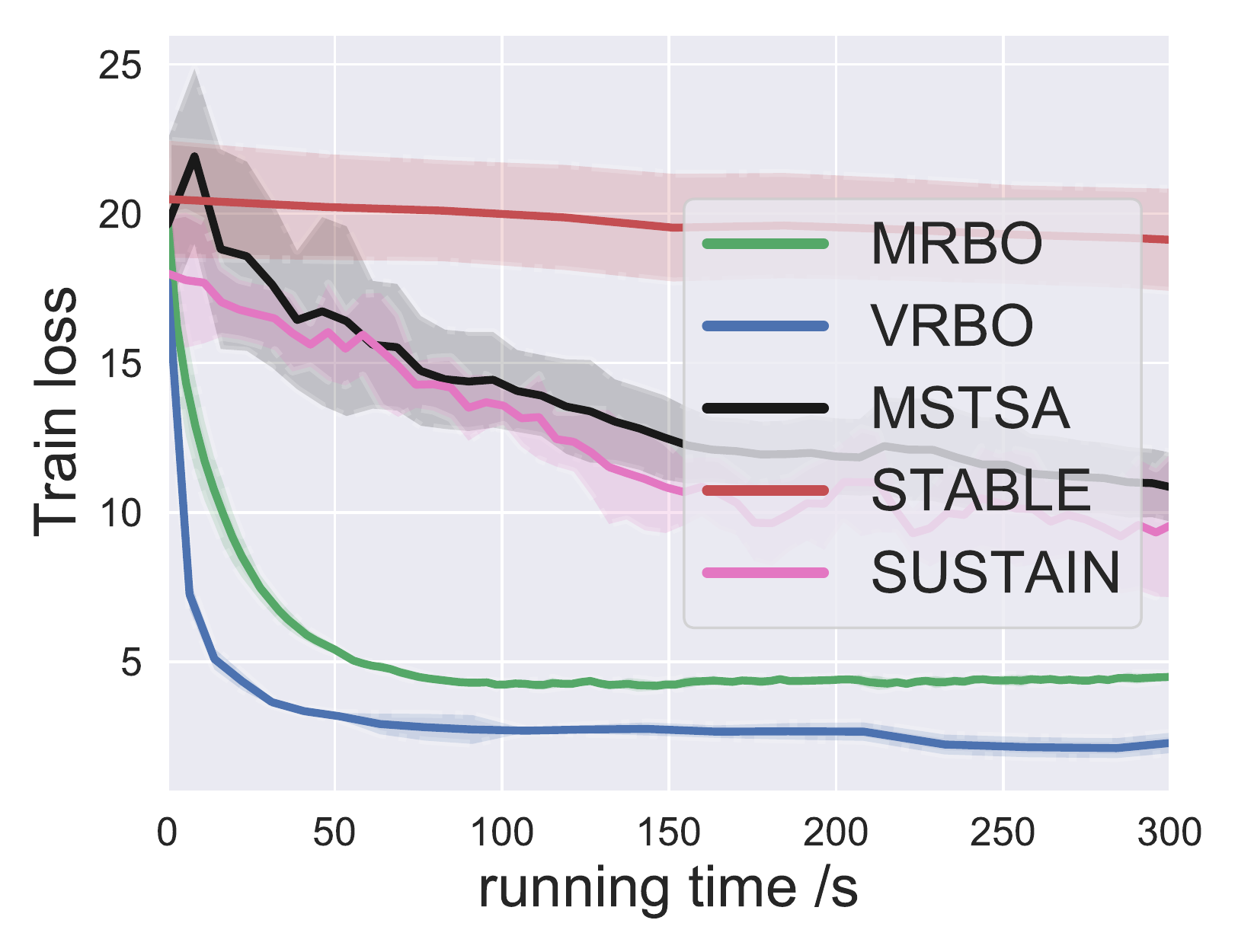}}
    	\vspace{-1mm}
	\caption{training loss v.s. running time.}\label{fig:extraresults}
\end{figure}

In~\Cref{fig:extraresults}, we compare our algorithms MRBO and VRBO with three momentum-based algorithms, i.e., MSTAS, STABLE, and SUSTAIN, under the noise rate $p=0.15$, which is a scenario in addition to the experiment provided in \Cref{fig:mainresults} (c) of the main part under the noise rate $p=0.1$. It is clear that our algorithms MRBO and VRBO achieve the lowest training loss and converge fastest among all momentum-based algorithms.

\begin{figure}[ht]
    \centering
     \subfigure[Noise rate $p=0.1$]{\includegraphics[width=60.0mm]{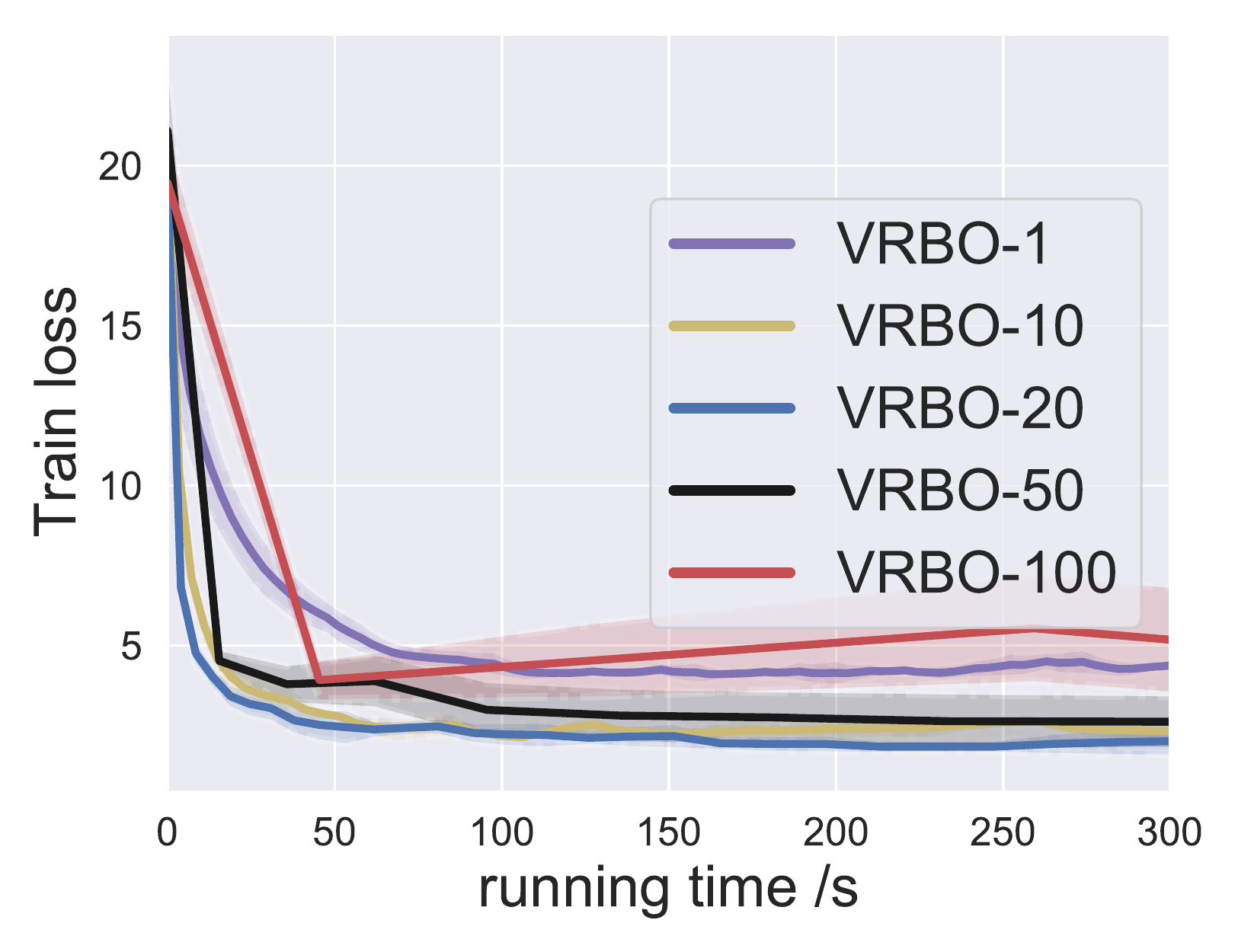}}
    \subfigure[Noise rate $p=0.15$]{ \includegraphics[width=60.0mm]{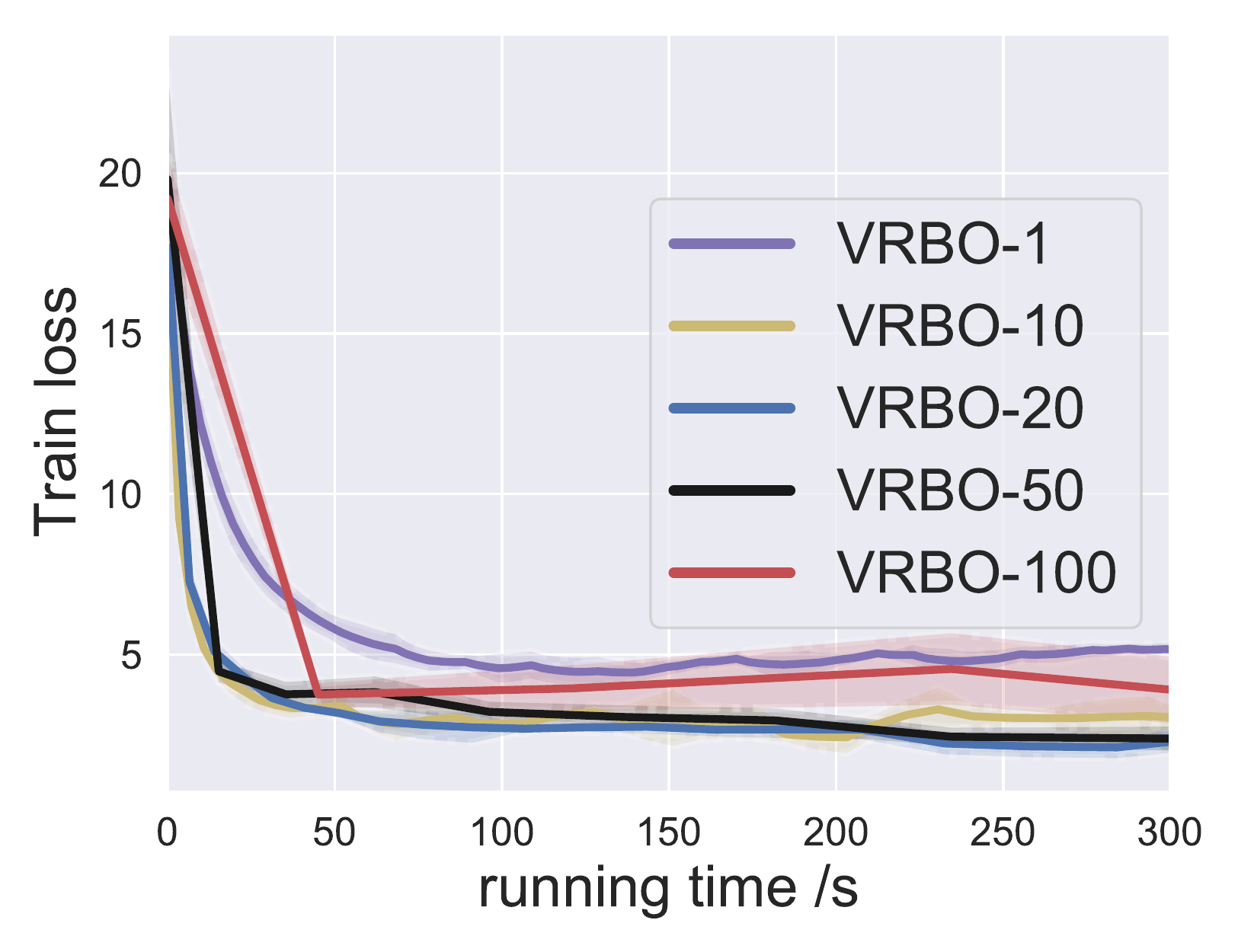}}
    \caption{training loss v.s. running time.}
    \label{fig:vrbo}
\end{figure}

The next experiment focuses on the double-loop algorithm VRBO and studies how the number $m$ of inner-loop steps affects its performance. In~\Cref{fig:vrbo} (a) and (b), we compare VRBO among five choices of $m\in \{1,10,20,50,100\}$, where VRBO-$m$ in the legend indicates that the inner-loop of VRBO takes $m$ steps. It can be observed that as $m$ increases from $1$, VRBO becomes more stable and achieves lower training loss until $m=20$. Beyond this point, as $m$ further increases, the performance of VRBO becomes worse with higher final training loss and lower stability. This can be explained by two reasons: (i) the accuracy of the inner-loop output and (ii) the accuracy of the variance-reduced gradient estimator. By the formulation of bilevel optimization, at each outer-loop step $k$, it is desirable that the inner loop obtains $y_k$ as close as possible to the optimal point $y^*(x_k)=\argmin_{y} g(x_k,y)$. Hence, taking more inner-loop steps (i.e., as $m$ increases) helps to obtain more accurate $y_k$. Further, increasing $m$ allows the large-batch gradient estimator to benefit more steps of gradient estimators in the inner loop via variance reduction, and hence improves the computational efficiency. Both reasons explain that the overall performance of VRBO gets better as $m$ increases from $m=1$ to $m=20$. On the other hand, when $m$ is large enough (i.e., $m=20$ in our plots), the inner-loop can already provide a sufficiently accurate $y_k$. Then further increasing $m$ will cause unnecessary inner-loop iterations and hurt the computational efficiency. Moreover, larger $m$ causes the variance-reduced gradient estimators in the later stage of the inner loop becomes less accurate. Thus, the overall convergence of VRBO becomes slower and less stable.

\begin{figure}[ht]
	\centering
	\subfigure[Noise rate $p=0.1$]{\includegraphics[width=60.0mm]{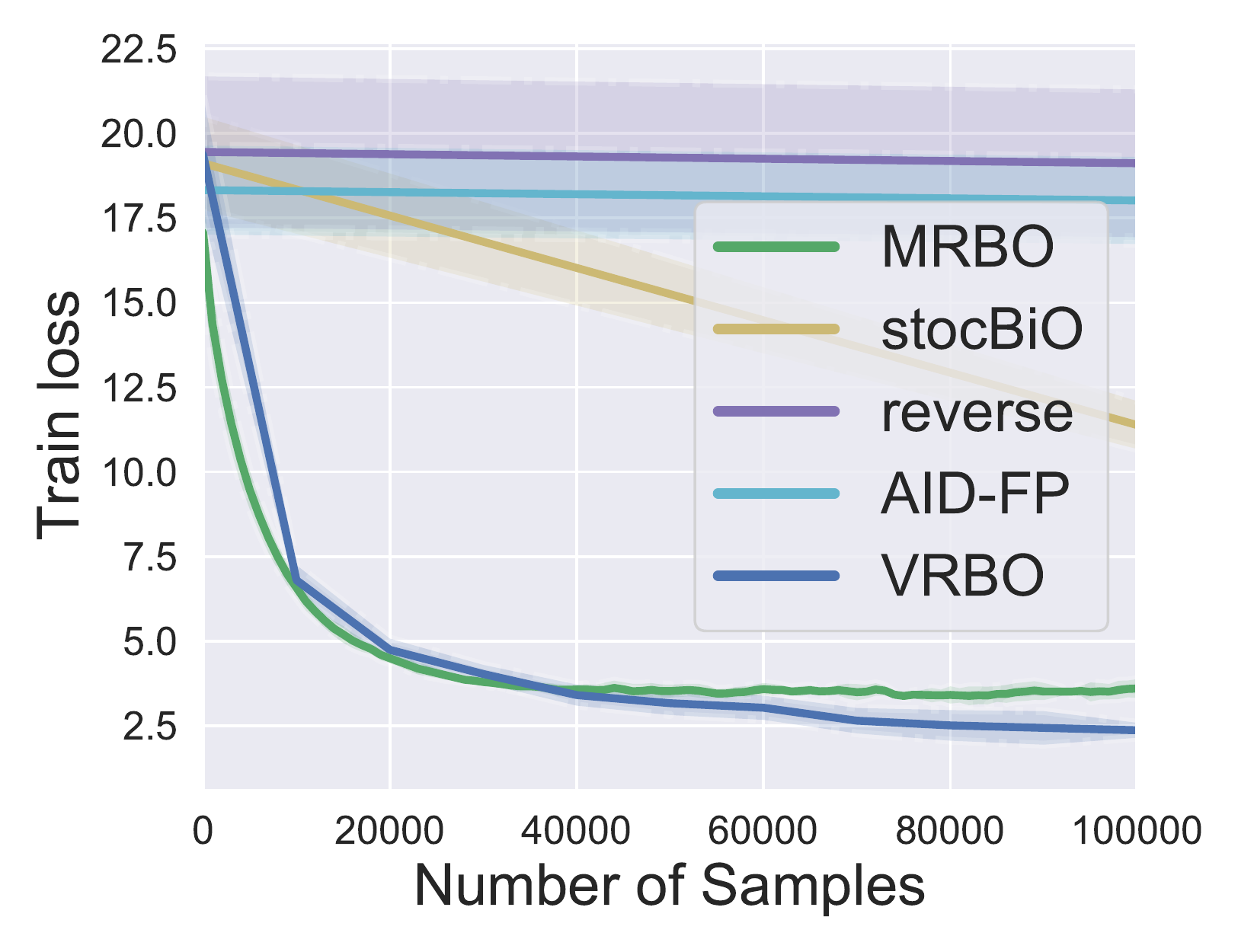}}
	\subfigure[Noise rate $p=0.15$]{ \includegraphics[width=60.0mm]{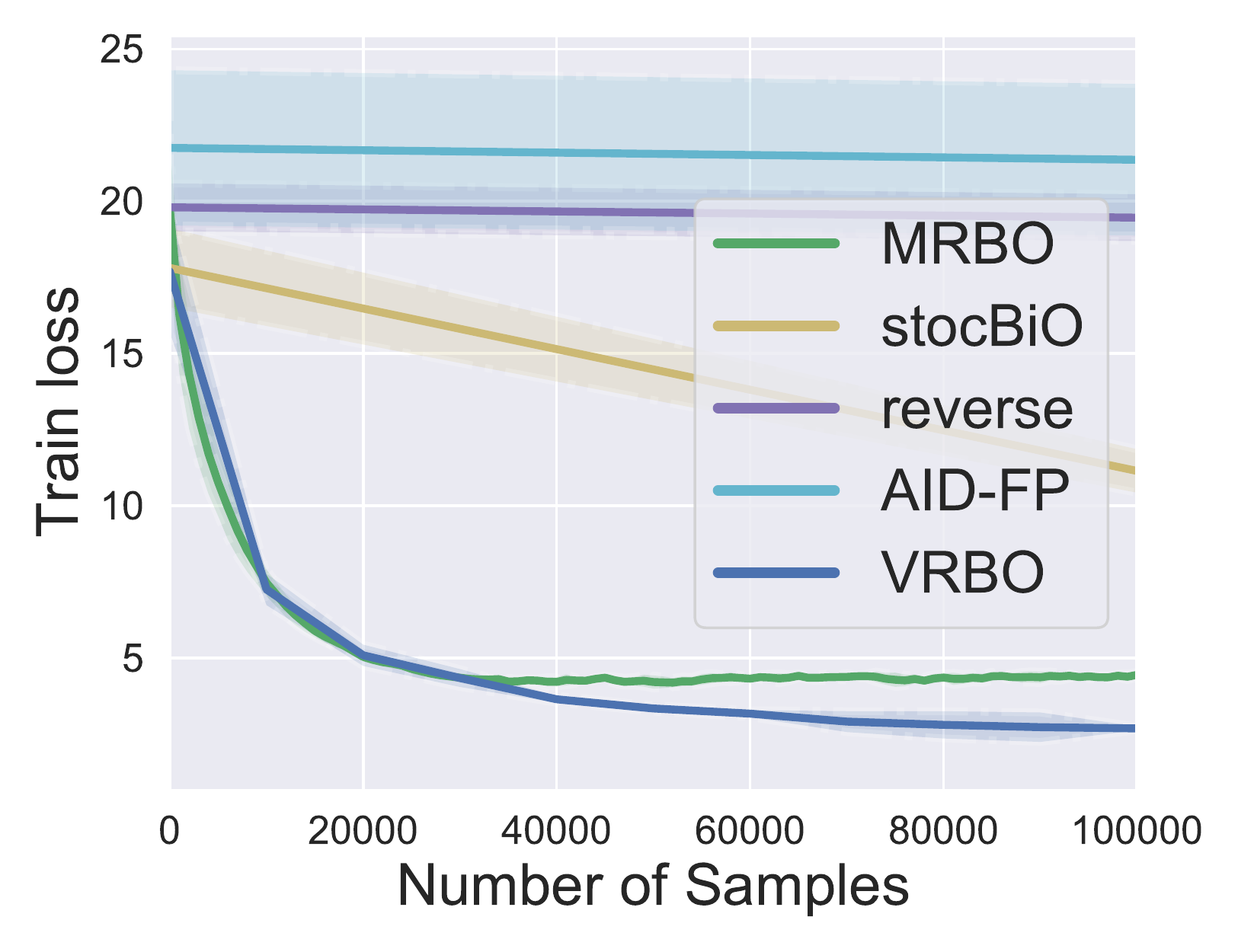}}
	\caption{training loss v.s. number of samples.}
	\label{fig:xdata}
\end{figure}

In \Cref{fig:xdata}, we further compare our algorithms with other \textbf{batch-sample} based algorithms in terms of the training performance versus the number of samples required. It can be seen that MRBO and VRBO are much more sample efficient in training compared with stocBiO and the GD-based algorithms reverse and AID-FP. 

\begin{figure}[ht]
	\centering
	\subfigure[Noise rate $p=0.1$]{\includegraphics[width=60.0mm]{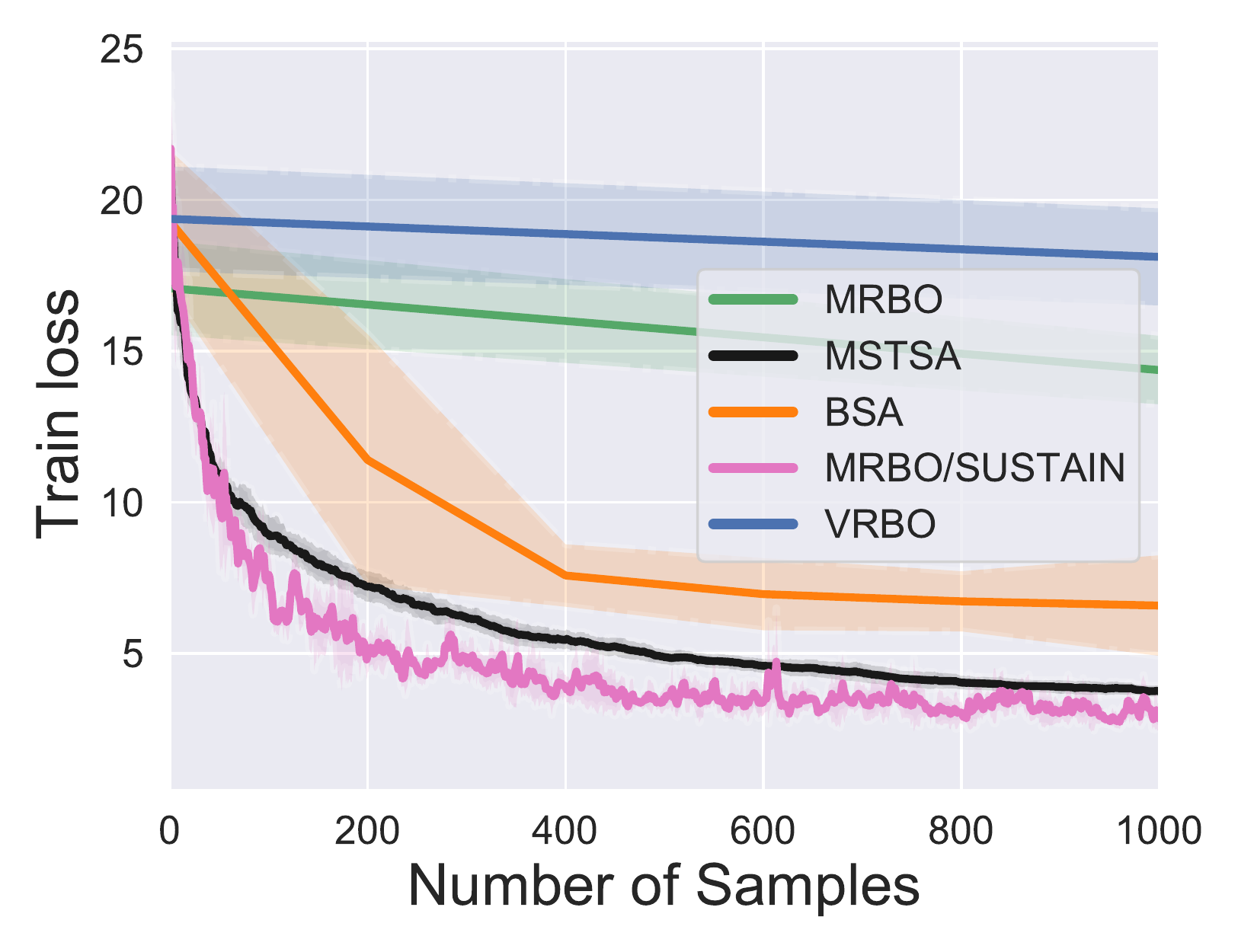}}
	\subfigure[Noise rate $p=0.15$]{ \includegraphics[width=60.0mm]{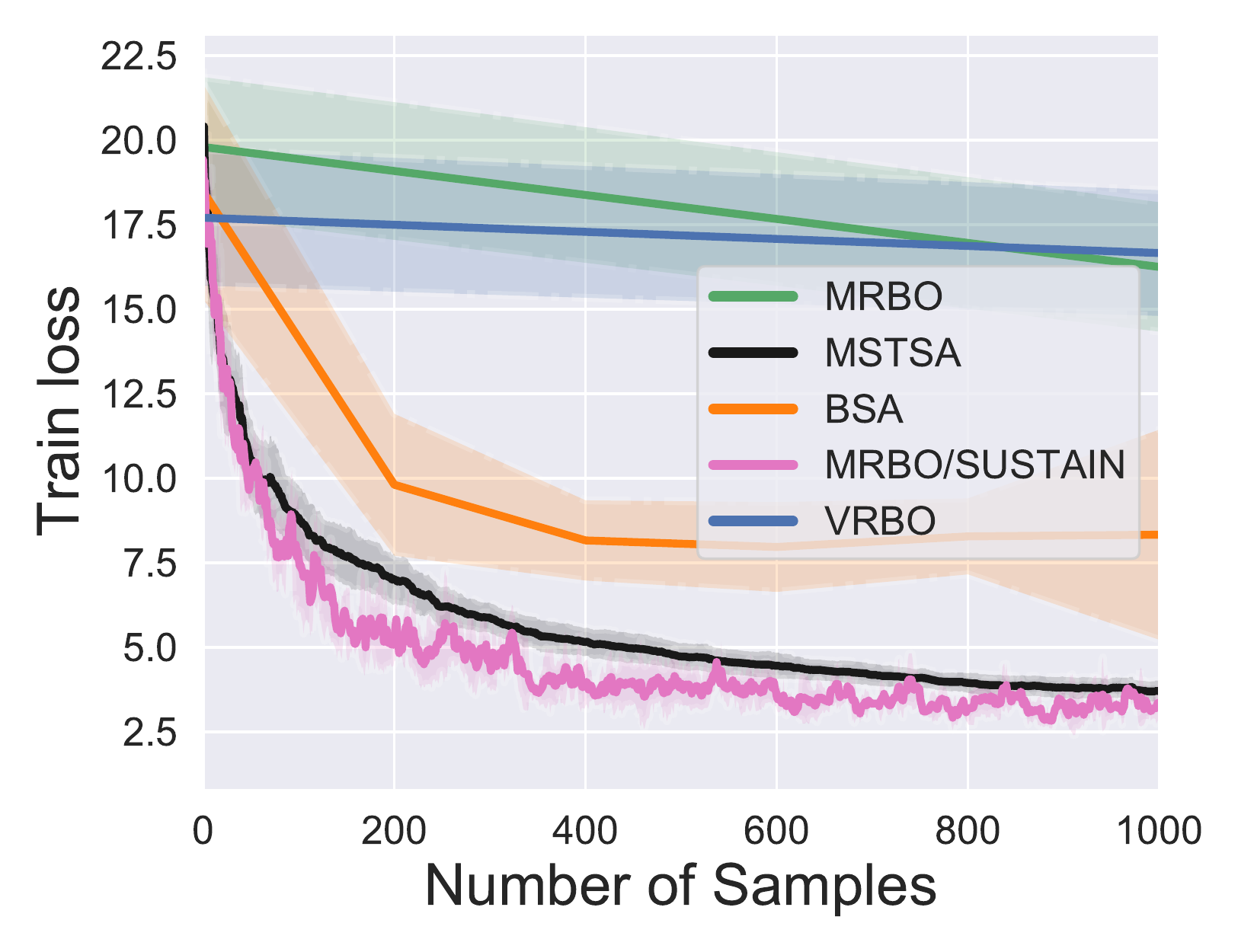}}
	\caption{training loss v.s. number of samples.}
	\label{fig:xdata_single}
\end{figure}

We also compare our algorithms with other \textbf{single-sample} based algorithms w.r.t.~the number of samples in \Cref{fig:xdata_single}. It can be seen that single-sample based algorithms are more sample efficient than MRBO and VRBO. This is because single-sample based algorithms update each parameter using a single sample, whereas batch-sample based algorithms update each parameter using a batch of samples. As a result, single-sample based algorithms enable a larger parameter update per sample, and hence achieve a higher sample efficiency. It is worthy to mention that our MRBO can be implemented in a single-sample fashion, which then becomes the same as the concurrently proposed algorithm SUSTAIN. However, compared to the sample efficiency, we believe that the execution time (under the same computing resource) is a more reasonable measure of the computational efficiency of bilevel algorithms. This is because the minibatch computation are more preferred and efficient than the single-sample computation in existing deep learning platforms such as PyTorch. Thus, as demonstrated in our \Cref{fig:mainresults}, batch-sample based algorithms converge much faster than single-sample based algorithms w.r.t.~running time. 

\subsection{Experiments of Logistic Regression}
We further conduct the experiment on the logistic regression problem over the 20 Newsgroup dataset~\cite{grazzi2020iteration}. The objective function is given by:
\begin{figure}[ht]
	\centering
	\subfigure[Test Accuracy v.s. Running Time]{\includegraphics[width=60.0mm]{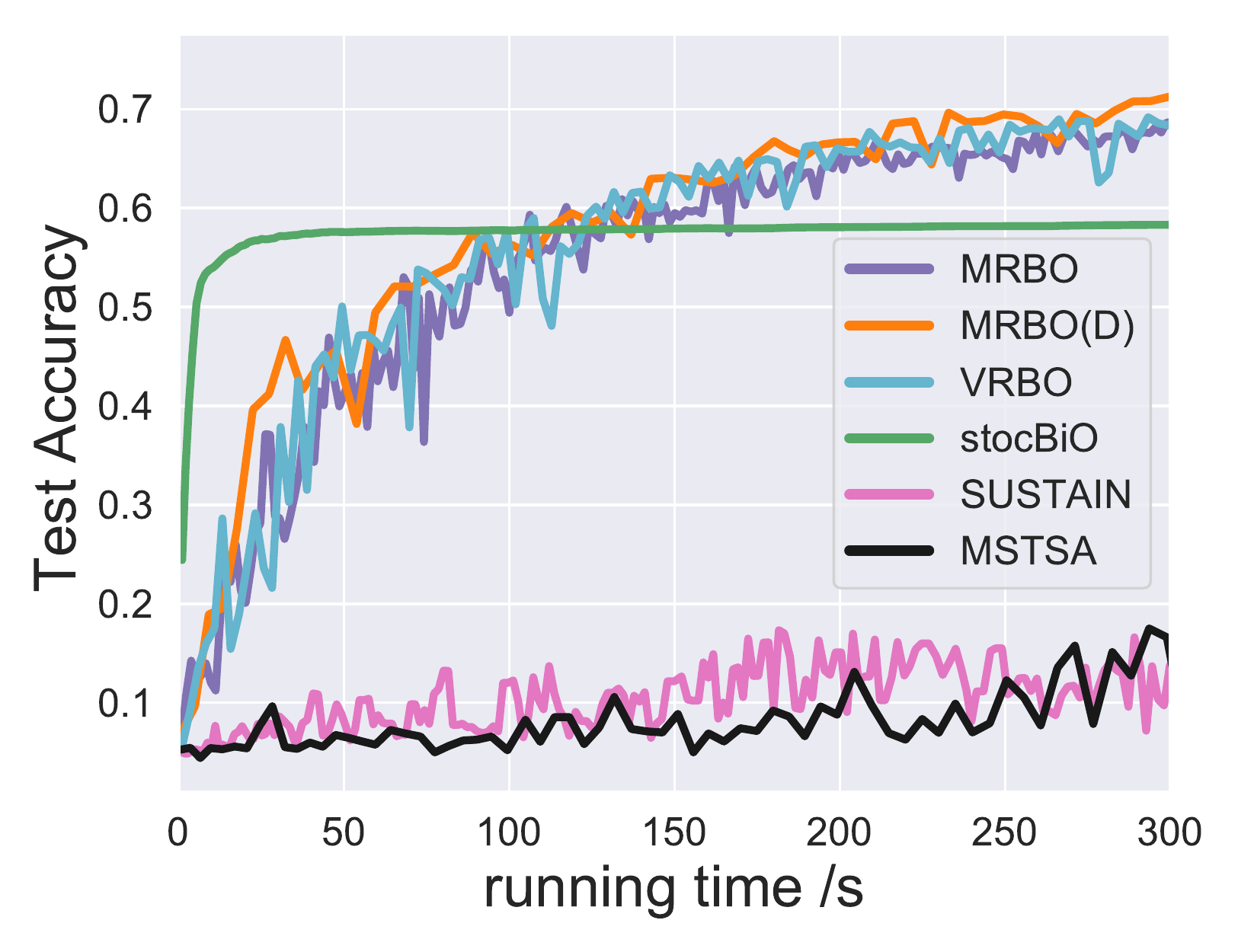}}
	\subfigure[Test Loss v.s. Running Time]{ \includegraphics[width=60.0mm]{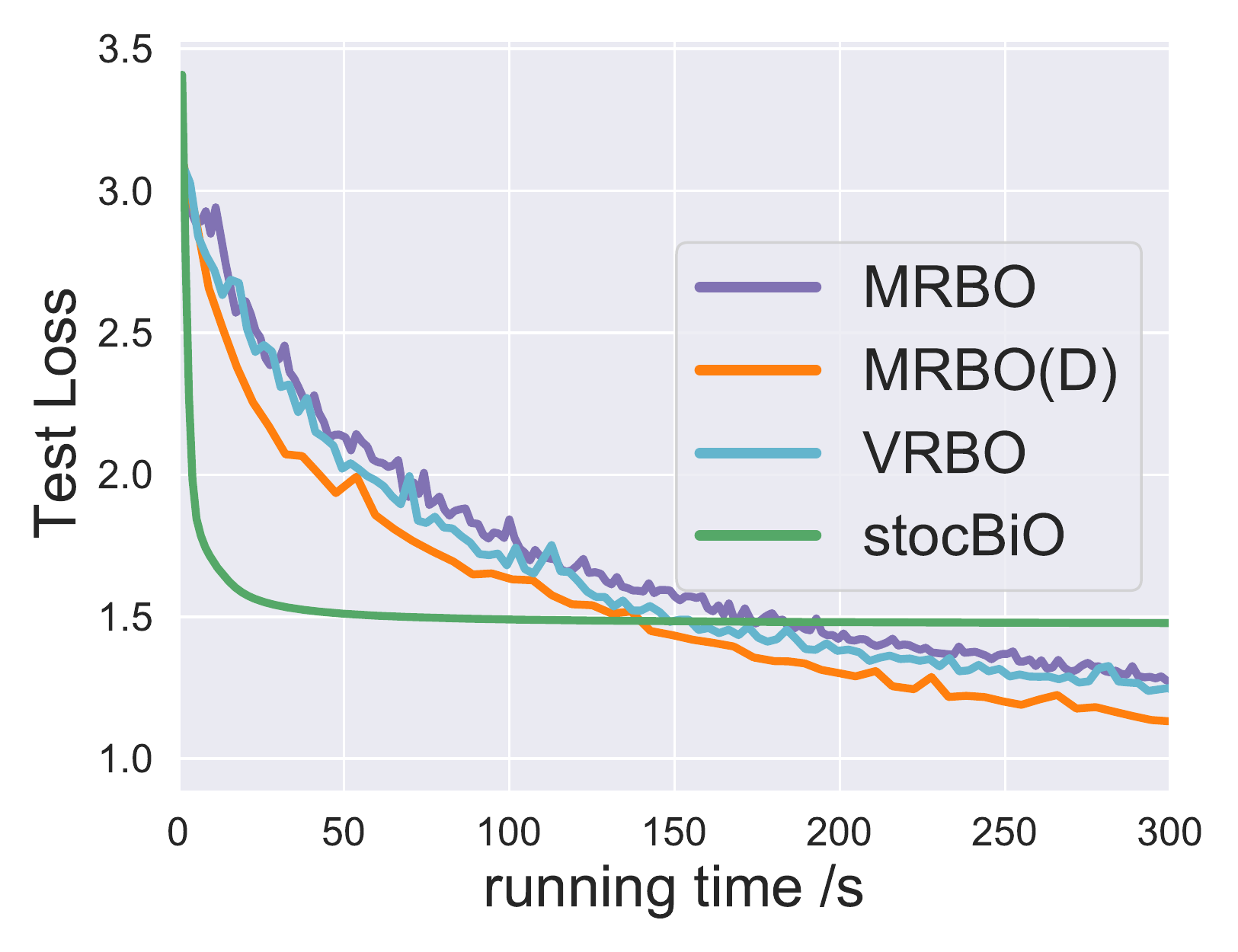}}
	\caption{test accuracy or test loss v.s. running time (Batchsize=100)}
	\label{fig:logistric}
\end{figure}

\begin{figure}[ht]
	\centering
	\subfigure[Test Accuracy v.s. Running Time]{\includegraphics[width=60.0mm]{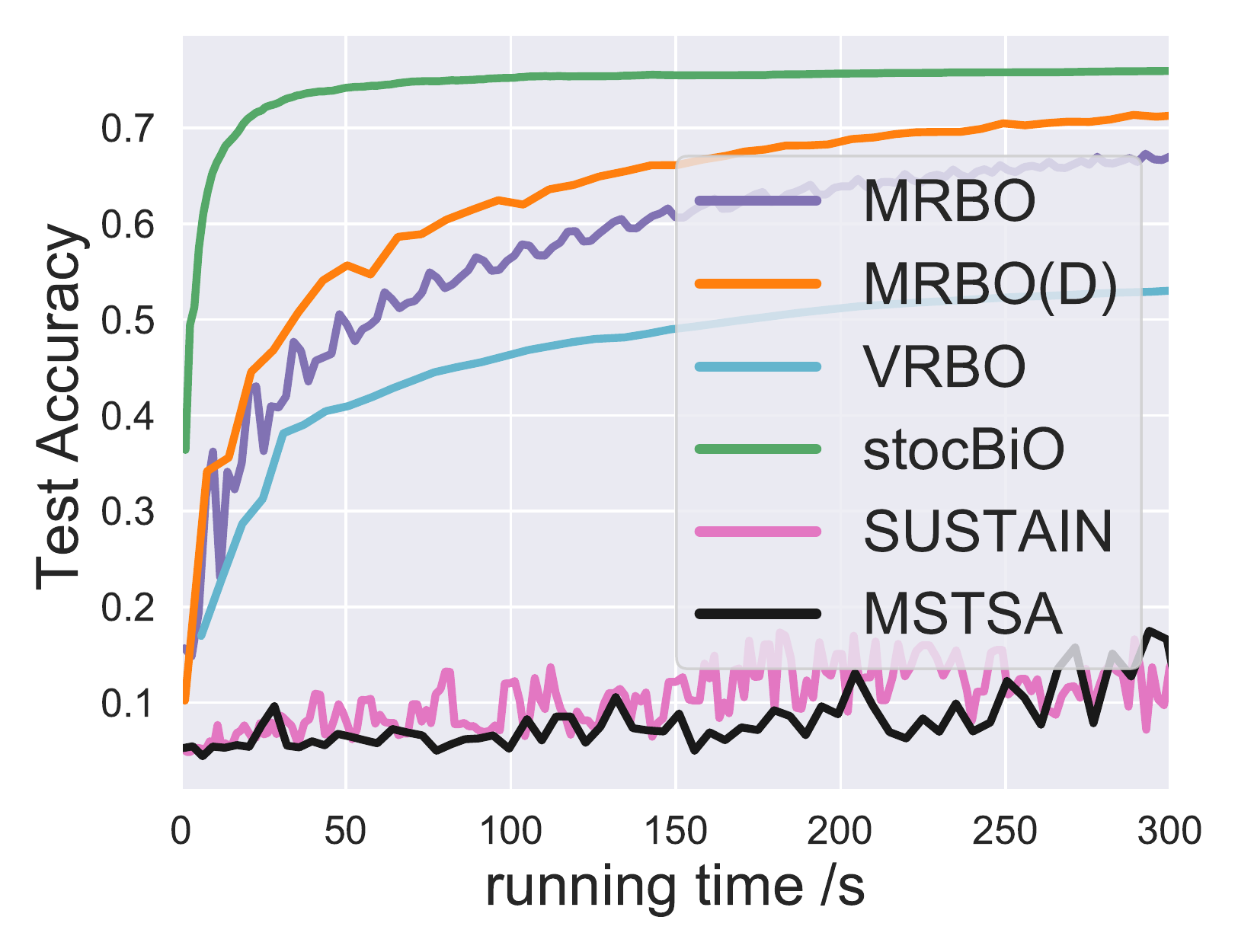}}
	\subfigure[Test Loss v.s. Running Time]{ \includegraphics[width=60.0mm]{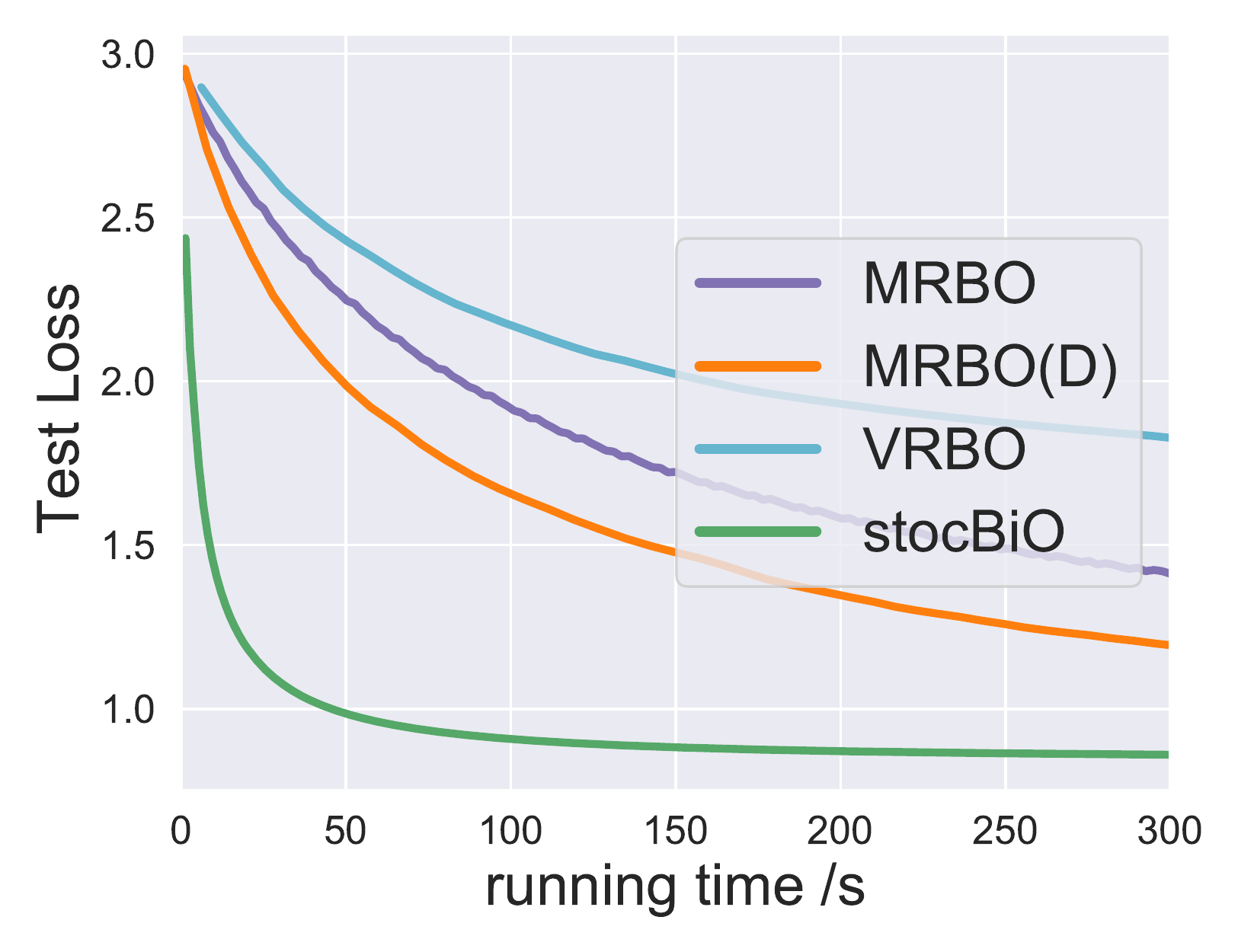}}
	\caption{test accuracy or test loss v.s. running time (Batchsize=1000)}
	\label{fig:logistric_1000}
\end{figure}

\begin{align*}
   &\min_\lambda \mathbb{E} [\mathcal{L}_{\mathcal{V}}(\lambda, w^{\ast})]=\frac{1}{|S_{\mathcal{V}}|} \sum_{(x_i,y_i)\in S_{\mathcal{V}}} L_{CE}((w^{\ast})^Tx_i,y_i) \\
   &\text{s.t.\quad } w^{\ast} =\argmin_w \mathcal{L}(\lambda, w):=\frac{1}{|S_{\mathcal{T}}|}\sum_{(x_i,y_i)\in S_{\mathcal{T}}}L_{CE}(w^Tx_i,y_i)+\frac{1}{cp}\sum_{i=1}^{c}\sum_{j=1}^{p}\exp(\lambda_j)w_{ij}^2,
\end{align*}
where $L_{CE}$ denotes the cross-entropy loss, $S_{\mathcal{T}}$ and $S_{\mathcal{V}}$ denote the training and validation datasets, respectively. In the experiment, we follow the setting for stocBiO in~\cite{ji2021bilevel} and set $\eta=0.5$ and $Q=10$ for the hypergradient estimation. Besides, we apply the standard grid search for the inner- and outer-loop stepsizes for all algorithms. Thus, we set inner- and outer-loop stepsizes as 100 for stocBiO, inner- and outer-loop stepsizes as 30 for MRBO, VRBO, SUSTAIN and MSTSA. Following the setting in stocBiO, we set inner-loop steps as 10 for stocBiO. For VRBO, we set the period $q$ as 2 and inner-loop steps as 3 for the best performance. We also conduct MRBO in a double loop fashion and call it as MRBO(D), where we apply the inner update procedure 10 times per epoch.

In \Cref{fig:logistric}, we set the batchsize of all stochastic algorithms to 100. It can be seen that although stocBiO achieves the fastest initial convergence rate, both MRBO and VRBO reach a higher accuracy than stocBiO due to more accurate hypergradient estimation. It can be also seen that our  double-loop MRBO(D) achieves the highest accuracy, whereas single-loop SUSTAIN and MSTSA algorithms do not converge well. This demonstrates the advantage of double-loop updates over single-loop updates. In \Cref{fig:logistric_1000}, we choose a larger batchsize of 1000 for all algorithms. We note that double-loop algorithms stocBiO and MRBO(D) still outperform other single-loop algorithms significantly, and stocBiO achieves the best test accuracy due to a more accurate gradient estimation.

\section{Proof of \Cref{thm:mrbio}} \label{sec:proofmrbo}
\subsection{Proof of Supporting Lemmas (Propositions \ref{prop:mrbio_epi} and \ref{prop:mrbio_phi} Correspond to Lemmas \ref{lem:G} and \ref{lem:ephi})}
For notation simplification, we define the following:
\begin{align}\label{eq:Vqk}
    V_{Qk} = \eta \sum_{q=-1}^{Q-1} \prod_{j=Q-q}^{Q} (I-\eta \nabla_y^2G(x_k, y_k; \mathcal{B}_j))\nabla_yF(x_k,y_k;\mathcal{B}_{F}).
\end{align}
Firstly, we characterize the variance of $V_{Qk}$ in the following lemma.
\begin{lemma} \label{lem:Vqk}
Suppose Assumptions \ref{ass:lip}, \ref{ass:var} hold. Let $\eta < \frac{1}{L}$. Then, we have
\begin{align*}
    \mathbb{E}\|V_{Qk}&-\mathbb{E}[V_{Qk}]\|^2 \leq  \frac{2\eta^2M^2(Q+1)^2}{S} + \frac{M^2(Q+2)(Q+1)^2\eta^2\sigma^2}{2S},
\end{align*}
where $V_{Qk}$ is defined in \cref{eq:Vqk}.

\begin{proof}
Based on the form of $V_{Qk}$, we have
      \begin{align*}
        \mathbb{E}\|&V_{Qk}-\mathbb{E}[V_{Qk}]\|^2\\
        \overset{(i)}=& \eta^2 \mathbb{E} \bigg\|\sum_{q=0}^{Q}(I-\eta \nabla_y^2g(x_k,y_k))^q \nabla_yf(x_k,y_k)
        -\sum_{q=0}^{Q}(I-\eta \nabla_y^2g(x_k,y_k))^q \nabla_yF(x_k,y_k;\mathcal{B}_{F})\\
        &+\sum_{q=0}^{Q}(I-\eta \nabla_y^2g(x_k,y_k))^q \nabla_yF(x_k,y_k;\mathcal{B}_{F})\\
        &-\sum_{q=-1}^{Q-1} \prod_{j=Q-q}^{Q} (I-\eta \nabla_y^2G(x_k, y_k; \mathcal{B}_j))\nabla_yF(x_k,y_k;\mathcal{B}_{F})\bigg\|^2 \\
         \overset{(ii)}\leq & 2\eta^2M^2\mathbb{E}\bigg\|\sum_{q=0}^{Q}(I-\eta \nabla_y^2g(x_k,y_k))^q-\sum_{q=-1}^{Q-1} \prod_{j=Q-q}^{Q} (I-\eta \nabla_y^2G(x_k, y_k; \mathcal{B}_j))\bigg\|^2\\
         &+\frac{2\eta^2M^2(Q+1)^2}{S}\\
         \leq & 2\eta^2M^2(Q+1)\mathbb{E}\sum_{q=0}^Q\bigg\|(I-\eta \nabla_y^2g(x_k,y_k))^q-\prod_{j=Q+1-q}^Q(I-\eta \nabla_y^2G(x_k, y_k; \mathcal{B}_j))\bigg\|^2 \\
         &+\frac{2\eta^2M^2(Q+1)^2}{S} \\
        \leq & \frac{2\eta^2M^2(Q+1)^2}{S}+2\eta^2M^2(Q+1)\mathbb{E}\sum_{q=0}^Q\bigg\|(I-\eta \nabla_y^2g(x_k,y_k))^q\\
       &-(I-\eta \nabla_y^2g(x_k,y_k))^{q-1}(I-\eta \nabla_y^2G(x_k, y_k; \mathcal{B}_Q))\\
       &+(I-\eta \nabla_y^2g(x_k,y_k))^{q-1}(I-\eta \nabla_y^2G(x_k, y_k; \mathcal{B}_Q))-\prod_{j=Q+1-q}^Q(I-\eta \nabla_y^2G(x_k, y_k; \mathcal{B}_j))\bigg\|^2 \\
         \overset{(iii)}\leq & 2\eta^2M^2(Q+1)\mathbb{E}\sum_{q=0}^Q (q+1)\bigg\|(I-\eta \nabla_y^2G(x_k,y_k;\mathcal{B}_q)-(I-\eta\nabla_y^2g(x_k,y_k))\bigg\|^2  \\
         &+\frac{2\eta^2M^2(Q+1)^2}{S}\\
         \overset{(iv)}\leq & \frac{2\eta^2M^2(Q+1)^2}{S} + \frac{M^2(Q+2)(Q+1)^2\eta^4\sigma^2}{S},
    \end{align*}
    where $(i)$ follows from the fact that $\mathbb{E}[V_{Qk}]=\eta \sum_{q=0}^{Q}(I-\eta \nabla_y^2g(x_k,y_k))^q\nabla_yf(x_k,y_k)$, $(ii)$ follows from \Cref{ass:lip} and the fact that $\|I-\eta \nabla_y^2g(x_k,y_k)\|\leq 1$, $(iii)$ follows from the facts that $\|I-\eta \nabla_y^2G(x_k,y_k;\mathcal{B}_j)\|\leq 1$ and $\|I-\eta \nabla_y^2g(x_k,y_k)\|\leq 1$, and $(iv)$ follows from Assumptions \ref{ass:lip} and \ref{ass:var}. Then, the proof is complete.
\end{proof}
\end{lemma}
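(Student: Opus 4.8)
The plan is to decompose the deviation $V_{Qk}-\mathbb{E}[V_{Qk}]$ into one part driven by the randomness of the outer gradient estimate $\nabla_yF(x_k,y_k;\mathcal{B}_F)$ and one part driven by the randomness of the stochastic Hessians $\nabla_y^2G(x_k,y_k;\mathcal{B}_j)$, and to bound the two contributions separately. First I would compute $\mathbb{E}[V_{Qk}]$. Since the minibatches $\mathcal{B}_F,\mathcal{B}_1,\ldots,\mathcal{B}_Q$ are drawn independently, the expectation of the product $\prod_{j=Q-q}^{Q}(I-\eta\nabla_y^2G(\cdot;\mathcal{B}_j))$ factorizes into $(I-\eta\nabla_y^2g(x_k,y_k))$ raised to the number $q+1$ of factors, so that (after reindexing) one obtains $\mathbb{E}[V_{Qk}]=\eta\sum_{q=0}^{Q}(I-\eta\nabla_y^2g(x_k,y_k))^q\nabla_yf(x_k,y_k)$, the deterministic Neumann partial sum.

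Next I would insert the intermediate quantity $\eta\sum_{q=0}^{Q}(I-\eta\nabla_y^2g)^q\nabla_yF(\cdot;\mathcal{B}_F)$ and split by Cauchy--Schwarz into two error terms. The observation that makes everything tractable is the contraction bound: since $\eta<1/L$ and $\nabla_y^2G\succeq\mu I$, we have $\|I-\eta\nabla_y^2G(\cdot;\mathcal{B}_j)\|\le1$ and likewise $\|I-\eta\nabla_y^2g\|\le1$, so every operator factor is nonexpansive. For the first (gradient) term, each of the $Q+1$ summands has operator norm at most $1$, so a Cauchy--Schwarz step over the $Q+1$ indices together with the minibatch variance bound $\mathbb{E}\|\nabla_yf-\nabla_yF(\cdot;\mathcal{B}_F)\|^2\le M^2/S$ yields the $\tfrac{2\eta^2M^2(Q+1)^2}{S}$ contribution.

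The main work---and the main obstacle---is the second (Hessian) term, where I must bound, for each $q$, the operator discrepancy $\big\|(I-\eta\nabla_y^2g)^q-\prod_{j=Q+1-q}^{Q}(I-\eta\nabla_y^2G(\cdot;\mathcal{B}_j))\big\|^2$ between a power of the population operator and a product of independent stochastic operators. The strategy is a telescoping argument: peel off one factor at a time, adding and subtracting a hybrid product in which exactly one stochastic factor is replaced by the population factor, and use nonexpansiveness of every remaining factor to collapse each hybrid difference to a single-factor gap $\|(I-\eta\nabla_y^2g)-(I-\eta\nabla_y^2G(\cdot;\mathcal{B}_j))\|=\eta\|\nabla_y^2g-\nabla_y^2G(\cdot;\mathcal{B}_j)\|$. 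Summing the telescope produces at most $q$ such single-factor gaps, and applying Cauchy--Schwarz over them introduces the extra factor $q+1$; taking expectation and invoking the bounded variance of the stochastic Hessian (Assumption~\ref{ass:var}) then contributes the $\eta^2\cdot\tfrac{\sigma^2}{S}$ scale per gap.

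Finally I would assemble the pieces: multiplying the per-$q$ bound by the outer prefactor $\eta^2M^2(Q+1)$---which arises from the $\eta^2$ in the definition of $V_{Qk}$, from $\|\nabla_yF\|\le M$, and from the Cauchy--Schwarz over the $Q+1$ outer summands---and summing $\sum_{q=0}^{Q}(q+1)=\tfrac{(Q+1)(Q+2)}{2}$ reproduces the stated $(Q+2)(Q+1)^2$ polynomial dependence on $Q$ in the second term. The delicate bookkeeping lives entirely in the telescope: one must ensure that the factor replaced at each step is independent of the surviving ones so that expectations factor cleanly, and that the count of single-factor gaps is tracked exactly in order to obtain the sharp $Q$-dependence rather than a looser bound.
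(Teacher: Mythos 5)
Your proposal matches the paper's proof essentially step for step: the same insertion of the intermediate term $\eta\sum_{q=0}^{Q}(I-\eta\nabla_y^2g)^q\nabla_yF(\cdot;\mathcal{B}_F)$, the same split into a gradient-noise term and a Hessian-noise term, the same nonexpansiveness-plus-telescoping argument collapsing the product discrepancy into single-factor gaps, and the same final summation $\sum_{q=0}^{Q}(q+1)$ yielding the $(Q+2)(Q+1)^2$ dependence. This is a correct reconstruction of the paper's argument with no substantive deviation.
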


Futhermore, we characterize the Lipschitz property of $V_{Qk}$ in the following lemma.
\begin{lemma} Suppose Assumption \ref{ass:lip} holds. Let $\eta < \frac{1}{L}$. Then, we have
    \begin{align}
        \|V_{Qk}-V_{Q(k-1)}\|^2 \leq \left(\frac{M^2 Q^2(Q+1)^2\eta^4 \rho^2}{2}+2\eta^2L^2(Q+1)^2\right) \|z_k-z_{k-1}\|^2,
    \end{align}
    where $V_{Qk}$ is defined in \cref{eq:Vqk}.
    \begin{proof}
    Based on the form of $V_{Qk}$, we have
    \begin{align*}
          \|V_{Qk}&-V_{Q(k-1)}\|^2 \\
          \overset{(i)}\leq& \eta^2 \bigg\|\sum_{q=-1}^{Q-1} \prod_{j=Q-q}^{Q} (I-\eta \nabla_y^2G(x_k, y_k; \mathcal{B}_j))\nabla_yF(x_k,y_k;\mathcal{B}_{F})\\
          &-\sum_{q=-1}^{Q-1} \prod_{j=Q-q}^{Q} (I-\eta \nabla_y^2G(x_{k-1}, y_{k-1}; \mathcal{B}_j))\nabla_yF(x_{k-1},y_{k-1};\mathcal{B}_{F})\bigg\|^2 \\
          \leq & \eta^2 \bigg( 2\|\nabla_yF(x_k,y_k;\mathcal{B}_{F})\|^2\bigg\|\sum_{q=-1}^{Q-1} \prod_{j=Q-q}^{Q} (I-\eta \nabla_y^2G(x_k, y_k; \mathcal{B}_j))\\
          &-\sum_{q=-1}^{Q-1} \prod_{j=Q-q}^{Q} (I-\eta \nabla_y^2G(x_{k-1}, y_{k-1}; \mathcal{B}_j))\bigg\|^2 \bigg) \\
          &+2\|\nabla_yF(x_k,y_k;\mathcal{B}_{F})-\nabla_yF(x_{k-1},y_{k-1};\mathcal{B}_{F})\|^2\bigg\|\sum_{q=-1}^{Q-1} \prod_{j=Q-q}^{Q} (I-\eta \nabla_y^2G(x_{k-1}, y_{k-1}; \mathcal{B}_j))\bigg\|^2 \\
          \overset{(ii)}\leq & \eta^2\bigg(2M^2\bigg\|\sum_{q=-1}^{Q-1} \prod_{j=Q-q}^{Q} (I-\eta \nabla_y^2G(x_k, y_k; \mathcal{B}_j))-\sum_{q=-1}^{Q-1} \prod_{j=Q-q}^{Q} (I-\eta \nabla_y^2G(x_{k-1}, y_{k-1}; \mathcal{B}_j))\bigg\|^2 \\
          &+2L^2(Q+1)^2\|z_k-z_{k-1}\|^2\bigg) \\
          \overset{(iii)}\leq & 2\eta^2M^2 \left(\sum_{q=0}^{Q} \bigg\|\prod_{j=Q+1-q}^{Q} (I-\eta \nabla_y^2G(x_{k}, y_{k}; \mathcal{B}_j))-\prod_{j=Q+1-q}^{Q} (I-\eta \nabla_y^2G(x_{k-1}, y_{k-1}; \mathcal{B}_j)) \bigg\|\right)^2\\
          &+2\eta^2L^2(Q+1)^2\|z_k-z_{k-1}\|^2 \\
          \overset{(iv)}\leq & 2\eta^2M^2\left(\sum_{q=0}^{Q} q\eta \rho \|z_k-z_{k-1}\|\right)^2+2\eta^2L^2(Q+1)^2\|z_k-z_{k-1}\|^2 \\
          \leq & \left(2\eta^2M^2 (\frac{Q(Q+1)}{2})^2\eta^2 \rho^2+2\eta^2L^2(Q+1)^2\right) \|z_k-z_{k-1}\|^2,
    \end{align*}
    where $(i)$ follows from the definition of $V_{Qk}$,  $(ii)$ follows from \Cref{ass:lip} and the fact that $\|I-\eta \nabla_y^2G(x_k,y_k;\mathcal{B}_j)\|\leq 1$, $(iii)$ follows from Jensen's inequality and $(iv)$ follows because $\|I-\eta \nabla_y^2G(x_k,y_k;\mathcal{B}_j)\|\leq 1$ and from \Cref{ass:lip}. Then, the proof is complete.
    \end{proof}
\end{lemma}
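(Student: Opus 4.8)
The plan is to bound $\|V_{Qk} - V_{Q(k-1)}\|^2$ by separating the dependence on the vector $\nabla_y F$ from the dependence on the product of the Hessian-based operators. Writing $V_{Qk} = \eta\, P_k\, \nabla_y F(x_k, y_k; \mathcal{B}_F)$, where $P_k := \sum_{q=-1}^{Q-1}\prod_{j=Q-q}^{Q}(I - \eta \nabla_y^2 G(x_k,y_k;\mathcal{B}_j))$ and $b_k := \nabla_y F(x_k,y_k;\mathcal{B}_F)$, I would use the algebraic identity $P_k b_k - P_{k-1} b_{k-1} = P_k (b_k - b_{k-1}) + (P_k - P_{k-1}) b_{k-1}$ and then apply $\|u+v\|^2 \le 2\|u\|^2 + 2\|v\|^2$. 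This reduces the task to two subproblems: controlling the operator difference $\|P_k - P_{k-1}\|$ and the gradient difference $\|b_k - b_{k-1}\|$, with the remaining factors handled by gross norm estimates.

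For the easier piece, the gradient difference $\|b_k - b_{k-1}\|$ is at most $L\|z_k - z_{k-1}\|$ by the $L$-Lipschitzness of $\nabla F$ (\Cref{ass:lip}(b)), and the operator $P_{k-1}$ is a sum of $Q+1$ products, each product being a composition of the contractions $I - \eta\nabla_y^2 G$ whose norms are at most $1$ for $\eta < 1/L$; hence $\|P_{k-1}\| \le Q+1$. After squaring this contributes the second term $2\eta^2 L^2 (Q+1)^2 \|z_k - z_{k-1}\|^2$.

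The main work is bounding $\|P_k - P_{k-1}\|$, using $\|b_k\| \le M$ (from the $M$-Lipschitzness of $F$). I would first pass to a sum of per-$q$ product differences via the triangle inequality, so that $\|P_k - P_{k-1}\| \le \sum_{q=0}^{Q}\big\|\prod_{j=Q+1-q}^{Q} A_j - \prod_{j=Q+1-q}^{Q} B_j\big\|$ with $A_j = I - \eta\nabla_y^2 G(z_k;\mathcal{B}_j)$ and $B_j = I - \eta\nabla_y^2 G(z_{k-1};\mathcal{B}_j)$. The key step is the telescoping estimate for products of contractions: since $\|A_j\|, \|B_j\| \le 1$, a product of $q$ factors obeys $\|\prod A_j - \prod B_j\| \le \sum_j \|A_j - B_j\| \le q\,\eta\rho\,\|z_k - z_{k-1}\|$, where the per-factor bound $\|A_j - B_j\| = \eta\|\nabla_y^2 G(z_k;\mathcal{B}_j) - \nabla_y^2 G(z_{k-1};\mathcal{B}_j)\| \le \eta\rho\|z_k - z_{k-1}\|$ uses the $\rho$-Lipschitzness of $\nabla_y^2 G$ (\Cref{ass:lip}(d)). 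Summing over $q$ with $\sum_{q=0}^{Q} q = \frac{Q(Q+1)}{2}$ and squaring yields the factor $\left(\frac{Q(Q+1)}{2}\right)^2$, which after multiplication by $2\eta^2 M^2 \cdot \eta^2\rho^2$ produces the first term $\frac{M^2 Q^2(Q+1)^2 \eta^4\rho^2}{2}$.

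The main obstacle is the telescoping bound for the difference of products of \emph{non-commuting} random operators: one cannot expand it as a scalar geometric series, so I would isolate one factor at a time via the standard decomposition $\prod A_j - \prod B_j = \sum_i (A_1\cdots A_{i-1})(A_i - B_i)(B_{i+1}\cdots B_n)$ and invoke submultiplicativity together with the contraction property $\|I - \eta\nabla_y^2 G\| \le 1$. This contraction property is precisely where the step-size condition $\eta < 1/L$ (together with the background strong convexity that makes $\nabla_y^2 G$ positive definite) enters, ensuring that each factor neither amplifies the accumulating error across the $q$ terms nor inflates the norm of $\nabla_y F$.
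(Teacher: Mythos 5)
Your proof is correct and takes essentially the same route as the paper's: the same product-rule split of $\eta(P_k b_k - P_{k-1}b_{k-1})$ into an operator-difference term and a gradient-difference term (you merely attach $b_{k-1}$ to $P_k-P_{k-1}$ and $P_k$ to $b_k-b_{k-1}$, whereas the paper uses $b_k$ and $P_{k-1}$ respectively --- an immaterial symmetric choice), followed by the same telescoping estimate for differences of products of non-commuting contractions under $\|I-\eta\nabla_y^2 G\|\leq 1$, the sum $\sum_{q=0}^{Q}q = \tfrac{Q(Q+1)}{2}$, and the same final constants. Your explicit remark that the contraction property also relies on positive semidefiniteness of $\nabla_y^2 G$ (not just $\eta<1/L$) is a correct observation the paper leaves implicit.
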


Then, we characterize the Lipschtiz property of $\widehat{\nabla}\Phi(x_k;\mathcal{B}_x)$ defined in \cref{eq:xgest_mrbio} in the following lemma.
\begin{lemma}\label{lem:LQ}
Suppose Assumptions \ref{ass:lip} holds. Let $\eta < \frac{1}{L}$ and $z=(x,y)$. Then, for $\widehat{\nabla}\Phi(x_k;\mathcal{B}_x)$ defined in \cref{eq:xgest_mrbio}, we have
\begin{align}
    \|\widehat{\nabla}\Phi(x_k;\mathcal{B}_x)-\widehat{\nabla}\Phi(x_{k-1};\mathcal{B}_x)\|^2 \leq L^2_Q\|z_k-z_{k-1}\|^2,
\end{align}
where $L_Q^2=2L^2+4\tau^2\eta^2M^2(Q+1)^2+8L^4\eta^2(Q+1)^2+2L^2\eta^4M^2\rho^2Q^2(Q+1)^2.$
    \begin{proof}
    Based on the form of $\widehat{\nabla}\Phi(x_k;\mathcal{B}_x)$, we have
    \begin{align*}
        \|\widehat{\nabla}&\Phi(x_k;\mathcal{B}_x)-\widehat{\nabla}\Phi(x_{k-1};\mathcal{B}_x)\|^2\\
        \leq& 2\|\nabla_xF(z_k;\mathcal{B}_F)-\nabla_xF(z_{k-1};\mathcal{B}_F)\|^2+2\|\nabla_x\nabla_yG(x_k,y_k;\mathcal{B}_G)V_{Qk}\\
        &-\nabla_x\nabla_yG(x_{k-1},y_{k-1};\mathcal{B}_G)V_{Q(k-1)}\|^2 \\
        \overset{(i)}\leq &2L^2\|z_k-z_{k-1}\|^2+4\|\nabla_x\nabla_yG(x_k,y_k;\mathcal{B}_G)(V_{Qk}-V_{Q(k-1)})\|^2\\
        &+4\|(\nabla_x\nabla_yG(x_k,y_k;\mathcal{B}_G)-\nabla_x\nabla_yG(x_{k-1},y_{k-1};\mathcal{B}_G))V_{Q(k-1)}\|^2 \\
        \overset{(ii)}\leq & 2L^2\|z_k-z_{k-1}\|^2+4L^2\|V_{Qk}-V_{Q(k-1)}\|^2+4\tau^2\|z_k-z_{k-1}\|^2\|V_{Q(k-1)}\|^2 \\
        \overset{(iii)}\leq & (2L^2+4\tau^2\eta^2M^2(Q+1)^2 )\|z_k-z_{k-1}\|^2 +4L^2\|V_{Qk}-V_{Q(k-1)}\|^2\\
        \overset{(iv)}\leq & (2L^2+4\tau^2\eta^2M^2(Q+1)^2+8L^4\eta^2(Q+1)^2+2L^2\eta^4M^2\rho^2Q^2(Q+1)^2)\|z_k-z_{k-1}\|^2,
    \end{align*}
    where $(i)$ and $(ii)$ follow from \Cref{ass:lip}, $(iii)$ follows from the fact $\|V_{Qk}\|\leq\eta M(Q+1)$, and $(iv)$ follows from \Cref{lem:Vqk}. Then, the proof is complete.
    \end{proof}
\end{lemma}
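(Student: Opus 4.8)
The plan is to exploit the explicit structure of the estimator in \cref{eq:xgest_mrbio}, namely $\widehat{\nabla}\Phi(x_k;\mathcal{B}_x)=\nabla_xF(z_k;\mathcal{B}_F)-\nabla_x\nabla_yG(x_k,y_k;\mathcal{B}_G)\,V_{Qk}$ with $V_{Qk}$ as in \cref{eq:Vqk}, and to reduce the claim to ingredients I already control: the Lipschitz continuity of the first-order gradients (\Cref{ass:lip}) and the Lipschitz bound on the Neumann-type vector $V_{Qk}$ established in the preceding lemma. First I would peel off the additive first-order term from the Jacobian--vector term using $\|a+b\|^2\le 2\|a\|^2+2\|b\|^2$, giving
\begin{align*}
\|\widehat{\nabla}\Phi(x_k;\mathcal{B}_x)-\widehat{\nabla}\Phi(x_{k-1};\mathcal{B}_x)\|^2 &\le 2\|\nabla_xF(z_k;\mathcal{B}_F)-\nabla_xF(z_{k-1};\mathcal{B}_F)\|^2 \\
&\quad + 2\|\nabla_x\nabla_yG(x_k,y_k;\mathcal{B}_G)V_{Qk}-\nabla_x\nabla_yG(x_{k-1},y_{k-1};\mathcal{B}_G)V_{Q(k-1)}\|^2.
\end{align*}
The first term is bounded by $2L^2\|z_k-z_{k-1}\|^2$ directly from the $L$-Lipschitzness of $\nabla F$ in \Cref{ass:lip}b.

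The core of the argument is the second term, which I would handle by the add-and-subtract trick: insert $\nabla_x\nabla_yG(x_k,y_k;\mathcal{B}_G)V_{Q(k-1)}$ so that the difference splits into $\nabla_x\nabla_yG(z_k)(V_{Qk}-V_{Q(k-1)})$ plus $\big(\nabla_x\nabla_yG(z_k)-\nabla_x\nabla_yG(z_{k-1})\big)V_{Q(k-1)}$, and again apply $\|a+b\|^2\le 2\|a\|^2+2\|b\|^2$. For the first piece I use the uniform operator bound $\|\nabla_x\nabla_yG\|\le L$ (implied by \Cref{ass:lip}b) to pull out $L^2$ and leave $\|V_{Qk}-V_{Q(k-1)}\|^2$; for the second piece I use the $\tau$-Lipschitzness of $\nabla_x\nabla_yG$ (\Cref{ass:lip}c) together with the uniform bound $\|V_{Q(k-1)}\|\le \eta M(Q+1)$, which follows because each factor satisfies $\|I-\eta\nabla_y^2G\|\le 1$, there are $Q+1$ summands, and $\|\nabla_yF\|\le M$. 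This yields a contribution of the form $4\tau^2\eta^2M^2(Q+1)^2\|z_k-z_{k-1}\|^2 + 4L^2\|V_{Qk}-V_{Q(k-1)}\|^2$.

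Finally I would invoke the preceding lemma bounding $\|V_{Qk}-V_{Q(k-1)}\|^2$ by $\big(\tfrac{M^2Q^2(Q+1)^2\eta^4\rho^2}{2}+2\eta^2L^2(Q+1)^2\big)\|z_k-z_{k-1}\|^2$; multiplying by $4L^2$ produces exactly the terms $8L^4\eta^2(Q+1)^2$ and $2L^2\eta^4M^2\rho^2Q^2(Q+1)^2$. Collecting all four contributions gives the stated constant $L_Q^2=2L^2+4\tau^2\eta^2M^2(Q+1)^2+8L^4\eta^2(Q+1)^2+2L^2\eta^4M^2\rho^2Q^2(Q+1)^2$. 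The genuinely substantive step is the Lipschitz control of the matrix-product vector $V_{Qk}$, but since that is packaged in the earlier lemma, the main obstacle here is purely bookkeeping: keeping the uniform bounds $\|I-\eta\nabla_y^2G\|\le1$ (requiring $\eta<\tfrac1L$) and $\|V_{Qk}\|\le\eta M(Q+1)$ correctly propagated through the two nested splittings so that the $(Q+1)^2$ and $Q^2(Q+1)^2$ factors land on the right terms.
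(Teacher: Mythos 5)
Your proposal is correct and follows essentially the same route as the paper's proof: the same two-level splitting via $\|a+b\|^2\le 2\|a\|^2+2\|b\|^2$ with the add-and-subtract of $\nabla_x\nabla_yG(x_k,y_k;\mathcal{B}_G)V_{Q(k-1)}$, the same use of the bounds $\|\nabla_x\nabla_yG\|\le L$, $\|V_{Q(k-1)}\|\le\eta M(Q+1)$, the $\tau$-Lipschitzness of $\nabla_x\nabla_yG$, and the preceding lemma's Lipschitz bound on $V_{Qk}$, yielding the identical constant $L_Q^2$. No gaps.
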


\begin{lemma}[{\bf Restatement of~\Cref{prop:mrbio_epi}}]\label{lem:G}
Suppose Assumptions \ref{ass:conv}, \ref{ass:lip} and \ref{ass:var} hold. Let $\eta < \frac{1}{L}$. Then, we have
\begin{align}
    \label{eq:G}
        \mathbb{E} \|\widehat{\nabla}\Phi(x_k;\mathcal{B}_x)-\overline{\nabla}\Phi(x_k)\|^2 \leq G^2,
    \end{align}
    where $G=\frac{2M^2}{S}+\frac{12M^2L^2\eta^2(Q+1)^2}{S}+\frac{4M^2L^2(Q+2)(Q+1)^2\eta^4\sigma^2}{S}, \widehat{\nabla}\Phi(x_k;\mathcal{B}_x)$ is defined in \cref{eq:xgest_mrbio} and $\overline{\nabla}\Phi(x_k)$ is defined in \cref{eq:hypergradientest}. Further, for the iterative update of line 8 in \Cref{alg:mrbio},  we let $\bar{\epsilon}_k=v_k-\overline{\nabla}\Phi(x_k)$. Then, we have  
	\begin{align}
	\label{eq:lossx}
	\mathbb{E} \| \bar{\epsilon}_k \|^2 \leq \mathbb{E} [&2\alpha_k^2 G^2 + 2(1-\alpha_k)^2L_Q^2 \| x_k-x_{k-1} \|^2\nonumber \\
	 &+ 2(1-\alpha_k)^2L_Q^2 \| y_k-y_{k-1} \|^2 + (1-\alpha_k)^2 \|\bar{\epsilon}_{k-1} \|^2],
	\end{align}
where $L_Q^2$ is defined in \Cref{lem:LQ}.

\begin{proof}
We first prove \cref{eq:G}. Based on the forms of $\widehat{\nabla}\Phi(x_k;\mathcal{B}_x)$ and $\overline{\nabla}\Phi(x_k)$, we have
    \begin{align*}
        \mathbb{E} \|\widehat{\nabla}&\Phi(x_k;\mathcal{B}_x)-\overline{\nabla}\Phi(x_k)\|^2 \\
        \overset{(i)}\leq& 2\mathbb{E}\|\nabla_xF(x_k,y_k;\mathcal{B}_F)-\nabla_xf(x_k,y_k)\|^2\\
        &+2\mathbb{E}\|\nabla_x\nabla_yG(x_k,y_k;\mathcal{B}_G)V_{Qk}-\nabla_x\nabla_yg(x_k,y_k)\mathbb{E}[V_{Qk}]\|^2 \\
        \overset{(ii)}\leq & \frac{2M^2}{S}+2\mathbb{E}\|\nabla_x\nabla_yG(x_k,y_k;\mathcal{B}_G)V_{Qk}-\nabla_x\nabla_yG(x_k,y_k;\mathcal{B}_G)\mathbb{E}[V_{Qk}] \\
        &+\nabla_x\nabla_yG(x_k,y_k;\mathcal{B}_G)\mathbb{E}[V_{Qk}]-\nabla_x\nabla_yg(x_k,y_k)\mathbb{E}[V_{Qk}]\|^2\\
        \leq& \frac{2M^2}{S}+4\mathbb{E}\|\nabla_x\nabla_yG(x_k,y_k;\mathcal{B}_G)\|^2\mathbb{E}\|V_{Qk}-\mathbb{E}[V_{Qk}]\|^2\\
        &+4\mathbb{E}\|\nabla_x\nabla_yG(x_k,y_k;\mathcal{B}_G)-\nabla_x\nabla_yg(x_k,y_k)\|^2\|\mathbb{E}[V_{Qk}]\|^2\\
        \overset{(iii)} \leq &  \frac{2M^2}{S} +4L^2\mathbb{E}\|V_{Qk}-\mathbb{E}[V_{Qk}]\|^2+\frac{4L^2}{S}\|\mathbb{E}[V_{Qk}]\|^2 \\
        \overset{(iv)} \leq & \frac{2M^2}{S}+\frac{12M^2L^2\eta^2(Q+1)^2}{S}+\frac{4M^2L^2(Q+2)(Q+1)^2\eta^4\sigma^2}{S},
    \end{align*}
    where $(i)$ follows from the definitions of $\widehat{\nabla}\Phi(x_k;\mathcal{B}_x)$ and $\overline{\nabla}\Phi(x_k)$, 
    $(ii)$ and $(iii)$ follow from \Cref{ass:lip}, and $(iv)$ follows from \Cref{lem:Vqk} and the bound that 
    \begin{align}\label{eq:evqk}
         \|\mathbb{E}V_{Qk}\|^2\leq &\eta^2M^2\|\sum_{q=0}^{Q}(I-\eta \nabla_y^2g(x_k,y_k))^q\|^2 \leq \eta^2 M^2(Q+1)\sum_{q=0}^{Q}\|(I-\eta \nabla_y^2g(x_k,y_k))^q\|^2 \nonumber \\
         \leq& \eta^2M^2(Q+1)^2.
    \end{align}
    
    Then, we present the proof of \cref{eq:lossx}. Based on the forms of $v_k$ and $\overline{\nabla}\Phi(x_k)$, we have
    \begin{align*}
\mathbb{E} \| \bar{\epsilon}_k \| \overset{(i)}=& \mathbb{E} \| \widehat{\nabla}\Phi(x_k;\mathcal{B}_x) + (1-\alpha_k)(v_{k-1}-\widehat{\nabla} \Phi(x_{k-1};\mathcal{B}_x)) - \overline{\nabla}\Phi(x_k) \|^2 \\
=& \mathbb{E} \| \alpha_k(\widehat{\nabla}\Phi(x_k;\mathcal{B}_x) - \overline{\nabla}\Phi(x_k)) + (1-\alpha_k)((\widehat{\nabla}\Phi(x_k;\mathcal{B}_x)-\widehat{\nabla}\Phi(x_{k-1};\mathcal{B}_x))\\
&- (\overline{\nabla} \Phi(x_k)-\overline{\nabla}\Phi(x_{k-1}))
+ (1-\alpha_k)(\widehat{\nabla} \Phi(x_{k-1})-\overline{\nabla}\Phi(x_{k-1}) )\|^2 \\
\overset{(ii)}\leq& \mathbb{E}[2\alpha_k^2 \| \widehat{\nabla}\Phi(x_k;\mathcal{B}_x) - \overline{\nabla}\Phi(x_k) \|^2 + 2(1-\alpha_k)^2 \|\widehat{\nabla}\Phi(x_k;\mathcal{B}_x)-\widehat{\nabla}\Phi(x_{k-1};\mathcal{B}_x)\\
&-\overline{\nabla} \Phi(x_k) + \overline{\nabla} \Phi(x_{k-1}) \|^2 + (1-\alpha_k)^2 \| \overline{\epsilon}_{k-1} \|^2] \\
\overset{(iii)}\leq& \mathbb{E} [2\alpha_k^2 G^2 + 2(1-\alpha_k)^2 \| \widehat{\nabla}\Phi(x_k;\mathcal{B}_x)-\widehat{\nabla}\Phi(x_{k-1};\mathcal{B}_x) \|^2 + (1-\alpha_k)^2 \|\bar{\epsilon}_{k-1} \|^2] \\
 \overset{(iv)}\leq& \mathbb{E} [2\alpha_k^2 G^2 + 2(1-\alpha_k)^2L_Q^2 \| z_k-z_{k-1} \|^2 + (1-\alpha_k)^2 \|\bar{\epsilon}_{k-1} \|^2] \\
 \overset{(v)}\leq& \mathbb{E} [2\alpha_k^2 G^2 + 2(1-\alpha_k)^2L_Q^2 \| x_k-x_{k-1} \|^2 + 2(1-\alpha_k)^2L_Q^2 \| y_k-y_{k-1} \|^2\\
 &+ (1-\alpha_k)^2 \|\bar{\epsilon}_{k-1} \|^2],
\end{align*}
where $(i)$ follows from the definition of $v_k$, $(ii)$ follows because $\widehat{\nabla}\Phi(x_k;\mathcal{B}_x)$ and $\widehat{\nabla}\Phi(x_k;\mathcal{B}_x)-\widehat{\nabla}\Phi(x_{k-1};\mathcal{B}_x)$ are unbiased estimator of $\overline{\nabla}\Phi(x_k)$ and $\overline{\nabla} \Phi(x_k) - \overline{\nabla} \Phi(x_{k-1})$, respectively, $(iii)$ follows from \Cref{lem:G}, $(iv)$ follows from \Cref{lem:LQ}, and $(v)$ follows from the fact that $z_k=(x_k,y_k)$. 
   Then, the proof is complete.
\end{proof}
\end{lemma}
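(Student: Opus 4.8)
The statement bundles two inequalities, \cref{eq:G} and \cref{eq:lossx}, so the plan is to establish them in sequence: first bound the one-step variance of the stochastic hypergradient estimator $\widehat{\nabla}\Phi(x_k;\mathcal{B}_x)$ around its expected surrogate $\overline{\nabla}\Phi(x_k)$, and then feed that bound into the STORM-style recursion defining $v_k$.

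For \cref{eq:G} I would start from the observation that $\overline{\nabla}\Phi(x_k) = \nabla_x f(x_k,y_k) - \nabla_x\nabla_y g(x_k,y_k)\,\mathbb{E}[V_{Qk}]$, so the error $\widehat{\nabla}\Phi(x_k;\mathcal{B}_x) - \overline{\nabla}\Phi(x_k)$ splits into a first-order part $\nabla_xF(\cdot;\mathcal{B}_F)-\nabla_xf(\cdot)$ and a Hessian-inverse part $\nabla_x\nabla_yG(\cdot;\mathcal{B}_G)V_{Qk} - \nabla_x\nabla_yg(\cdot)\mathbb{E}[V_{Qk}]$. Applying $\|a+b\|^2\le 2\|a\|^2+2\|b\|^2$, the first-order part contributes $2M^2/S$ by the variance consequence of \Cref{ass:lip} under a minibatch of size $S$. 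The Hessian-inverse part is the delicate one: I would insert $\pm\,\nabla_x\nabla_yG(\cdot;\mathcal{B}_G)\mathbb{E}[V_{Qk}]$ and split once more, so that the randomness of the Jacobian estimate and that of $V_{Qk}$ are decoupled. The term $\nabla_x\nabla_yG(\cdot;\mathcal{B}_G)(V_{Qk}-\mathbb{E}[V_{Qk}])$ is controlled by $\|\nabla_x\nabla_yG\|\le L$ combined with the variance bound on $V_{Qk}$ from \Cref{lem:Vqk}, while $(\nabla_x\nabla_yG(\cdot;\mathcal{B}_G)-\nabla_x\nabla_yg)\mathbb{E}[V_{Qk}]$ is controlled by the $L^2/S$ variance of the Jacobian estimate together with the deterministic bound $\|\mathbb{E}[V_{Qk}]\|^2\le \eta^2M^2(Q+1)^2$, which itself follows from $\|I-\eta\nabla_y^2 g\|\le 1$ under \Cref{ass:conv} and $\eta<1/L$. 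Summing the pieces reproduces the three-term expression for $G^2$.

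For \cref{eq:lossx} I would substitute $v_k=\widehat{\nabla}\Phi(x_k;\mathcal{B}_x)+(1-\alpha_k)(v_{k-1}-\widehat{\nabla}\Phi(x_{k-1};\mathcal{B}_x))$ into $\bar\epsilon_k = v_k - \overline{\nabla}\Phi(x_k)$ and regroup so that $\bar\epsilon_k$ appears as the sum of (i) $\alpha_k(\widehat{\nabla}\Phi(x_k;\mathcal{B}_x)-\overline{\nabla}\Phi(x_k))$, (ii) $(1-\alpha_k)$ times the centered difference $(\widehat{\nabla}\Phi(x_k;\mathcal{B}_x)-\widehat{\nabla}\Phi(x_{k-1};\mathcal{B}_x)) - (\overline{\nabla}\Phi(x_k)-\overline{\nabla}\Phi(x_{k-1}))$, and (iii) the carried-over term $(1-\alpha_k)\bar\epsilon_{k-1}$. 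The crucial observation is that, conditioned on the past, both (i) and (ii) have zero mean, because $\widehat{\nabla}\Phi(x_k;\mathcal{B}_x)$ is unbiased for $\overline{\nabla}\Phi(x_k)$ and the difference is unbiased for $\overline{\nabla}\Phi(x_k)-\overline{\nabla}\Phi(x_{k-1})$, whereas $\bar\epsilon_{k-1}$ is measurable with respect to the past; hence the cross terms with $\bar\epsilon_{k-1}$ vanish in expectation, leaving $(1-\alpha_k)^2\|\bar\epsilon_{k-1}\|^2$ with a clean recursion coefficient. I would then bound term (i) by $2\alpha_k^2G^2$ via \cref{eq:G}, bound term (ii) through Young's inequality followed by \Cref{lem:LQ} to get $L_Q^2\|z_k-z_{k-1}\|^2$, and finally split $\|z_k-z_{k-1}\|^2=\|x_k-x_{k-1}\|^2+\|y_k-y_{k-1}\|^2$, which gives the stated inequality.

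The two genuine difficulties both live in the Hessian-inverse bookkeeping. First, in \cref{eq:G} the product $\nabla_x\nabla_yG(\cdot;\mathcal{B}_G)V_{Qk}$ couples two independent sources of randomness, and obtaining the correct $Q$-dependence hinges on the nontrivial variance estimate of the truncated Neumann product in \Cref{lem:Vqk}; isolating the sources via the $\pm$ insertion, rather than bounding the product directly, is what prevents the variance from blowing up. Second, in \cref{eq:lossx} the entire gain of the momentum estimator rests on recognizing the centered-difference structure so that the cross terms cancel in expectation. A naive triangle-inequality bound on $\|\bar\epsilon_k\|$ would cause the error $\bar\epsilon_{k-1}$ to accumulate rather than contract. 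Everything else reduces to routine applications of Young's inequality and the Lipschitz and bounded-variance consequences of Assumptions \ref{ass:conv}--\ref{ass:var}.
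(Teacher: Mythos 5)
Your proposal is correct and follows essentially the same route as the paper's proof: the same two-way split of the hypergradient error with the $\pm\,\nabla_x\nabla_yG(\cdot;\mathcal{B}_G)\mathbb{E}[V_{Qk}]$ insertion, the same appeal to \Cref{lem:Vqk} and the bound $\|\mathbb{E}[V_{Qk}]\|^2\le\eta^2M^2(Q+1)^2$, and the same regrouping of the STORM recursion into two zero-mean terms plus $(1-\alpha_k)\bar\epsilon_{k-1}$ so the cross terms cancel before invoking \Cref{lem:LQ}. The only cosmetic difference is that for the centered-difference term the paper uses variance $\le$ second moment to keep the constant $2(1-\alpha_k)^2L_Q^2$, where your Young's-inequality phrasing would give a slightly larger constant.
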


\begin{lemma}\label{lem:lossy}
	 Suppose Assumptions \ref{ass:conv}, \ref{ass:lip} and \ref{ass:var} hold. Let $\eta < \frac{1}{L}$. Then, we have 
	\begin{align*}
	\mathbb{E} \| \nabla_yg(x_k, y_k) - u_k \|^2 \leq &\mathbb{E} [2 \beta_k^2 G^2 + 2(1-\beta_k)^2 L^2 (\|x_k-x_{k-1}\|^2+\| y_k-y_{k-1}\|^2)\\
	&+(1-\beta_k)^2\| \nabla_yg(x_{k-1}, y_{k-1})-u_{k-1} \|^2],
	\end{align*}
	where $G$ is defined in \Cref{lem:G}.
	\begin{proof}
		This proof follow from the steps similar to the proof of \cref{eq:lossx} in \Cref{lem:G}.
	\end{proof}
\end{lemma}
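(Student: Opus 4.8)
The plan is to mirror, step for step, the proof of \cref{eq:lossx} in \Cref{lem:G}, now applied to the inner-loop momentum estimator $u_k$ from line 8 of \Cref{alg:mrbio} with target $\nabla_y g(x_k,y_k)$, in place of the hypergradient estimator $v_k$ with target $\overline{\nabla}\Phi(x_k)$. First I would substitute the momentum update $u_k = \nabla_yG(x_k,y_k;\mathcal{B}_y) + (1-\beta_k)(u_{k-1} - \nabla_yG(x_{k-1},y_{k-1};\mathcal{B}_y))$ into $\nabla_yg(x_k,y_k) - u_k$ and regroup the result into three pieces: a fresh-noise term $\beta_k(\nabla_yG(x_k,y_k;\mathcal{B}_y) - \nabla_yg(x_k,y_k))$, a centered increment $(1-\beta_k)[(\nabla_yG(x_k,y_k;\mathcal{B}_y) - \nabla_yG(x_{k-1},y_{k-1};\mathcal{B}_y)) - (\nabla_yg(x_k,y_k) - \nabla_yg(x_{k-1},y_{k-1}))]$, and the carried-over error $(1-\beta_k)(\nabla_yg(x_{k-1},y_{k-1}) - u_{k-1})$.

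Next, conditioning on the filtration generated by the iterates through step $k-1$, I would observe that the first two pieces are conditionally mean-zero, since $\mathcal{B}_y$ yields an unbiased estimate of $\nabla_y g$ at both $z_k$ and $z_{k-1}$, while the carried-over error is measurable. Taking the expectation of the squared norm therefore annihilates the cross terms involving the last piece, and an application of $\|a+b\|^2 \le 2\|a\|^2 + 2\|b\|^2$ to the first two produces the three-term structure $2\beta_k^2(\cdot) + 2(1-\beta_k)^2(\cdot) + (1-\beta_k)^2\,\mathbb{E}\|\nabla_yg(x_{k-1},y_{k-1}) - u_{k-1}\|^2$.

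Then I would bound the two remaining variance terms. For the fresh-noise term, \Cref{ass:var} gives $\mathbb{E}\|\nabla_yG(x_k,y_k;\mathcal{B}_y) - \nabla_yg(x_k,y_k)\|^2 \le \sigma^2/S$, which is absorbed into the common constant $G^2$ of \Cref{lem:G}, so this term is at most $2\beta_k^2 G^2$. For the centered increment I would use $\mathbb{E}\|X - \mathbb{E}X\|^2 \le \mathbb{E}\|X\|^2$ together with the $L$-Lipschitzness of $\nabla_y G(\cdot;\zeta)$ from \Cref{ass:lip}(b) to obtain $L^2\|z_k - z_{k-1}\|^2$, and finally split $\|z_k - z_{k-1}\|^2 = \|x_k - x_{k-1}\|^2 + \|y_k - y_{k-1}\|^2$ since $z_k = (x_k,y_k)$. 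This reproduces the claimed inequality.

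The analysis here is strictly easier than in \Cref{lem:G}: the only place the two proofs genuinely diverge is the Lipschitz step. Because $u_k$ tracks an honest stochastic gradient rather than the Neumann-series Hessian-vector hypergradient estimator, I can invoke the plain $L$-Lipschitz property of $\nabla_y G$ directly, instead of routing through \Cref{lem:LQ} and the constant $L_Q^2$ (which accumulated the $Q$-dependent Jacobian- and Hessian-vector error terms); this is exactly why the coefficient is $L^2$ rather than $L_Q^2$. The main thing to get right is the conditional-expectation bookkeeping that makes the cross terms vanish, in particular that the \emph{same} minibatch $\mathcal{B}_y$ is reused to evaluate the gradient at $z_k$ and $z_{k-1}$, which is what renders the increment in the second piece conditionally centered; everything else is a routine recomputation of constants.
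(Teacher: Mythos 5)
Your proposal is correct and follows exactly the route the paper intends: the paper's own proof of this lemma is literally the one-line remark that it mirrors the argument for \cref{eq:lossx} in \Cref{lem:G}, and your three-way decomposition, conditional-unbiasedness argument for killing the cross terms, Young's inequality on the first two pieces, and direct use of the $L$-Lipschitzness of $\nabla_y G$ in place of the $L_Q$-Lipschitz bound are precisely that mirrored argument. The only wrinkle --- that the fresh-noise variance is naturally $\sigma^2/S$ and its absorption into the constant $G^2$ from \Cref{lem:G} is not literally justified by the definition of $G^2$ --- is a looseness in the paper's own statement rather than a gap in your reasoning, and you handle it the same way the paper implicitly does.
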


Then, we characterize how the variance of the hypergradient and the inner-loop gradient change between iterations.
\begin{lemma}\label{lem:episdiff}
	Suppose Assumptions \ref{ass:conv}, \ref{ass:lip} and \ref{ass:var} hold. Let $\eta < \frac{1}{L}$, $c_1\geq \frac{2}{3d^3}+\frac{9\lambda \mu}{4}, c_2\geq \frac{2}{3d^3}+\frac{75{L'}^2\lambda}{2\mu}, \eta_k=\frac{d}{(m+k)^{1/3}}, m\geq \max \{2,(c_1d)^3,(c_2d)^3,d^3\},$ $\widetilde{x}_{k+1}=x_k-\gamma v_k, \; \widetilde{y}_{k+1}=y_k-\lambda u_k$, where ${L^{\prime}}^2=\max \{(L+\frac{L^2}{\mu}+\frac{M\tau}{\mu}+\frac{LM\rho}{\mu^2})^2,L_Q^2\}$. Then, we have
	\begin{align}\label{eq:epidiff}
	\frac{1}{\eta_k}\mathbb{E} \|\bar{\epsilon}_{k+1}\|^2 - \frac{1}{\eta_{k-1}}\mathbb{E} \| \bar{\epsilon}_k \|^2 \leq& -\frac{9\lambda \mu \eta_k}{4} \mathbb{E} \|\bar{\epsilon}_k\|^2 + 2 L_Q^2 \eta_k(\|\widetilde{x}_k-x_{k-1}\|^2+\| \widetilde{y}_k-y_{k-1}\|^2)\nonumber\\
	&+ \frac{2\alpha_{k+1}^2G^2}{\eta_k},
	\end{align}
	where $L_Q$ is defined in \Cref{lem:LQ}, $G$ and $\bar{\epsilon}_{k}$ are defined in  \Cref{lem:G}. Further, we characterize the relationship of the variance of the inner-loop gradient between iterations in the following inequality.
	\begin{align}\label{eq:gdiff}
	\frac{1}{\eta_k}& \mathbb{E} \|\nabla_yg(x_{k+1}, y_{k+1})-u_{k+1}\|^2 - \frac{1}{\eta_{k-1}} \mathbb{E} \|\nabla_yg(x_k, y_k)-u_k \|^2 \\
	&\leq -\frac{75{L^{\prime}}^2\lambda\eta_k}{2\mu} \mathbb{E} \| \nabla_yg(x_k, y_k) - u_k\|^2 
	+ 2L^2 \eta_k(\|\widetilde{x}_{k+1}-x_k\|^2+\|\widetilde{y}_{k+1}-y_k\|^2)+\frac{2\beta_{k+1}^2G^2}{\eta_k}.
	\end{align}
\end{lemma}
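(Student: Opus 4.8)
The plan is to prove both \eqref{eq:epidiff} and \eqref{eq:gdiff} by the same STORM-style template, turning the one-step variance recursions of Lemmas \ref{lem:G} and \ref{lem:lossy} into telescoping increments of the weighted quantities $\frac{1}{\eta_k}\|\bar{\epsilon}_{k+1}\|^2$ and $\frac{1}{\eta_k}\|\nabla_yg(x_{k+1},y_{k+1})-u_{k+1}\|^2$. I would carry out \eqref{eq:epidiff} in full and then note that \eqref{eq:gdiff} is verbatim with $\alpha\to\beta$, $L_Q\to L$, and the matching constants. First I invoke Lemma \ref{lem:G} at index $k+1$ to get $\mathbb{E}\|\bar{\epsilon}_{k+1}\|^2 \leq (1-\alpha_{k+1})^2\|\bar{\epsilon}_k\|^2 + 2(1-\alpha_{k+1})^2 L_Q^2(\|x_{k+1}-x_k\|^2+\|y_{k+1}-y_k\|^2) + 2\alpha_{k+1}^2 G^2$, divide through by $\eta_k$, and subtract $\frac{1}{\eta_{k-1}}\|\bar{\epsilon}_k\|^2$. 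The two displacement terms convert cleanly via the update rule: since $x_{k+1}-x_k=-\gamma\eta_k v_k=\eta_k(\widetilde{x}_{k+1}-x_k)$ and likewise for $y$, one has $\frac{1}{\eta_k}\|x_{k+1}-x_k\|^2=\eta_k\|\widetilde{x}_{k+1}-x_k\|^2$; bounding $(1-\alpha_{k+1})^2\leq 1$ there produces the term $2L_Q^2\eta_k(\|\widetilde{x}_{k+1}-x_k\|^2+\|\widetilde{y}_{k+1}-y_k\|^2)$, while $\frac{2\alpha_{k+1}^2G^2}{\eta_k}$ carries over directly. This isolates the coefficient of $\|\bar{\epsilon}_k\|^2$, namely $\frac{(1-\alpha_{k+1})^2}{\eta_k}-\frac{1}{\eta_{k-1}}$, as the only nontrivial object.

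The crux, which I expect to be the main obstacle, is showing this coefficient is at most $-\frac{9\lambda\mu}{4}\eta_k$. Expanding $(1-\alpha_{k+1})^2=1-2\alpha_{k+1}+\alpha_{k+1}^2$ and substituting $\alpha_{k+1}=c_1\eta_k^2$ gives $\frac{(1-\alpha_{k+1})^2}{\eta_k}-\frac{1}{\eta_{k-1}} = \big(\frac{1}{\eta_k}-\frac{1}{\eta_{k-1}}\big)-c_1\eta_k(2-\alpha_{k+1})$. Here I use $m\geq(c_1 d)^3$ to guarantee $\alpha_{k+1}=c_1\eta_k^2\leq 1$ (which both validates the recursion constants and gives $2-\alpha_{k+1}\geq 1$), so the second piece is at most $-c_1\eta_k$. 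The remaining scalar estimate is $\frac{1}{\eta_k}-\frac{1}{\eta_{k-1}}\leq \frac{2}{3d^3}\eta_k$: writing $\frac{1}{\eta_k}=\frac{(m+k)^{1/3}}{d}$ and applying the mean value theorem to $t\mapsto t^{1/3}$ yields $\frac{1}{\eta_k}-\frac{1}{\eta_{k-1}}\leq \frac{1}{3d}(m+k-1)^{-2/3}$; then $m\geq 2$ (so $m+k-1\geq \frac{m+k}{2}$, giving $(m+k-1)^{-2/3}\leq 2^{2/3}(m+k)^{-2/3}$) together with $m\geq d^3$ (so $(m+k)^{-1/3}\leq d^{-1}$) collapse this to $\frac{2^{2/3}}{3d^3}\eta_k\leq\frac{2}{3d^3}\eta_k$. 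Combining with $c_1\geq\frac{2}{3d^3}+\frac{9\lambda\mu}{4}$ gives $\big(\frac{1}{\eta_k}-\frac{1}{\eta_{k-1}}\big)-c_1\eta_k\leq\big(\frac{2}{3d^3}-c_1\big)\eta_k\leq-\frac{9\lambda\mu}{4}\eta_k$, which finishes \eqref{eq:epidiff}.

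Finally, \eqref{eq:gdiff} follows from the identical argument applied to Lemma \ref{lem:lossy}: I use $m\geq(c_2 d)^3$ to ensure $\beta_{k+1}=c_2\eta_k^2\leq 1$, reuse the same step-size estimate $\frac{1}{\eta_k}-\frac{1}{\eta_{k-1}}\leq\frac{2}{3d^3}\eta_k$, and invoke $c_2\geq\frac{2}{3d^3}+\frac{75{L'}^2\lambda}{2\mu}$ to drive the coefficient of $\|\nabla_yg(x_k,y_k)-u_k\|^2$ below $-\frac{75{L'}^2\lambda}{2\mu}\eta_k$, while now the Lipschitz constant $L$ (rather than $L_Q$) governs the displacement term $2L^2\eta_k(\|\widetilde{x}_{k+1}-x_k\|^2+\|\widetilde{y}_{k+1}-y_k\|^2)$. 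Since the genuinely reusable ingredient is the scalar step-size inequality, I would state and prove $\frac{1}{\eta_k}-\frac{1}{\eta_{k-1}}\leq\frac{2}{3d^3}\eta_k$ once and cite it in both cases; everything else is mechanical bookkeeping from Lemmas \ref{lem:G} and \ref{lem:lossy}.
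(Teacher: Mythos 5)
Your proposal is correct and follows essentially the same route as the paper's proof: apply the one-step recursions of Lemmas \ref{lem:G} and \ref{lem:lossy} at index $k+1$, divide by $\eta_k$, convert the displacements via $x_{k+1}-x_k=\eta_k(\widetilde{x}_{k+1}-x_k)$, and absorb $\frac{1}{\eta_k}-\frac{1}{\eta_{k-1}}$ into the choice of $c_1,c_2$. You are in fact slightly more complete than the paper, which asserts the step-size inequality $\frac{1}{\eta_k}-\frac{1}{\eta_{k-1}}\leq \frac{2}{3d^3}\eta_k$ implicitly without the mean-value-theorem argument you supply, and your displacement indices $(\widetilde{x}_{k+1},x_k)$ are the ones actually used downstream in the proof of \Cref{thm:mrbio}.
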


\begin{proof}
We first prove the \cref{eq:epidiff}. Based on the forms of $\bar{\epsilon}_{k}$, we have
\begin{align*}
\frac{1}{\eta_k}\mathbb{E}& \|\bar{\epsilon}_{k+1}\|^2 - \frac{1}{\eta_{k-1}}\mathbb{E} \| \bar{\epsilon}_k \|^2 \\
\overset{(i)}\leq& \left(\frac{(1-\alpha_{k+1})^2}{\eta_k}-\frac{1}{\eta_{k-1}}\right) \mathbb{E} \| \bar{\epsilon}_k\|^2 + 2(1-\alpha_{k+1})^2L_Q^2\eta_k(\|\widetilde{x}_k-x_{k-1}\|^2+\|\widetilde{y}_k-y_{k-1}\|^2)\\
&+ \frac{2\alpha_{k+1}^2G^2}{\eta_k}\\
 \overset{(ii)}\leq& \left(\frac{1}{\eta_k}-\frac{1}{\eta_{k-1}}-c_1\eta_k\right) \mathbb{E} \|\bar{\epsilon}_k\|^2 + 2 L_Q^2 \eta_k(\|\widetilde{x}_k-x_{k-1}\|^2+\| \widetilde{y}_k-y_{k-1}\|^2) + \frac{2\alpha_{k+1}^2G^2}{\eta_k} \\
\overset{(iii)}\leq& -\frac{9\lambda \mu \eta_k}{4} \mathbb{E} \|\bar{\epsilon}_k\|^2 + 2 L_Q^2 \eta_k(\|\widetilde{x}_k-x_{k-1}\|^2+\| \widetilde{y}_k-y_{k-1}\|^2) + \frac{2\alpha_{k+1}^2G^2}{\eta_k},
\end{align*}
where $(i)$ follows from \cref{eq:lossx}, $(ii)$ follows because $\alpha_{k+1}=c_1\eta_k^2\leq c_1\eta_0^2\leq 1$, and $(iii)$ follows from $c_1\geq \frac{2}{3d^3}+\frac{9\lambda \mu}{4}$. 

Then, we present the proof of \cref{eq:gdiff}. In particular, we have

\begin{align*}
\frac{1}{\eta_k}& \mathbb{E} \|\nabla_yg(x_{k+1}, y_{k+1})-u_{k+1}\|^2 - \frac{1}{\eta_{k-1}} \mathbb{E} \|\nabla_yg(x_k, y_k)-u_k \|^2 \\
 \overset{(i)}\leq& \left(\frac{1}{\eta_k}-\frac{1}{\eta_{k-1}}-c_2\eta_k\right) \mathbb{E} \| \nabla_yg(x_k, y_k) - u_k\|^2 + 2L^2 \eta_k(\|\widetilde{x}_{k+1}-x_k\|^2+\|\widetilde{y}_{k+1}-y_k\|^2)\\
&+\frac{2\beta_{k+1}^2G^2}{\eta_k}\\
\overset{(ii)}\leq& -\frac{75{L^{\prime}}^2\lambda\eta_k}{2\mu} \mathbb{E} \| \nabla_yg(x_k, y_k) - u_k\|^2 + 2L^2 \eta_k(\|\widetilde{x}_{k+1}-x_k\|^2+\|\widetilde{y}_{k+1}-y_k\|^2)+\frac{2\beta_{k+1}^2G^2}{\eta_k},
\end{align*}
where $(i)$ follows from \Cref{lem:lossy} and because $\beta_{k+1}=c_2\eta_k^2\leq c_2\eta_0^2 \leq 1 $, and $(ii)$ follows because $c_2\geq \frac{2}{3d^3}+\frac{75{L'}^2\lambda}{2\mu}$. Then, the proof is complete.
\end{proof}

Next, we characterize the approximation bound $C_Q$ on the Hessian inverse.
\begin{lemma}\label{lem:CQ}
Suppose Assumptions \ref{ass:conv}, \ref{ass:lip} and \ref{ass:var} hold. Let $\eta < \frac{1}{L}$. Then, we have
\begin{align*}
    \|\widetilde{\nabla}\Phi(x_k)-\overline{\nabla}\Phi(x_k)\| \leq C_Q,
\end{align*}
where $\widetilde{\nabla}\Phi(x_k)$ is defined in \cref{eq:tildephi}, $\overline{\nabla}\Phi(x_k)$ is defined in \cref{eq:hypergradientest},
and $C_Q=\frac{(1-\eta\mu)^{Q+1}ML}{\mu}$.
\begin{proof}
Following from the proof of Proposition 3 in \cite{ji2021bilevel}, we have $\|\mathbb{E}[V_{Qk}]-[\nabla_y^2g(x_k,y_k)]^{-1}\nabla_yf(x_k,y_k)\|\leq \frac{(1-\eta\mu)^{Q+1}M}{\mu}$. Then, we obtain
\begin{align*}
    \|\widetilde{\nabla}\Phi(x_k)-\overline{\nabla}\Phi(x_k)\|
    \leq& \|\nabla_x\nabla_yg(x_k,y_k)\|\|\mathbb{E}[V_{Qk}]-[\nabla_y^2g(x_k,y_k)]^{-1}\nabla_yf(x_k,y_k)\| \\
    \overset{(i)}\leq & \frac{(1-\eta\mu)^{Q+1}ML}{\mu},
\end{align*}
where $(i)$ follows from \Cref{ass:lip}. Then, the proof is complete.
\end{proof}
\end{lemma}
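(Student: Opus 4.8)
The plan is to exploit that $\widetilde{\nabla}\Phi(x_k)$ in \cref{eq:tildephi} and $\overline{\nabla}\Phi(x_k)$ in \cref{eq:hypergradientest} share the identical first-order term $\nabla_xf(x_k,y_k)$, so that their difference collapses onto the Hessian-inverse term alone. First I would reindex the truncated Neumann sum in $\overline{\nabla}\Phi(x_k)$ via $q\mapsto q+1$ to recognize it as $\eta\sum_{q=0}^{Q}(I-\eta\nabla_y^2g(x_k,y_k))^q\nabla_yf(x_k,y_k)=\mathbb{E}[V_{Qk}]$ (the mean already computed in \Cref{lem:Vqk}). This lets me write
\begin{align*}
\widetilde{\nabla}\Phi(x_k)-\overline{\nabla}\Phi(x_k)=-\nabla_x\nabla_yg(x_k,y_k)\Big([\nabla_y^2g(x_k,y_k)]^{-1}\nabla_yf(x_k,y_k)-\mathbb{E}[V_{Qk}]\Big),
\end{align*}
and then, by submultiplicativity of the operator norm,
\begin{align*}
\|\widetilde{\nabla}\Phi(x_k)-\overline{\nabla}\Phi(x_k)\|\le\|\nabla_x\nabla_yg(x_k,y_k)\|\cdot\big\|[\nabla_y^2g(x_k,y_k)]^{-1}\nabla_yf(x_k,y_k)-\mathbb{E}[V_{Qk}]\big\|.
\end{align*}

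I would then bound the two factors separately. The Jacobian norm $\|\nabla_x\nabla_yg(x_k,y_k)\|$ is at most $L$ directly from the $L$-Lipschitzness of $\nabla G$ in \Cref{ass:lip}, since a Lipschitz gradient has an operator-norm-bounded derivative. For the second factor I would identify it as the discarded tail of the Neumann series for the Hessian inverse: under \Cref{ass:conv} and $\eta<\frac{1}{L}$, the matrix $I-\eta\nabla_y^2g(x_k,y_k)$ has spectral norm at most $1-\eta\mu<1$, so $[\nabla_y^2g(x_k,y_k)]^{-1}=\eta\sum_{q=0}^{\infty}(I-\eta\nabla_y^2g(x_k,y_k))^q$ converges and the difference equals exactly $\eta\sum_{q=Q+1}^{\infty}(I-\eta\nabla_y^2g(x_k,y_k))^q\nabla_yf(x_k,y_k)$.

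Finally I would bound this tail by a geometric series, using $\|(I-\eta\nabla_y^2g(x_k,y_k))^q\|\le(1-\eta\mu)^q$ together with $\|\nabla_yf(x_k,y_k)\|\le M$ (the latter from the $M$-Lipschitzness of $F$ in \Cref{ass:lip}), so that the sum evaluates to $\eta\cdot\frac{(1-\eta\mu)^{Q+1}}{\eta\mu}\cdot M=\frac{(1-\eta\mu)^{Q+1}M}{\mu}$. Multiplying by the Jacobian bound $L$ produces exactly $C_Q=\frac{(1-\eta\mu)^{Q+1}ML}{\mu}$. The only mildly delicate ingredient is the spectral estimate $\|I-\eta\nabla_y^2g(x_k,y_k)\|\le1-\eta\mu$ underwriting the geometric decay, which I would either verify from $\mu\le\lambda_{\min}(\nabla_y^2g(x_k,y_k))\le\lambda_{\max}(\nabla_y^2g(x_k,y_k))\le L$, or simply import as the tail bound $\|\mathbb{E}[V_{Qk}]-[\nabla_y^2g(x_k,y_k)]^{-1}\nabla_yf(x_k,y_k)\|\le\frac{(1-\eta\mu)^{Q+1}M}{\mu}$ from Proposition 3 of \cite{ji2021bilevel}; everything else is routine norm bookkeeping. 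No genuine obstacle arises here, precisely because the first-order terms cancel and the claim is a deterministic bias estimate in which the minibatch randomness plays no role.
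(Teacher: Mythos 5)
Your proposal is correct and follows essentially the same route as the paper: the first-order terms cancel, the difference factors as the Jacobian $\nabla_x\nabla_y g$ times the gap between $\mathbb{E}[V_{Qk}]$ and $[\nabla_y^2 g]^{-1}\nabla_y f$, and the two factors are bounded by $L$ and $\frac{(1-\eta\mu)^{Q+1}M}{\mu}$ respectively. The only difference is that you derive the Neumann-series tail bound explicitly from the spectral estimate $\|I-\eta\nabla_y^2 g\|\le 1-\eta\mu$, whereas the paper imports that inequality directly from Proposition 3 of \cite{ji2021bilevel}; both are valid.
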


\begin{lemma}[{\bf Restatement of \Cref{prop:mrbio_phi}}]\label{lem:ephi}
	Suppose Assumptions \ref{ass:conv}, \ref{ass:lip} and \ref{ass:var} hold. Let $\eta < \frac{1}{L}$, and $ \gamma \leq \frac{1}{4L_{\Phi}\eta_k}$, where $L_\Phi = L+\frac{2L^2+\tau M^2}{\mu}+\frac{\rho LM+L^3+\tau ML}{\mu^2}+\frac{\rho L^2M}{\mu^3}$. Then, we have 
	\begin{align*}
	\mathbb{E}[\Phi(x_{k+1})] \leq \mathbb{E}[\Phi(x_k)]+2\eta_k \gamma {L^{\prime}}^2 \|y_k-y^{\ast}(x_k)\|^2 + 2\eta_k \gamma\|\bar{\epsilon}_k\|^2 + 2\eta_k \gamma C_Q^2 - \frac{1}{2\gamma\eta_k} \|x_{k+1}-x_k\|^2,
	\end{align*}
where $C_Q$ is defined in \Cref{lem:CQ} and $L'$ is defined in \Cref{prop:mrbio_phi}.
\begin{proof}
Based on the Lipschitz property of $\Phi(x_k)$, we have
\begin{align*}
\mathbb{E}[\Phi(x_{k+1})]
\overset{(i)}\leq& \mathbb{E}[\Phi(x_k) + \langle\nabla \Phi(x_k), x_{k+1}-x_k \rangle + \frac{L_{\Phi}}{2} \| x_{k+1}-x_k\|^2] \\
\overset{(ii)} =&\mathbb{E}[\Phi(x_k)+\eta_k \langle \nabla \Phi(x_k), \widetilde{x}_{k+1}-x_k \rangle + \frac{L_{\Phi}}{2} \eta_k^2 \|\widetilde{x}_{k+1}-x_k\|^2] \\
=&\mathbb{E}[\Phi({x_k})+\eta_k \langle \nabla \Phi(x_k)-v_k, \widetilde{x}_{k+1}-x_k\rangle + \eta_k \langle v_k, \widetilde{x}_{k+1}-x_k \rangle\\
&+\frac{L_{\Phi}}{2} \eta_k^2 \| \widetilde{x}_{k+1}-x_k\|^2],
\end{align*}
where $(i)$ follows from the smoothness of the function $\Phi(x)$ proved by Lemma 2 in \cite{ji2021bilevel}, and $(ii)$ follows because $\eta_k (\widetilde{x}_{k+1}-x_k) =x_{k+1}-x_k$, where $\widetilde{x}_{k+1}$ is defined in \Cref{lem:episdiff}.

Based on Lemma 25 in \cite{huang2020accelerated}, we have
$\langle v_k, \widetilde{x}_{k+1}-x_k \rangle \leq -\frac{1}{\gamma} \| \widetilde{x}_{k+1}-x_k\|^2$, which yields
\begin{align*}
\langle \nabla \Phi(x_k)& - v_k, \widetilde{x}_{k+1}-x_k \rangle \\
 =& \langle \nabla \Phi(x_k) - \widetilde{\nabla} \Phi(x_k) + \widetilde{\nabla} \Phi(x_k) -\overline{\nabla}\Phi(x_k) + \overline{\nabla}\Phi(x_k)-v_k, \widetilde{x}_{k+1} - x_k\rangle \\
 \leq & \|\nabla\Phi(x_k)-\widetilde{\nabla} \Phi(x_k)\|\|\widetilde{x}_{k+1} - x_k\|+\|\widetilde{\nabla} \Phi(x_k) -\overline{\nabla}\Phi(x_k)\|\|\widetilde{x}_{k+1} - x_k\|\\
 &+\|\overline{\nabla}\Phi(x_k)-v_k\|\|\widetilde{x}_{k+1} - x_k\|\\
 \overset{(i)}\leq& 2\gamma {L^{\prime}}^2 \|y_k-y^{\ast}(x_k)\|^2 + \frac{1}{8\gamma} \|\widetilde{x}_{k+1}-x_k\|^2+C_Q\|\widetilde{x}_{k+1}-x_k\|+2\gamma \|\overline{\nabla} \Phi(x_k) - v_k\|^2\\
&+ \frac{1}{8\gamma} \|\widetilde{x}_{k+1}-x_k \|^2 \\
 \overset{(ii)}\leq& 2\gamma {L^{\prime}}^2 \|y_k-y^{\ast}(x_k)\|^2 + \frac{1}{8\gamma} \|\widetilde{x}_{k+1}-x_k\|^2+2\gamma C_Q^2+\frac{1}{8\gamma}\|\widetilde{x}_{k+1}-x_k\|^2\\
&+2\gamma \|\overline{\nabla} \Phi(x_k) - v_k\|^2 
+ \frac{1}{8\gamma} \| \widetilde{x}_{k+1}-x_k\|^2,
\end{align*}
where $(i)$ follows from \cite[Lemma 7]{ji2021bilevel}, \Cref{lem:CQ} and Young's inequality, and $(ii)$ follows from Young's inequality.

Combining the above inequalities and applying $\gamma \leq \frac{1}{4L_{\Phi}\eta_k}$, we have
\begin{align*}
\mathbb{E}[\Phi(x_{k+1})] \leq& \mathbb{E}[\Phi(x_k)]+2\eta_k \gamma {L^{\prime}}^2 \|y_k-y^{\ast}(x_k)\|^2 + 2\eta_k \gamma\|\bar{\epsilon}_k\|^2 + 2\eta_k \gamma C_Q^2 - \frac{\eta_k}{2\gamma} \|\widetilde{x}_{k+1}-x_k\|^2 \\
= &\mathbb{E}[\Phi(x_k)]+2\eta_k \gamma {L^{\prime}}^2 \|y_k-y^{\ast}(x_k)\|^2 + 2\eta_k \gamma\|\bar{\epsilon}_k\|^2 + 2\eta_k \gamma C_Q^2 - \frac{1}{2\gamma\eta_k} \|x_{k+1}-x_k\|^2.
\end{align*}
Then, the proof is complete.
\end{proof}
\end{lemma}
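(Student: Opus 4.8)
The plan is to derive this one-step descent inequality as a smoothness-based expansion of $\Phi$ along the update direction, with the only nonstandard ingredient being how the hypergradient estimation error $\nabla\Phi(x_k)-v_k$ is decomposed and controlled. First I would invoke the $L_\Phi$-smoothness of the outer objective $\Phi$ (established as Lemma 2 in \cite{ji2021bilevel} under Assumptions \ref{ass:conv}--\ref{ass:var}) to write
\begin{align*}
\mathbb{E}[\Phi(x_{k+1})] \leq \mathbb{E}\Big[\Phi(x_k) + \langle \nabla\Phi(x_k), x_{k+1}-x_k\rangle + \tfrac{L_\Phi}{2}\|x_{k+1}-x_k\|^2\Big].
\end{align*}
Since line 11 of \Cref{alg:mrbio} gives $x_{k+1}-x_k = -\gamma\eta_k v_k = \eta_k(\widetilde{x}_{k+1}-x_k)$ with $\widetilde{x}_{k+1}=x_k-\gamma v_k$ (as in \Cref{lem:episdiff}), I would factor out $\eta_k$ so the linear term becomes $\eta_k\langle\nabla\Phi(x_k),\widetilde{x}_{k+1}-x_k\rangle$ and the quadratic term becomes $\tfrac{L_\Phi}{2}\eta_k^2\|\widetilde{x}_{k+1}-x_k\|^2$; this rescaling is precisely what lets the step-size condition $\gamma\leq\tfrac{1}{4L_\Phi\eta_k}$ later dominate the second-order term.

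The core step is to split $\langle\nabla\Phi(x_k),\widetilde{x}_{k+1}-x_k\rangle = \langle v_k,\widetilde{x}_{k+1}-x_k\rangle + \langle\nabla\Phi(x_k)-v_k,\widetilde{x}_{k+1}-x_k\rangle$. The first (descent) term is handled by the gradient-step property (Lemma 25 of \cite{huang2020accelerated}), which yields $\langle v_k,\widetilde{x}_{k+1}-x_k\rangle\leq -\tfrac{1}{\gamma}\|\widetilde{x}_{k+1}-x_k\|^2$ — in fact an identity here, since the update is an unconstrained gradient step. The second (error) term is where the bilevel structure enters: $v_k$ estimates (via momentum) the Neumann-truncated $\overline{\nabla}\Phi(x_k)$; this truncated quantity only approximates the exact hypergradient expression $\widetilde{\nabla}\Phi(x_k)$ of \cref{eq:tildephi}, which in turn uses $y_k$ in place of $y^\ast(x_k)$ and so differs from the true $\nabla\Phi(x_k)$. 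I would therefore insert the telescoping decomposition
\begin{align*}
\nabla\Phi(x_k)-v_k = \big(\nabla\Phi(x_k)-\widetilde{\nabla}\Phi(x_k)\big) + \big(\widetilde{\nabla}\Phi(x_k)-\overline{\nabla}\Phi(x_k)\big) + \big(\overline{\nabla}\Phi(x_k)-v_k\big),
\end{align*}
and bound the three pieces by three previously established facts: the first by the $y$-tracking error through the $L'$-Lipschitzness of the hypergradient in $y$ (Lemma 7 of \cite{ji2021bilevel}), giving $\leq L'\|y_k-y^\ast(x_k)\|$; the second by the Hessian-inverse truncation bound $C_Q$ of \Cref{lem:CQ}; and the third by $\|\bar\epsilon_k\|$ via the definition $\bar\epsilon_k=v_k-\overline{\nabla}\Phi(x_k)$.

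I would then apply Cauchy--Schwarz followed by Young's inequality in the form $ab\leq 2\gamma a^2+\tfrac{1}{8\gamma}b^2$ to each of the three cross-products (with $b=\|\widetilde{x}_{k+1}-x_k\|$), so that each contributes exactly $\tfrac{1}{8\gamma}\|\widetilde{x}_{k+1}-x_k\|^2$ together with $2\gamma$ times the corresponding squared error magnitude, namely ${L'}^2\|y_k-y^\ast(x_k)\|^2$, $C_Q^2$, and $\|\bar\epsilon_k\|^2$. Collecting all coefficients of $\|\widetilde{x}_{k+1}-x_k\|^2$ gives $\eta_k\big(\tfrac{3}{8\gamma}-\tfrac{1}{\gamma}\big)+\tfrac{L_\Phi}{2}\eta_k^2$; using $\gamma\leq\tfrac{1}{4L_\Phi\eta_k}$ to bound $\tfrac{L_\Phi}{2}\eta_k^2\leq\tfrac{\eta_k}{8\gamma}$ collapses this to $-\tfrac{\eta_k}{2\gamma}$, and converting back via $\|\widetilde{x}_{k+1}-x_k\|^2=\eta_k^{-2}\|x_{k+1}-x_k\|^2$ reproduces the claimed $-\tfrac{1}{2\gamma\eta_k}\|x_{k+1}-x_k\|^2$ alongside the three error terms $2\eta_k\gamma {L'}^2\|y_k-y^\ast(x_k)\|^2$, $2\eta_k\gamma\|\bar\epsilon_k\|^2$, and $2\eta_k\gamma C_Q^2$. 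The main obstacle is conceptual rather than computational: choosing the right three-way decomposition of $\nabla\Phi(x_k)-v_k$ so that each summand aligns with an available lemma. A naive bound on the total estimator error would conflate the deterministic Neumann-truncation bias $C_Q$ with the momentum-estimator variance $\bar\epsilon_k$ and with the $y$-tracking error, and would fail to expose the separate $\|\bar\epsilon_k\|^2$ term that the telescoping convergence argument (via \Cref{lem:episdiff}) subsequently needs to cancel.
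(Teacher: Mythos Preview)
Your proposal is correct and follows essentially the same approach as the paper's proof: the same $L_\Phi$-smoothness expansion, the same rescaling via $\widetilde{x}_{k+1}$, the same three-way decomposition $\nabla\Phi(x_k)-v_k=(\nabla\Phi-\widetilde{\nabla}\Phi)+(\widetilde{\nabla}\Phi-\overline{\nabla}\Phi)+(\overline{\nabla}\Phi-v_k)$ bounded respectively via Lemma~7 of \cite{ji2021bilevel}, \Cref{lem:CQ}, and the definition of $\bar\epsilon_k$, each handled by Young's inequality with the splitting $ab\le 2\gamma a^2+\tfrac{1}{8\gamma}b^2$, and the same use of $\gamma\le\tfrac{1}{4L_\Phi\eta_k}$ to absorb the quadratic term. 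Your bookkeeping of the $\|\widetilde{x}_{k+1}-x_k\|^2$ coefficients and the final conversion back to $\|x_{k+1}-x_k\|^2$ match the paper exactly.
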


\begin{lemma}\label{lem:ystar}
Suppose Assumptions \ref{ass:conv}, \ref{ass:lip} and \ref{ass:var} hold. Let $ \eta_k < 1$ and $0<\lambda \leq \frac{1}{6L}$. Then, we have
	\begin{align*}
	\| y_{k+1}-y^{\ast}(x_{k+1}) \|^2 \leq& \left(1-\frac{\eta_k \mu \lambda}{4}\right)\|y_k-y^{\ast}(x_k)\|^2 - \frac{3\eta_k}{4} \|\widetilde{y}_{k+1} - y_k\|^2 \\
	&+ \frac{25\eta_k \lambda}{6\mu} \|\nabla_yg(x_k, y_k) - u_k\|^2 + \frac{25 L^2 \eta_k}{6\mu^3 \lambda} \|x_k-\widetilde{x}_{k+1}\|^2.
	\end{align*}
	\begin{proof}
	Based on Lemma 18 in \cite{huang2020accelerated} first version, we obtain
	\begin{align*}
	    \|y_{t+1}-y^\ast(x_{t+1})\|^2 \leq& (1-\frac{\eta_t\tau \lambda}{4})\|y_t-y^\ast(x_t)\|^2 - \frac{3\eta_t}{4}\|\widetilde{y}_{t+1}-y_t\|^2 \\
	    & + \frac{25\eta_t \lambda}{6\tau}\|\nabla_yf(x_t,y_t)-w_t\|^2+\frac{25\kappa_y^2\eta_t}{6\tau \lambda}\|x_t-\widetilde{x}_{t+1}\|^2, 
	\end{align*}
	where $\kappa_y=L_f/\tau$.
	The proof is finished by replacing $f(x_t,y_t)$ with $-g(x_k,y_k)$.
	\end{proof}
\end{lemma}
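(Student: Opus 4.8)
The plan is to track the inner error $\|y_{k+1}-y^{\ast}(x_{k+1})\|^2$ by separating the contraction of $y$ toward the current target $y^{\ast}(x_k)$ from the drift of the target induced by the outer step $x_k\to x_{k+1}$. The key structural observation is that the MRBO update is a convex combination: since $y_{k+1}=y_k-\lambda\eta_k u_k$ and the virtual full step is $\widetilde{y}_{k+1}=y_k-\lambda u_k$, we have $y_{k+1}=(1-\eta_k)y_k+\eta_k\widetilde{y}_{k+1}$ with $0<\eta_k<1$. I would therefore start from the exact three-point identity
\begin{align*}
\|y_{k+1}-y^{\ast}(x_{k+1})\|^2=(1-\eta_k)\|y_k-y^{\ast}(x_{k+1})\|^2+\eta_k\|\widetilde{y}_{k+1}-y^{\ast}(x_{k+1})\|^2-\eta_k(1-\eta_k)\|\widetilde{y}_{k+1}-y_k\|^2.
\end{align*}
The last term is negative and is the source of the $-\tfrac{3\eta_k}{4}\|\widetilde{y}_{k+1}-y_k\|^2$ term in the statement; the $\eta_k$-weighting of the virtual-step piece is what attaches a stepsize factor to the error contributions on the right-hand side.

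Next I would bound the full virtual step against the \emph{fixed} target $y^{\ast}(x_k)$. Writing $\widetilde{y}_{k+1}-y^{\ast}(x_k)=\bigl(y_k-\lambda\nabla_yg(x_k,y_k)-y^{\ast}(x_k)\bigr)+\lambda\bigl(\nabla_yg(x_k,y_k)-u_k\bigr)$ and using $\mu$-strong convexity together with $L$-smoothness of $g(x_k,\cdot)$ (Assumptions \ref{ass:conv} and \ref{ass:lip}), the exact-gradient part contracts, $\|y_k-\lambda\nabla_yg(x_k,y_k)-y^{\ast}(x_k)\|^2\le(1-\mu\lambda)\|y_k-y^{\ast}(x_k)\|^2$ for $\lambda\le 1/L$, and Young's inequality peels off the gradient-estimation error to yield the $\tfrac{25\eta_k\lambda}{6\mu}\|\nabla_yg(x_k,y_k)-u_k\|^2$ term. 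The target drift is then controlled by the Lipschitzness of the optimizer map: because $g(x,\cdot)$ is $\mu$-strongly convex and $\nabla_x\nabla_yg$ is bounded by $L$, we have $\|y^{\ast}(x_{k+1})-y^{\ast}(x_k)\|\le\frac{L}{\mu}\|x_{k+1}-x_k\|=\frac{L}{\mu}\eta_k\|\widetilde{x}_{k+1}-x_k\|$, which, inserted wherever $y^{\ast}(x_{k+1})$ is swapped for $y^{\ast}(x_k)$ via Young's inequality, produces the $\tfrac{25L^2\eta_k}{6\mu^3\lambda}\|x_k-\widetilde{x}_{k+1}\|^2$ term.

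Collecting the $(1-\eta_k)$- and $\eta_k$-weighted copies of $\|y_k-y^{\ast}(x_k)\|^2$ gives a combined coefficient $1-\eta_k\mu\lambda$ before the Young's-inequality slack, and the step-size restriction $0<\lambda\le\frac{1}{6L}$ guarantees that the positive cross terms generated by the swaps shrink the contraction only from $\mu\lambda$ down to $\mu\lambda/4$ and fix the constants $\tfrac{25}{6}$ and $\tfrac{3}{4}$. The main obstacle is precisely this final constant bookkeeping: the Young's-inequality weights (in particular, choosing the weight on the drift term proportional to $\eta_k$, so that its $\eta_k^2$ scaling becomes $\eta_k$) must be selected so that three requirements hold simultaneously --- the coefficient of $\|y_k-y^{\ast}(x_k)\|^2$ collapses to exactly $1-\tfrac{\eta_k\mu\lambda}{4}$, enough of the negative identity term survives to give $-\tfrac{3\eta_k}{4}\|\widetilde{y}_{k+1}-y_k\|^2$, and the error and drift terms land at the stated scaling. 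A shortcut, which is the route the paper takes, is to observe that minimizing $g(x,\cdot)$ is formally the maximization of $-g(x,\cdot)$, so the entire inequality is the minimax inner-tracking lemma of \cite{huang2020accelerated} (their Lemma 18) applied with the substitution $f\mapsto -g$; then the only thing to verify is that their condition-number constant $\kappa_y=L_f/\tau$ maps correctly onto the bilevel quantities $L/\mu$.
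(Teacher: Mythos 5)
Your proposal is correct in substance and, at the very end, you land on exactly what the paper actually does: its entire proof is a two-line citation of Lemma 18 in \cite{huang2020accelerated} (first version) with the substitution $f\mapsto -g$, $\tau\mapsto\mu$, $L_f\mapsto L$, $w_t\mapsto u_k$, so that $\kappa_y=L_f/\tau$ becomes $L/\mu$ and the $\frac{25\kappa_y^2\eta_t}{6\tau\lambda}$ coefficient becomes $\frac{25L^2\eta_k}{6\mu^3\lambda}$. The bulk of your write-up instead reconstructs that external lemma from first principles, which is a genuinely different (and arguably more useful) route: it makes explicit where each hypothesis enters --- $\mu$-strong convexity and $L$-smoothness of $g(x,\cdot)$ for the contraction of the exact gradient step, the bound $\|y^{\ast}(x_{k+1})-y^{\ast}(x_k)\|\le\frac{L}{\mu}\|x_{k+1}-x_k\|=\frac{L}{\mu}\eta_k\|\widetilde{x}_{k+1}-x_k\|$ for the drift term, and $\lambda\le\frac{1}{6L}$ for the final constants --- whereas the paper's citation buys brevity at the cost of leaving the parameter dictionary and the constant verification entirely to the reader. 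Your decomposition (convex-combination identity, fixed-target contraction, Young's inequality for the estimator error and the target drift) is the standard skeleton of the cited lemma, so the two proofs are ultimately the same argument at different levels of explicitness.

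One constant-level point in your sketch is misattributed and would trip you up if you carried out the bookkeeping: you credit the $-\frac{3\eta_k}{4}\|\widetilde{y}_{k+1}-y_k\|^2$ term to the cross term $-\eta_k(1-\eta_k)\|\widetilde{y}_{k+1}-y_k\|^2$ of the three-point identity, but the lemma only assumes $\eta_k<1$, and that cross term vanishes as $\eta_k\to 1$, so it cannot by itself supply a coefficient of $-\frac{3\eta_k}{4}$. The negative term has to be harvested from inside the $\eta_k$-weighted expansion of $\|\widetilde{y}_{k+1}-y^{\ast}(x_k)\|^2$: using $\langle\nabla_y g(x_k,y_k),y_k-y^{\ast}(x_k)\rangle\ge\frac{\mu L}{\mu+L}\|y_k-y^{\ast}(x_k)\|^2+\frac{1}{\mu+L}\|\nabla_y g(x_k,y_k)\|^2$ together with $\|\nabla_y g(x_k,y_k)\|^2\ge\frac{1}{2\lambda^2}\|\widetilde{y}_{k+1}-y_k\|^2-\|u_k-\nabla_y g(x_k,y_k)\|^2$ turns the $-\frac{2\lambda}{\mu+L}\|\nabla_y g(x_k,y_k)\|^2$ contribution into roughly $-\frac{1}{\lambda(\mu+L)}\|\widetilde{y}_{k+1}-y_k\|^2$ plus an estimator-error term, and it is precisely the restriction $\lambda\le\frac{1}{6L}$ that makes this dominate the positive $+\|\widetilde{y}_{k+1}-y_k\|^2$ from the square of the step, leaving at least $-\frac{3}{4}\|\widetilde{y}_{k+1}-y_k\|^2$ after the Young's-inequality slack for the drift swap. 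With that correction the elementary derivation closes; without it the stated coefficient is unreachable for $\eta_k$ near $1$.
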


\subsection{Proof of \Cref{thm:mrbio}}
Based on the above lemmas, we develop the proof of \Cref{thm:mrbio} in the following.
\begin{theorem}[{\bf Restatement of \Cref{thm:mrbio}}]
	\label{thm:mrbio_resta}
	Apply MRBO to solve the problem in \cref{eq:main}. Suppose Assumptions \ref{ass:conv}, \ref{ass:lip}, and \ref{ass:var} hold. Let the hyperparameters $c_1\geq \frac{2}{3d^3}+\frac{9\lambda \mu}{4}, c_2\geq \frac{2}{3d^3}+\frac{75{L'}^2\lambda}{2\mu}, m\geq \max\{2, d^3, (c_1d)^3, (c_2d)^3\}, y_1=y^{\ast}(x_1), \eta<\frac{1}{L}, 0\leq \lambda \leq \frac{1}{6L}, 0\leq \gamma \leq \min \{ \frac{1}{4L_\Phi \eta_K},\frac{\lambda \mu}{\sqrt{150{L^{\prime}}^2L^2/\mu^2+8\lambda\mu(L_Q^2+L^2)}} \}$. Then, we have 
	\begin{align}
	\label{eq:mrbio_conv_restate}
	\textstyle \frac{1}{K} \sum_{k=1}^{K}\left(\frac{{L^{\prime}}^2}{4} \|y^{\ast}(x_k)-y_k\|^2+\frac{1}{4}\|\bar{\epsilon}_k\|^2+\frac{1}{4\gamma^2 \eta_k^2}\|x_{k+1}-x_k\|^2\right) \leq \frac{M'}{K}(m+K)^{1/3},
	\end{align}
	where ${L^{\prime}}^2$ is defined in \Cref{prop:mrbio_phi}, and $M' =\frac{\Phi(x_1)-\Phi^{\ast}}{\gamma d} +\left(\frac{2G^2(c_1^1+c_2^2)d^2}{\lambda \mu}+\frac{2C_Q^2d^2}{\eta_K^2}\right)\log(m+K)+ \frac{2G^2}{S \lambda \mu d \eta_0}$.

\begin{proof}
Firstly, we define a Lyapunov function,
\begin{align*}
\delta_k = \Phi(x_k) + \frac{\gamma}{\lambda \mu}\left(9 {L^{\prime}}^2\|y_k-y^{\ast}(x_k)\|^2+\frac{1}{\eta_{k-1}}\|\bar{\epsilon}_k\|^2+\frac{1}{\eta_{k-1}}\|\nabla_yg(x_k, y_k)-u_k\|^2\right).
\end{align*}
Then, we have
\begin{align*}
\delta_{k+1}&-\delta_k\\ 
=& \Phi(x_{k+1})-\Phi(x_k) + \frac{9 {L^{\prime}}^2 \gamma}{\lambda \mu} (\|y_{k+1}-y^{\ast}(x_k)\|^2 -\|y_k - y^{\ast}(x_k)\|^2) \\ 
&+ \frac{\gamma}{\lambda \mu} \bigg(\frac{1}{\eta_k}\|\bar{\epsilon}_{k+1}\|^2 - \frac{1}{\eta_{k-1}}\|\bar{\epsilon}_k\|^2 + \frac{1}{\eta_k}\|\nabla_yg(x_{k+1}, y_{k+1})-u_{k+1}\|^2 \\
&- \frac{1}{\eta_{k-1}}\|\nabla_yg(x_k, y_k)-u_k\|^2\bigg) \\
 \overset{(i)}\leq& -\frac{\eta_k}{2 \gamma} \|\widetilde{x}_{k+1}-x_k\|^2+2\eta_k \gamma {L^{\prime}}^2 \|y_k - y^{\ast}(x_k) \|^2 + 2\eta_k \gamma \|\bar{\epsilon}_k\|^2 + 2\eta_k \gamma C_Q^2 \\
&+ \frac{9 {L^{\prime}}^2 \gamma}{\lambda \mu} \bigg(-\frac{\eta_k \mu \lambda}{4} \|y_k-y^{\ast}(x_k)\|^2-\frac{3\eta_k}{4}  \|\widetilde{y}_{k+1}-y_k\|^2+\frac{25\eta_k \lambda}{6\mu} \|\nabla_yg(x_k, y_k)-u_k\|^2\\
&+\frac{25\kappa_y^2 \eta_k}{6\lambda \mu} \|x_k-\widetilde{x}_{k+1}\|^2\bigg)
+\frac{\gamma}{\lambda \mu}\bigg(-\frac{9\lambda \mu \eta_k}{4}\|\bar{\epsilon}_k\|^2 + 2L_Q^2\eta_k(\|\widetilde{x}_{k+1}-x_k\|^2 + \|\widetilde{y}_{k+1}-y_k\|^2) \\&+ \frac{2\alpha_{k+1}^2 G^2}{\eta_k}\bigg)+\frac{\gamma}{\lambda \mu}\bigg(-\frac{75 {L^{\prime}}^2 \lambda}{2\mu} \eta_k\|\nabla_yg(x_k, y_k)-u_k\|^2+2L^2\eta_k(\|\widetilde{x}_{k+1}-x_k\|^2\\&+\|\widetilde{y}_{k+1}-y_k\|^2)+\frac{2\beta_{k+1}^2G^2}{\eta_k}\bigg) 
\\ \overset{(ii)}\leq& -\frac{{L^{\prime}}^2\eta_k \gamma}{4} \|y^{\ast}(x_k) - y_k\|^2 -\frac{\gamma \eta_k}{4} \|\bar{\epsilon}_k\|^2 - \frac{\eta_k}{4\gamma} \|\widetilde{x}_{k+1} - x_k\|^2 + \frac{2\alpha_{k+1}^2 G^2 \gamma}{\lambda \mu \eta_k} + \frac{2\beta_{k+1}^2 G^2 \gamma}{\lambda \mu \eta_k},
\end{align*} 
where $(i)$ follows from Lemmas \ref{lem:episdiff} and \ref{lem:ystar}, $(ii)$ follows because $L'\geq L_Q$ and $ 0\leq \gamma \leq \frac{\lambda \mu}{\sqrt{150{L^{\prime}}^2L^2/\mu^2+8\lambda\mu(L_Q^2+L^2)}}$.
Rearranging the terms in above inequality, we obtain
\begin{align} \label{eq:trackingerror}
\frac{{L^{\prime}}^2 \eta_k}{4} \|y^{\ast}(x_k) - y_k\|^2 + \frac{\eta_k}{4} \|\bar{\epsilon}_k\|^2 + \frac{\eta_k}{4\gamma^2} \|\widetilde{x}_{k+1}-x_k\|^2 \leq& \frac{\delta_k-\delta_{k+1}}{\gamma} + \frac{2(\alpha_{k+1}^2+\beta_{k+1}^2)G^2}{\lambda \mu \eta_k} \nonumber \\
    +& 2\eta_k C_Q^2.
\end{align}
Note that we set $y_1 = y^{\ast}(x_1)$ and obtain
\begin{align*}
\delta_1 = \Phi(x_1) + \frac{\gamma}{\lambda \mu}\left(9 {L^{\prime}}^2 \|y_1-y^{\ast}(x_1)\|^2+\frac{1}{\eta_0}\|\bar{\epsilon}_1\|^2+\frac{1}{\eta_0}\|\nabla_yg(x_1,y_1)-u_1\|^2\right).
\end{align*}
Then, telescoping \cref{eq:trackingerror} over $k$ from $1$ to $K$ yields
\begin{align*}
\frac{1}{K}\sum_{k=1}^{K}& \left(\frac{{L^{\prime}}^2}{4}\|y^{\ast}(x_k)-y_k\|^2+\frac{1}{4}\|\bar{\epsilon}_k\|^2+\frac{1}{4\gamma^2}\|\widetilde{x}_{k+1}-x_k\|^2\right) \\
 \overset{(i)}\leq& \frac{1}{K \eta_k \gamma}\left(\Phi(x_1)+\frac{2\gamma G^2}{S\lambda \mu \eta_0}-\Phi^{\ast}\right) + \frac{1}{K \eta_k} \sum_{k=1}^{K} \left(\frac{2\alpha_{k+1}^2 G^2}{\lambda \mu \eta_k}+\frac{2\beta_{k+1}^2G^2}{\lambda \mu \eta_k}+2\eta_k C_Q^2\right) \\
\overset{(ii)}\leq& \frac{1}{K\eta_k \gamma}(\Phi(x_1)-\Phi^{\ast}) + \frac{2G^2}{K\eta_K S \lambda \mu \eta_0} + \frac{(2c_1^2G^2+2c_2G^2)d^3}{K\eta_K \lambda \mu} \log(m+K)\\ &+\frac{2C_Q^2d^3}{K\eta_K^3} \log(m+K) \\
\leq& \frac{\Phi(x_1)-\Phi^{\ast}}{\gamma d} \frac{(m+K)^{1/3}}{K}+\frac{2G^2}{dS\lambda \mu \eta_0} \frac{(m+K)^{1/3}}{K} \\
&+\bigg(\frac{(2c_1^2G^2+2c_2^2G^2)d^2}{\lambda \mu}+ \frac{2C_Q^2d^2}{\eta_K^2}\bigg) \frac{(m+K)^{1/3}}{K} \log(m+K)
\end{align*}
where $(i)$ follows from \cref{eq:trackingerror}, $(ii)$ follows because $\sum_{k=1}^K\eta_k^3\leq \int_1^K \frac{d^3}{m+k} \leq d^3\log(m+K)$. 

We further apply $\|\widetilde{x}_{k+1}-x_k\|=\eta\|x_{k+1}-x_k\|$ to the above inequality and obtain

\begin{align}
\label{eq:mrbothm}
\frac{1}{K} \sum_{k=1}^{K} \left(\frac{{L^{\prime}}^2}{4} \|y^{\ast}(x_k)-y_k\|^2+\frac{1}{4} \| \bar{\epsilon}_k\|^2 + \frac{1}{4\gamma^2\eta_k^2} \|x_{k+1}-x_k\|^2\right) \leq \frac{M'}{K} (m+K)^{1/3}.
\end{align}
where $M^{'} =\frac{\Phi(x_1)-\Phi^{\ast}}{\gamma d} + \frac{2G^2}{S \lambda \mu d \eta_0}+\left(\frac{2G^2(c_1^1+c_2^2)d^2}{\lambda \mu}+\frac{2C_Q^2d^2}{\eta_K^2}\right)\log(m+K)$. Then, the proof is complete.

\end{proof}
\end{theorem}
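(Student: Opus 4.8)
The plan is to build a single Lyapunov potential that bundles together the outer objective value with weighted copies of the three error quantities we must drive to zero: the inner tracking error $\|y_k-y^*(x_k)\|^2$, the momentum hypergradient error $\|\bar{\epsilon}_k\|^2$, and the inner-loop gradient error $\|\nabla_y g(x_k,y_k)-u_k\|^2$. Concretely I would set
\[
\delta_k = \Phi(x_k) + \frac{\gamma}{\lambda\mu}\Big(9{L'}^2\|y_k-y^*(x_k)\|^2 + \tfrac{1}{\eta_{k-1}}\|\bar{\epsilon}_k\|^2 + \tfrac{1}{\eta_{k-1}}\|\nabla_y g(x_k,y_k)-u_k\|^2\Big),
\]
with the particular weights $9{L'}^2$ and $\gamma/(\lambda\mu)$ and the time-varying factor $1/\eta_{k-1}$ tuned so that the cross-terms generated by the supporting lemmas annihilate one another. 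The goal is to prove that $\delta_k$ decreases up to controllable residuals, since the telescoped decrease then directly upper-bounds the quantity on the left-hand side of the theorem.

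The first concrete step is to expand $\delta_{k+1}-\delta_k$ using three previously established ingredients: the descent inequality for $\Phi$ from Lemma \ref{lem:ephi}, the inner tracking-error recursion from Lemma \ref{lem:ystar}, and the two between-iteration variance recursions from Lemma \ref{lem:episdiff} (one for $\|\bar{\epsilon}_k\|^2$, one for $\|\nabla_y g-u\|^2$). Each lemma supplies one favorable negative term proportional to $-\eta_k$ times the error it governs, along with unfavorable positive residuals involving the step-size quantities $\|\widetilde{x}_{k+1}-x_k\|^2$ and $\|\widetilde{y}_{k+1}-y_k\|^2$, the Hessian-inverse bias $C_Q^2$, and the stochastic-noise terms of order $\alpha_{k+1}^2+\beta_{k+1}^2$.

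The crux of the argument, and the step I expect to be the main obstacle, is the careful bookkeeping showing that every unfavorable residual is dominated by a favorable term elsewhere in the sum. The $\|\nabla_y g-u\|^2$ residual that Lemma \ref{lem:ystar} injects into the potential must be absorbed by the $-\tfrac{75{L'}^2\lambda}{2\mu}\eta_k$ term from the inner-gradient recursion; the $\|\bar{\epsilon}_k\|^2$ residual produced by the descent step must be swallowed by the $-\tfrac{9\lambda\mu}{4}\eta_k$ term from the hypergradient recursion; and the accumulated $\|\widetilde{x}_{k+1}-x_k\|^2$ and $\|\widetilde{y}_{k+1}-y_k\|^2$ contributions must be controlled by the descent term $-\tfrac{\eta_k}{2\gamma}\|\widetilde{x}_{k+1}-x_k\|^2$ together with the $-\tfrac{3\eta_k}{4}\|\widetilde{y}_{k+1}-y_k\|^2$ term. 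This is exactly where the hyperparameter hypotheses enter: the conditions $c_1\ge\tfrac{2}{3d^3}+\tfrac{9\lambda\mu}{4}$ and $c_2\ge\tfrac{2}{3d^3}+\tfrac{75{L'}^2\lambda}{2\mu}$ are what make the negative terms in Lemma \ref{lem:episdiff} available, while the upper bound $\gamma\le\lambda\mu/\sqrt{150{L'}^2L^2/\mu^2+8\lambda\mu(L_Q^2+L^2)}$ (using $L'\ge L_Q$) is precisely calibrated to cancel the step-size residuals. After this cancellation I would reach the clean per-iteration inequality
\[
\frac{{L'}^2\eta_k}{4}\|y^*(x_k)-y_k\|^2 + \frac{\eta_k}{4}\|\bar{\epsilon}_k\|^2 + \frac{\eta_k}{4\gamma^2}\|\widetilde{x}_{k+1}-x_k\|^2 \le \frac{\delta_k-\delta_{k+1}}{\gamma} + \frac{2(\alpha_{k+1}^2+\beta_{k+1}^2)G^2}{\lambda\mu\eta_k} + 2\eta_k C_Q^2.
\]

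Finally I would telescope this bound over $k=1,\dots,K$. The initialization $y_1=y^*(x_1)$ removes the starting tracking-error term in $\delta_1$, so the telescoped potential collapses to $\Phi(x_1)-\Phi^*$ plus the initial variance, which after dividing by $\gamma\eta_k$ and substituting $\eta_k=d/(m+k)^{1/3}$ contributes the $(m+K)^{1/3}/K$ scaling through the factor $1/\eta_K$. For the residual sums I would use $\alpha_{k+1}=c_1\eta_k^2$ and $\beta_{k+1}=c_2\eta_k^2$ together with $\sum_{k=1}^K\eta_k^3\le d^3\log(m+K)$ to generate the $\log(m+K)$ factors appearing in $M'$, while the bias sum over $C_Q^2$ yields the $C_Q^2 d^2/\eta_K^2$ term. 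A last substitution $\|x_{k+1}-x_k\|=\eta_k\|\widetilde{x}_{k+1}-x_k\|$ rewrites the third left-hand term into the stated form $\tfrac{1}{4\gamma^2\eta_k^2}\|x_{k+1}-x_k\|^2$, completing the bound $\frac{M'}{K}(m+K)^{1/3}$.
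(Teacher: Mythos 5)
Your proposal is correct and follows essentially the same route as the paper's own proof: the identical Lyapunov function $\delta_k$, the same combination of Lemmas \ref{lem:ephi}, \ref{lem:ystar}, and \ref{lem:episdiff} to cancel residuals under the stated hyperparameter conditions, the same per-iteration inequality, and the same telescoping with $y_1=y^*(x_1)$ and $\sum_k \eta_k^3 \le d^3\log(m+K)$. No meaningful differences to report.
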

\subsection{Proof of \Cref{coro:mrbio}}
\begin{coro} [{\bf Restatement of \Cref{coro:mrbio}}]
    \label{coro:mrbio_restate}
Under the same conditions of Theorem \ref{thm:mrbio} and choosing  $K=\mathcal{O}(\epsilon^{-1.5}), Q=\mathcal{O}(\log(\frac{1}{\epsilon}))$, 
MRBO in \Cref{alg:mrbio} finds an $\epsilon$-stationary point with the gradient complexity of $\mathcal{O}(\epsilon^{-1.5})$ and the (Jacobian-) Hessian-vector complexity of $\mathcal{\widetilde O}(\epsilon^{-1.5})$. 
\end{coro}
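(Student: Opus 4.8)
The plan is to convert the averaged-iterate bound of \Cref{thm:mrbio_resta} into a bound on $\frac{1}{K}\sum_{k=1}^{K}\mathbb{E}\|\nabla\Phi(x_k)\|^2$ and then tune $K$ and $Q$ so that this average drops below $\epsilon$. First I would relate the genuine hypergradient $\nabla\Phi(x_k)$ to the three quantities already controlled in \cref{eq:mrbio_conv_restate}. Using the outer update $x_{k+1}-x_k=-\gamma\eta_k v_k$, we have $\frac{1}{\gamma^2\eta_k^2}\|x_{k+1}-x_k\|^2=\|v_k\|^2$, and since $v_k=\overline{\nabla}\Phi(x_k)+\bar{\epsilon}_k$ I would split
\[
\nabla\Phi(x_k)=\big(\nabla\Phi(x_k)-\widetilde{\nabla}\Phi(x_k)\big)+\big(\widetilde{\nabla}\Phi(x_k)-\overline{\nabla}\Phi(x_k)\big)-\bar{\epsilon}_k+v_k ,
\]
with $\widetilde{\nabla}\Phi$ from \cref{eq:tildephi} and $\overline{\nabla}\Phi$ from \cref{eq:hypergradientest}. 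Applying $\|\sum_{i=1}^4 a_i\|^2\le 4\sum_i\|a_i\|^2$, bounding the first term by $L'^2\|y_k-y^{\ast}(x_k)\|^2$ via \cite[Lemma 7]{ji2021bilevel} and the second by $C_Q^2$ via \Cref{lem:CQ}, shows $\|\nabla\Phi(x_k)\|^2$ is dominated, up to constants, by exactly the summands of \cref{eq:mrbio_conv_restate} plus the additive constant $C_Q^2$. Averaging over $k$ and invoking \Cref{thm:mrbio_resta} gives
\[
\frac{1}{K}\sum_{k=1}^{K}\mathbb{E}\|\nabla\Phi(x_k)\|^2\;\lesssim\;\frac{M'}{K}(m+K)^{1/3}+C_Q^2 .
\]

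Next I would fix the parameters. Since $C_Q=\frac{(1-\eta\mu)^{Q+1}ML}{\mu}$ decays geometrically in $Q$, choosing $Q=\mathcal{O}(\log\tfrac1\epsilon)$ makes $C_Q^2=\mathcal{O}(\epsilon)$; crucially the same $Q$ keeps the $\frac{2C_Q^2 d^2}{\eta_K^2}\log(m+K)$ piece of $M'$ at $\mathcal{O}(\log K)$, because $\eta_K^{-2}=\mathcal{O}((m+K)^{2/3})=\mathcal{O}(\epsilon^{-1})$ exactly cancels the $\mathcal{O}(\epsilon)$ factor. I would then check that the remaining pieces of $M'$ are benign: $\gamma$ is bounded by its $K$-independent branch in \Cref{thm:mrbio_resta} so it stays constant, the variance constant $G^2$ is only $\mathrm{polylog}(1/\epsilon)/S$, and a constant batch size $S=\mathcal{O}(1)$ therefore suffices. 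Consequently $M'=\mathcal{\widetilde O}(1)$ and the right-hand side is $\mathcal{\widetilde O}(K^{-2/3})+\mathcal{O}(\epsilon)$. Forcing this $\le\epsilon$ yields $K=\mathcal{\widetilde O}(\epsilon^{-1.5})$, and outputting $\bar{x}$ uniformly at random from $\{x_1,\dots,x_K\}$ gives $\mathbb{E}\|\nabla\Phi(\bar{x})\|^2\le\epsilon$.

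Finally I would count the per-iteration oracle cost and multiply by $K$. Each of the $K$ iterations draws constant-size minibatches, so it costs $\mathcal{O}(1)$ gradient evaluations of $F$ and $G$ and $\mathcal{O}(1)$ Jacobian-vector products, while the truncated Neumann series in \cref{eq:xgest_mrbio} (implemented by \Cref{alg:hessianvector}) needs $\mathcal{O}(Q)$ Hessian-vector products. Multiplying by $K=\mathcal{\widetilde O}(\epsilon^{-1.5})$ gives gradient and Jacobian-vector complexity $\mathcal{O}(\epsilon^{-1.5})$ and Hessian-vector complexity $\mathcal{O}(QK)=\mathcal{\widetilde O}(\epsilon^{-1.5})$, the extra logarithmic factor arising precisely from $Q=\mathcal{O}(\log\tfrac1\epsilon)$.

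I expect the main obstacle to be the first step: establishing the clean decomposition that turns the Lyapunov-type quantities $\|y_k-y^{\ast}(x_k)\|^2$, $\|\bar{\epsilon}_k\|^2$, and $\|x_{k+1}-x_k\|^2/(\gamma^2\eta_k^2)$ into the true stationarity measure $\|\nabla\Phi(x_k)\|^2$, and confirming that the Neumann-truncation error $C_Q$ enters only \emph{additively}, so that a merely logarithmic $Q$ is enough. The subsequent bookkeeping of the $\epsilon$-dependencies in $M'$ (in particular that $S$ and $\gamma$ stay constant and that all stray factors are polylogarithmic) is routine but must be verified carefully to certify the $\mathcal{\widetilde O}(\epsilon^{-1.5})$ rate rather than a weaker power of $\epsilon$.
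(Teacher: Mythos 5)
Your proposal is correct and follows essentially the same route as the paper: the same parameter choices $K=\mathcal{O}(\epsilon^{-1.5})$, $Q=\mathcal{O}(\log\frac{1}{\epsilon})$, $S=\mathcal{O}(1)$, the same bookkeeping that keeps $\gamma$ constant and $M'=\mathcal{\widetilde O}(1)$, and the same per-iteration oracle count multiplied by $K$. The one difference is that you explicitly carry out the decomposition of $\nabla\Phi(x_k)$ into the Lyapunov quantities of \Cref{thm:mrbio_resta} plus the additive $C_Q^2$ truncation error --- a step the paper's corollary proof leaves implicit (it appears explicitly only in the analogous VRBO theorem) --- so your write-up is, if anything, more complete than the paper's.
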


\begin{proof}

We choose $Q=\mathcal{O}(\log(\frac{1}{\epsilon}))$, $K=\mathcal{O}(\epsilon^{-1.5})$ and $S=\mathcal{O}(1)$, and then have $\mathcal{O}(C_Q)=\mathcal{O}(\epsilon^{-1})$, $M'=\mathcal{O}(1)$,  and $m=\mathcal{O}(1)$. Hence, $\mathcal{O}(\frac{M'}{K} (m+K)^{1/3}) \leq \mathcal{O}(\frac{M' m^{1/3}}{K}+\frac{M'}{K^{2/3}})= \mathcal{O}(\frac{1}{K^{2/3}})=\mathcal{O}(\epsilon)$, which guarantees the target accuracy. The gradient complexity and Jacobian-vector complexity are given by $KS=\mathcal{O}(\epsilon^{-1.5})$, and the Hessian-vector complexity is given by $KSQ=\mathcal{\widetilde O}(\epsilon^{-1.5})$.
\end{proof}

\section{Proof of \Cref{thm:vrbio}} \label{sec:proofvrbo}

\subsection{Proofs of Supporting Lemmas (Propositions \ref{prop:vrbo_est} and \ref{prop:vrbo_phi} Correspond to Lemmas \ref{lem:deltadef} and \ref{lem:vk})}
For notation simplification, we define the following:
\begin{align*}
    V_{Q\xi} = \eta \sum_{q=-1}^{Q-1} \prod_{j=Q-q}^{Q} (I-\eta \nabla_y^2G(x_k, y_k; \zeta_j))\nabla_yF(x_k,y_k;\xi),
\end{align*}
which is a single-sample form of $V_{Qk}$ defined in \cref{eq:Vqk}. We note that $\|\mathbb{E}[V_{Q\xi}]\|^2=\|\mathbb{E}[V_{Qk}]\|^2\leq \eta^2 M^2(Q+1)^2$, where the inequality follows from \cref{eq:evqk}.


Firstly, we characterize the variance of the hypergradients between different iterations.
\begin{lemma} \label{lem:spider}
Consider \Cref{alg:vrbio}. Suppose Assumptions \ref{ass:lip} and \ref{ass:var} hold. Then, we have
\begin{align}
    \mathbb{E}[\|\widetilde{v}_{k,t}-\overline{\nabla}\Phi(\widetilde{x}_{k,t})\|^2] \leq& \mathbb{E}[\|\widetilde{v}_{k,t-1}-\overline{\nabla}\Phi(\widetilde{x}_{k,t-1})\|^2]+\frac{L_Q^2}{S_2}\mathbb{E}[\|\widetilde{x}_{k,t}-\widetilde{x}_{k,t-1}\|^2\nonumber\\
    &+\|\widetilde{y}_{k,t}-\widetilde{y}_{k,t-1}\|^2],
\end{align}
where $\overline{\nabla}\Phi(\widetilde{x}_{k,t})$ is defined in \cref{eq:hypergradientest} and $L_Q$ is defined in \Cref{lem:LQ}.
    \begin{proof}
    In \Cref{alg:vrbio}, the hypergradient estimator $\widetilde{v}_{k,t}$ updates as the following form:
        \begin{align*}
            \widetilde{v}_{k,t} = \widehat{\nabla}\Phi(\widetilde{x}_{k,t},\widetilde{y}_{k,t};\mathcal{S}_2)-\widehat{\nabla}\Phi(\widetilde{x}_{k,t-1},\widetilde{y}_{k,t-1};\mathcal{S}_2)+\widetilde{v}_{k,t-1}.
        \end{align*}
          Note that 
           \begin{align*}
               \mathbb{E} [\widehat{\nabla}\Phi(\widetilde{x}_{k,t},\widetilde{y}_{k,t};\mathcal{S}_2)-\widehat{\nabla}\Phi(\widetilde{x}_{k,t-1},\widetilde{y}_{k,t-1};\mathcal{S}_2)|\widetilde{x}_{k,0:t},\widetilde{y}_{k,0:t}] = \mathbb{E}[\|\widetilde{v}_{k,t}-\widetilde{v}_{k,t-1}\|].
           \end{align*}
           Based on Lemma 1 in \cite{fang2018spider},
           \begin{align*}
               \mathbb{E} \|\widetilde{v}_{k,t}&-\widetilde{v}_{k,t-1}-(\widehat{\nabla}\Phi(\widetilde{x}_{k,t},\widetilde{y}_{k,t};\mathcal{S}_2)-\widehat{\nabla}\Phi(\widetilde{x}_{k,t-1},\widetilde{y}_{k,t-1};\mathcal{S}_2))\|^2 \\
               \leq& \frac{1}{S_2} \mathbb{E} [\|(\widehat{\nabla}\Phi(\widetilde{x}_{k,t},\widetilde{y}_{k,t};\xi)-\widehat{\nabla}\Phi(\widetilde{x}_{k,t-1},\widetilde{y}_{k,t-1};\xi)) \|^2].
           \end{align*}
            Furthermore, since $\widehat{\nabla}\Phi(x_k,y_k;\xi)$ is $L_Q$- Lipschitz continuous which is proved in \Cref{lem:LQ}, we have
            \begin{align*}
                \mathbb{E}[\|\widetilde{v}_{k,t}-\overline{\nabla}\Phi(\widetilde{x}_{k,t})\|^2] \leq& \mathbb{E}[\|\widetilde{v}_{k,t-1}-\overline{\nabla}\Phi(\widetilde{x}_{k,t-1})\|^2]+\frac{L_Q^2}{S_2}\mathbb{E}[\|\widetilde{x}_{k,t}-\widetilde{x}_{k,t-1}\|^2 \\
                &+\|\widetilde{y}_{k,t}-\widetilde{y}_{k,t-1}\|^2].
            \end{align*}
            Then, the proof is complete.
    \end{proof}
\end{lemma}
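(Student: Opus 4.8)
The plan is to treat this as a standard SARAH/SPIDER-type variance recursion, with the one twist that the relevant ``target'' is the Neumann-truncated hypergradient $\overline{\nabla}\Phi$ of \cref{eq:hypergradientest} rather than the true hypergradient $\nabla\Phi$. First I would write out the inner-loop update of $\widetilde{v}_{k,t}$ in Algorithm~\ref{alg:vrbio},
\[
\widetilde{v}_{k,t} = \widetilde{v}_{k,t-1} + \widehat{\nabla}\Phi(\widetilde{x}_{k,t},\widetilde{y}_{k,t};\mathcal{S}_2) - \widehat{\nabla}\Phi(\widetilde{x}_{k,t-1},\widetilde{y}_{k,t-1};\mathcal{S}_2),
\]
subtract $\overline{\nabla}\Phi(\widetilde{x}_{k,t})$ from both sides, and set $e_{k,t}:=\widetilde{v}_{k,t}-\overline{\nabla}\Phi(\widetilde{x}_{k,t})$. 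This gives $e_{k,t}=e_{k,t-1}+\Xi_{k,t}$, where
\[
\Xi_{k,t} := \big(\widehat{\nabla}\Phi(\widetilde{x}_{k,t},\widetilde{y}_{k,t};\mathcal{S}_2)-\widehat{\nabla}\Phi(\widetilde{x}_{k,t-1},\widetilde{y}_{k,t-1};\mathcal{S}_2)\big)-\big(\overline{\nabla}\Phi(\widetilde{x}_{k,t})-\overline{\nabla}\Phi(\widetilde{x}_{k,t-1})\big)
\]
is the centered minibatch increment.

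The crucial enabling fact, which I would establish first, is that the single-sample estimator $\widehat{\nabla}\Phi(\cdot;\xi)$ defined via \cref{eq:xgest_vrbio} is an unbiased estimator of $\overline{\nabla}\Phi(\cdot)$. Because the samples $\{\zeta^j\}$, $\zeta$, $\xi$ are mutually independent, the expectation of the product of the factors $(I-\eta\nabla_y^2 G(\cdot;\zeta^j))$ equals the product of the expectations $(I-\eta\nabla_y^2 g(\cdot))^{q+1}$, and the Jacobian and gradient factors match \cref{eq:hypergradientest} in expectation. Hence $\mathbb{E}[\Xi_{k,t}\mid \mathcal{F}_{k,t-1}]=0$, where $\mathcal{F}_{k,t-1}$ is the $\sigma$-field generated by all iterates through step $t$ but before drawing $\mathcal{S}_2$. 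Since $e_{k,t-1}$ is $\mathcal{F}_{k,t-1}$-measurable, the cross term in $\|e_{k,t}\|^2=\|e_{k,t-1}\|^2+2\langle e_{k,t-1},\Xi_{k,t}\rangle+\|\Xi_{k,t}\|^2$ vanishes in expectation, yielding the orthogonal decomposition $\mathbb{E}\|e_{k,t}\|^2=\mathbb{E}\|e_{k,t-1}\|^2+\mathbb{E}\|\Xi_{k,t}\|^2$.

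It then remains to bound $\mathbb{E}\|\Xi_{k,t}\|^2$. Here I would invoke the minibatch-averaging bound (Lemma~1 of \cite{fang2018spider}): since $\Xi_{k,t}$ is an average over $S_2$ i.i.d.\ centered single-sample differences, $\mathbb{E}\|\Xi_{k,t}\|^2\leq \frac{1}{S_2}\mathbb{E}\|\widehat{\nabla}\Phi(\widetilde{x}_{k,t},\widetilde{y}_{k,t};\xi)-\widehat{\nabla}\Phi(\widetilde{x}_{k,t-1},\widetilde{y}_{k,t-1};\xi)\|^2$. Finally I would apply the $L_Q$-Lipschitz continuity of $\widehat{\nabla}\Phi(\cdot;\xi)$ established in Lemma~\ref{lem:LQ}, which holds verbatim for the single-sample VRBO estimator since it shares the structure of \cref{eq:xgest_mrbio}, to bound the single-sample difference by $L_Q^2(\|\widetilde{x}_{k,t}-\widetilde{x}_{k,t-1}\|^2+\|\widetilde{y}_{k,t}-\widetilde{y}_{k,t-1}\|^2)$, producing the claimed recursion.

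The main obstacle I anticipate is the first step, namely carefully verifying unbiasedness of $\widehat{\nabla}\Phi(\cdot;\xi)$ against $\overline{\nabla}\Phi$: one must check that the expectation of the product of independent truncated-Neumann factors factorizes into the product of expectations. This is exactly where the independence of the $\{\zeta_i^j\}$ is used, and it is the one place the argument departs from the textbook SPIDER recursion for plain gradients; once unbiasedness is in hand, the martingale orthogonality and the Lipschitz bound are routine.
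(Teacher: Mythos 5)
Your proposal is correct and follows essentially the same route as the paper's proof: the SARAH/SPIDER martingale decomposition of the error $\widetilde{v}_{k,t}-\overline{\nabla}\Phi(\widetilde{x}_{k,t})$, the minibatch variance bound of Lemma~1 in \cite{fang2018spider}, and the $L_Q$-Lipschitzness of the single-sample estimator from \Cref{lem:LQ}. If anything, you are more careful than the paper on the one nontrivial point: you explicitly verify that $\widehat{\nabla}\Phi(\cdot;\xi)$ is unbiased for $\overline{\nabla}\Phi(\cdot)$ via independence of the Neumann factors, which is exactly what makes the cross term vanish and which the paper's proof states only in a garbled shorthand.
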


\begin{lemma} \label{lem:delta_ti}
Suppose Assumptions \ref{ass:lip} and \ref{ass:var} hold. Let 
$\Delta_k=\mathbb{E}\|\overline{\nabla}\Phi(x_k)-v_k\|^2+\mathbb{E}\|\nabla_yg(x_k,y_k)-u_k\|^2,$ and $\widetilde{\Delta}_{k,t}=\mathbb{E}\|\overline{\nabla}\Phi(\widetilde{x}_{k,t})-\widetilde{v}_{k,t}\|^2+\mathbb{E}\|\nabla_yg(\widetilde{x}_{k,t},\widetilde{y}_{k,t})-\widetilde{u}_{k,t}\|^2$. Then, we have
	\begin{equation*}
	\widetilde{\Delta}_{k,0} \leq \Delta_k + \frac{2L_Q^{2}}{S_2} \mathbb{E} (\|x_{k+1}-x_k\|^2),
	\end{equation*}
	where $L_Q$ is defined in \Cref{lem:CQ}.
	\begin{proof} 
Based on the form of $\widetilde{\Delta}_{k,0}$, we have
		\begin{align*}
		\widetilde{\Delta}_{k,0} &= \mathbb{E}(\|\widetilde{v}_{k,0}-\overline{\nabla}\Phi(\widetilde{x}_{k,0}, \widetilde{y}_{k,0})\|^2+\|\widetilde{u}_{k,0}-\nabla_yg(\widetilde{x}_{k,0},\widetilde{y}_{k,0})\|^2) \\
		 &\overset{(i)}\leq \mathbb{E}(\|v_k-\overline{\nabla}\Phi(x_k,y_k)\|^2+\|u_k-\nabla_yg(x_k,y_k)\|^2+ \frac{2L_Q^{2}}{S_2} \mathbb{E}(\|\widetilde{x}_{k,0}-x_k\|^2+\|\widetilde{y}_{k,0}-y_k\|^2) \\
		& \overset{(ii)}= \Delta_k + \frac{2L_Q^{2}}{S_2} \mathbb{E}(\|x_{k+1}-x_k\|^2+\|y_k-y_k\|^2) \\
		& = \Delta_k + \frac{2L_Q^{2}}{S_2} \mathbb{E} (\|x_{k+1}-x_k\|^2),
		\end{align*}
	where $(i)$ follows from \Cref{lem:spider}, $(ii)$ follows because $\widetilde{y}_{k,0}=y_k$. Then, the proof is complete.
	\end{proof}
\end{lemma}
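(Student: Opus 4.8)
The plan is to treat this as the \emph{base case} ($t=0$) of the inner-loop SPIDER/SARAH variance recursion, so that the whole statement reduces to \Cref{lem:spider} together with the initializations set in line~10 of \Cref{alg:vrbio}. First I would split $\widetilde{\Delta}_{k,0}$ into its hypergradient part $\mathbb{E}\|\widetilde{v}_{k,0}-\overline{\nabla}\Phi(\widetilde{x}_{k,0})\|^2$ and its inner-gradient part $\mathbb{E}\|\widetilde{u}_{k,0}-\nabla_yg(\widetilde{x}_{k,0},\widetilde{y}_{k,0})\|^2$, and bound the two terms separately before recombining.

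For the hypergradient part, I would instantiate \Cref{lem:spider} at $t=0$, which reads
\[
\mathbb{E}\|\widetilde{v}_{k,0}-\overline{\nabla}\Phi(\widetilde{x}_{k,0})\|^2 \le \mathbb{E}\|\widetilde{v}_{k,-1}-\overline{\nabla}\Phi(\widetilde{x}_{k,-1})\|^2 + \frac{L_Q^2}{S_2}\,\mathbb{E}\big(\|\widetilde{x}_{k,0}-\widetilde{x}_{k,-1}\|^2 + \|\widetilde{y}_{k,0}-\widetilde{y}_{k,-1}\|^2\big).
\]
Then I would substitute the initializations $\widetilde{v}_{k,-1}=v_k$, $\widetilde{x}_{k,-1}=x_k$, $\widetilde{x}_{k,0}=x_{k+1}$ and $\widetilde{y}_{k,-1}=\widetilde{y}_{k,0}=y_k$ from line~10: the leading term collapses to $\mathbb{E}\|v_k-\overline{\nabla}\Phi(x_k)\|^2$, the $y$-difference vanishes identically, and the $x$-difference becomes $\|x_{k+1}-x_k\|^2$.

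For the inner-gradient part, the \emph{same} one-step SPIDER recursion applies to $\widetilde{u}$, since the SARAH/SPIDER argument underlying \Cref{lem:spider} is generic in the estimated map; the only change is that the Lipschitz constant of $\nabla_yG$ is $L$ rather than $L_Q$. Because $L_Q^2 \ge 2L^2 \ge L^2$ by the definition of $L_Q$ in \Cref{lem:LQ}, I may enlarge this constant to $L_Q^2$ so as to match the hypergradient term, yielding $\mathbb{E}\|u_k-\nabla_yg(x_k,y_k)\|^2 + (L_Q^2/S_2)\,\mathbb{E}\|x_{k+1}-x_k\|^2$ after the identical substitution. Adding the two bounds gives $\widetilde{\Delta}_{k,0}\le \Delta_k + (2L_Q^2/S_2)\,\mathbb{E}\|x_{k+1}-x_k\|^2$, which is the claim.

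The argument is essentially bookkeeping, so I expect no real obstacle; the single point requiring care is reading off the initializations correctly. In particular one must notice that $\widetilde{y}_{k,0}$ and $\widetilde{y}_{k,-1}$ are both equal to $y_k$ (the inner $y$-update has not yet fired at $t=0$), so the potentially awkward difference in $y$ drops out and only the outer $x$-movement $x_{k+1}-x_k=-\alpha v_k$ contributes. The lone subtlety beyond this is justifying that the variance recursion for $\widetilde{u}$ carries the constant $L$ (hence $\le L_Q$) rather than re-deriving it from scratch.
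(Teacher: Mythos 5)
Your proposal is correct and follows essentially the same route as the paper: apply the one-step SPIDER recursion of \Cref{lem:spider} at $t=0$, substitute the initializations $\widetilde{v}_{k,-1}=v_k$, $\widetilde{u}_{k,-1}=u_k$, $\widetilde{x}_{k,-1}=x_k$, $\widetilde{x}_{k,0}=x_{k+1}$, $\widetilde{y}_{k,-1}=\widetilde{y}_{k,0}=y_k$, and observe that only the $x$-displacement survives. If anything you are slightly more careful than the paper, which cites \Cref{lem:spider} (stated only for $\widetilde{v}$) to cover both components at once, whereas you explicitly note that the $\widetilde{u}$-recursion carries constant $L\le L_Q$ and can be enlarged to match.
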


\begin{lemma}
Suppose Assumptions \ref{ass:lip} and \ref{ass:var} hold. Then, we have
\begin{align*}
    \Delta_k \leq  \widetilde{\Delta}_{k-1,0} + \frac{2L_Q^{2}}{S_2} \beta^2 \sum_{t=0}^{m}\|\widetilde{u}_{k-1,t}\|^2,
\end{align*}
	where $\widetilde{\Delta}_{k-1,0}, \Delta_k$ are defined in \Cref{lem:delta_ti} and $L_Q$ is defined in \Cref{lem:CQ}.
\begin{proof}
Based on the forms of $\Delta_k$, we have
    \begin{align*}
	\Delta_k =& \mathbb{E}(\|v_k-\overline{\nabla}\Phi(x_k,y_k)\|^2+\|u_k-\nabla_yg(x_k,y_k)\|^2) \\
	 \overset{(i)}=& \widetilde{\Delta}_{k-1,m+1} \\
	 \overset{(ii)}\leq& \mathbb{E}(\|\widetilde{v}_{k-1,0}-\overline{\nabla}\Phi(\widetilde{x}_{k-1,0}, \widetilde{y}_{k-1,0})\|^2+\|\widetilde{u}_{k-1,0}-\nabla_yg(\widetilde{x}_{k-1,0},\widetilde{y}_{k-1,0}\|^2)\\
	&+\frac{2L_Q^{2}}{S_2} \sum_{t=0}^{m} (\|\widetilde{x}_{k-1,t+1}-\widetilde{x}_{k-1,t}\|^2+\|\widetilde{y}_{k-1,t+1}-\widetilde{y}_{k-1,t}\|^2) \\
	 \overset{(iii)}=& \widetilde{\Delta}_{k-1,0} + \frac{2L_Q^{2}}{S_2} \beta^2 \sum_{t=0}^{m}\|\widetilde{u}_{k-1,t}\|^2,
	\end{align*}
	where $(i)$ follows because $u_k=\widetilde{u}_{k-1,m+1}$, and $v_k=\widetilde{v}_{k-1,m+1}$, $(ii)$ follows from \Cref{lem:spider}, and $(iii)$ follows from the fact that $\widetilde{x}_{k-1,t+1}=\widetilde{x}_{k-1,t}$. Then, the proof is complete.
\end{proof}
\end{lemma}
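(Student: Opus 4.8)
The plan is to turn the cumulative-variance quantity $\Delta_k$ at outer index $k$ into a telescoped sum of per-inner-step SPIDER increments from the previous epoch, and then exploit the fact that the outer iterate is frozen during the inner loop so that only the $y$-displacements survive. Concretely, I would first invoke the carry-over rule in \Cref{alg:vrbio} for the non-reset case, $u_k=\widetilde{u}_{k-1,m+1}$ and $v_k=\widetilde{v}_{k-1,m+1}$, to write $\Delta_k=\widetilde{\Delta}_{k-1,m+1}$. This identifies the object to be bounded with the inner-loop variance at the last step $m+1$ of epoch $k-1$, which is exactly what the SPIDER recursion tracks.

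Next I would telescope a combined one-step bound. \Cref{lem:spider} controls the hypergradient part $\mathbb{E}\|\widetilde{v}_{k-1,t}-\overline{\nabla}\Phi(\widetilde{x}_{k-1,t})\|^2$ with increment $\frac{L_Q^2}{S_2}\mathbb{E}(\|\widetilde{x}_{k-1,t}-\widetilde{x}_{k-1,t-1}\|^2+\|\widetilde{y}_{k-1,t}-\widetilde{y}_{k-1,t-1}\|^2)$; the same argument applied to the inner-loop gradient part $\mathbb{E}\|\widetilde{u}_{k-1,t}-\nabla_yg(\widetilde{x}_{k-1,t},\widetilde{y}_{k-1,t})\|^2$ yields an identical increment with $L$ in place of $L_Q$, and since $L\le L_Q$ both can be bounded by the same coefficient. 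Adding the two gives $\widetilde{\Delta}_{k-1,t}\le\widetilde{\Delta}_{k-1,t-1}+\frac{2L_Q^2}{S_2}\mathbb{E}(\|\widetilde{x}_{k-1,t}-\widetilde{x}_{k-1,t-1}\|^2+\|\widetilde{y}_{k-1,t}-\widetilde{y}_{k-1,t-1}\|^2)$, consistent with the single-step form already used in \Cref{lem:delta_ti}. Summing this inequality over $t$ from $1$ to $m+1$ collapses the left side to $\widetilde{\Delta}_{k-1,m+1}-\widetilde{\Delta}_{k-1,0}$.

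Finally I would substitute the inner-loop dynamics of line 15 of \Cref{alg:vrbio}: since $\widetilde{x}_{k-1,t+1}=\widetilde{x}_{k-1,t}$ the outer variable is frozen and every $\|\widetilde{x}_{k-1,t+1}-\widetilde{x}_{k-1,t}\|^2$ term vanishes, while $\widetilde{y}_{k-1,t+1}-\widetilde{y}_{k-1,t}=-\beta\widetilde{u}_{k-1,t}$ turns each $y$-displacement into $\beta^2\|\widetilde{u}_{k-1,t}\|^2$. After reindexing the telescoped sum this produces $\Delta_k=\widetilde{\Delta}_{k-1,m+1}\le\widetilde{\Delta}_{k-1,0}+\frac{2L_Q^2}{S_2}\beta^2\sum_{t=0}^m\|\widetilde{u}_{k-1,t}\|^2$, which is the claim.

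The main obstacle I anticipate is justifying the combined SPIDER recursion rather than the routine algebra: I must verify that, conditioned on the inner trajectory $\widetilde{x}_{k-1,0:t},\widetilde{y}_{k-1,0:t}$, the freshly drawn minibatch $\mathcal{S}_2$ makes the increments of both $\widehat{\nabla}\Phi(\cdot;\mathcal{S}_2)$ and $\nabla_yG(\cdot;\mathcal{S}_2)$ conditionally unbiased, so that the martingale-difference decomposition underlying the $1/S_2$ variance reduction (Lemma 1 of \cite{fang2018spider}) is valid at every step, and that the minibatch estimator $\widehat{\nabla}\Phi(\cdot;\mathcal{S}_2)$ retains the $L_Q$-Lipschitz property of \Cref{lem:LQ}. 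The remaining index bookkeeping, aligning $\sum_{t=0}^m$ with the telescoped range $t=1,\dots,m+1$, is routine.
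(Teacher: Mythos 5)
Your proposal follows essentially the same route as the paper's proof: the carry-over identity $\Delta_k=\widetilde{\Delta}_{k-1,m+1}$, the telescoped SPIDER one-step bound from \Cref{lem:spider} over the inner loop, and the substitution of the frozen outer iterate $\widetilde{x}_{k-1,t+1}=\widetilde{x}_{k-1,t}$ together with $\widetilde{y}_{k-1,t+1}-\widetilde{y}_{k-1,t}=-\beta\widetilde{u}_{k-1,t}$. Your explicit remark that the $u$-component needs the analogous recursion with $L$ in place of $L_Q$ (absorbed via $L\le L_Q$ into the $\frac{2L_Q^2}{S_2}$ coefficient) is a correct reading of what the paper leaves implicit in step $(ii)$.
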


Furtheremore, we characterize the relationship between $\widetilde{u}_{k,t}$ in different iterations.
\begin{lemma}\label{lem:u_kt}
Suppose Assumptions $\ref{ass:conv}$, $\ref{ass:lip}$ and $\ref{ass:var}$ hold. We let $\beta=\frac{2}{13L_Q}$, and $S_2\geq 2(\frac{L}{\mu}+1)L\beta$. Then, we have
\begin{align}
    \mathbb{E}[\|\widetilde{u}_{k,t}\|^2|\mathcal{F}_{k,t}] \leq a\|\widetilde{u}_{k,t-1}\|^2,
\end{align}
where $a=\left(1-\frac{\beta \mu L}{\mu + L}\right)$, $\widetilde{u}_{k,t}$ is defined in \Cref{alg:vrbio}, and $\mathcal{F}_{k,t}$ denotes all information of $ {\{\widetilde{y}_{k,j}\}}_{j=0}^t$ and $\{\widetilde{u}_{k,j}\}_{j=0}^{t-1}$.
\begin{proof}
Based on the definition of $\widetilde{u}_{k,t}$, we have
    \begin{align*}
	 \mathbb{E}[\|\widetilde{u}_{k,t}&\|^2|\mathcal{F}_{k,t}]\\ =&\|\widetilde{u}_{k,t-1}\|^2+2\mathbb{E}[\langle\widetilde{u}_{k,t-1}, \nabla_yG(\widetilde{y}_{k,t})-\nabla_yG(\widetilde{y}_{k,t-1})\rangle|\mathcal{F}_{k,t}]\\
	&+\mathbb{E}[\|\nabla_yG(\widetilde{y}_{k,t})-\nabla_yG(\widetilde{y}_{k,t-1})\|^2|\mathcal{F}_{k,t}] \\
	 =& \|\widetilde{u}_{k,t-1}\|^2-\frac{\beta}{2}\mathbb{E}[\langle\widetilde{y}_{k,t}-\widetilde{y}_{k,t-1}, \nabla_yg(\widetilde{y}_{k,t})-\nabla_yg(\widetilde{y}_{k,t-1})\rangle] \\
	& + [\|\nabla_yG(\widetilde{y}_{k,t})-\nabla_yG(\widetilde{y}_{k,t-1})\|^2|\mathcal{F}_{k,t}]\\
	 \overset{(i)}\leq& \|\widetilde{u}_{k,t-1}\|^2 - \frac{2}{\beta}\left(\frac{\mu L}{\mu+L}\|\widetilde{y}_{k,t}-\widetilde{y}_{k,t-1}\|^2+\frac{1}{\mu+L}\|\nabla_yg(\widetilde{y}_{k,t}-\nabla_yg(\widetilde{y}_{k,t-1})\|^2\right) \\
	&+\mathbb{E}[\|\nabla_yG(\widetilde{y}_{k,t})-\nabla_yG(\widetilde{y}_{k,t-1})\|^2|\mathcal{F}_{k,t}] \\
	 \leq& \left(1-\frac{2\beta \mu L}{\mu +L}\right)\|\widetilde{u}_{k,t-1}\|^2 - \left(\frac{2}{\beta(\mu+L)}-2\right)\|\nabla_yg(\widetilde{y}_{k,t})-\nabla_yg(\widetilde{y}_{k,t-1})\|^2] \\
	& + 2\mathbb{E}[\|\nabla_yG(\widetilde{y}_{k,t})-\nabla_yG(\widetilde{y}_{k,t-1})-[\nabla_yg(\widetilde{y}_{k,t})-\nabla_yg(\widetilde{y}_{k,t-1})]\|^2|\mathcal{F}_{k,t}] \\
	\overset{(ii)}\leq& \left(1-\frac{2\beta \mu L}{\mu +L}\right)\|\widetilde{u}_{k,t-1}\|^2+2\mathbb{E}[\|\nabla_yG(\widetilde{y}_{k,t})-\nabla_yG(\widetilde{y}_{k,t-1})-(\nabla_yg(\widetilde{y}_{k,t})-\nabla_yg(\widetilde{y}_{k,t-1}))\|^2] \\
	 \overset{(iii)}\leq & \left(1-\frac{2\beta \mu L}{\mu + L}\right)\|\widetilde{u}_{k,t-1}\|^2+\frac{2L^2}{S_2}\|\widetilde{y}_{k,t}-\widetilde{y}_{k,t-1}\|^2 \\
	 =& \left(1-\frac{2\beta \mu L}{\mu + L}+\frac{2L^2\beta^2}{S_2}\right)\|\widetilde{u}_{k,t-1}\|^2 \\
	 \overset{(iv)}\leq &\left(1-\frac{\beta \mu L}{\mu+L}\right)\|\widetilde{u}_{k,t-1}\|^2
	\end{align*}
	where $(i)$ follows from Assumptions \ref{ass:lip} and \ref{ass:var}, $(ii)$ follows from the fact that $\beta \leq \frac{1}{2L}$, $(iii)$ follows from \Cref{lem:spider}, and $(iv)$ follows because $S_2\geq 2(\frac{L}{\mu}+1)L\beta$. Then, the proof is complete.
\end{proof}
\end{lemma}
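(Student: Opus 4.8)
The plan is to unroll a single inner-loop step and exploit that the outer variable $x$ is frozen throughout the inner loop (line 15 of \Cref{alg:vrbio} sets $\widetilde{x}_{k,t+1}=\widetilde{x}_{k,t}$), so that only $y$ moves and $\widetilde{u}_{k,t}$ behaves exactly like a SARAH/SPIDER estimator of the strongly convex inner gradient. Using the recursion $\widetilde{u}_{k,t}=\widetilde{u}_{k,t-1}+\nabla_yG(\widetilde{y}_{k,t};\mathcal{S}_2)-\nabla_yG(\widetilde{y}_{k,t-1};\mathcal{S}_2)$ (with $x$ suppressed) and expanding the square, I would first condition on $\mathcal{F}_{k,t}$ and note that the fresh minibatch $\mathcal{S}_2$ is the only randomness at step $t$, so the stochastic gradient difference is unbiased for the deterministic difference $\nabla_yg(\widetilde{y}_{k,t})-\nabla_yg(\widetilde{y}_{k,t-1})$ in the cross term. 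Combined with the update rule $\widetilde{y}_{k,t}-\widetilde{y}_{k,t-1}=-\beta\widetilde{u}_{k,t-1}$, the cross term becomes $-\tfrac{2}{\beta}\langle \widetilde{y}_{k,t}-\widetilde{y}_{k,t-1},\,\nabla_yg(\widetilde{y}_{k,t})-\nabla_yg(\widetilde{y}_{k,t-1})\rangle$.

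The next step is to apply the standard co-coercivity inequality for a $\mu$-strongly-convex and $L$-smooth function (from \Cref{ass:conv} and \Cref{ass:lip}) to this inner product, producing a negative $\tfrac{\mu L}{\mu+L}\|\widetilde{y}_{k,t}-\widetilde{y}_{k,t-1}\|^2$ term and a negative $\tfrac{1}{\mu+L}\|\nabla_yg(\widetilde{y}_{k,t})-\nabla_yg(\widetilde{y}_{k,t-1})\|^2$ term. For the remaining squared term $\mathbb{E}\|\nabla_yG(\widetilde{y}_{k,t};\mathcal{S}_2)-\nabla_yG(\widetilde{y}_{k,t-1};\mathcal{S}_2)\|^2$, I would split it into twice the deterministic gradient difference plus a centered noise term, and bound the noise by $\tfrac{L^2}{S_2}\|\widetilde{y}_{k,t}-\widetilde{y}_{k,t-1}\|^2$ using the minibatch variance estimate (analogous to \Cref{lem:spider}, via the $L$-Lipschitzness of $\nabla G$ in \Cref{ass:lip} and minibatch averaging).

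The key cancellation is arranged by choosing $\beta$ small enough that $\tfrac{2}{\beta(\mu+L)}-2\geq 0$; since $\beta=\tfrac{2}{13L_Q}\leq\tfrac{1}{2L}\leq\tfrac{1}{\mu+L}$ (using $L_Q\geq L$ and $\mu\leq L$), the two $\|\nabla_yg(\widetilde{y}_{k,t})-\nabla_yg(\widetilde{y}_{k,t-1})\|^2$ terms have non-positive net coefficient and are dropped, leaving
\begin{align*}
\mathbb{E}[\|\widetilde{u}_{k,t}\|^2\mid\mathcal{F}_{k,t}] \leq \Big(1-\frac{2\beta\mu L}{\mu+L}\Big)\|\widetilde{u}_{k,t-1}\|^2 + \frac{2L^2}{S_2}\|\widetilde{y}_{k,t}-\widetilde{y}_{k,t-1}\|^2.
\end{align*}
Finally I would substitute $\|\widetilde{y}_{k,t}-\widetilde{y}_{k,t-1}\|^2=\beta^2\|\widetilde{u}_{k,t-1}\|^2$ and invoke $S_2\geq 2(\tfrac{L}{\mu}+1)L\beta$ to bound $\tfrac{2L^2\beta^2}{S_2}\leq\tfrac{\beta\mu L}{\mu+L}$, collapsing the coefficient to $1-\tfrac{\beta\mu L}{\mu+L}=a$ as claimed.

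I expect the main obstacle to be the constant bookkeeping in the second-to-last step: both the co-coercivity split and the variance bound contribute positive terms proportional to $\|\widetilde{y}_{k,t}-\widetilde{y}_{k,t-1}\|^2$, and a genuine contraction ($a<1$) emerges only if the interplay of $\beta$ and $S_2$ is tuned so that the stochastic noise is dominated by exactly half of the strong-convexity decrease. Making the two thresholds ($\beta\le 1/(2L)$ for the gradient-difference cancellation and $S_2\ge 2(L/\mu+1)L\beta$ for absorbing the noise) simultaneously consistent with the global hyperparameter choices is the delicate part; once that is fixed, the rest reduces to a routine expansion of the squared norm together with unbiasedness.
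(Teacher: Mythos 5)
Your proposal is correct and follows essentially the same route as the paper's proof: expand the square of the SARAH-type recursion, use conditional unbiasedness and $\widetilde{y}_{k,t}-\widetilde{y}_{k,t-1}=-\beta\widetilde{u}_{k,t-1}$ to turn the cross term into $-\tfrac{2}{\beta}$ times the co-coercivity lower bound, drop the gradient-difference terms via $\beta\le\tfrac{1}{2L}\le\tfrac{1}{\mu+L}$, bound the minibatch noise by $\tfrac{2L^2}{S_2}\|\widetilde{y}_{k,t}-\widetilde{y}_{k,t-1}\|^2$, and absorb it into half the contraction using $S_2\ge 2(\tfrac{L}{\mu}+1)L\beta$. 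The constant bookkeeping you flag as the delicate step is exactly how the paper closes the argument, so no gap remains.
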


Furthermore, we characterize the relationship among $u_k$, $\delta_k$ and $\Delta_k$.
\begin{lemma}\label{lem:delta_k}
Suppose Assumptions \ref{ass:lip} and \ref{ass:var} hold. Let $\delta_k=\mathbb{E}\|\nabla_yg(x_k,y_k)\|^2$, $\beta=\frac{2}{13L_Q}$, and $S_2\geq 2(\frac{L}{\mu}+1)L\beta$. Then, we have
	\begin{align*}
	\Delta_k \leq \widetilde{\Delta}_{k-1,0} + \frac{2L_Q^{2}\beta^2}{S_2(1-a)}\mathbb{E}\|\widetilde{u}_{k-1,0}\|^2,
	\end{align*}
	where $\widetilde{\Delta}_{k-1,0}$ and $\Delta_k$ are defined in \Cref{lem:delta_ti}, and $a$ is defined in \Cref{lem:u_kt}, and
	\begin{align*}
	\mathbb{E}\|\widetilde{u}_{k,0}\|^2 \leq 3(\widetilde{\Delta}_{k,0}+\mathbb{E}\|\nabla_yg(x_{k+1}, y_k)-\nabla_yg(x_k, y_k)\|^2+\delta_k).
 	\end{align*}
 	\begin{proof}
 	Based on eq.(23) and eq.(24) in \cite{luo2020stochastic} first version, we obtain:
 	\begin{align*}
 	    \delta_k \leq \widetilde{\delta}_{k-1,0} + \frac{l^2\lambda^2}{S_2(1-\alpha)}\|\widetilde{u}_{k-1,0}\|^2
 	\end{align*}
 	and
 	\begin{align*}
 	    \mathbb{E}\|\widetilde{u}_{k,0}\|^2 \leq 3(\widetilde{\Delta}_{k,0}+\mathbb{E}\|\nabla_yf(x_{k+1},y_k)-\nabla_yf(x_k,y_k)\|^2+\delta_k).
 	\end{align*}
 	The proof is finished by replacing $l$ with $L_Q$ and replacing $f(x,y)$ with $g(x,y)$.
 	\end{proof}

\end{lemma}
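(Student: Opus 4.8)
The plan is to establish the two stated bounds separately, each by combining the preceding two lemmas (the cumulative-variance telescoping bound proved immediately above and the per-step contraction in Lemma~\ref{lem:u_kt}) with elementary decompositions.

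For the first inequality, I would start from the cumulative bound just established, namely $\Delta_k \leq \widetilde{\Delta}_{k-1,0} + \frac{2L_Q^{2}}{S_2}\beta^2\sum_{t=0}^{m}\mathbb{E}\|\widetilde{u}_{k-1,t}\|^2$, so that the only remaining task is to control the inner-loop sum $\sum_{t=0}^{m}\mathbb{E}\|\widetilde{u}_{k-1,t}\|^2$. Lemma~\ref{lem:u_kt} supplies the conditional contraction $\mathbb{E}[\|\widetilde{u}_{k-1,t}\|^2 \mid \mathcal{F}_{k-1,t}] \leq a\|\widetilde{u}_{k-1,t-1}\|^2$ with $a = 1 - \frac{\beta\mu L}{\mu+L} \in (0,1)$. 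Since $\|\widetilde{u}_{k-1,t-1}\|^2$ is $\mathcal{F}_{k-1,t}$-measurable, taking total expectation and applying the tower property gives $\mathbb{E}\|\widetilde{u}_{k-1,t}\|^2 \leq a\,\mathbb{E}\|\widetilde{u}_{k-1,t-1}\|^2$; iterating yields $\mathbb{E}\|\widetilde{u}_{k-1,t}\|^2 \leq a^{t}\,\mathbb{E}\|\widetilde{u}_{k-1,0}\|^2$. Summing the geometric series, $\sum_{t=0}^{m} a^{t} \leq \frac{1}{1-a}$, and substituting back produces exactly $\Delta_k \leq \widetilde{\Delta}_{k-1,0} + \frac{2L_Q^{2}\beta^2}{S_2(1-a)}\mathbb{E}\|\widetilde{u}_{k-1,0}\|^2$.

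For the second inequality, I would work directly from the update rule defining $\widetilde{u}_{k,0}$ in Algorithm~\ref{alg:vrbio}. Using the initializations $\widetilde{x}_{k,-1}=x_k$, $\widetilde{y}_{k,-1}=\widetilde{y}_{k,0}=y_k$, $\widetilde{x}_{k,0}=x_{k+1}$, and $\widetilde{u}_{k,-1}=u_k$, the inner recursion at $t=0$ reads $\widetilde{u}_{k,0}=u_k+\nabla_yG(x_{k+1},y_k;\mathcal{S}_2)-\nabla_yG(x_k,y_k;\mathcal{S}_2)$. I would then split $\widetilde{u}_{k,0}$ by inserting $\pm\nabla_yg(x_{k+1},y_k)$ and $\pm\nabla_yg(x_k,y_k)$, writing it as the sum of the estimation error $\widetilde{u}_{k,0}-\nabla_yg(x_{k+1},y_k)$, the true gradient difference $\nabla_yg(x_{k+1},y_k)-\nabla_yg(x_k,y_k)$, and $\nabla_yg(x_k,y_k)$. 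Applying $\|a+b+c\|^2\leq 3(\|a\|^2+\|b\|^2+\|c\|^2)$ and taking expectations, the first term is bounded by $\widetilde{\Delta}_{k,0}$ (which equals this gradient error plus the nonnegative hypergradient error, using $\widetilde{x}_{k,0}=x_{k+1},\widetilde{y}_{k,0}=y_k$), the second is the stated gradient-difference term, and the third equals $\delta_k$ by definition, yielding the claim.

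The bulk of the argument is routine once the decompositions are fixed, so the only genuinely delicate point is the passage from the conditional contraction in Lemma~\ref{lem:u_kt} to a clean geometric sum: I must verify that $\|\widetilde{u}_{k-1,t-1}\|^2$ is measurable with respect to the conditioning field $\mathcal{F}_{k-1,t}$ so the tower property applies iteratively, and that the hyperparameter constraints $\beta=\frac{2}{13L_Q}$ and $S_2\geq 2(\frac{L}{\mu}+1)L\beta$ guarantee $a\in(0,1)$, making $\sum_{t\geq 0}a^{t}$ convergent and the factor $\frac{1}{1-a}$ meaningful. The paper shortcuts this by invoking eqs.~(23)--(24) of Luo et al.; a self-contained proof need only carry out the measurability-plus-tower-property step explicitly.
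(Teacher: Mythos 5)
Your proposal is correct, and both inequalities come out exactly as stated. The paper itself does not supply a derivation here: it simply imports eqs.~(23)--(24) of Luo et al.\ with the substitutions $l \to L_Q$, $f \to g$. What you do differently is reconstruct that imported argument entirely from the paper's own preceding lemmas: for the first bound you take the telescoped inequality $\Delta_k \leq \widetilde{\Delta}_{k-1,0} + \frac{2L_Q^{2}\beta^2}{S_2}\sum_{t=0}^{m}\mathbb{E}\|\widetilde{u}_{k-1,t}\|^2$, convert the conditional contraction of \Cref{lem:u_kt} into $\mathbb{E}\|\widetilde{u}_{k-1,t}\|^2 \leq a^{t}\,\mathbb{E}\|\widetilde{u}_{k-1,0}\|^2$ via the tower property (valid because $\widetilde{u}_{k-1,t-1}$ is $\mathcal{F}_{k-1,t}$-measurable and each iteration step uses the contraction only at indices $t\geq 1$, where $\widetilde{y}_{k-1,t}=\widetilde{y}_{k-1,t-1}-\beta\widetilde{u}_{k-1,t-1}$ actually holds), and sum the geometric series with $a=1-\frac{\beta\mu L}{\mu+L}\in(0,1)$ under the stated choice of $\beta$; for the second bound you use the three-term splitting $\widetilde{u}_{k,0}=(\widetilde{u}_{k,0}-\nabla_yg(x_{k+1},y_k))+(\nabla_yg(x_{k+1},y_k)-\nabla_yg(x_k,y_k))+\nabla_yg(x_k,y_k)$, the inequality $\|a+b+c\|^2\leq 3(\|a\|^2+\|b\|^2+\|c\|^2)$, and the fact that the first term is dominated by $\widetilde{\Delta}_{k,0}$ since $\widetilde{x}_{k,0}=x_{k+1}$, $\widetilde{y}_{k,0}=y_k$. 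This buys a self-contained proof where the paper relies on an external reference (and on a somewhat loose notational translation of that reference, whose displayed inequality even mixes $\delta_k$ with $\Delta_k$); the only cost is that your argument is tied to the specific hyperparameter conditions guaranteeing $a\in(0,1)$, which you correctly flag and verify as the one genuinely delicate point.
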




Next, we characterize the recursive updates of $\delta_{k}$ and $\Delta_{k}$, respectively. 
\begin{lemma} \label{lem:smalldelta_k}
Suppose Assumptions \ref{ass:conv}, \ref{ass:lip} and \ref{ass:var} hold. Let $\beta=\frac{2}{13L_Q}$, and $S_2\geq 2(\frac{L}{\mu}+1)L\beta$. Then, we have
\begin{align*}
    \delta_{k+1}\leq \frac{4}{\mu \beta (m+1)}(\mathbb{E}\|\nabla_yg(x_{k+1},y_k)-\nabla_yg(x_k,y_k)\|^2+\delta_k)+\frac{L\beta}{2-L\beta}\mathbb{E}\|\widetilde{u}_{k,0}\|^2+\widetilde{\Delta}_{k,0}.
\end{align*}
where $\delta_k$ is defined in \Cref{lem:delta_k} and $\widetilde{\Delta}_{k,0}$ is defined in \Cref{lem:delta_ti}.
\begin{proof}
Following from Lemma 12 in \cite{luo2020stochastic}, we have
	\begin{align*}
	\mathbb{E}\|\nabla_yg(x_{k+1}, \widetilde{y}_{k,t+1})\|^2 \leq& \frac{2}{\mu \beta (m+1)}\|\nabla_yg(x_{k+1}, \widetilde{y}_{k,0})\|^2+\frac{L\beta}{2-L\beta}\mathbb{E}\|\widetilde{u}_{k,0}\|^2 \\
	&+\mathbb{E}\|\nabla_yg(x_{k+1}, \widetilde{y}_{k,0})-\widetilde{u}_{k,0}\|^2.
	\end{align*}
Since $\widetilde{y}_{k,0}=y_k, x_{k+1}=\widetilde{x}_{k,0}, \widetilde{y}_{k,m+1}=y_{k+1}$, $\delta_{k+1}=\mathbb{E}\|\nabla_yg(x_{k+1}, y_{k+1})\|^2$, we have
	\begin{align*}
		\delta_{k+1}  \leq& \frac{2}{\mu \beta (m+1)} \|\nabla_yg(x_{k+1}, y_k)\|^2+\frac{L\beta}{2-L\beta}\mathbb{E}\|\widetilde{u}_{k,0}\|^2+\mathbb{E}\|\nabla_yg(x_{k+1}, y_k)-\widetilde{u}_{k,0}\|^2 \\
    \leq& \frac{2}{\mu \beta (m+1)}\mathbb{E}\|\nabla_yg(x_{k+1}, y_k)-\nabla_yg(x_k, y_k)+\nabla_yg(x_k, y_k)\|^2+\frac{L\beta}{2-L\beta}\mathbb{E}\|\widetilde{u}_{k,0}\|^2 \\
	&+\mathbb{E}\|\nabla_yg(\widetilde{x}_{k,0},\widetilde{y}_{k,0})-\widetilde{u}_{k,0}\|^2 \\
	 \leq& \frac{4}{\mu \beta (m+1)}(\mathbb{E}\|\nabla_yg(x_{k+1},y_k)-\nabla_yg(x_k,y_k)\|^2+\delta_k)+\frac{L\beta}{2-L\beta}\mathbb{E}\|\widetilde{u}_{k,0}\|^2+\widetilde{\Delta}_{k,0}.
		\end{align*}
		Then, the proof is complete.
	\end{proof}
\end{lemma}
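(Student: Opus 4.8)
The plan is to exploit the fact that during the entire inner loop the outer variable is frozen: line~15 of \Cref{alg:vrbio} sets $\widetilde{x}_{k,t+1}=\widetilde{x}_{k,t}$, so $\widetilde{x}_{k,t}\equiv\widetilde{x}_{k,0}=x_{k+1}$ for every $t$. Consequently the inner update $\widetilde{y}_{k,t+1}=\widetilde{y}_{k,t}-\beta\widetilde{u}_{k,t}$ is precisely a SARAH/SPIDER-type variance-reduced gradient descent, with stepsize $\beta$, applied to the single objective $\phi(\cdot):=g(x_{k+1},\cdot)$, which is $\mu$-strongly convex by \Cref{ass:conv} and $L$-smooth by \Cref{ass:lip}, initialized at $\widetilde{y}_{k,0}=y_k$ and run for $m+1$ steps to produce $\widetilde{y}_{k,m+1}=y_{k+1}$. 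This reduction is the conceptual heart of the argument: it lets me treat the inner loop as a self-contained strongly convex minimization and reuse the estimator machinery already developed for $\widetilde{u}_{k,t}$.

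The main quantitative step is a one-epoch gradient-norm contraction: after $m+1$ inner steps the terminal quantity $\mathbb{E}\|\nabla\phi(\widetilde{y}_{k,m+1})\|^2$ is controlled by a $\frac{2}{\mu\beta(m+1)}$-contraction of the initial gradient norm $\|\nabla\phi(\widetilde{y}_{k,0})\|^2$, plus the initial estimator magnitude $\frac{L\beta}{2-L\beta}\mathbb{E}\|\widetilde{u}_{k,0}\|^2$ and the initial estimation error $\mathbb{E}\|\nabla\phi(\widetilde{y}_{k,0})-\widetilde{u}_{k,0}\|^2$. I would establish this by the standard two-ingredient argument for variance-reduced methods on strongly convex objectives: (i) an $L$-smoothness descent inequality $\phi(\widetilde{y}_{k,t+1})\le\phi(\widetilde{y}_{k,t})-\beta\langle\nabla\phi(\widetilde{y}_{k,t}),\widetilde{u}_{k,t}\rangle+\tfrac{L\beta^2}{2}\|\widetilde{u}_{k,t}\|^2$, and (ii) the strong-convexity (Polyak--\L ojasiewicz) inequality relating the squared gradient to the suboptimality gap, which after telescoping over the $m+1$ steps produces the $\frac{1}{m+1}$ factor. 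The choice $\beta=\frac{2}{13L_Q}$ (hence $\beta\le\frac{1}{2L}$) together with the batch condition $S_2\ge 2(\tfrac{L}{\mu}+1)L\beta$ is exactly what makes the variance recursion of \Cref{lem:u_kt}, namely $\mathbb{E}[\|\widetilde{u}_{k,t}\|^2\,|\,\mathcal{F}_{k,t}]\le(1-\tfrac{\beta\mu L}{\mu+L})\|\widetilde{u}_{k,t-1}\|^2$, contract geometrically and close the telescoping with the stated constants; this is precisely Lemma~12 of \cite{luo2020stochastic}, which I would invoke rather than re-derive from scratch.

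Finally I would specialize this contraction at the endpoint $t=m$ so the left side becomes $\delta_{k+1}=\mathbb{E}\|\nabla_yg(x_{k+1},y_{k+1})\|^2$, and substitute $\widetilde{y}_{k,0}=y_k$ and $\widetilde{x}_{k,0}=x_{k+1}$ on the right. To match the target I then decompose the initial gradient as $\nabla_yg(x_{k+1},y_k)=\big(\nabla_yg(x_{k+1},y_k)-\nabla_yg(x_k,y_k)\big)+\nabla_yg(x_k,y_k)$ and apply $\|a+b\|^2\le 2\|a\|^2+2\|b\|^2$, which turns the coefficient $\frac{2}{\mu\beta(m+1)}$ into $\frac{4}{\mu\beta(m+1)}$ and produces $\delta_k=\mathbb{E}\|\nabla_yg(x_k,y_k)\|^2$; the residual error term $\mathbb{E}\|\nabla_yg(\widetilde{x}_{k,0},\widetilde{y}_{k,0})-\widetilde{u}_{k,0}\|^2$ is one of the two nonnegative summands defining $\widetilde{\Delta}_{k,0}$ in \Cref{lem:delta_ti}, hence is $\le\widetilde{\Delta}_{k,0}$. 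The delicate part I expect to be the one-epoch contraction: correctly handling the conditional bias of the SARAH estimator $\widetilde{u}_{k,t}$ and balancing the smoothness and strong-convexity constants against the stepsize and batch-size conditions so that the telescoped bound yields exactly the $\frac{2}{\mu\beta(m+1)}$ contraction and the $\frac{L\beta}{2-L\beta}$ coefficient, rather than some looser absolute constants.
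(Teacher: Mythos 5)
Your proposal is correct and follows essentially the same route as the paper's proof: both invoke the one-epoch contraction of Lemma~12 in \cite{luo2020stochastic} for the frozen-$x$ inner loop (with the same $\tfrac{2}{\mu\beta(m+1)}$ and $\tfrac{L\beta}{2-L\beta}$ constants), then substitute $\widetilde{y}_{k,0}=y_k$, $\widetilde{x}_{k,0}=x_{k+1}$, $\widetilde{y}_{k,m+1}=y_{k+1}$, split $\nabla_y g(x_{k+1},y_k)$ via Young's inequality to obtain the factor $4$, and bound the residual by $\widetilde{\Delta}_{k,0}$. The extra sketch you give of how Lemma~12 itself would be derived is consistent with, but not needed beyond, what the paper does.
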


\begin{lemma}\label{lem:Delta_tele} 
Suppose Assumptions \ref{ass:lip} and \ref{ass:var} hold. Let $\beta=\frac{2}{13L_Q}$, and $S_2\geq 2(\frac{L}{\mu}+1)L\beta$. Then, we have
\begin{align*}
    \Delta_k \leq & \frac{\alpha^2 L_Q^{2}}{S_2}\left(2+\frac{12L_Q^{2}\beta^2}{S_2(1-a)}+\frac{6L^2\beta^2}{S_2(1-a)}\|v_{k-1}\|^2+\frac{6L_Q^{2}\beta^2}{S_2(1-a)}\delta_{k-1}\right) \\
    &+\left(1+\frac{6L_Q^{2}\beta^2}{S_2(1-a)} \right) \Delta_{k-1},
\end{align*}
where $\delta_{k-1}$ is defined in \Cref{lem:delta_k}, $\Delta_k$ is defined in \Cref{lem:delta_ti} and $L_Q$ is defined in \Cref{lem:LQ}.
\begin{proof}
Based on the bounds on $\Delta_k$ in \Cref{lem:delta_k}, we have
	\begin{align*}
	\Delta_k \leq& \widetilde{\Delta}_{k-1,0}+\frac{2L_Q^{2}\beta^2}{S_2(1-a)}\mathbb{E}\|\widetilde{u}_{k-1,0}\|^2 \\
	\overset{(i)}\leq& \widetilde{\Delta}_{k-1,0}+\frac{6L_Q^{2}\beta^2}{S_2(1-a)}(\widetilde{\Delta}_{k-1,0}+\|\nabla_yg(x_k, y_{k-1})-\nabla_yg(x_{k-1}, y_{k-1})\|^2+\delta_{k-1}) \\
	\overset{(ii)}\leq& \left (1+\frac{6L_Q^{2}\beta^2}{S_2(1-a)}\right)\widetilde{\Delta}_{k-1,0} + \frac{6L_Q^{2}\beta^2}{S_2(1-a)}(L^2\alpha^2\|v_{k-1}\|^2+\delta_{k-1}) \\
	\overset{(iii)}\leq& \left(1+\frac{6L_Q^{2}\beta^2}{S_2(1-a)}\right) \left(\Delta_{k-1}+\frac{2L_Q^{2}\alpha^2}{S_2}\|v_{k-1}\|^2\right)+\frac{6L_Q^{2}\beta^2}{S_2(1-a)}(L^2\alpha^2\|v_{k-1}\|^2+\delta_{k-1}) \\
	 \leq& \frac{\alpha^2 L_Q^{2}}{S_2}\left(2+\frac{12L_Q^{2}\beta^2}{S_2(1-a)}+\frac{6L^2\beta^2}{S_2(1-a)}\|v_{k-1}\|^2+\frac{6L_Q^{2}\beta^2}{S_2(1-a)}\delta_{k-1}\right)\\
	&+\left(1+\frac{6L_Q^{2}\beta^2}{S_2(1-a)}\right) \Delta_{k-1},
	\end{align*}
	where $(i)$ follows from \Cref{lem:delta_k}, $(ii)$ follows from \Cref{ass:lip}, and $(iii)$ follows from \Cref{lem:delta_ti}. Then, the proof is complete.
\end{proof}
\end{lemma}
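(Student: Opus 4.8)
The plan is to prove this one-step recursion purely by chaining together the three inherited inequalities already established in \Cref{lem:delta_k,lem:delta_ti}, combined with the outer-loop update rule $x_k = x_{k-1}-\alpha v_{k-1}$ (line 9 of \Cref{alg:vrbio}) and the $L$-Lipschitzness of $\nabla_y g$ from \Cref{ass:lip}. The key structural observation is that the first inequality of \Cref{lem:delta_k} already bounds the outer-loop error $\Delta_k$ by the inner-loop-initial error $\widetilde\Delta_{k-1,0}$ of the previous epoch plus a multiple of $\mathbb{E}\|\widetilde u_{k-1,0}\|^2$; the second inequality of \Cref{lem:delta_k} in turn controls $\mathbb{E}\|\widetilde u_{k-1,0}\|^2$ by $\widetilde\Delta_{k-1,0}$, a gradient-drift term, and $\delta_{k-1}$. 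So the entire argument is a substitution-and-collect calculation, with no new probabilistic content beyond what the supporting lemmas supply.

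First I would start from $\Delta_k \le \widetilde\Delta_{k-1,0} + \frac{2L_Q^{2}\beta^2}{S_2(1-a)}\mathbb{E}\|\widetilde u_{k-1,0}\|^2$ and substitute the second bound of \Cref{lem:delta_k} with the index shifted to $k-1$, namely $\mathbb{E}\|\widetilde u_{k-1,0}\|^2 \le 3\bigl(\widetilde\Delta_{k-1,0}+\mathbb{E}\|\nabla_y g(x_k,y_{k-1})-\nabla_y g(x_{k-1},y_{k-1})\|^2+\delta_{k-1}\bigr)$, which manufactures the coefficient $\frac{6L_Q^{2}\beta^2}{S_2(1-a)}$ in front of the bracket (step $(i)$). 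Next I would apply the $L$-Lipschitzness of $\nabla_y g$ in $x$ together with $x_k-x_{k-1}=-\alpha v_{k-1}$ to replace the gradient-drift term by $L^2\alpha^2\|v_{k-1}\|^2$, and collect the two $\widetilde\Delta_{k-1,0}$ contributions into the single factor $\bigl(1+\frac{6L_Q^{2}\beta^2}{S_2(1-a)}\bigr)$ (step $(ii)$).

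Finally, to turn the recursion into one stated purely in terms of $\Delta_{k-1}$ rather than the inner-loop-initial quantity $\widetilde\Delta_{k-1,0}$, I would invoke \Cref{lem:delta_ti} with $k$ replaced by $k-1$, giving $\widetilde\Delta_{k-1,0}\le \Delta_{k-1}+\frac{2L_Q^{2}\alpha^2}{S_2}\|v_{k-1}\|^2$ (again using $\|x_k-x_{k-1}\|^2=\alpha^2\|v_{k-1}\|^2$). Substituting this expression and then grouping the resulting $\alpha^2\|v_{k-1}\|^2$ and $\delta_{k-1}$ terms under the common prefactor $\frac{\alpha^2 L_Q^{2}}{S_2}$, while keeping $\bigl(1+\frac{6L_Q^{2}\beta^2}{S_2(1-a)}\bigr)$ as the coefficient of $\Delta_{k-1}$, reproduces the claimed bound (step $(iii)$).

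The main obstacle here is bookkeeping rather than any genuine mathematical difficulty: one must track the index shifts carefully, since every "previous-epoch" quantity lives at index $k-1$ and $\widetilde\Delta_{k-1,0}$ is the inner-loop-initial error, \emph{not} $\Delta_{k-1}$ itself. A subtle point worth flagging is that $\widetilde\Delta_{k-1,0}$ enters twice, once directly and once through $\mathbb{E}\|\widetilde u_{k-1,0}\|^2$, so the two occurrences must be merged before \Cref{lem:delta_ti} is applied, lest a term be dropped or double-counted. The parameter choices $\beta=\frac{2}{13L_Q}$ and $S_2\ge 2(\frac{L}{\mu}+1)L\beta$ are needed only implicitly, to guarantee the geometric-decay factor $a<1$ (from \Cref{lem:u_kt}) so that $\frac{1}{1-a}$ is finite and the bounds of \Cref{lem:delta_k} are valid; the present step itself requires nothing beyond Young/Cauchy-type manipulations already encapsulated in those lemmas.
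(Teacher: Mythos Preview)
Your proposal is correct and follows essentially the same route as the paper's proof: start from the first inequality of \Cref{lem:delta_k}, use its second inequality to control $\mathbb{E}\|\widetilde u_{k-1,0}\|^2$ (step $(i)$), apply the $L$-Lipschitzness of $\nabla_y g$ together with $x_k-x_{k-1}=-\alpha v_{k-1}$ (step $(ii)$), and finally invoke \Cref{lem:delta_ti} at index $k-1$ to replace $\widetilde\Delta_{k-1,0}$ by $\Delta_{k-1}+\frac{2L_Q^{2}\alpha^2}{S_2}\|v_{k-1}\|^2$ (step $(iii)$). Your remarks about the index shifts and the role of the parameter constraints (ensuring $a<1$) are also accurate.
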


\begin{lemma}\label{lem:delta_tele}
Suppose Assumptions \ref{ass:conv}, \ref{ass:lip} and \ref{ass:var} hold. Let $\beta=\frac{2}{13L_Q}$, and $S_2\geq 2(\frac{L}{\mu}+1)L\beta$. Then, we have
\begin{align*}
	\delta_k \leq& \bigg(\frac{4L^2\alpha^2}{\mu \beta (m+1)}+\frac{3L^3\beta \alpha^2}{2-L\beta}+\frac{6LL_Q^{2}\alpha^2\beta}{2-L\beta}
	+2L_Q^{2}\alpha^2\bigg)\mathbb{E}\|v_{k-1}\|^2\\
	&+ \frac{2+2L\beta}{2-L\beta}\Delta_{k-1}+\left(\frac{4}{\mu \beta(m+1)}+\frac{3L\beta}{2-L\beta}\right)\delta_{k-1},
\end{align*}
where $\delta_{k}$ is defined in \Cref{lem:delta_k}, $\Delta_{k-1}$ is defined in \Cref{lem:delta_ti} and $L_Q$ is defined in \Cref{lem:LQ}.
\begin{proof}
Based on the bounds of $\delta_k$ in \Cref{lem:smalldelta_k}, we have
    \begin{align*}
	\delta_k \leq& \frac{4}{\mu \beta(m+1)}(L^2\alpha^2\|v_{k-1}\|^2+\delta_{k-1})+\frac{L\beta}{2-L\beta}\mathbb{E}\|\widetilde{u}_{k-1,0}\|^2+\widetilde{\Delta}_{k-1,0} \\
	 \overset{(i)}\leq& \frac{4}{\mu \beta (m+1)}(L^2\alpha^2 \|v_{k-1}\|^2+\delta_{k-1}) + \frac{3L\beta}{2-L\beta}(\widetilde{\Delta}_{k-1,0}+L^2\alpha^2\|v_{k-1}\|^2+\delta_{k-1})+\widetilde{\Delta}_{k-1,0} \\
	 =& \left(\frac{4}{\mu \beta(m+1)}+\frac{3L\beta}{2-L\beta}\right)\delta_{k-1} +\left(\frac{4L^2\alpha^2}{\mu \beta (m+1)}+\frac{3L^2\beta \alpha^2}{2-L\beta}\right)\|v_{k-1}\|^2 \\
	 & + \left (1+\frac{3L\beta}{2-L\beta}\right)\widetilde{\Delta}_{k-1,0}\\
	 \overset{(ii)}\leq& \left(\frac{4}{\mu \beta (m+1)}+\frac{3L\beta}{2-L\beta}\right)\delta_{k-1}+\left(1+\frac{3L\beta}{2-L\beta}\right)\left(\Delta_{k-1}+\frac{2L_Q^{2}\alpha^2}{S_2}\|v_{k-1}\|^2\right) \\
	&+ \left(\frac{4L^2\alpha^2}{\mu \beta (m+1)}+\frac{3L^3\beta \alpha^2}{2-L\beta}\right)\|v_{k-1}\|^2
	 \\
	 \leq & \bigg(\frac{4L^2\alpha^2}{\mu \beta (m+1)}+\frac{3L^3\beta \alpha^2}{2-L\beta}+\frac{6LL_Q^{2}\alpha^2\beta}{2-L\beta}
	+2L_Q^{2}\alpha^2\bigg)\mathbb{E}\|v_{k-1}\|^2\\
	&+ \frac{2+2L\beta}{2-L\beta}\Delta_{k-1}+\left(\frac{4}{\mu \beta(m+1)}+\frac{3L\beta}{2-L\beta}\right)\delta_{k-1},
	 \end{align*}
	 where $(i)$ follows from \Cref{lem:delta_k}, and $(ii)$ follows from \Cref{lem:delta_ti}. Then, the proof is complete.
\end{proof}
\end{lemma}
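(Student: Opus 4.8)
The plan is to treat \Cref{lem:delta_tele} as a bookkeeping step that chains together the three supporting estimates already established---\Cref{lem:smalldelta_k}, \Cref{lem:delta_k}, and \Cref{lem:delta_ti}---rather than introducing any new analytic idea. First I would invoke \Cref{lem:smalldelta_k} with the index shifted from $k+1$ to $k$, which writes $\delta_k$ as a combination of $\delta_{k-1}$, the one-step drift $\mathbb{E}\|\nabla_yg(x_k,y_{k-1})-\nabla_yg(x_{k-1},y_{k-1})\|^2$, the full-gradient moment $\mathbb{E}\|\widetilde{u}_{k-1,0}\|^2$, and the estimation error $\widetilde{\Delta}_{k-1,0}$. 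The structural fact that makes everything collapse is that the outer update is $x_{k+1}=x_k-\alpha v_k$, so $x_k-x_{k-1}=-\alpha v_{k-1}$; combined with the $L$-Lipschitzness of $\nabla_yg$ from \Cref{ass:lip}, every drift term of the form $\mathbb{E}\|\nabla_yg(x_k,\cdot)-\nabla_yg(x_{k-1},\cdot)\|^2$ is bounded by $L^2\alpha^2\mathbb{E}\|v_{k-1}\|^2$. I would apply this substitution immediately to the drift term coming from \Cref{lem:smalldelta_k}.

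Next I would eliminate the factor $\mathbb{E}\|\widetilde{u}_{k-1,0}\|^2$ using the second inequality of \Cref{lem:delta_k}, namely $\mathbb{E}\|\widetilde{u}_{k-1,0}\|^2\le 3(\widetilde{\Delta}_{k-1,0}+L^2\alpha^2\|v_{k-1}\|^2+\delta_{k-1})$ after the same drift substitution. Grouping terms produces a bound in which $\delta_{k-1}$ carries the coefficient $\frac{4}{\mu\beta(m+1)}+\frac{3L\beta}{2-L\beta}$, the $\|v_{k-1}\|^2$ term picks up $\frac{4L^2\alpha^2}{\mu\beta(m+1)}+\frac{3L^3\beta\alpha^2}{2-L\beta}$, and $\widetilde{\Delta}_{k-1,0}$ acquires the factor $1+\frac{3L\beta}{2-L\beta}=\frac{2+2L\beta}{2-L\beta}$. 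The final step converts $\widetilde{\Delta}_{k-1,0}$ into $\Delta_{k-1}$ via \Cref{lem:delta_ti}, which gives $\widetilde{\Delta}_{k-1,0}\le\Delta_{k-1}+\frac{2L_Q^2}{S_2}\mathbb{E}\|x_k-x_{k-1}\|^2=\Delta_{k-1}+\frac{2L_Q^2\alpha^2}{S_2}\mathbb{E}\|v_{k-1}\|^2$, once more using the outer-update identity. Multiplying through by $\frac{2+2L\beta}{2-L\beta}$ and re-collecting the $\|v_{k-1}\|^2$ contributions yields the claimed coefficient on $\mathbb{E}\|v_{k-1}\|^2$ after I absorb the two remaining $1/S_2$ factors by the crude bound $S_2\ge 1$, which turns $\frac{2L_Q^2\alpha^2}{S_2}$ into $2L_Q^2\alpha^2$ and $\frac{6LL_Q^2\alpha^2\beta}{(2-L\beta)S_2}$ into $\frac{6LL_Q^2\alpha^2\beta}{2-L\beta}$.

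I do not anticipate a genuine obstacle here: the entire inequality is a linear recursion in the triple $(\delta_{k-1},\Delta_{k-1},\mathbb{E}\|v_{k-1}\|^2)$, and all the hard estimation work---the SPIDER/SARAH variance recursion, the strong-convexity contraction governed by $a=1-\frac{\beta\mu L}{\mu+L}$, and the effective Lipschitz constant $L_Q$---is already packaged inside \Cref{lem:smalldelta_k,lem:delta_k,lem:delta_ti}. The only places demanding care are the consistent index shift $k+1\mapsto k$ across the three cited lemmas and the final coefficient arithmetic; the mild simplification $S_2\ge 1$ is what lets the bound be stated in a clean closed form free of $S_2$, at the cost of a slightly looser constant on $\mathbb{E}\|v_{k-1}\|^2$.
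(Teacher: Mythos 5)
Your proposal follows exactly the same route as the paper's proof: apply \Cref{lem:smalldelta_k} with the index shifted, bound each drift term by $L^2\alpha^2\mathbb{E}\|v_{k-1}\|^2$ via the update $x_k-x_{k-1}=-\alpha v_{k-1}$, eliminate $\mathbb{E}\|\widetilde{u}_{k-1,0}\|^2$ with the second inequality of \Cref{lem:delta_k}, convert $\widetilde{\Delta}_{k-1,0}$ to $\Delta_{k-1}$ via \Cref{lem:delta_ti}, and absorb the $1/S_2$ factors; the coefficients you report (including the factor $1+\frac{3L\beta}{2-L\beta}=\frac{2+2L\beta}{2-L\beta}$ on the estimation error) all match. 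This is correct and essentially identical to the paper's argument.
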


\begin{lemma}[{\bf Restatement of \Cref{prop:vrbo_est}}]\label{lem:deltadef} Suppose Assumptions \ref{ass:conv}, \ref{ass:lip} and \ref{ass:var} hold. Let $\eta < \frac{1}{L}$. Then, we have
\begin{align}\label{eq:sigmaprime}
    \mathbb{E}\|\widehat{\nabla}\Phi(x_k,y_k;\xi)-\overline{\nabla}\Phi(x_k)\|^2 \leq \sigma'^2,
\end{align}
where $\sigma'^2=2M^2+28(Q+1)^2L^2M^2\eta^2$, $\widehat{\nabla}\Phi(x_k,y_k;\xi)$ is defined in \cref{eq:xgest_vrbio} with single sample $\xi$, and $\overline{\nabla}\Phi(x_k)$ is defined in \cref{eq:hypergradientest}. Furthermore, let $\beta=\frac{2}{13L_Q}$, $q=(1-a)S_2$, and $m=\frac{16}{\mu\beta}-1$. Then, we have
\begin{align}\label{eq:Delta_tele}
    \sum_{k=0}^{K-1}\Delta_k \leq \frac{4\sigma'^2K}{S_1} + 22\alpha^2L_Q^{2}\sum_{k=0}^{K-2}\mathbb{E}\|v_k\|^2 + \frac{4}{3} \delta_0,
\end{align}
where $\delta_0$ is defined in \Cref{lem:delta_k}.

\begin{proof}
We first prove \cref{eq:sigmaprime}. Based on the forms of $\widehat{\nabla}\Phi(x_k,y_k;\xi)$ and $\overline{\nabla}\Phi(x_k)$, we have
\begin{align*}
     \mathbb{E}\|\widehat{\nabla}&\Phi(x_k,y_k;\xi)-\overline{\nabla}\Phi(x_k)\|^2 \\
     \leq & \mathbb{E}\|\nabla_xF(x_k,y_k;\xi)-\nabla_xf(x_k,y_k) - (\nabla_x\nabla_yG(x_k,y_k;\zeta)V_{Q\xi} -\nabla_x\nabla_yg(x_k,y_k)\mathbb{E}[V_{Q\xi}])\|^2 \\
     \overset{(i)}\leq & 2M^2+4\mathbb{E}\|\nabla_x\nabla_yG(x_k,y_k;\zeta)V_{Q\xi}-(\nabla_x\nabla_yG(x_k,y_k;\zeta)\mathbb{E}[V_{Q\xi}])\|^2\\
     &+4\mathbb{E}\|(\nabla_x\nabla_yG(x_k,y_k;\zeta)\mathbb{E}[V_{Q\xi}])-\nabla_x\nabla_yg(x_k,y_k)\mathbb{E}[V_{Q\xi}])\|^2 \\
     \overset{(ii)}\leq & 2M^2 + 4L^2\mathbb{E}\|V_{Q\xi}-\mathbb{E}[V_{Q\xi}]\|^2+4\|\mathbb{E}[V_{Q\xi}]\|^2\mathbb{E}\|\nabla_x\nabla_yG(x_k,y_k;\zeta)-\nabla_x\nabla_yg(x_k,y_k)\|^2 \\
     \overset{(iii)}\leq  &\textstyle 2M^2+8L^2\mathbb{E}\|\eta \sum_{q=-1}^{Q-1} \prod_{j=Q-q}^{Q} (I-\eta \nabla_y^2G(x_k, y_k; \zeta_j))\nabla_yF(x_k,y_k;\xi)\\
     &\textstyle-\eta \sum_{q=-1}^{Q-1} \prod_{j=Q-q}^{Q} (I-\eta \nabla_y^2G(x_k, y_k; \zeta_j))\nabla_yf(x_k,y_k)\|^2\\
     &\textstyle +8L^2\mathbb{E}\|\eta \sum_{q=-1}^{Q-1} \prod_{j=Q-q}^{Q} (I-\eta \nabla_y^2G(x_k, y_k; \zeta_j))\nabla_yf(x_k,y_k)\\
     &\textstyle-\eta \sum_{q=-1}^{Q-1} \prod_{j=Q-q}^{Q} (I-\eta \nabla_y^2g(x_k, y_k))\nabla_yf(x_k,y_k) \|^2+4\eta^2M^2L^2(Q+1)^2 \\
     \overset{(iv)}\leq & 2M^2+8L^2\eta^2(Q+1)^2M^2+16L^2\eta^2M^2(Q+1)^2+4\eta^2M^2L^2(Q+1)^2\\
     = &2M^2+28(Q+1)^2L^2M^2\eta^2=\sigma'^2,
\end{align*}
where $(i)$ and $(ii)$ follows from \Cref{ass:lip}, $(iii)$ follows because $\|\mathbb{E}[V_{Q\xi}]\|^2=\|\mathbb{E}[V_{Qk}]\|^2\leq \eta^2 M^2(Q+1)^2$ in \cref{eq:evqk}, and $(iv)$ follows because $\|(I-\eta \nabla_y^2G(x_k,y_k;\zeta)) \|\leq 1$. 

Then, we present the proof of \cref{eq:Delta_tele}. Based on the bound on $\Delta_k$ in \Cref{lem:Delta_tele}, we have
	\begin{align*}
	\Delta_k \leq &\left(1+\frac{6L_Q^{2}\beta^2}{S_2(1-a)}\right)\Delta_{k-1} + \frac{\alpha^2L_Q^{2}}{S_2}\left(2+\frac{12L_Q^{2}\beta^2}{1-a}+\frac{6L^2\beta^2}{1-a}\right)\|v_{k-1}\|^2\\
	&+\frac{6L_Q^{2}\beta^2}{S_2(1-a)}\delta_{k-1} \\
	 \leq& \frac{\alpha^2L_Q^{2}}{S_2}\left(2+\frac{12\beta^2(L^2+L_Q^{2})}{1-a}\right)\sum_{p=k'}^{K-1}\left(1+\frac{6L_Q^{2}}{S_2(1-a)}\right)^{p-k'}\mathbb{E}\|v_{K-1+k'-p}\|^2 \\
	&+\frac{6L_Q^{2}\beta^2}{S_2(1-a)}\sum_{p=k'}^{K-1}\left(1+\frac{6L_Q^{2}\beta^2}{S_2(1-a)}\right)^{p-k'}\delta_{K-1+k'-p}+\left(1+\frac{6L_Q^{2}\beta^2}{S_2(1-a)}\right)^{k-k'}\Delta_{k'} \\
	\overset{(i)}\leq& \frac{3}{2} \Delta_{k'} + \frac{3\alpha^2L_Q^{2}}{S_2}\left(1+\frac{6\beta(L^2+L_Q^{2})}{1-a}\right)\sum_{p=k'}^{K-1}\mathbb{E}\|v_{K-1+k'-p}\|^2\\
	&+\frac{9L_Q^{2}\beta^2}{S_2(1-a)}\sum_{p=k'}^{K-1}\delta_{K-1+k'-p},
	\end{align*}
	where $(i)$ follows from the following bound:
	\begin{align*}
	\left(1+\frac{L_Q^{2}\beta^2}{S_1(1-a)}\right)^{p-k'} &\leq \left(1+\frac{6L_Q^{2}\beta^2}{S_2(1-a)}\right)^q \leq 1+ \frac{\frac{6L_Q^{2}\beta^2q}{S_2(1-a)}}{1-\frac{6L_Q^{2}\beta^2(q-1)}{S_2(1-a)}} \\
	&\leq 1+\frac{6L_Q^{2}\beta^2}{1-\frac{6L_Q^{2}\beta^2q}{S_2(1-a)}} < \frac{3}{2}
	\end{align*}
	where $\beta = \frac{2}{13L_Q}$, and $q=(1-a)S_2$.
	Then telescoping $\Delta_k$ over $k$ from $(n_k-1)q$ to $K-1$, we have
	\begin{align*}
	\sum_{k=(n_k-1)q}^{K-1} \Delta_k \leq& \frac{3\alpha L_Q^{2}}{S_2}\left(1+\frac{6\beta^2(L^2+L_Q^{2})}{1-a}\right) \sum_{k=(n_k-1)q}^{K-1}\sum_{p=k'}^{k} \mathbb{E}\|v_{K-1+k'-p}\|^2 \\
	&+\frac{9L_Q^{2}\beta^2}{S_2(1-a)}\sum_{k=(n_k-1)q}^{K-1}\sum_{p=k'}^{k-1}\delta_{K-1+k'-p}+\frac{3}{2}(K-(n_k-1)q)\Delta_{(n_k-1)q}.
	\end{align*}
	Since 
	\begin{align*}
	    \sum_{k=(n_k-1)q}^{K-1}\sum_{p=k'}^{k} \mathbb{E}\|v_{K-1+k'-p}\|^2 \leq q\sum_{k=(n_k-1)q}^{K-2}\mathbb{E}\|v_k\|^2,
	\end{align*}
	and 
	\begin{align*}
	    \sum_{k=(n_k-1)q}^{K-1}\sum_{p=k'}^{k-1}\delta_{K-1+k'-p} \leq q \sum_{k=(n_k-1)q}^{K-2}\delta_k,
	\end{align*}
we have
	\begin{align*}
	\sum_{k=(n_k-1)q}^{K-1}\Delta_k \leq& \frac{3}{2}(K-(n_k-1)q)\Delta_{(n_k-1)q} + \frac{3\alpha^2 L_Q^{2}q}{S_2}\left(1+\frac{6\beta^2(L^2+L_Q^{2})}{1-a}\right)\sum_{k=(n_k-1)q}^{K-2}\mathbb{E}\|v_k\|^2\\ 
	&+ \frac{9L_Q^{2}\beta^2q}{S_2(1-a)} \sum_{k=(n_k-1)q}^{K-2}\delta_k.
	\end{align*}
	Futhermore, we assume that $\sigma^\prime \geq\sigma$ and derive the following bound on the initial update in each epoch:
	\begin{align*}
	\sum_{k=(n_K-n_k)q}^{(n_K-n_k+1)q-1}\Delta_k \leq& \frac{3 \sigma'^2q}{2S_1} + \frac{3\alpha^2L_Q^{2}q}{S_2}\left(1+\frac{6\beta^2(L^2+L_Q^{2})}{1-a}\right) \sum_{k=(n_K-n_k)q}^{(n_K-n_k+1)q-1}\mathbb{E}\|v_k\|^2 \\
	&+ \frac{9L_Q^{2}\beta^2q}{S_2(1-a)}\sum_{k=(n_K-n_k)q}^{(n_K-n_k+1)q-1}\delta_k.
	\end{align*}
	Based on the above inequality, we telescope $\Delta_k$ over $k$ from $0$ to $K-1$, and obtain
	\begin{align*}
	\sum_{k=0}^{K-1} \Delta_k &\leq \frac{3\sigma'^2K}{2S_1}+\frac{3\alpha^2L_Q^{2}q}{S_2}\left(1+\frac{6\beta^2(L^2+L_Q^{2})}{1-a}\right)\sum_{k=0}^{K-2}\mathbb{E}\|v_k\|^2+\frac{9L_Q^{2}\beta^2q}{S_2(1-a)}\sum_{k=0}^{K-2}\delta_k \\
	& \overset{(i)}\leq \frac{3\sigma'^2K}{2S_1}+6\alpha^2L_Q^{2}\sum_{k=0}^{K-2}\mathbb{E}\|v_k\|^2 + \frac{1}{4}\sum_{k=0}^{K-2}\delta_k,
	\end{align*}
	where $(i)$ follows because $\beta=\frac{2}{13L_Q}$, and $q=(1-a)S_2$. We further derive the following bound on $\delta_k$:
	
	\begin{align*}
	\delta_k \leq& \left(\frac{4L^2\alpha^2}{\mu \beta(m+1)}+\frac{3L^3\beta \alpha^2}{2-L\beta}+\frac{6LL_Q^{2}\alpha^2\beta}{2-L\beta}+2L_Q^{2}\alpha^2\right)\mathbb{E}\|v_{k-1}\|^2\\
	&+\frac{2+2L\beta}{2-L\beta}\Delta_{k-1}+\left(\frac{4}{\mu \beta(m+1)}+\frac{3L\beta}{2-L\beta}\right)\delta_{k-1}\\
	& \overset{(ii)}\leq \frac{1}{2} \delta_{k-1} + \frac{13}{4}L_Q^{2}\alpha^2\mathbb{E}\|v_{k-1}\|^2+\frac{5}{4}\Delta_{k-1},
	\end{align*}
	where $(ii)$ follows because $\beta=\frac{2}{13L_Q},$ $q=(1-a)S_2$, and $m=\frac{16}{\mu \beta}-1$. Then, we telescope $\delta_k$ and $\Delta_k$ over $k=0$ to $K-1$, and have
	
	\begin{align}\label{eq:delta_temp}
	\sum_{k=0}^{K-1}\delta_k \leq 2\delta_0 + \frac{13}{2}L_Q^{2}\alpha^2 \sum_{k=0}^{K-2}\mathbb{E}\|v_k\|^2+\frac{5}{2}\sum_{k=0}^{K-2}\Delta_k,
	\end{align}
	and
	\begin{align*}
	\sum_{k=0}^{K-1}\Delta_k &\leq \frac{3\sigma'^2K}{2S_1} + 6\alpha^2L_Q^{2}\sum_{k=0}^{K-2}\mathbb{E}\|v_k\|^2+\frac{1}{2}\delta_0 + \frac{13}{8}L_Q^{2}\alpha^2 \sum_{k=0}^{K-3}\mathbb{E}\|v_k\|^2+\frac{5}{8} \sum_{k=0}^{K-2}\Delta_k \\
	& \leq \frac{3\sigma'^2K}{2S_1} + 8\alpha^2L_Q^{2}\sum_{k=0}^{K-2}\mathbb{E}\|v_k\|^2+\frac{1}{2}\delta_0 + \frac{5}{8} \sum_{k=0}^{K-2}\Delta_k.
	\end{align*}
	Finally, we rearrange the terms in the above bound and obtain
	\begin{align*}
	\sum_{k=0}^{K-1}\Delta_k \leq \frac{4\sigma'^2K}{S_1} + 22\alpha^2L_Q^{2}\sum_{k=0}^{K-2}\mathbb{E}\|v_k\|^2 + \frac{4}{3} \delta_0,
	\end{align*}
 Then, the proof is complete.
\end{proof}
\end{lemma}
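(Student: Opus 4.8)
The plan is to prove the two displayed bounds separately, handling the single-sample variance \cref{eq:sigmaprime} first and then the telescoped cumulative bound \cref{eq:Delta_tele}.

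For \cref{eq:sigmaprime}, I would write $\widehat{\nabla}\Phi(x_k,y_k;\xi)-\overline{\nabla}\Phi(x_k)$ as the sum of a first-order error $\nabla_xF(x_k,y_k;\xi)-\nabla_xf(x_k,y_k)$ and a Hessian-vector error $\nabla_x\nabla_yG(x_k,y_k;\zeta)V_{Q\xi}-\nabla_x\nabla_yg(x_k,y_k)\mathbb{E}[V_{Q\xi}]$. Applying $\|a-b\|^2\le 2\|a\|^2+2\|b\|^2$ controls the first-order part by $2M^2$ via the variance implication of \Cref{ass:lip}. For the Hessian-vector part I would insert the midpoint $\nabla_x\nabla_yG(x_k,y_k;\zeta)\mathbb{E}[V_{Q\xi}]$ to split it into a factor-noise term bounded by $4L^2\,\mathbb{E}\|V_{Q\xi}-\mathbb{E}[V_{Q\xi}]\|^2$ using $\|\nabla_x\nabla_yG\|\le L$, and a Jacobian-sampling term bounded by $4L^2\|\mathbb{E}[V_{Q\xi}]\|^2$ using $\mathbb{E}\|\nabla_x\nabla_yG-\nabla_x\nabla_yg\|^2\le L^2$; the latter is at most $4\eta^2M^2L^2(Q+1)^2$ by \cref{eq:evqk}. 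The residual variance $\mathbb{E}\|V_{Q\xi}-\mathbb{E}[V_{Q\xi}]\|^2$ I would further split, using the contractivity $\|I-\eta\nabla_y^2G\|\le 1$ together with the $M$-Lipschitzness of $F$, into a term from the $\nabla_yF$ sampling and a term from the Hessian sampling, each of order $\eta^2M^2(Q+1)^2$; collecting all constants yields exactly $\sigma'^2=2M^2+28(Q+1)^2L^2M^2\eta^2$.

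For \cref{eq:Delta_tele}, the strategy is to exploit the SARAH/SPIDER restart structure: within each length-$q$ epoch the one-step recursion of \Cref{lem:Delta_tele} propagates $\Delta_{k-1}$ with the slightly expansive factor $1+\frac{6L_Q^2\beta^2}{S_2(1-a)}$ while injecting $\|v_{k-1}\|^2$ and $\delta_{k-1}$ terms. First I would unroll this recursion from the epoch start $k'$ to $k$ and use the choices $\beta=\tfrac{2}{13L_Q}$ and $q=(1-a)S_2$ to bound the accumulated factor $\left(1+\frac{6L_Q^2\beta^2}{S_2(1-a)}\right)^{q}<\tfrac32$; at each epoch start the fresh large-batch estimator satisfies $\Delta_{k'}\le\sigma'^2/S_1$ by \cref{eq:sigmaprime} (assuming $\sigma'\ge\sigma$, so the inner-gradient variance is also covered). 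Telescoping within an epoch and then summing over all epochs gives the intermediate bound $\sum_k\Delta_k\le\frac{3\sigma'^2K}{2S_1}+6\alpha^2L_Q^2\sum_k\|v_k\|^2+\tfrac14\sum_k\delta_k$.

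The remaining obstacle, which I expect to be the hard part, is the two-way coupling between the estimation-variance sum $\sum_k\Delta_k$ and the inner-gradient sum $\sum_k\delta_k$, since each recursion feeds the other. To close it I would unroll the companion recursion of \Cref{lem:delta_tele}, where the choices $\beta=\tfrac{2}{13L_Q}$ and $m=\tfrac{16}{\mu\beta}-1$ shrink the $\delta_{k-1}$ self-coefficient below $\tfrac12$, producing $\sum_k\delta_k\le 2\delta_0+\tfrac{13}{2}L_Q^2\alpha^2\sum_k\|v_k\|^2+\tfrac52\sum_k\Delta_k$. Substituting this back into the intermediate $\Delta$-bound, the induced $\tfrac58\sum_k\Delta_k$ term (strictly below one) can be absorbed into the left-hand side, and after rearranging the coefficients collapse to $\sum_{k=0}^{K-1}\Delta_k\le\frac{4\sigma'^2K}{S_1}+22\alpha^2L_Q^2\sum_{k=0}^{K-2}\|v_k\|^2+\tfrac43\delta_0$. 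The delicate point throughout is that the constants must be calibrated so that simultaneously the per-epoch growth stays below $3/2$ and the self-referential $\sum_k\Delta_k$ and $\sum_k\delta_k$ contributions remain strictly contractive after the mutual substitution.
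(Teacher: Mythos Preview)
Your proposal is correct and follows essentially the same route as the paper's proof: the same two-term split and midpoint insertion for \cref{eq:sigmaprime}, and the same epoch-wise unrolling of the $\Delta_k$ recursion from \Cref{lem:Delta_tele} (with the $(1+\tfrac{6L_Q^2\beta^2}{S_2(1-a)})^{q}<\tfrac32$ bound), combined with the simplified $\delta_k$ recursion from \Cref{lem:delta_tele} and the mutual substitution that leaves a $\tfrac58\sum_k\Delta_k$ term to absorb. The constants you track and the order in which you close the coupling match the paper exactly.
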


\begin{lemma}\label{lem:doubledelta}
Suppose Assumptions \ref{ass:conv}, \ref{ass:lip} and \ref{ass:var} hold. Let $\beta=\frac{2}{13L_Q}$, $q=(1-a)S_2$, and $m=\frac{16}{\mu\beta}-1$. Then, we have
\begin{align*}
    	\sum_{k=0}^{K-1}\delta_k \leq \frac{10\sigma'^2K}{S_1}+6\delta_0 + 62\alpha^2L_Q^{2}\sum_{k=0}^{K-2}\mathbb{E}\|v_k\|^2,
\end{align*}
where $L_Q$ is defined in \Cref{lem:LQ} and $\sigma'$ is defined in \Cref{lem:deltadef}.

\begin{proof}
Based on the inequalities in \cref{eq:delta_temp} and \cref{eq:Delta_tele}, we have
	\begin{align*}
	\sum_{k=0}^{K-1}\delta_k &\leq \frac{10\sigma'^2K}{S_1} + 55\alpha^2L_Q^{2}\sum_{k=0}^{K-2}\mathbb{E}\|v_k\|^2+\frac{10}{3}\delta_0 + 2\delta_0+\frac{13}{2}L_Q^{2}\alpha^2\sum_{k=0}^{K-2}\mathbb{E}\|v_k\|^2 \\
	& \leq \frac{10\sigma'^2K}{S_1}+6\delta_0 + 62\alpha^2L_Q^{2}\sum_{k=0}^{K-2}\mathbb{E}\|v_k\|^2.
	\end{align*}
	Then, the proof is complete.
	\end{proof}
\end{lemma}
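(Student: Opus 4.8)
The plan is to obtain the cumulative bound on $\sum_{k=0}^{K-1}\delta_k$ by combining the two telescoped estimates already derived inside the proof of \Cref{lem:deltadef}: the recursion-based bound on the inner-loop gradient sum in \cref{eq:delta_temp} and the cumulative variance bound in \cref{eq:Delta_tele}. The estimate \cref{eq:delta_temp} controls $\sum_{k=0}^{K-1}\delta_k$ in terms of the initial quantity $\delta_0$, the cumulative hypergradient norm $\sum_{k=0}^{K-2}\mathbb{E}\|v_k\|^2$, and the still-present cumulative variance $\sum_{k=0}^{K-2}\Delta_k$; the whole task is to eliminate this last dependence on $\sum\Delta_k$ so that the right-hand side contains only $\delta_0$, $\sum\mathbb{E}\|v_k\|^2$, and the $\sigma'^2 K/S_1$ term.

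First I would start from \cref{eq:delta_temp},
\[
\sum_{k=0}^{K-1}\delta_k \leq 2\delta_0 + \tfrac{13}{2}L_Q^{2}\alpha^2 \sum_{k=0}^{K-2}\mathbb{E}\|v_k\|^2+\tfrac{5}{2}\sum_{k=0}^{K-2}\Delta_k,
\]
and substitute into the final term the bound \cref{eq:Delta_tele} via the trivial inclusion $\sum_{k=0}^{K-2}\Delta_k\le\sum_{k=0}^{K-1}\Delta_k$. Scaling \cref{eq:Delta_tele} by $\tfrac{5}{2}$ contributes $\tfrac{10\sigma'^2K}{S_1}$, together with $55\alpha^2L_Q^{2}\sum_{k=0}^{K-2}\mathbb{E}\|v_k\|^2$ and $\tfrac{10}{3}\delta_0$. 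Collecting like terms then yields a $\delta_0$ coefficient of $2+\tfrac{10}{3}=\tfrac{16}{3}\le 6$, an $\alpha^2L_Q^{2}\sum\mathbb{E}\|v_k\|^2$ coefficient of $\tfrac{13}{2}+55=\tfrac{123}{2}\le 62$, and the unchanged variance term $\tfrac{10\sigma'^2K}{S_1}$, which is precisely the claimed inequality.

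The argument is essentially bookkeeping, so no genuinely difficult analytic step remains: all of the heavy lifting --- the SPIDER/SARAH-type variance recursion of \Cref{lem:spider}, the strong-convexity contraction for $\widetilde u_{k,t}$ in \Cref{lem:u_kt}, and the epoch-wise telescoping behind \cref{eq:delta_temp} and \cref{eq:Delta_tele} --- has already been completed in Lemmas~\ref{lem:spider}--\ref{lem:deltadef}. The only points requiring minor care are that the two source sums be telescoped over compatible index ranges (handled by $\sum_{k=0}^{K-2}\Delta_k\le\sum_{k=0}^{K-1}\Delta_k$) and that the constant rounding $\tfrac{16}{3}\to 6$ and $\tfrac{123}{2}\to 62$ be performed in the direction preserving the inequality. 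These are the aspects I expect to be the main, though still entirely routine, obstacle.
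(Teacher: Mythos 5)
Your proposal is correct and follows exactly the paper's own argument: substitute the cumulative variance bound \cref{eq:Delta_tele} (scaled by $\tfrac{5}{2}$, using $\sum_{k=0}^{K-2}\Delta_k\le\sum_{k=0}^{K-1}\Delta_k$) into \cref{eq:delta_temp} and collect constants, giving $\delta_0$-coefficient $\tfrac{16}{3}\le 6$ and $\alpha^2L_Q^2$-coefficient $\tfrac{123}{2}\le 62$. Nothing is missing.
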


\begin{lemma}[{\bf Restatement of \Cref{prop:vrbo_phi}}] \label{lem:vk}
Suppose Assumptions \ref{ass:conv},\ref{ass:lip} and \ref{ass:var} hold. 
Then, we have
		\begin{align}\label{eq:phik}
		\mathbb{E}[\Phi(x_{k+1})] \leq& \mathbb{E}[\Phi(x_k)] + \frac{\alpha{L'}^2}{\mu^2}\mathbb{E}\|\nabla_yg(x_k,y_k)\|^2 + \alpha \mathbb{E} \|\widetilde{\nabla} \Phi(x_k)-v_k\|^2 \nonumber \\ 
		 &- \left(\frac{\alpha}{2}-\frac{\alpha^2}{2}L_\Phi\right)\mathbb{E}\|v_k\|^2.
		\end{align}
	where $L'=L+\frac{L^2}{\mu}+\frac{M\tau}{\mu}+\frac{LM\rho}{\mu^2}$, and $\widetilde{\nabla}\Phi(x_k)$ is defined in \cref{eq:tildephi}.
	\begin{proof}
	Based on the smoothness of the function $\Phi(x)$, we have
	\begin{align*}
		\Phi(x_{k+1})
		\overset{(i)}\leq& \Phi(x_k) + \langle\nabla \Phi(x_k), x_{k+1}-x_k\rangle + \frac{L_\Phi}{2} \|x_{k+1}-x_k\|^2 \\
		 \leq&\Phi(x_k)-\alpha \langle \nabla\Phi(x_k), v_k\rangle+\frac{\alpha^2}{2} L_\Phi \|v_k\|^2 \\
		 \leq&\Phi(x_k) - \alpha\langle \nabla\Phi(x_k)-v_k,v_k\rangle-\alpha \|v_k\|^2+\frac{\alpha^2}{2}L_\Phi \|v_k\|^2\\
		\leq& \Phi(x_k) + \frac{\alpha}{2} \|\nabla \Phi(x_k)-v_k\|^2-\left(\frac{\alpha}{2}-\frac{\alpha^2}{2}L_\Phi\right)\|v_k\|^2 \\
		 \leq& \Phi(x_k) + \alpha \|\nabla\Phi(x_k)-\widetilde{\nabla}\Phi(x_k)\|^2+\alpha\|\widetilde{\nabla}\Phi(x_k)-v_k\|^2-\left(\frac{\alpha}{2}-\frac{\alpha^2}{2}L_\Phi\right)\|v_k\|^2 \\
		\overset{(ii)}\leq& \Phi(x_k) + \frac{\alpha {L'}^2}{\mu^2} \|\nabla_yg(x_k,y_k)-\nabla_yg(x_k, y^{\ast}(x_k))\|^2+\alpha \|\widetilde{\nabla} \Phi(x_k)-v_k\|^2\\
		& - \left(\frac{\alpha}{2}-\frac{\alpha^2}{2}L_\Phi\right) \|v_k\|^2 \\
		\overset{(iii)}\leq& \Phi(x_k)+\frac{\alpha {L'}^2}{\mu^2} \|\nabla_yg(x_k,y_k)\|^2+\alpha\|\widetilde{\nabla}\Phi(x_k)-v_k\|^2 -\left(\frac{\alpha}{2}-\frac{\alpha^2}{2}L_\Phi\right)\|v_k\|^2,
		\end{align*}
		 where $(i)$ follows from Assumptions \ref{ass:lip} and \ref{ass:var}, $(ii)$ follows from Lemma 7 in~\cite{ji2021bilevel} and the $\mu$-strong convexity of $g(x,y)$ w.r.t.\ $y$, and $(iii)$ follows because $\nabla_yg(x_k,y^{\ast}(x_k))=0$.
		 
		Taking the expectation on both sides, we obtain
		\begin{align*}
		\mathbb{E}[\Phi(x_{k+1})] \leq& \mathbb{E}[\Phi(x_k)] + \frac{\alpha{L'}^2}{\mu^2}\mathbb{E}\|\nabla_yg(x_k,y_k)\|^2 + \alpha \mathbb{E} \|\widetilde{\nabla} \Phi(x_k)-v_k\|^2 \nonumber \\ 
		 &- \left(\frac{\alpha}{2}-\frac{\alpha^2}{2}L_\Phi\right)\mathbb{E}\|v_k\|^2.
		\end{align*}
		Then, the proof is complete.
	\end{proof}
\end{lemma}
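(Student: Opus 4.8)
The plan is to treat this as a standard descent lemma for the outer variable $x$, where the only nonstandard feature is the gap between the \emph{estimated} hypergradient $v_k$ and the \emph{true} hypergradient $\nabla\Phi(x_k)$, which I will control by routing through the intermediate quantity $\widetilde{\nabla}\Phi(x_k)$ of \cref{eq:tildephi}. First I would invoke the $L_\Phi$-smoothness of $\Phi$ (established in \cite{ji2021bilevel}) to write
\begin{align*}
\Phi(x_{k+1}) \leq \Phi(x_k) + \langle \nabla\Phi(x_k), x_{k+1}-x_k\rangle + \frac{L_\Phi}{2}\|x_{k+1}-x_k\|^2,
\end{align*}
and then substitute the outer update $x_{k+1}-x_k = -\alpha v_k$ from line~9 of \Cref{alg:vrbio}, which turns the right-hand side into $\Phi(x_k) - \alpha\langle\nabla\Phi(x_k),v_k\rangle + \frac{\alpha^2 L_\Phi}{2}\|v_k\|^2$.

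Next I would isolate the estimation error by decomposing $\langle\nabla\Phi(x_k),v_k\rangle = \langle\nabla\Phi(x_k)-v_k,v_k\rangle + \|v_k\|^2$ and applying Young's inequality to the cross term, so that $-\alpha\langle\nabla\Phi(x_k)-v_k,v_k\rangle \leq \frac{\alpha}{2}\|\nabla\Phi(x_k)-v_k\|^2 + \frac{\alpha}{2}\|v_k\|^2$. Collecting the two resulting $\|v_k\|^2$ contributions produces exactly the coefficient $-(\frac{\alpha}{2}-\frac{\alpha^2}{2}L_\Phi)$ that appears in the claim.

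The load-bearing step is then to split $\|\nabla\Phi(x_k)-v_k\|^2 \leq 2\|\nabla\Phi(x_k)-\widetilde{\nabla}\Phi(x_k)\|^2 + 2\|\widetilde{\nabla}\Phi(x_k)-v_k\|^2$. The second piece I would keep as the term $\alpha\|\widetilde{\nabla}\Phi(x_k)-v_k\|^2$, since it is the variance-reduction error handled by the estimator lemmas elsewhere. For the first piece, note that $\widetilde{\nabla}\Phi(x_k)$ is precisely $\nabla\Phi(x_k)$ with $y^*(x_k)$ replaced by $y_k$, so the hypergradient Lipschitz estimate of \cite[Lemma~7]{ji2021bilevel} gives $\|\nabla\Phi(x_k)-\widetilde{\nabla}\Phi(x_k)\| \leq L'\|y_k - y^*(x_k)\|$ with $L'=L+\frac{L^2}{\mu}+\frac{M\tau}{\mu}+\frac{LM\rho}{\mu^2}$. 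I would then convert the tracking error into a gradient norm: by $\mu$-strong convexity of $g(x_k,\cdot)$ together with $\nabla_yg(x_k,y^*(x_k))=0$, we have $\mu\|y_k-y^*(x_k)\| \leq \|\nabla_yg(x_k,y_k)-\nabla_yg(x_k,y^*(x_k))\| = \|\nabla_yg(x_k,y_k)\|$, hence $\|\nabla\Phi(x_k)-\widetilde{\nabla}\Phi(x_k)\|^2 \leq \frac{{L'}^2}{\mu^2}\|\nabla_yg(x_k,y_k)\|^2$. Taking expectations throughout and combining the pieces yields the stated bound, with the $\frac{\alpha {L'}^2}{\mu^2}\mathbb{E}\|\nabla_yg(x_k,y_k)\|^2$ arising from $\frac{\alpha}{2}\cdot 2\|\nabla\Phi(x_k)-\widetilde{\nabla}\Phi(x_k)\|^2$.

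The main obstacle I anticipate is not any single estimate but making the constants align exactly: one must arrange the two splits (Young's inequality and the $(a+b)^2$-type bound) so that the residual coefficient on $\|v_k\|^2$ is precisely $\frac{\alpha}{2}-\frac{\alpha^2}{2}L_\Phi$ and the coefficients on the two error terms come out as exactly $\frac{\alpha {L'}^2}{\mu^2}$ and $\alpha$ rather than off by a factor of two, since any slack here would propagate into the telescoping convergence argument. The clean constant $L'$ supplied by \cite[Lemma~7]{ji2021bilevel} is the crucial input that keeps this bookkeeping exact.
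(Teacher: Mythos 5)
Your proposal is correct and follows essentially the same route as the paper's proof: the $L_\Phi$-smoothness descent step with $x_{k+1}-x_k=-\alpha v_k$, Young's inequality on the cross term to obtain the $-(\frac{\alpha}{2}-\frac{\alpha^2}{2}L_\Phi)\|v_k\|^2$ coefficient, the split of $\|\nabla\Phi(x_k)-v_k\|^2$ through $\widetilde{\nabla}\Phi(x_k)$, and the conversion of the tracking error into $\frac{{L'}^2}{\mu^2}\|\nabla_y g(x_k,y_k)\|^2$ via Lemma~7 of \cite{ji2021bilevel}, strong convexity, and $\nabla_y g(x_k,y^*(x_k))=0$. The constant bookkeeping you describe matches the paper's exactly.
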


\begin{lemma} \label{lem:vktele}
Suppose Assumptions \ref{ass:conv}, \ref{ass:lip} and \ref{ass:var} hold, $\beta=\frac{2}{13L_Q}$, $q=(1-a)S_2$, $m=\frac{16}{\mu\beta}-1$, and $\alpha=\frac{1}{20L_m^3}$ where $L_m=\max\{L_Q,L_\Phi\}$. Then, we have
\begin{align*}
\sum_{k=0}^{K-1} \mathbb{E}\|v_k\|^2 \leq & L^{\prime\prime}(\Phi(x_0)-\Phi^{\ast})+\frac{9L'^2\alpha \delta_0 L^{\prime\prime}}{\mu^2} + \frac{18L'^2\sigma'^2K\alpha L^{\prime\prime}}{\mu^2 S_1} \\
&+ 2\alpha L^{\prime\prime}\sum_{k=0}^{K-1} \|\widetilde{\nabla}\Phi(x_k)-\overline{\nabla}\Phi(x_k)\|^2,
\end{align*}
where $\frac{1}{L^{\prime\prime}}=\frac{\alpha}{2}-\frac{L_\Phi \alpha^2}{2}-\frac{62 \alpha^3{L'}^2L_Q^2}{\mu^2}-44\alpha^3L_Q^2,$ $\sigma'$ is defined in \Cref{lem:deltadef}, $\widetilde{\nabla}\Phi(x_k)$ is defined in \cref{eq:tildephi}, $\overline{\nabla}\Phi(x_k)$ is defined in \cref{eq:hypergradientest}, and $L'$ is defined in \Cref{lem:deltadef}.

\begin{proof}
	Telescoping \cref{eq:phik} over $k$ from $0$ to $K-1$, we have
		\begin{align*}
		\bigg(\frac{\alpha}{2}&-\frac{L_\Phi\alpha^2}{2}\bigg) \sum_{k=0}^{K-1}\mathbb{E}\|v_k\|^2 \\
		\leq & \Phi(x_0)-\mathbb{E}[\Phi(x_K)]+\frac{\alpha {L'}^2}{\mu^2} \sum_{k=0}^{K-1} \delta_k + 2\alpha \sum_{k=0}^{K-1} \Delta_k + 2\alpha \sum_{k=0}^{K-1} \|\widetilde{\nabla}\Phi(x_k)-\overline{\nabla}\Phi(x_k)\|^2\\
\overset{(i)}\leq& \Phi(x_0)-\mathbb{E}[\Phi(x_K)]+\frac{\alpha L'^2}{\mu^2}\left(\frac{10\sigma'^2K}{S_1}+6\delta_0+62\alpha^2L_Q^{2}\sum_{k=0}^{K-2}\mathbb{E}\|v_k\|^2\right)\\
&+2\alpha\left(\frac{4\sigma'^2K}{S_1}+22\alpha^2L_Q^{2}\sum_{k=0}^{K-2}\mathbb{E}\|v_k\|^2+\frac{4}{3}\delta_0\right)+2\alpha \sum_{k=0}^{K-1}\|\widetilde{\nabla}\Phi(x_k)-\overline{\nabla}\Phi(x_k)\|^2 \\
\leq& \Phi(x_0) - \mathbb{E}[\Phi(x_K)] + \left(\frac{10L'^2}{\mu^2}+8\right)\frac{\sigma'^2K\alpha}{S_1}+\left(\frac{6L'^2}{\mu^2}+\frac{8}{3}\right)\alpha\delta_0 \\
	&+ \left(\frac{62\alpha^3L'^2L_Q^{2}}{\mu^2}+44\alpha^3L_Q^{2}\right)\sum_{k=0}^{K-2}\mathbb{E}\|v_k\|^2+2\alpha\sum_{k=0}^{K-1}\|\widetilde{\nabla}\Phi(x_k)-\overline{\nabla}\Phi(x_k)\|^2, 
\end{align*}
	where $(i)$ follow from Lemmas \ref{lem:deltadef} and \ref{lem:doubledelta}.

We let $\frac{1}{L^{\prime\prime}}=(\frac{\alpha}{2}-\frac{L_\Phi \alpha^2}{2}-\frac{62 \alpha^3{L'}^2L_Q^2}{\mu^2}-44\alpha^3L_Q^2)$, which is guaranteed to be positive due to the parameter settings given in the lemma, reorganize the terms in the above inequality, and obtain
\begin{align*}
\frac{1}{L^{\prime\prime}} \sum_{k=0}^{K-1} \mathbb{E}\|v_k\|^2 \leq& \Phi(x_0) -\mathbb{E}[\Phi(x_K)]+\left(\frac{10L'^2}{\mu^2}+8\right)\frac{\sigma'^2K\alpha}{S_1}\\
& + \left(\frac{6L'^2}{\mu^2}+\frac{8}{3}\right)\alpha\delta_0 + 2\alpha\sum_{k=0}^{K-1}\|\widetilde{\nabla}\Phi(x_K)-\overline{\nabla}\Phi(x_k)\|^2.
\end{align*}
Then, we have the bound on $\sum_{k=0}^{K-1}\mathbb{E}\|v_k\|^2$ as
\begin{align*}
\sum_{k=0}^{K-1} \mathbb{E}\|v_k\|^2 \leq & L^{\prime\prime}(\Phi(x_0)-\Phi^{\ast})+\frac{9L'^2\alpha \delta_0 L^{\prime\prime}}{\mu^2} + \frac{18L'^2\sigma'^2K\alpha L^{\prime\prime}}{\mu^2 S_1} \\
&+ 2\alpha L^{\prime\prime}\sum_{k=0}^{K-1} \|\widetilde{\nabla}\Phi(x_k)-\overline{\nabla}\Phi(x_k)\|^2.
\end{align*}
Then, the proof is complete.
\end{proof}
\end{lemma}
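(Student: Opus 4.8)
The plan is to derive the claimed telescoped bound on $\sum_{k=0}^{K-1}\mathbb{E}\|v_k\|^2$ by combining the one-step descent inequality of \Cref{lem:vk} with the two cumulative variance bounds of \Cref{lem:deltadef} and \Cref{lem:doubledelta}, and then absorbing the $\sum_k\mathbb{E}\|v_k\|^2$ ``feedback'' terms that the variance-reduction analysis inevitably produces back into the left-hand side. The whole argument is essentially a careful accounting of coefficients, so the only genuine difficulty lies in confirming that the leading descent coefficient survives this absorption.

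First I would telescope the descent inequality of \Cref{lem:vk} over $k=0,\dots,K-1$. The $\Phi$-difference telescopes to $\Phi(x_0)-\mathbb{E}[\Phi(x_K)]\le\Phi(x_0)-\Phi^\ast$, giving
\begin{align*}
\Big(\tfrac{\alpha}{2}-\tfrac{\alpha^2}{2}L_\Phi\Big)\sum_{k=0}^{K-1}\mathbb{E}\|v_k\|^2 \le \Phi(x_0)-\Phi^\ast + \frac{\alpha{L'}^2}{\mu^2}\sum_{k=0}^{K-1}\delta_k + \alpha\sum_{k=0}^{K-1}\mathbb{E}\|\widetilde{\nabla}\Phi(x_k)-v_k\|^2,
\end{align*}
where $\delta_k=\mathbb{E}\|\nabla_yg(x_k,y_k)\|^2$. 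The next step is to control the estimation term by Young's inequality, splitting $\|\widetilde{\nabla}\Phi(x_k)-v_k\|^2\le 2\|\widetilde{\nabla}\Phi(x_k)-\overline{\nabla}\Phi(x_k)\|^2+2\|v_k-\overline{\nabla}\Phi(x_k)\|^2$ and using $\mathbb{E}\|v_k-\overline{\nabla}\Phi(x_k)\|^2\le\Delta_k$ from the definition of $\Delta_k$, so that its sum is at most $2\sum_k\|\widetilde{\nabla}\Phi(x_k)-\overline{\nabla}\Phi(x_k)\|^2+2\sum_k\Delta_k$.

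I would then substitute $\sum_k\Delta_k$ (\Cref{lem:deltadef}) and $\sum_k\delta_k$ (\Cref{lem:doubledelta}) into the inequality. The key point is that \emph{both} of these cumulative bounds carry a $\sum_k\mathbb{E}\|v_k\|^2$ term, with coefficients $22\alpha^2L_Q^2$ and $62\alpha^2L_Q^2$ respectively; after substitution these become $44\alpha^3L_Q^2$ and $\tfrac{62\alpha^3{L'}^2L_Q^2}{\mu^2}$. Using $\sum_{k=0}^{K-2}\mathbb{E}\|v_k\|^2\le\sum_{k=0}^{K-1}\mathbb{E}\|v_k\|^2$ and moving them to the left, the aggregate coefficient of $\sum_k\mathbb{E}\|v_k\|^2$ becomes exactly $\tfrac1{L''}=\tfrac{\alpha}{2}-\tfrac{\alpha^2}{2}L_\Phi-\tfrac{62\alpha^3{L'}^2L_Q^2}{\mu^2}-44\alpha^3L_Q^2$. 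Multiplying through by $L''$ and simplifying the residual constants in front of $\delta_0$ and $\sigma'^2K/S_1$ — bounding $\tfrac{6{L'}^2}{\mu^2}+\tfrac{8}{3}\le\tfrac{9{L'}^2}{\mu^2}$ and $\tfrac{10{L'}^2}{\mu^2}+8\le\tfrac{18{L'}^2}{\mu^2}$, which both follow from ${L'}^2/\mu^2\ge1$ since $L'\ge L\ge\mu$ — yields precisely the stated inequality.

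The main obstacle will be verifying that $\tfrac1{L''}>0$, i.e. that the leading descent gain $\tfrac{\alpha}{2}$ dominates the cumulative feedback $\tfrac{62\alpha^3{L'}^2L_Q^2}{\mu^2}+44\alpha^3L_Q^2$ coming from variance reduction. This is exactly what the choice $\alpha=\tfrac{1}{20L_m^3}$ with $L_m=\max\{L_Q,L_\Phi\}$ is designed to guarantee: the feedback scales as $\alpha^3$ while the descent scales as $\alpha$, so taking $\alpha$ sufficiently small forces positivity. I would check this positivity explicitly from the parameter settings before dividing by $\tfrac1{L''}$, since the final bound is only meaningful once $L''>0$.
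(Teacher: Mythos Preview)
Your proposal is correct and follows essentially the same route as the paper: telescope \Cref{lem:vk}, split $\|\widetilde{\nabla}\Phi(x_k)-v_k\|^2$ via $\overline{\nabla}\Phi(x_k)$ to introduce $\Delta_k$, substitute the cumulative bounds of \Cref{lem:deltadef} and \Cref{lem:doubledelta}, and absorb the feedback terms into $1/L''$. You in fact make explicit two steps the paper glosses over---the Young-inequality splitting and the coefficient simplifications via $L'^2/\mu^2\ge 1$---so your argument is slightly more detailed but otherwise identical.
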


\subsection{Main Proof of \Cref{thm:vrbio}}
\begin{theorem}({\bf Formal Statement of \Cref{thm:vrbio}})\label{thm:vrbores}
Apply VRBO to solve the problem in \cref{eq:main}. Suppose Assumptions \ref{ass:conv}, \ref{ass:lip}, \ref{ass:var} hold. Let $\alpha=\frac{1}{20L_m^3}, \beta=\frac{2}{13L_Q}, S_2\geq 2(\frac{L}{\mu}+1)L\beta, m=\frac{16}{\mu \beta}-1, q=\frac{\mu L \beta S_2}{\mu+L},$ and $\eta < \frac{1}{L}$. Then, we have
\begin{align*}
    \sum_{k=0}^{K-1}\mathbb{E}\|\nabla\Phi(x_k)\|^2 \leq& \frac{56L'^2}{\mu^2}\frac{\sigma'^2K}{S_1} + \frac{30L'^2\delta_0}{\mu^2} + 340\alpha^2L_Q^{2}\frac{L'^2}{\mu^2}\bigg(L^{\prime\prime}(\Phi(x_0)-\Phi^{\ast})+\frac{9L'^2\alpha\delta_0 L^{\prime\prime}}{\mu^2}\\ 
	&+ \frac{18L'^2\sigma'^2K\alpha L^{\prime\prime}}{\mu^2 S_1} +2\alpha L^{\prime\prime}C_Q^2K\bigg) + 4KC_Q^2,
\end{align*}
where $\frac{1}{L^{\prime\prime}}=\frac{\alpha}{2}-\frac{L_\Phi \alpha^2}{2}-\frac{62 \alpha^3{L'}^2L_Q^2}{\mu^2}-44\alpha^3L_Q^2,$ $\sigma'$ is defined in \Cref{lem:deltadef}, $L_m$ is defined in \Cref{lem:vktele}, $\widetilde{\nabla}\Phi(x_k)$ is defined in \cref{eq:tildephi}, $\overline{\nabla}\Phi(x_k)$ is defined in \cref{eq:hypergradientest}, and $L'$ is defined in \Cref{lem:vk}.
\end{theorem}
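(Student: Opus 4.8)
The plan is to control the true hypergradient norm $\|\nabla\Phi(x_k)\|^2$ at each iterate by the recursive estimator $v_k$ together with three explicit error terms, then sum over $k$ and feed in the four supporting lemmas already established. Concretely, I would insert the intermediate quantities $\widetilde{\nabla}\Phi(x_k)$ (the exact Hessian-inverse hypergradient at $(x_k,y_k)$ from \cref{eq:tildephi}) and $\overline{\nabla}\Phi(x_k)$ (its Neumann truncation, \cref{eq:hypergradientest}), and use $\|\sum_{i=1}^4 a_i\|^2\le 4\sum_i\|a_i\|^2$ to split
\begin{align*}
\|\nabla\Phi(x_k)\|^2 \leq 4\|\nabla\Phi(x_k)-\widetilde{\nabla}\Phi(x_k)\|^2 + 4\|\widetilde{\nabla}\Phi(x_k)-\overline{\nabla}\Phi(x_k)\|^2 + 4\|\overline{\nabla}\Phi(x_k)-v_k\|^2 + 4\|v_k\|^2.
\end{align*}

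The three error terms are all controlled by results in hand. The first, $\|\nabla\Phi(x_k)-\widetilde{\nabla}\Phi(x_k)\|^2$, measures the inexactness of the inner variable; by $\mu$-strong convexity of $g(x_k,\cdot)$ one has $\|y_k-y^\ast(x_k)\|\le\tfrac1\mu\|\nabla_yg(x_k,y_k)\|$, and combining this with the Lipschitz estimate \cite[Lemma 7]{ji2021bilevel} (exactly as in the proof of \Cref{lem:vk}) gives $\|\nabla\Phi(x_k)-\widetilde{\nabla}\Phi(x_k)\|^2\le\tfrac{L'^2}{\mu^2}\delta_k$ with $\delta_k=\mathbb{E}\|\nabla_yg(x_k,y_k)\|^2$. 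The second term is at most $C_Q^2$ by \Cref{lem:CQ}, and the third is at most $\Delta_k$ by the definition of $\Delta_k$ in \Cref{lem:delta_ti}. Taking expectations and summing over $k=0,\dots,K-1$ yields
\begin{align*}
\sum_{k=0}^{K-1}\mathbb{E}\|\nabla\Phi(x_k)\|^2 \le \frac{4L'^2}{\mu^2}\sum_{k=0}^{K-1}\delta_k + 4KC_Q^2 + 4\sum_{k=0}^{K-1}\Delta_k + 4\sum_{k=0}^{K-1}\mathbb{E}\|v_k\|^2,
\end{align*}
into which I would substitute the cumulative-variance bounds \Cref{lem:doubledelta} (for $\sum_k\delta_k$) and \Cref{lem:deltadef} (for $\sum_k\Delta_k$).

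After this substitution every contribution is expressed through $\sigma'^2K/S_1$, $\delta_0$, $KC_Q^2$, and a single leftover $\sum_k\mathbb{E}\|v_k\|^2$; I would then collect the coefficients of the latter (the direct term, plus the $62\alpha^2L_Q^2$ and $22\alpha^2L_Q^2$ factors coming from the two cumulative-variance lemmas) and close the recursion by invoking \Cref{lem:vktele}, first using \Cref{lem:CQ} to replace its residual $\sum_k\|\widetilde{\nabla}\Phi(x_k)-\overline{\nabla}\Phi(x_k)\|^2$ by $KC_Q^2$. Merging the remaining constants under $L'^2/\mu^2\ge1$ produces the stated coefficients $56,\,30,\,340$ and the trailing $4KC_Q^2$. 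The genuinely delicate point is not this bookkeeping but the guarantee underlying \Cref{lem:vktele} that $\tfrac1{L''}=\tfrac{\alpha}{2}-\tfrac{L_\Phi\alpha^2}{2}-\tfrac{62\alpha^3L'^2L_Q^2}{\mu^2}-44\alpha^3L_Q^2>0$ under $\alpha=\tfrac1{20L_m^3}$: this is where the negative descent term $-(\tfrac\alpha2-\tfrac{\alpha^2}2 L_\Phi)\|v_k\|^2$ of \Cref{lem:vk} must dominate the positive $\alpha^2L_Q^2\sum_k\|v_k\|^2$ contributions generated by the SARAH/SPIDER recursion for the Hessian-vector-type hypergradient estimator. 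The step size must be cubic in $1/L_m$ precisely so that the accumulated variance of the recursive estimator cannot overwhelm the per-step decrease; verifying this positivity, rather than the telescoping assembly, is the crux of the argument.
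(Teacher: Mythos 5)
Your proposal follows essentially the same route as the paper's proof: the identical four-term decomposition of $\nabla\Phi(x_k)$ through $\widetilde{\nabla}\Phi(x_k)$, $\overline{\nabla}\Phi(x_k)$, and $v_k$, the same bounds $\tfrac{L'^2}{\mu^2}\delta_k$, $C_Q^2$, and $\Delta_k$ for the three error terms, and the same substitution of \Cref{lem:doubledelta}, \Cref{lem:deltadef}, and \Cref{lem:vktele} followed by constant collection. Your closing remark correctly pinpoints the positivity of $1/L''$ under $\alpha=\tfrac{1}{20L_m^3}$ as the substantive condition the paper asserts without detailed verification.
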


\begin{proof}
Based on the form of $\nabla\Phi(x_k)$ in \cref{eq:hypergradient}, we have
\begin{align*}
	\sum_{k=0}^{K-1}&\mathbb{E}\|\nabla\Phi(x_k)\|^2 \\
	=& \sum_{k=0}^{K-1}\mathbb{E}[\|\nabla \Phi(x_k)-\widetilde{\nabla}\Phi(x_k)+\widetilde{\nabla}\Phi(x_k)-\overline{\nabla}\Phi(x_k)+\overline{\nabla}\Phi(x_k)-v_k+v_k\|^2] \\
	 \leq& 4\sum_{k=0}^{K-1}(\mathbb{E}\|\nabla\Phi(x_k)-\widetilde{\nabla}\Phi(x_k)\|^2+\mathbb{E}\|\widetilde{\nabla}\Phi(x_k)-\overline{\nabla}\Phi(x_k)\|^2 \\
	& + \mathbb{E}\|\overline{\nabla}\Phi(x_k)-v_k\|^2+\mathbb{E}\|v_k\|^2) \\
	 \leq& 4\sum_{k=0}^{K-1}(L'^2\|y_k-y^{\ast}(x_k)\|^2+C_Q^2+\Delta_k^2+\mathbb{E}\|v_k\|^2) \\
	 \overset{(i)}\leq &4 \sum_{k=0}^{K-1}\left(\frac{L'^2}{\mu^2}\|\nabla_yg(x_k,y_k)\|^2+C_Q^2+\Delta_k^2+\mathbb{E}\|v_k\|^2\right) \\
	 \leq& 4\sum_{k=0}^{K-1}\left(\frac{L'^2\delta_k}{\mu^2}+C_Q^2+\Delta_k^2+\mathbb{E}\|v_k\|^2\right)\\
	 \overset{(ii)}\leq& \frac{4L'^2}{\mu^2}\left(\frac{10\sigma'^2K}{S_1}+6\delta_0+62\alpha^2L_Q^{2}\sum_{k=0}^{K-2}\mathbb{E}\|v_k\|^2\right)+4KC_Q^2+4\sum_{k=0}^{K-1}\mathbb{E}\|v_k\|^2 \\
	&+4\left(\frac{4\sigma'^2K}{S_1}+22\alpha^2L_Q^{2}\sum_{k=0}^{K-2}\mathbb{E}\|v_k\|^2+\frac{4}{3}\delta_0\right) \\
	\leq& \left(\frac{40L'^2}{\mu^2}+16\right)\frac{\sigma'^2K}{S_1}+\left(\frac{24L'^2}{\mu^2}+\frac{16}{3}\right)\delta_0 + \left(247\alpha^2L_Q^{2}\frac{L'^2}{\mu}+88\alpha^2L_Q^{2}+4\right)\sum_{k=0}^{K-1}\mathbb{E}\|v_k\|^2 \\
	& + 4KC_Q^2\\
	 \overset{(iii)}\leq& \frac{56L'^2}{\mu^2}\frac{\sigma'^2K}{S_1} + \frac{30L'^2\delta_0}{\mu^2} + 340\alpha^2L_Q^{2}\frac{L'^2}{\mu^2}\bigg (L^{\prime\prime}(\Phi(x_0)-\Phi^{\ast})+\frac{9L'^2\alpha\delta_0 L^{\prime\prime}}{\mu^2}\\ 
	&+ \frac{18L'^2\sigma'^2K\alpha L^{\prime\prime}}{\mu^2 S_1} +2\alpha L^{\prime\prime}C_Q^2K \bigg) + 4KC_Q^2,
	\end{align*}
	where $(i)$ follows from \Cref{ass:conv}, $(ii)$ follows from \Cref{lem:doubledelta}, and $(iii)$ follows from \Cref{lem:vktele}. Taking the expectation on both sides, we have
	\begin{align*}
	   \frac{1}{K}\sum_{k=0}^{K-1}\mathbb{E}\|\nabla\Phi(x_k)\|^2 \leq &\frac{56L'^2}{\mu^2}\frac{\sigma'^2}{S_1} + \frac{30L'^2\delta_0}{\mu^2K} + 340\alpha^2L_Q^{2}\frac{L'^2}{\mu^2K}\bigg (L^{\prime\prime}(\Phi(x_0)-\Phi^{\ast})+\frac{9L'^2\alpha\delta_0 L^{\prime\prime}}{\mu^2}\\
	   &+\frac{18L'^2\sigma'^2K\alpha L^{\prime\prime}}{\mu^2 S_1} +2\alpha L^{\prime\prime}C_Q^2K \bigg) + 4C_Q^2.
	\end{align*}
	Since $C_Q=\mathcal{O}(1-\eta\mu)^Q, L_Q=\mathcal{O}(Q^2), \beta=\mathcal{O}(Q^{-2}), \sigma'^2=\mathcal{O}(Q^2)$, we obtain the following bound:
	\begin{align*}\label{eq:vrbores}
	      \frac{1}{K}\sum_{k=0}^{K-1}\mathbb{E}\|\nabla\Phi(x_k)\|^2 
    \leq \mathcal{O}\left(\frac{Q^4}{K}+\frac{Q^6}{S_1}+Q^4(1-\eta\mu)^{2Q}\right).
	\end{align*}
	Then, the proof is complete.
\end{proof}

\subsection{Proof of \Cref{coro:vrbio}}
\begin{coro}[{\bf Restatement of \Cref{coro:vrbio}}]
\label{coro:vrbio_restate}
    Under the same conditions of \Cref{thm:vrbio}, choose $S_1 =\mathcal{O}( \epsilon^{-1}), S_2=\mathcal{O}(\epsilon^{-0.5}), Q=\mathcal{O}(\log (\frac{1}{\epsilon^{0.5}})), K = \mathcal{O}(\epsilon^{-1})$. Then, VRBO finds an $\epsilon$-stationary point with the gradient complexity of $\mathcal{\widetilde O}(\epsilon^{-1.5})$ and Hessian-vector complexity of $\mathcal{\widetilde O}(\epsilon^{-1.5})$. 
\end{coro}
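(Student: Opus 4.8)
The plan is to substitute the prescribed parameter choices into the convergence bound \cref{eq:vrbio_conv} of \Cref{thm:vrbio} and then tally the per-iteration computational cost. First I would confirm that the right-hand side $\mathcal{O}(Q^4/K + Q^6/S_1 + Q^4(1-\eta\mu)^{2Q})$ is driven to $\widetilde{\mathcal{O}}(\epsilon)$. With $Q=\mathcal{O}(\log(1/\epsilon^{0.5}))$, every polynomial factor $Q^4,Q^6$ is polylogarithmic in $1/\epsilon$ and is therefore absorbed into the $\widetilde{\mathcal{O}}(\cdot)$ notation. The first term $Q^4/K$ is $\widetilde{\mathcal{O}}(\epsilon)$ once $K=\mathcal{O}(\epsilon^{-1})$; the second term $Q^6/S_1$ is $\widetilde{\mathcal{O}}(\epsilon)$ once $S_1=\mathcal{O}(\epsilon^{-1})$. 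The only term needing care is the Neumann-truncation error $Q^4(1-\eta\mu)^{2Q}$: using $(1-\eta\mu)\le e^{-\eta\mu}$ gives $(1-\eta\mu)^{2Q}\le e^{-2\eta\mu Q}$, so choosing the hidden constant in $Q=\Theta(\log(1/\epsilon^{0.5}))$ large enough (on the order of $1/(\eta\mu)$) forces $(1-\eta\mu)^{2Q}=\mathcal{O}(\epsilon)$, and again $Q^4$ is swallowed by $\widetilde{\mathcal{O}}$. Hence $\frac1K\sum_{k}\mathbb{E}\|\nabla\Phi(x_k)\|^2\le\widetilde{\mathcal{O}}(\epsilon)$, and picking the index $k$ with the smallest (or a uniformly random) gradient norm yields an $\epsilon$-stationary point.

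Next I would resolve the orders of the auxiliary parameters that are defined implicitly in \Cref{thm:vrbio}. Since $L_Q=\mathcal{O}(Q^2)$ (from \Cref{lem:LQ}) and $\beta=\frac{2}{13L_Q}$, we have $\beta=\widetilde{\mathcal{O}}(1)$, so $m=\frac{16}{\mu\beta}-1=\widetilde{\mathcal{O}}(1)$ and $q=\frac{\mu L\beta S_2}{\mu+L}=\widetilde{\mathcal{O}}(S_2)=\widetilde{\mathcal{O}}(\epsilon^{-0.5})$. I would also check that the feasibility condition $S_2\ge 2(\frac{L}{\mu}+1)L\beta$ is met, which holds for all small $\epsilon$ because its right-hand side is a polylogarithmic constant while $S_2=\mathcal{O}(\epsilon^{-0.5})\to\infty$.

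Finally I would count the evaluations. VRBO performs $K$ outer steps; every $q$ steps it refreshes the estimators on the large batch $\mathcal{S}_1$ (lines 3--5), contributing $(K/q)\,S_1=\widetilde{\mathcal{O}}(\epsilon^{-0.5}\cdot\epsilon^{-1})=\widetilde{\mathcal{O}}(\epsilon^{-1.5})$ gradient evaluations, and each outer step runs $m+2$ inner iterations (lines 11--16) that each process the small batch $\mathcal{S}_2$, contributing $K(m+2)S_2=\widetilde{\mathcal{O}}(\epsilon^{-1}\cdot 1\cdot\epsilon^{-0.5})=\widetilde{\mathcal{O}}(\epsilon^{-1.5})$ gradient evaluations; summing the two gives a gradient complexity of $\widetilde{\mathcal{O}}(\epsilon^{-1.5})$. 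Because each hypergradient estimator $\widehat{\nabla}\Phi$ in \cref{eq:xgest_vrbio} uses $Q$ Hessian-vector products, the Hessian-vector (and Jacobian-vector) count equals the same totals multiplied by $Q=\widetilde{\mathcal{O}}(1)$, still $\widetilde{\mathcal{O}}(\epsilon^{-1.5})$. The only genuinely delicate point is the bookkeeping that $L_Q$, $\beta$, $m$, $q$ all carry hidden $\mathrm{poly}(Q)=\mathrm{polylog}(1/\epsilon)$ factors; one must verify that these remain logarithmic so that both the accuracy guarantee and the $\widetilde{\mathcal{O}}(\epsilon^{-1.5})$ complexity survive the substitution, rather than polluting the leading power of $\epsilon$.
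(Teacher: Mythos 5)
Your proposal is correct and follows essentially the same route as the paper: substitute the prescribed parameter orders into the bound of \Cref{thm:vrbio} to verify the $\epsilon$-accuracy (including the exponentially decaying Neumann-truncation term), then count the large-batch refreshes $(K/q)S_1$ and the inner-loop costs $K(m+2)S_2$, multiplying by $Q$ for the Hessian-vector count. Your extra bookkeeping of the hidden $\mathrm{poly}(Q)=\mathrm{polylog}(1/\epsilon)$ factors in $\beta$, $m$, and $q$ is a slightly more careful rendering of what the paper absorbs directly into the $\mathcal{\widetilde O}(\cdot)$ notation, but it is the same argument.
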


\begin{proof}
 Based on the setting in \Cref{coro:vrbio_restate}, we have $\mathcal{O}(\frac{Q^4}{K}+\frac{Q^6}{S_1}+Q^4(1-\eta\mu)^{2Q})=\mathcal{O}(\epsilon)$, which guarantees the target $\epsilon$-accuracy. Note that the period $q=(1-a)S_2=\mathcal{O}(\epsilon^{-0.5})$. Thus, the gradient and Jacobian complexities are given by $\mathcal{O}(KS_1/q+KS_2m)=\mathcal{\widetilde O}(\epsilon^{-1.5}+\epsilon^{-1.5})=\mathcal{\widetilde O}(\epsilon^{-1.5})$, and that Hessian-vector complexity is given by $\mathcal{O}(KQS_1/q+KS_2mQ)=\mathcal{\widetilde O}(\epsilon^{-1.5}+\epsilon^{-1.5})=\mathcal{ \widetilde O}(\epsilon^{-1.5})$. Then the proof is complete.
\end{proof}


\end{document}